\setlist[enumerate]{itemsep=0mm}
\newcommand{\eins}{\boldsymbol{1}}
\DeclareSymbolFont{wideparensymbol}{OMX}{yhex}{m}{n}
\DeclareMathAccent{\wideparen}{\mathord}{wideparensymbol}{"F3}
\newcommand{\argmin}{\operatornamewithlimits{arg \, min}}
\newcommand{\supp}{\mathrm{supp}}
\newcommand{\Rmnum}[1]{\expandafter\@slowromancap\romannumeral #1@}
\pgfplotsset{compat=newest}
\pgfplotsset{plot coordinates/math parser=false,trim axis left}
\newlength\figureheight
\newlength\figurewidth
\begin{document}

\title{Histogram Transform Ensembles \\ for Large-scale Regression}

\author{\name Hanyuan Hang \email hanyuan.hang@samsung.com \\
\name Zhouchen Lin \email zhouchen.lin@samsung.com \\
\name Xiaoyu Liu \email  xiaoyu1.liu@partner.samsung.com \\
\name Hongwei Wen \email hongwei.wen@partner.samsung.com \\
\addr AI Lab \\
Samsung Research China - Beijing\\
100028 Beijing, China 
}


\allowdisplaybreaks

\maketitle


\begin{abstract}We propose a novel algorithm for large-scale regression problems named histogram transform ensembles (HTE), composed of random rotations, stretchings, and translations. 
First of all, we investigate the theoretical properties of HTE when the regression function lies in the H\"{o}lder space $C^{k,\alpha}$, $k \in \mathbb{N}_0$, $\alpha \in (0,1]$. In the case that $k=0, 1$, we adopt the constant regressors and develop the na\"{i}ve histogram transforms (NHT). Within the space $C^{0,\alpha}$, although almost optimal convergence rates can be derived for both single and ensemble NHT, we fail to show  the benefits of ensembles over single estimators theoretically. In contrast, in the subspace $C^{1,\alpha}$, we prove that if $d \geq 2(1+\alpha)/\alpha$, the lower bound of the convergence rates for single NHT turns out to be worse than the upper bound of the convergence rates for ensemble NHT. In the other case when $k \geq 2$, the NHT may no longer be appropriate in predicting smoother regression functions. Instead, we apply kernel histogram transforms (KHT) equipped with smoother regressors such as support vector machines (SVMs), and it turns out that both single and ensemble KHT  enjoy almost optimal convergence rates. 
Then we validate the above theoretical results by numerical experiments.
On the one hand, simulations are conducted to elucidate that ensemble NHT outperform single NHT. 
On the other hand, the effects of bin sizes on accuracy of both NHT and KHT also accord with theoretical analysis.
Last but not least, in the real-data experiments, comparisons between the ensemble KHT, equipped with adaptive histogram transforms, and other state-of-the-art large-scale regression estimators verify the effectiveness and accuracy of our algorithm. 
\end{abstract}

\begin{keywords}
Large-scale regression, 
histogram transform, 
ensemble learning, 
support vector machines, 
regularized empirical risk minimization, 
learning theory 
\end{keywords}

\section{Introduction}

In the era of big data, with rapid development of information technology, especially the processing power and memory storage in automatic data generation and acquisition, the size and complexity of data sets are constantly advancing to a unprecedented degree \citep{Zhou2014Big}. In this context, from a real-world applicable perspective, learning algorithms that not only maintain desirable prediction accuracy but also achieve high computational efficiency are urgently needed \citep{Wen2018ThunderSVM, Guo2018Efficient, Philipp2017Spatial, Cho2014Divide}. Among common machine learning tasks, in this paper, we are interested in the large-scale nonparametric regression problem which aims at inferring the functional relation between input and output. One major challenge, however, is that existing learning algorithms turn out to be unsuitable for dealing with the regression problems conducted on large-volume data sets. To tackle this difficulty, some approaches for generating more satisfactory algorithms have been introduced in the literature such as the efficient decomposition algorithm SVMTorch proposed in \cite{Suisse2001Support} and the randomized sketching algorithm for least-squares problems presented in \cite{Raskutti2016Sketching}. In particular, the mainstream solutions fall into two categories, the \emph{horizontal methods} and the \emph{vertical methods}. The former one, also known as a kind of distributed learning, consists of three steps, firstly partitioning the data set into several disjoint subsets, then implementing a certain learning algorithm to each data subset to obtain a local predictor, and finally synthesizing a global output by utilizing some average of the individual functions. By taking full advantage of the first step, horizontal methods gain its popularity on account of the ability to significantly reduce computing time and lower single-machine memory requirements. Unfortunately, although the effectiveness of distributed regression can be verified to some degree through theoretical results, for example, optimal convergence rates under certain restrictions (see e.g. \cite{Lin2017Distributed, Chang2017Distributed, Guo2017Distributed}), this approach suffers from its own inherent disadvantages. Mathematically speaking, for a single data block, the output function is obtained using trade off between the bias and variance, however, the variance of the average estimator in distributed learning actually shrinks as the number of blocks increases while the bias keeps unchanging which leads to the undesirable bias-denominating case. Therefore, distributed learning prefers algorithms in possession of the function with small bias and the optimal choice for a single block is not necessarily optimal for distributed learning. In this manner, the learning approach stands a good chance of creating local predictors quite different from the desired global predictor, not to mention the synthesized final predictor.

Other than partitioning the original data sets, another popular type of approaches, named vertical methods, chooses to divide the feature space into multiple non-overlapping blocks instead and to apply individual regression strategies on each resulting cell. In the literature, efforts have been made to propose innovative partition methods such as subsampling algorithms \citep{suykens2002least,Espinoza2006Fixedsize}, decision tree-based approaches \citep{Bennett1998decisiontree, Wu99largemargin, Chang2010Tree}, and so on. In addition, various kinds of embedded regressors are then applied to train local predictors such as Gaussian process (GP) regression, support vector machines, just to name a few. Although not suffering from the undesirable bias-denominating case, vertical methods have their own drawbacks, for example, the long-standing boundary discontinuities. Since the discontinuity impacts greatly on accuracy, literature has committed to tackle this problem. Under the same condition on partitioned input domain and GP regression, \cite{park11a} firstly imposes equal boundary constraints merely at a finite number of locations which actually cannot essentially solve the boundary discontinuities. Following on, \cite{park16a} extends this predictive means restriction to all neighboring regions. Nevertheless, the optimization-based formulations make this improved method infeasible to derive the marginal likelihood and the predictive variances in closed forms. In contrast, without imposing any further assumptions on the nature of the GPs, \cite{park2018patchwork} presents a simple and natural way to enforce continuity by creating additional pseudo-observations around the boundaries. However, on the one hand, this approach is defective for not benefiting from the desirable global property of GPs as well as suffering from the curse of dimensionality; on the other hand, artificially determined decomposition process brings a great impact on the final predictor, which inspires us to adopt more reasonable partition-based learning methods to gain smoothness from the randomness of partition and the nature of ensembles. Over past decades, a wealth of literature is pulled into exploring desirable partitions such as dyadic partition polynomial estimators \citep{Binev2005Universal, Binev2007Universal} and the Voronoi partition support vector machine \citep{meister16a}. However, to the best of our knowledge, although satisfactory experimental performance and optimal convergence rates are established, they fail to explain the benefits of ensembles for asymptotic smoothness from the theoretical perspective.

This study is conducted under such background, aiming at solving these tough problems mentioned earlier. To be specific, motivated by the random rotation ensemble algorithms proposed in \cite{Rodr2006Rotation, Ezequiel2013HT, Blaser2016HT}, we investigate a regression estimator based on partitions, induced by histogram transform ensembles, together with embedded individual regressors which takes full advantage of the histogram methods and ensemble learning. Specifically, its merits can be stated as twofold. First, the algorithm can be locally adaptive by applying adaptive stretching with respect to samples of each dimension. Second, the global smoothness of our obtained regression function is attributed to the randomness of different partitions together with the ensemble learning. The algorithm starts with mapping the input space into transformed feature space under a certain histogram transform. Then, the process proceeds by partitioning the transformed space into non-overlapping cells with the unit bin width where the bin indices are chosen as the round points. After obtaining the partition, we apply certain regression strategies such as piecewise constant or SVM to formulate the na\"{i}ve or kernel histogram transform estimator respectively according to the specific assumptions on the target conditional expectation function. Last but not least, by integrating estimators generated by the above procedure, we obtain a regressor ensemble with satisfactory asymptotic smoothness.

The contributions of this paper come from both the theoretical and experimental aspects. \emph{(i)} Our regression estimator varies when the Bayes decision rule $f_{L,\mathrm{P}}^*$ is assumed to satisfy different H\"{o}lder continuity assumptions. To be specific, under the assumption that $f_{L,\mathrm{P}}^*$ resides in $C^{0,\alpha}$ or $C^{1,\alpha}$, we adopt the na\"{i}ve histogram transform (NHT) estimator. By decomposing the error term into approximation error and estimation error, which correspond to data-free and data-dependent error terms, respectively, we prove almost optimal convergence rates for both single NHT and ensemble NHT in the space $C^{0,\alpha}$. In contrast, for the subspace $C^{1,\alpha}$ consisting of smoother functions, we show that the ensemble NHT can attain the convergence rate $O(n^{-(2(1+\alpha))/(4(1+\alpha)+d)})$ whereas the lower bound of the convergence rates for a single NHT is merely of the order $O(n^{-2/(2+d)})$ under certain conditions. As a result, when $d \geq 2(1+\alpha)/\alpha$, the ensemble NHT actually outperforms the single estimator, which illustrates the benefits of ensembles over single NHT. 
Furthermore, if $f_{L,\mathrm{P}}^* \in C^{k,\alpha}$ for $k \geq 2$, although taking fully advantage of the nature of ensembles, constant-embedded regressor is inadequate to achieve good performance. Thus, we turn to apply the kernel histogram transform (KHT) which is verified to have almost optimal convergence rates.  \emph{(ii)} We highlight that all theoretical results in this paper have their one to one corresponding experiment analysis. We design several numerical experiments to verify the study on parameters $\overline{h}_{0,n}$ and $T$.
Firstly, we show that, for NHT, there exists an optimal $\overline{h}_{0,n}$ with regard to the test error, whereas in contrast, KHT with fairly large cells have better performance.
Note that these experimental results coincide with the conclusions about the selection of parameter bin width in order to obtain almost optimal convergence rates, as are shown in Theorems \ref{thm::forest} and \ref{thm::convergencerateforest}. Moreover, in order to give a more comprehensive understanding of the significant benefits of ensemble NHT over single estimator, a simulation corresponding to Theorem \ref{prop::counter} is conducted on synthetic data with different parameter $T$, the number of NHT applied in the regression estimator. To be precise, the slope of mean squared error versus $T$ shows that ensemble NHT outperform single estimator. \emph{(iii)} Experiments conducted on real-data indicate that this method achieves both high precision and great efficiency. Its inherent advantages can be specified as follows. Firstly, the additional advantage of computational efficiency of our histogram transform ensembles mainly benefits from the parallel computation. Secondly, the randomness of partitions coming from the histogram transform together with the nature of ensembles allow us to better access to the unknown data structure as well as the desirable asymptotic smoothness, which greatly improves the progress of prediction. These advantages of our algorithm are fully evidenced by experiments conducted on real data, where we adopt ensemble KHT, equipped with adaptive histogram transforms. Experiments show that on the one hand, our adaptive KHTE outperforms the other state-of-the-art algorithms in terms of accuracy when $T$ is large enough; on the other hand, with much smaller $T$, it enjoys high efficiency by reducing average running time while maintaining satisfactory precision.

This paper is organized as follows. Section \ref{sec::methodology} is a preliminary section covering some required fundamental notations, definitions and technical histogram transform which all contribute to the formulation of both NHT and KHT. Section \ref{sec::Re1} is concerned with theoretical results, that is, the convergence rates, under different h\"{o}lder continuity assumptions on $f_{L,\mathrm{P}}^*$. To be specific, under the condition on the Bayesian decision function $f_{L,\mathrm{P}}^* \in C^{0,\alpha}$, almost convergence rates for both single NHT and ensemble NHT are derived in Section \ref{sec::C0}. In the subspace $C^{1,\alpha}$, we firstly present the convergence rate of ensemble NHT in Section \ref{sec::upper}, then a more complete theory is obtained by establishing the lower bound of single NHT to illustrate the exact benefits of ensembles in Section \ref{sec::lower}. In contrast, for the case where the target function resides in the subspace containing smoother functions $C^{k,\alpha}$, Section \ref{sec::Re2} presents almost optimal convergence rates for both single and ensemble KHT. Some comments and discussions related to the main results will also be presented in this section. Section \ref{sec::Error} provides a detailed analysis of both approximation error and sample error. Numerical experiments are conducted in Section \ref{sec::numerical_experiments} to verify our theoretical results and to further witness the effectiveness and efficiency of our algorithm. More precisely, Section \ref{sec::subsec::parameter} presents the study of parameters which verifies our theoretical results on the parameter selection for bin width $\overline{h}_0$ and ensemble number $T$ in order to achieve optimal convergence rates; Section \ref{sec::subsec::syntheticdata} then establishes a simulation on synthetic data to elucidate the exact benefits of the ensemble estimators over the single one; finally, comparisons between different regression methods on real data sets are provided in Section \ref{sec::subsec::realdata}. For the sake of clarity, we place all the proofs of Section \ref{sec::Re1} and Section \ref{sec::Error} in Section \ref{sec::proofs}. In Section \ref{sec::conclusion}, we close this paper with a conclusive summary, a brief discussion and additional remarks.

\section{Methodology} \label{sec::methodology}

Recall that our study on histogram transform ensembles (HTE) in this paper is initially aiming at addressing the large-scale regression problem. To this end, this section links our HTE algorithm to large-scale data analysis. Firstly, in Section \ref{sec::Notations}, we introduce some preliminaries, containing mathematical notations to be used throughout the entire paper, important basics for the least-square regression frameworks, and the definition of function space $C^{k,\alpha}$ where the target regression function lies in. Then in Section \ref{subsec::HT} we present the so called histogram transform approach through defining every crucial element such as rotation matrix $R$, stretching matrix $S$ and translator vector $b$. Based on the partition of the input space induced by the histogram transforms, we are then able to formulate the HTE for regression within the framework of regularized empirical risk minimization (RERM) in section \ref{sec::Al1}. To be more precise, taking the order of smoothness of the target function $f_{L,\mathrm{P}}^*$ into account, we establish the na\"{i}ve histogram transform ensembles (NHTE) and kernel histogram transform ensembles (KHTE) with $f_{L,\mathrm{P}}^*$ residing in different H\"{o}lder spaces respectively.

\subsection{Preliminaries} \label{sec::Notations}

\subsubsection{Notations}

Throughout this paper, we assume that $\mathcal{X} \subset \mathbb{R}^d$ and $\mathcal{Y} \subset \mathbb{R}$ are compact and non-empty. The goal of a supervised learning problem is to predict the value of an unobserved output variable $Y$ after observing the value of an input variable $X$. To be exact, we need to derive a predictor $f$ which maps the observed input value of $X$ to a prediction $f(X)$ of the unobserved output value of $Y$. The choice of predictor should be based on the training data $D := ((x_1, y_1), \ldots, (x_n, y_n))$ of i.i.d~observations, which are with the same distribution as the generic pair $(X, Y)$, drawn from an unknown probability measure $\mathrm{P}$ on $\mathcal{X} \times \mathcal{Y}$. Moreover, we denote $\mathrm{P}_X, \mathrm{P}_{Y|X}$ as the marginal and conditional distribution respectively.

For any fixed $R > 0$, we denote $B_R$ as the centered ball of $\mathbb{R}^d$ with radius $R$, that is,
\begin{align*}
B_R 
:= [-R, R]^d
:= \{ x = (x_1, \ldots, x_d) \in \mathbb{R}^d : x_i \in [-R, R], i = 1, \ldots, d \},
\end{align*} 
and for any $r \in (0, R)$, we write
\begin{align*}
B_{R,r}^+ := [r, R - r]^d.
\end{align*} 
We further assume that $\mathcal{X} \subset B_R$ for some $R > 0$ and $\mathcal{Y} := [-M, M]$ for some $M > 0$. 
In addition, for a Banach space $(E, \|\cdot\|_E)$, we denote $B_E$ as its unit ball, i.e.,
\begin{align*}
B_E
:= \{ f \in E : \|f\|_E \leq 1 \}.
\end{align*} 
Recall that for $1\leq p < \infty$, the $L_p$-norm of $x=(x_1,\ldots,x_d)$ is defined as $\| x \|_p := (|x_1|^p + \ldots + |x_d|^p)^{1/p}$, and the $L_{\infty}$-norm is defined as $\| x \|_{\infty} := \max_{i=1,\ldots,d} |x_i|$.

In the sequel, we use the notation $a_n \lesssim b_n$ to denote that there exists a positive constant $c$ such that $a_n \leq c b_n$, for all $n \in \mathbb{N}$. In addition, the notation $a_n \asymp b_n$ means that there exists some positive constant $c \in (0, 1)$, such that $a_n \geq c b_n$ and $a_n \leq c^{-1} b_n$, for all $n \in \mathbb{N}$. Moreover, throughout this paper, we shall make frequent use of the following multi-index notations. For any vector $x = (x_i)_{i=1}^d \in \mathbb{R}^d$, we write $\lfloor x \rfloor := (\lfloor x_i \rfloor)_{i=1}^d$, $x^{-1} := (x_i^{-1})_{i=1}^d$, $\log(x) := (\log x_i)_{i=1}^d$, $\overline{x} := \max_{i = 1, \ldots, d} x_i$, and $\underline{x} := \min_{i = 1, \ldots, d} x_i$.

\subsubsection{Least Squares Regression}

According to the learning target of finding the best regression function, it is legitimate to consider the least squares loss
$L = L_{\mathrm{LS}} : \mathcal{X} \times \mathcal{Y} \to [0, \infty)$ defined by $L(x, y, f(x)) := (y - f(x))^2$. Then, for a measurable decision function $f : \mathcal{X} \to \mathcal{Y}$, the risk is defined by
\begin{align*}
\mathcal{R}_{L,\mathrm{P}}(f) 
:= \int_{\mathcal{X} \times \mathcal{Y}} L(x, y, f(x)) \, d\mathrm{P}(x,y), 
\end{align*}
and the empirical risk is defined by
\begin{align*}
\mathcal{R}_{L,\mathrm{D}}(f) 
:= \frac{1}{n} \sum_{i=1}^n L(X_i, Y_i, f(X_i)),
\end{align*}
where $\mathrm{D}:= \frac{1}{n} \sum_{i=1}^n \delta_{(X_i, Y_i)}$ is the empirical measure associated to data and $\delta_{(X_i, Y_i)}$ is the Dirac measure at $(X_i, Y_i)$. The Bayes risk which is the minimal risk with respect to $\mathrm{P}$ and $L$ can be given by
\begin{align*}
\mathcal{R}_{L,\mathrm{P}}^* 
:= \inf \{ \mathcal{R}_{L,\mathrm{P}}(f) \mid f : \mathcal{X} \to \mathcal{Y} \text{ measuarable}\}.
\end{align*}
In addition, a measurable function $f_{L, \mathrm{P}}^* : \mathcal{X} \to \mathcal{Y}$ with $\mathcal{R}_{L, \mathrm{P}} (f_{L, \mathrm{P}}^*) = \mathcal{R}_{L, \mathrm{P}}^*$  is called a Bayes decision function. By minimizing the risk, we can get the Bayes decision function 
\begin{align}\label{eq::bayes}
f_{L, \mathrm{P}}^* = \mathbb{E}_\mathrm{P} (Y \arrowvert X)
\end{align}
which is a $\mathrm{P}_X$-almost surely $[-M, M]$-valued function.
Finally, it is well-known that
\begin{align}\label{equ::prop1}
\mathcal{R}_{L, \mathrm{P}}(f) - \mathcal{R}_{L,\mathrm{P}}^* = \left\|f - f_{L, \mathrm{P}}^*\right\|_{L_2(\mathrm{P}_X)}^2.
\end{align}

In what follows, note that it is sufficient to consider estimators with values in $[-M, M]$ on $\mathcal{X}$. To this end, we introduce the concept of clipping the decision function, see also Definition 2.22 in \cite{StCh08}. Let $\wideparen{t}$ be the clipped value of $t\in \mathbb{R}$ at $\pm M$ defined by 
\begin{align*}
\wideparen{t} := 
\begin{cases}
- M & \text{ if } t < - M,  
\\ 
t & \text{ if } t \in [-M, M], 
\\ 
M & \text{ if } t > M. 
\end{cases} 
\end{align*}
Then, a loss is called \emph{clippable} at $M > 0$ if, for all $(y, t) \in \mathcal{Y} \times \mathbb {R}$, there holds
\begin{align*}
L(x, y, \wideparen{t}) \leq L(x, y, t).
\end{align*}

According to Example 2.26 in \cite{StCh08}, the least square loss $L$ here can be clipped at $M$. Obviously, the latter implies that 
\begin{align*}
\mathcal{R}_{L, \mathrm{P}}(\wideparen{f}) 
\leq \mathcal{R}_{L, \mathrm{P}}(f)
\end{align*}
for all $f : \mathcal{X} \to \mathbb{R}$. In other words, restricting the decision function to the interval $[-M, M]$ cannot worsen the risk, in fact, clipping this function typically reduces the risk. Hence, in the following, we consider the clipped version $\wideparen{f}_{\mathrm{D}}$ of the decision function as well as the risk $\mathcal{R}_{L, \mathrm{P}}(\wideparen{f}_{\mathrm{D}})$ instead of the risk $\mathcal{R}_{L, \mathrm{P}}(f_{\mathrm{D}})$ of the unclipped decision function.

\subsubsection{H\"{o}lder Continuous Function Spaces}

In this paper, we mainly focus on the general function space $C^{k, \alpha}$ consisting of $(k, \alpha)$-H\"{o}lder continuous functions of different smoothness.

\begin{definition}\label{def::Cp}
Let $k \in \mathbb{N}_0 := \mathbb{N} \cup \{ 0 \}$, $\alpha \in (0, 1]$, and $R > 0$. We say that a function $f : B_R \to \mathbb{R}$ is $(k, \alpha)$-H\"{o}lder continuous, if there exists a finite constant $c_L > 0$ such that 
\begin{itemize}
\item[(i)] 
$\| \nabla^{\ell} f \| \leq c_L$ for all $\ell \in \{ 1, \ldots, k \}$;
\item[(ii)] 
$\| \nabla^k f(x) - \nabla^k f(x') \| \leq c_L \| x - x' \|^{\alpha}$ for all $x, x' \in B_R$.
\end{itemize}
The set of such functions is denoted by $C^{k, \alpha} (B_R)$.
\end{definition}

It can be seen from the definition above that functions contained in the space $C^{k,\alpha}$ with larger $k$ enjoy higher level of smoothness. Note that for the special case $k = 0$, the resulting function space $C^{0, \alpha} (B_R)$ coincides with the commonly used $\alpha$-H\"{o}lder continuous function space $C^{\alpha} (B_R)$.

\subsection{Histogram Transform} \label{subsec::HT}

To give a clear description of one possible construction procedure of histogram transforms, we introduce a random vector $(R, S, b)$ where each element represents the rotation matrix, stretching matrix and translation vector, respectively. To be specific, 
\begin{itemize}
\item[$R$] 
denotes the rotation matrix which is a real-valued $d \times d$ orthogonal square matrix with unit determinant, that is
\begin{align}\label{RotationMatrix}
R^{\top} = R^{-1} 
\quad 
\text{ and } 
\quad 
\det(R) = 1.
\end{align}
\item[$S$] 
stands for the stretching matrix which is a positive real-valued $d \times d$ diagonal scaling matrix with diagonal elements $(s_i)_{i=1}^d$ that are certain random variables. Obviously, there holds
\begin{align}\label{StretchingMatrix}
\det(S) = \prod_{i=1}^d s_i.
\end{align}
Moreover, we denote 
\begin{align}\label{equ::s}
{s} = (s_i)_{i=1}^d,
\end{align}
and the bin width vector measured on the input space is given by
\begin{align}\label{equ::h}
h =  s^{-1}.
\end{align}
\item[$b$] 
$\in [0,1]^d$ is a $d$ dimensional vector named translation vector. 
\end{itemize}

Here we describe a practical method for their construction we are confined to in this study. Starting with a $d \times d$ square matrix $M$, consisting of $d^2$ independent univariate standard normal random variates, a Householder $Q R$ decomposition  \cite{Householder1958Unitary} is applied to obtain a factorization of the form $M = R \cdot W$, with orthogonal matrix $R$ and upper triangular matrix $W$ with positive diagonal elements. The resulting matrix $R$ is orthogonal by construction and can be shown to be uniformly distributed. Unfortunately, if $R$ does not feature a positive determinant then it is not a proper rotation matrix according to definition (\ref{RotationMatrix}). However, if this is the case then we can flip the sign on one of the column vectors of $M$ arbitrarily to obtain $M^+$ and then repeat the Householder decomposition. The resulting matrix $R^+$ is identical to the one obtained earlier but with a change in sign in the corresponding column and $\det(R^+) = 1$, as required for a proper rotation matrix. See \cite{Blaser2016HT} for a brief account of the existed algorithms to generate random orthogonal matrices.

After that, we build a diagonal scaling matrix with the signs of the diagonal of $S$ where the elements $s_k$ are the well known Jeffreys prior, that is, we draw $\log (s_i)$ from the uniform distribution over certain interval of real numbers $[\log (\underline{s}_0), \log (\overline{s}_0)]$ for fixed constants $\underline{s}_0$ and $\overline{s}_0$ with $0<\underline{s}_0 < \overline{s}_0 < \infty$. By \eqref{equ::h}, there holds $h_i \in [\overline{s}_0^{-1}, \underline{s}_0^{-1}]$, $i=1,\ldots,d$. For simplicity and uniformity of notations, in the sequel, we denote $\overline{h}_0 = \underline{s}_0^{-1}$ and $\underline{h}_0 = \overline{s}_0^{-1}$, then we can say $h_i \in [\underline{h}_0, \overline{h}_0]$, $i=1,\ldots,d$.

Moreover, the translation vector $b$ is drawn from the uniform distribution over the hypercube $[0,1]^d$.

Based on the above notations, we define the histogram transform $H : \mathcal{X} \to \mathcal{X}$ by
\begin{align}\label{HistogramTransform}
H(x) := R \cdot S \cdot x + b,
\end{align}
which can be seen in Figure \ref{fig:RHT}, and the corresponding distribution by $\mathrm{P}_H := \mathrm{P}_R\otimes \mathrm{P}_S \otimes \mathrm{P}_b$, where $\mathrm{P}_R$, $\mathrm{P}_S$ and $\mathrm{P}_b$ represent the distribution for rotation matrix $R$, stretching matrix $S$ and translation vector $b$ respectively.

\begin{figure}[H]
\begin{minipage}[t]{0.99\textwidth}  
\centering  
\includegraphics[width=\textwidth]{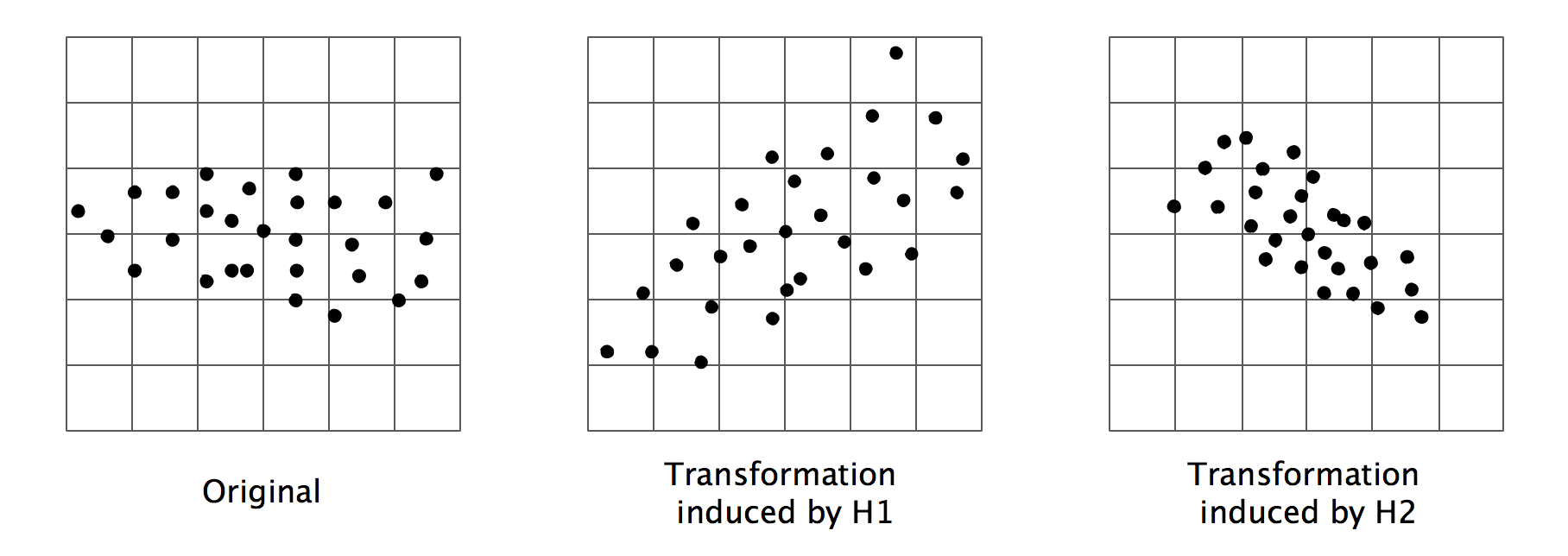}  
\end{minipage}  
\centering  
\caption{This figure shows two-dimensional examples of histogram transforms. The left subfigure is the original data and the other two subfigures are possible  histogram transforms of the original sample space, with obviously different rotating orientations and scales of stretching. }
\label{fig:RHT}
\end{figure}

Moreover, we denote $H'$ as the affine matrix $R \cdot S$, clearly, there holds
\begin{align}
\det(H') = \det(R) \cdot \det(S) = \prod_{i=1}^d s_i.
\end{align} 
The histogram probability $p (x | H', b)$ is defined by considering the bin width $h = 1$ in the transformed space. It is important to note that there is no point in using $h \neq 1$, since the same effect can be achieved by scaling the transformation matrix $H'$. Therefore, let $\lfloor H(x) \rfloor$ be the transformed bin indices, then the transformed bin is given by 
\begin{align}\label{TransBin}
A'_H(x) := \{ H(x') \ | \ \lfloor H(x') \rfloor = \lfloor H(x) \rfloor \}.
\end{align}
The corresponding histogram bin containing $x \in \mathcal{X}$ is 
\begin{align}\label{equ::InputBin}
A_H(x) := \{ x' \ | \ H(x') \in A'_H(x) \}
\end{align}
whose volume is $\mu(A_H(x)) = (\det(H'))^{-1}$.

For a fixed histogram transform $H$, we specify the partition of $B_r$ induced by the histogram rule \eqref{equ::InputBin}. 
Let $(A'_j)$ be the set of all cells generated by $H$, and denote $\mathcal{I}_H$ as the index set for $H$ such that $A'_j \cap B_r \neq \emptyset$ for all $j \in \mathcal{I}_H$. As a result, the set
\begin{align}\label{eq::PiH}
\pi_H 
:= (A_j)_{j \in \mathcal{I}_H}
:= (A'_j \cap B_r)_{j \in \mathcal{I}_H}
\end{align}
forms a partition of $B_r$. For notational convenience, if we substitute $A_0$ for $B_r^c$, then 
\begin{align*}
\pi'_H := (A_j)_{j \in \mathcal{I}_H \cup \{ 0 \}}
\end{align*}
forms a partition of $\mathbb{R}^d$.

\subsection{Histogram Transform Ensembles (HTE) for Regression} \label{sec::Al1}

Having developed the partition process induced by the histogram transforms, in this section, we formulate our histogram transform regressors, namely, the Na\"{i}ve histogram transform ensembles (NHTE) and kernel histogram transform ensembles (KHTE) using support vector machines.

In order to find an appropriate regressor under histogram transform $H$, we conduct our analysis under the framework of regularized empirical risk minimization (RERM). To be specific, let $L : \mathcal{X} \times \mathcal{Y} \times \mathbb{R} \to [0, \infty)$ be a loss and $\mathcal{F} \subset \mathcal{L}_0(\mathcal{X})$ be a non-empty set, where $\mathcal{L}_0(\mathcal{X})$ is the set of measurable functions on $\mathcal{X}$ and we let $\Omega : \mathcal{F} \to [0, \infty)$ be a penalty function. We denote regularized empirical risk minimization (RERM) as the learning method whose decision function $f_{\mathrm{D}}$ satisfying
\begin{align*}
f_{\mathrm{D}} = \argmin_{f \in \mathcal{F}} \mathcal{R}_{L, \mathrm{D}}(f) + \Omega (f)
\end{align*}
for all $n \geq 1$ and $D \in (\mathcal{X} \times \mathcal{Y})^n$.

\subsubsection{Na\"{i}ve Histogram Transform Ensembles (NHTE)}

In what follows, we define two ways to formulate NHTE, where the latter, with all single estimators sharing the same bin width $\underline{h}_0$, can be viewed as a special case of the former one. With the Bayesian decision function $f_{L,\mathrm{P}}^*$ lying in the space $C^{0,\alpha}$, we adopt the former one, for its generality, whereas for $f_{L,\mathrm{P}}^*$ in $C^{1,\alpha}$, we adopt the latter formulation, for the convenience of proving.

First, we illustrate the former and more general formulation. We define a function set $\mathcal{F}_H$ induced by histogram transform $H$, and then construct each single estimator by solving an optimization problem, with regard to bin width and this function set. Finally, the NHTE $f_{\mathrm{D},T}$ is obtained by performing the average of all single estimators.

To be specific, recall that for a given histogram transform $H$, the set $\pi_H = (A_j)_{j \in \mathcal{I}_H}$ forms a partition of $B_R$. We consider the function set $\mathcal{F}_H$ defined by
\begin{align} \label{mathcalFH}
\mathcal{F}_H := \biggl\{ \sum_{j \in \mathcal{I}_H} c_j \eins_{A_j} \ : \ c_j \in [-M,M], M > 0 \biggr\}.
\end{align}
Moreover, the bin width $h$ of the partition $\pi_H$ defined by \eqref{equ::h} is what we should penalize on. By penalizing on $h$, we are able to give some constraints on the complexity of the function set so that the set will have a finite VC dimension \citep{vapnik1971on}, and therefore make the algorithm PAC learnable \citep{valiant1984learnable}. In addition, it can also refrain the learning results from overfitting by avoiding the size of histogram bin to be too small.
With the data set $D$, the above RERM problem with respect to each function set $\mathcal{F}_H$ turns into
\begin{align}\label{equ::fDH}
(f_{\mathrm{D},H}, \underline{h}_0) 
:= \argmin_{\underline{h}_0}\argmin_{f \in \mathcal{F}_H} \lambda \underline{h}_0^{-2d} + \mathcal{R}_{L, \mathrm{D}}(f)
\end{align}
and its population version is presented by
\begin{align}\label{equ::fPH}
(f_{\mathrm{P},H}, \underline{h}_0^*) 
:= \argmin_{\underline{h}_0}\argmin_{f \in \mathcal{F}_H} \lambda \underline{h}_0^{-2d} + \mathcal{R}_{L, \mathrm{P}}(f).
\end{align}
It is well worth mentioning that the regularization term $\lambda \underline{h}_0^{-2d}$ is chosen from the following two aspects. Firstly, for simplicity of computation, we adopt the isotropic penalty for each dimension, that is to say, we penalize $\underline{h}_0$ rather than each elements $h_1, \ldots, h_d$. Secondly, take $C^{0,\alpha}$ as an example, as long as the peeling method (see Theorem 7.7 in \cite{StCh08}) holds, the exponent of $\underline{h}_0^{-1}$ will not have influence on the performance of convergence rate, therefore, we penalize on $\underline{h}_0^{-2d}$ which ensures the peeling method.

Let $\{ H_t \}_{t=1}^T$ be $T$ histogram transform independently drawn from distribution $\mathrm{P}_H$ and $\{ f_{\mathrm{D},H_t} \}_{t=1}^T$ be corresponding optimization solutions given by \eqref{equ::fDH}. We perform average of $f_{\mathrm{D},H_t}$ to obtain the na\"{i}ve histogram transform ensembles 
\begin{align}\label{eq::f_dt}
f_{\mathrm{D}, T} := \frac{1}{T} \sum_{t=1}^T f_{\mathrm{D},H_t}.
\end{align}

Next, we turn to the second formulation of NHTE, to be used in the theoretical analysis in the space $C^{1,\alpha}$. Herein we directly consider the algorithm in the sense of ensembles.

To this end, let $\{ H_t \}_{t=1}^T$ be $T$ histogram transforms induced by the same bin width $h$ and the function set $\mathcal{F}_h^T$ be defined by
\begin{align*}
\mathcal{F}_h^T := \biggl\{ \frac{1}{T} \sum_{t=1}^T f_t : f_t \in \mathcal{F}_{H_t}, t = 1, \ldots, T \biggr\},
\end{align*}
where the function sets $\{ \mathcal{F}_{H_t} \}_{t=1}^T$ are defined in the same way as \eqref{mathcalFH}. Then na\"{i}ve histogram transform ensembles are obtained within the RERM framework with respect to the function set $\mathcal{F}_h^T$ as 
\begin{align}\label{equ::fDE}
(f_{\mathrm{D},\mathrm{E}}, \underline{h}_{\mathrm{E}}) := \argmin_{\underline{h}_0} \argmin_{f \in \mathcal{F}_h^T} \lambda \underline{h}_0^{-2d} + \mathcal{R}_{L,\mathrm{D}}(f).
\end{align}
Moreover, its population version is given by  
\begin{align}\label{equ::fPE}
(f_{\mathrm{P},\mathrm{E}}, \underline{h}_{\mathrm{E}}^*) := \argmin_{\underline{h}_0} \argmin_{f \in \mathcal{F}_h^T} \lambda \underline{h}_0^{-2d} + \mathcal{R}_{L,\mathrm{P}}(f).
\end{align}

\subsubsection{Kernel Histogram Transform Ensembles (KHTE)} \label{sec::Al2}

The fomulation of KHTE is similar to that of NHTE in the space $C^{0,\alpha}$, but with a kernel-based function set.
Recall that $H$ is a histogram transform defined as in Section \ref{subsec::HT} and $\pi_H = (A_j)_{j \in \mathcal{I}_H}$ forms a partition of $B_R$ induced by the transform $H$ under the histogram rule \eqref{equ::InputBin}. The basic idea of our KHT approach is to consider for each bin $A_j$ of the partition an individual kernel regressor. To describe this approach in a mathematically rigorous way, we have to introduce some more notations. Let the index set
\begin{align*}
\mathcal{I}_j := \{i \in \{1, \ldots, n\} : x_i \in A_j\}, 
\qquad 
j \in \mathcal{I}_H,
\end{align*}
indicates the samples of $D$ contained in $A_j$, as well as the corresponding data set
\begin{align*}
D_j :=  \{(x_i, y_i) \in D : i \in \mathcal{I}_j\}, 
\qquad 
j \in \mathcal{I}_H.
\end{align*}
Moreover, for every $j \in \mathcal{I}_H$, we define a local loss $L_j : \mathcal{X} \times \mathcal{Y} \times \mathbb{R} \to [0, \infty)$ by
\begin{align*}
L_j (x, y, t):= \eins_{A_j}(x)L(x, y ,t)
\end{align*}
where $L : \mathcal{X} \times \mathcal{Y} \times \mathbb{R} \to [0, \infty)]$ is the least square loss that corresponds to our learning problem at hand. We further assume that $\mathcal{H}_j$ is a Reproducing Kernel Hilbert Space(RKHS) over $A_j$ with kernel $k_j : A_j \times A_j \to \mathbb{R}$. Here, every function $f\in \mathcal{H}_j$ is only defined on $A_j$. To this end, for $f \in \mathcal{H}_j$, we define the zero-extension $\widehat{f} : \mathcal{X} \to \mathbb{R}$ by
\begin{align*}
\widehat{f}(x) := 
\begin{cases} 
f(x), & \text{ if } x \in A_j, 
\\ 
0, & \text{ if }  x \notin A_j.
\end{cases} 
\end{align*}
Then, the extended space
\begin{align}\label{entendedRKHS}
\widehat{\mathcal{H}}_j := \{ \widehat{f} : f \in \mathcal{H}_j \}
\end{align}
equipped with the norm
\begin{align*}
\|\widehat{f}\|_{\widehat{\mathcal{H}}_j} := \|f\|_{\mathcal{H}_j}, \qquad \widehat{f} \in \widehat{\mathcal{H}}_j
\end{align*}
is an RKHS on $\mathcal{X}$, which is isometrically isomorphic to $\mathcal{H}_j$, see e.g. Lemma 2 in \cite{meister16a}.

Based on the preparations above, we are now able to construct an RKHS by a direct sum. 
To be specific, for $A, B \subset \mathcal{X}$ such that $A \cap B = \emptyset$ and $A \cup B \subset \mathcal{X}$, let $\mathcal{H}_A$ and $\mathcal{H}_B$ be RKHSs of the kernels $k_A$ and $k_B$ over $A$ and $B$, respectively. Furthermore, let $\widehat{\mathcal{H}}_A$ and $\widehat{\mathcal{H}}_B$ be the RKHSs of all functions of $\mathcal{H}_A$ and $\mathcal{H}_B$ extended to $\mathcal{X}$ in the sense of \eqref{entendedRKHS}. Then, $\widehat{\mathcal{H}}_A \cap \widehat{\mathcal{H}}_B = \{0\}$ and hence the direct sum
\begin{align}\label{DirectSum}
\mathcal{H} := \widehat{\mathcal{H}}_A + \widehat{\mathcal{H}}_B
\end{align}
exists. For $\lambda_A, \lambda_B > 0$ and $f \in \mathcal{H}$, let $\widehat{f}_A \in \widehat{\mathcal{H}}_A$ and $\widehat{f}_B \in \widehat{\mathcal{H}}_B$ be the unique functions such that $f = \widehat{f}_A + \widehat{f}_B$. 
Then, we define the norm $\|\cdot\|_{\mathcal{H}}$ by
\begin{align}\label{DirectSumNorm}
\|f\|_{\mathcal{H}}^2 
:= \lambda_A \|\widehat{f}_A\|_{\widehat{\mathcal{H}}_A}^2 
+ \lambda_B \|\widehat{f}_B\|_{\widehat{\mathcal{H}}_B}^2
\end{align}
and $\mathcal{H}$ equipped with the norm $\|\cdot\|_{\mathcal{H}}$ is again an RKHS for which
\begin{align*}
k(x, x') := \lambda_A^{-1} \widehat{k}_A (x, x') + \lambda_B^{-1} \widehat{k}_B(x, x'), 
\qquad 
x, x' \in \mathcal{X}
\end{align*}
is the reproducing kernel.

Note that in this paper, we only consider RKHSs of Gaussian RBF kernels. For this purpose, we summarize some notions and notations for the Gaussian case of RKHSs. For every $j \in \mathcal{I}_H$, let $k_{\gamma_j} : A_j \times A_j \to \mathbb{R}$ be the Gaussian kernel with width $\gamma_j > 0$, defined by
\begin{align}\label{equ::GaussianKernel}
k_{\gamma_j}(x, x') := \exp (-\gamma_j^{-2} \|x-x'\|_2^2),
\end{align}
with corresponding RKHS $\mathcal{H}_{\gamma_j}$ over $A_j$. According to the the discussion above, we define the extended RKHS by $\widehat{\mathcal{H}}_{\gamma_j}$ and the joint extended RKHS over $\mathcal{X}$ by $\mathcal{H} := \bigoplus_{j\in \mathcal{I}_H} \widehat{\mathcal{H}}_{\gamma_j}$.
We now formulate our kernel histogram transform ensembles in Gaussian RKHSs. To this end, we firstly consider the function space
\begin{align*}
\mathcal{H} &:= \bigg\{\sum_{j\in \mathcal{I}_H} f_{D_j, \gamma_j} : f_{D_j, \gamma_j} \in \widehat{\mathcal{H}}_{\gamma_j} \bigg\},
\end{align*}
and the KHT by solving the following optimization problem
\begin{align}
(f_{D,\gamma,H}, \underline{h}_0^*) 
& := \argmin_{\underline{h}_0} \argmin_{f\in\mathcal{H}} \lambda_1 \underline{h}_0^{q}+ \lambda_2\|f\|_{\mathcal{H}}^2+\frac{1}{n}\sum_{i=1}^n L(x_i,y_i, f(x_i))
\nonumber
\\
&= \argmin_{\underline{h}_0} \argmin_{f_j \in \widehat{\mathcal{H}}_{\gamma_j}} \lambda_1 \underline{h}_0^{q}+ \sum_{j\in \mathcal{I}_H } \lambda_{2,j} \|f\|_{\widehat{\mathcal{H}}_{\gamma_j}}^2 +\frac{1}{n}\sum_{i=1}^n \sum_{j\in \mathcal{I}_H} L_j(x_i,y_i, f(x_i)),
\label{equ::SVM}
\end{align}
where $\lambda_1>0$, $\lambda_{2,j} > 0$, and $\gamma_j > 0$.  Moreover, let $\{H_t, t=1,\ldots,T\}$ be $T$ histogram transforms and $f_{\mathrm{D}, \lambda, \gamma, H_t}$ be the $t$-th corresponding regularized histogram rule derived by \eqref{equ::SVM}, then we perform average to obtain the kernel histogram transform ensembles as 
\begin{align}\label{LSVMEHR}
f_{\mathrm{D},\gamma,\mathrm{E}} := \frac{1}{T} \sum_{t=1}^T f_{\mathrm{D}, \gamma, H_t}.
\end{align}

\subsubsection{Main Algorithm}\label{sec::subsec::Algorithm}

Our NHTE and KHTE can fit into the same algorithm, for they both share the basic structure of ensemble learning. Note that for NHTE, we adopt a so called \emph{best-scored} method, in the consideration of empirical performances. That is, for each single estimator, a certain number of candidate histogram transforms are generated under various hyperparameters $\underline{h}_0$ and $\overline{h}_0$, only the best one participates in constructing the final predictor. 
For KHTE, on the other hand, we skip the \emph{best-scored} operation. However, we can still exert the full use of them by means of parameter selections. Only the optimal $\underline{h}_0$ and $\overline{h}_0$ are universal for all component regressors of the ensemble estimator.

In Algorithm \ref{alg::HTE}, we show a general form of algorithm for HTE. 
We mention that for kernel HTE, i.e., HTE using support vector machines as local regressors, we simply choose $M = 1$.

\begin{algorithm}[H]
\caption{Histogram Transform Ensembles (HTE)}
\label{alg::HTE}
\KwIn{
Training data $D:=((X_1, Y_1), \ldots, (X_n,Y_n))$;
\\
\quad\quad\quad\quad Number of histogram transforms $T$;
\\
\quad\quad\quad\quad Bandwidth parameters $\{\underline{h}_0^i\}_{i=1}^M,\{\overline{h}_0^i\}_{i=1}^M$.
\\
} 
\For{$t =1 \to T$}{

\For{$i = 1 \to M$}{
	Generate random affine transform matrix
	$H_t^i=R_t\cdot S_t^i$;\\
	Apply data independent splitting to the transformed sample space; \\
	Apply constant functions or support vector machines to each cell; \\ 
	Compute the histogram regression mapping $f_{\mathrm{D},H_t^i}(x)$ induced by $H_t^i$.
}
Select the best mapping $f_{\mathrm{D},H_t}(x)$ with the minimal error.
}
\KwOut{The histogram transform ensemble for regression is
\begin{align*}
f_{\mathrm{D},\mathrm{E}}(x)
= \frac{1}{T} \sum_{t=1}^T f_{\mathrm{D},H_t}(x).
\end{align*}
}
\end{algorithm}

\section{Theoretical Results and Statements}\label{sec::Re1}

As mentioned above, our study on HTE in this paper differs when the Bayes decision rule $f_{L,\mathrm{P}}^*$ is assumed to have different smoothness, where mathematically speaking, the target function $f_{L,\mathrm{P}}^*$ resides in $C^{k,\alpha}$ with different $k \geq 0$, defined by Definition \ref{def::Cp}. 
In this section, we present main results on the convergence rates of our empirical decision function $f_{\mathrm{D},H}$ and $f_{\mathrm{D},\mathrm{E}}$ or $f_{\mathrm{D}, \gamma, H} $ and $f_{\mathrm{D}, \gamma,\mathrm{E}}$ to the Bayes decision function $f_{L, \mathrm{P}}^*$ of different smoothness.

This section is organized as follows. In Section \ref{sec::assumptions}, we firstly introduce some fundamental assumptions to be utilized in the theoretical analysis. Then under the assumption that $f_{L,\mathrm{P}}^* \in C^{0,\alpha}$, we prove almost optimal convergence rates for both single and ensemble NHT in Section \ref{sec::C0}, whereas in Section \ref{subsec::C1}, for the subspace $C^{1,\alpha}$ consisting of smoother functions, the lower bound of the single estimator illustrates the benefits of ensembles over single NHT. 
Moreover, if $k \geq 2$, although taking fully advantage of the nature of ensembles, as a constant-embedded regressor, 
NHT ensembles fail to attain the satisfactory convergence rates. Considering both theoretical and experimental performance, we are inspired to explore the kernel-embedded regressor KHT ensembles which is then verified to have almost optimal convergence rates in Section \ref{sec::Re2}. We also present some comments and discussions on the obtained main results as is shown Section \ref{sec::comments}.

\subsection{Fundamental Assumptions}\label{sec::assumptions}

To demonstrate theoretical results concerning convergence rates, fundamental assumptions are required respectively for the Bayesian decision function $f_{L,\mathrm{P}}^*$ and the bin width $h$ of stretching matrix $S$.

First of all, we assume the Bayesian decision function $f_{L,\mathrm{P}}^*$ lies in the function space $C^{k,\alpha}$.

\begin{assumption}
Let the Bayesian decision function $f_{L,\mathrm{P}}^*$ be defined in \eqref{eq::bayes}, assume that $f_{L,\mathrm{P}}^* \in C^{k,\alpha}$, where $\alpha \in (0,1]$ and $k \geq 0$. To be specific, 
we assume that
\begin{enumerate}
\item[(i)] 
for NHT, $f_{L,\mathrm{P}}^* \in C^{k,\alpha}$, where $\alpha \in (0,1]$ and $k = 0$;
\item[(ii)] 
for NHT, $f_{L,\mathrm{P}}^* \in C^{k,\alpha}$, where $\alpha \in (0,1]$ and $k = 1$;
\item[(iii)] 
for KHT, $f_{L,\mathrm{P}}^* \in C^{k,\alpha}$, where $\alpha \in (0,1]$ and $k \geq 2$.
\end{enumerate}
\end{assumption}

Then we assume the upper and lower bounds of the bin width $h$ are of the same order, that is, in a specific partition, the extent of stretching in each dimension cannot vary too much. Mathematically, we assume that the stretching matrix $S$ is confined into the class with width satisfying the following conditions.

\begin{assumption}\label{assumption::h}
Let the bin width $h \in [\underline{h}_0, \overline{h}_0]$ be defined as in \eqref{equ::h}, assume that there exists some constant $c_0 \in (0,1)$ such that 
\begin{align*}
c_0 \overline{h}_0 \leq \underline{h}_0 \leq c_0^{-1}\overline{h}_0.
\end{align*}
In the case that the bin width $h$ depends on the sample size $n$, that is, $h_n \in [\underline{h}_{0,n}, \overline{h}_{0,n}]$, assume that
there exist constants $c_{0,n} \in (0,1)$ such that 
\begin{align*}
c_{0,n} \overline{ {h}}_{0,n} \leq  \underline{ {h}}_{0,n} \leq c_{0,n}^{-1} \overline{ {h}}_{0,n}.
\end{align*}
\end{assumption}

\subsection{Results for NHT in the space $C^{0,\alpha}$}\label{sec::C0}

This section delves into proving almost optimal convergence rate for both single and ensemble NHT under the assumption that the Bayes decision function $f_{L,\mathrm{P}}^* \in C^{0,\alpha}$. 
Note that for the sake of the simplicity and uniformity of notations, we omit the index $t$ for a fixed $t \in \{1,\ldots,T\}$ and substitute $f_{\mathrm{D},H_n}$ for $f_{\mathrm{D},H_{t,n}}$.  
Moreover, for the sake of convenience, we write $\nu_n := \mathrm{P}^n \otimes \mathrm{P}_H$.

\subsubsection{Convergence Rates for Single NHT}

We now state our main result on the learning rates for single na\"{i}ve histogram transform regressor $f_{\mathrm{D},H_n}$ based on the established oracle inequality.

\begin{theorem}\label{thm::tree}
Let the histogram transform $H_n$ be defined as in \eqref{HistogramTransform} with bin width $h_n$ satisfying Assumption \ref{assumption::h}, and $f_{\mathrm{D}, H_n}$ be defined in \eqref{equ::fDH}. 
Furthermore, suppose that the Bayes decision function $f_{L, \mathrm{P}}^* \in C^{0,\alpha}$. Moreover, for all $\delta \in (0,1)$ let $(\lambda_n)$ and $(\overline{h}_{0,n})$ be defined by
\begin{align*}
\lambda_n := n^{-\frac{2(\alpha+d)}{2\alpha(1+\delta)+d}}, \qquad \overline{h}_{0,n} := n^{-\frac{1}{2\alpha(1+\delta)+d}}
\end{align*}
Then for all $\tau > 0$ and any $\xi > 0$, we have
\begin{align*}
\mathcal{R}_{L, \mathrm{P}}(f_{\mathrm{D},H_n}) - \mathcal{R}_{L,\mathrm{P}}^* 
\leq c \cdot n^{-\frac{2 \alpha}{2 \alpha + d}+\xi},
\end{align*}
holds with probability $\nu_n$ at least $1 - 3e^{-\tau}$, 
where $c$ is some constant depending on $\delta$, $d$, $M$, and $R$.
\end{theorem}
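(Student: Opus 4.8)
The plan is to control the excess risk $\mathcal{R}_{L,\mathrm{P}}(f_{\mathrm{D},H_n}) - \mathcal{R}_{L,\mathrm{P}}^*$ through the standard bias--variance split into a data-free \emph{approximation error} and a data-dependent \emph{sample error}, and then to feed these two ingredients into the oracle inequality for regularized empirical risk minimization established in the error-analysis section. Writing the regularized approximation error as
\[
A(\lambda_n) := \inf_{f \in \mathcal{F}_{H_n}} \Big( \lambda_n \underline{h}_0^{-2d} + \mathcal{R}_{L,\mathrm{P}}(f) - \mathcal{R}_{L,\mathrm{P}}^* \Big),
\]
the oracle inequality should yield, for the RERM solution $f_{\mathrm{D},H_n}$ (which already takes values in $[-M,M]$, since the coefficients of $\mathcal{F}_{H_n}$ are clipped) and with $\mathrm{P}^n$-probability at least $1 - 3e^{-\tau}$, a bound of the schematic form
\[
\mathcal{R}_{L,\mathrm{P}}(f_{\mathrm{D},H_n}) - \mathcal{R}_{L,\mathrm{P}}^* \lesssim A(\lambda_n) + \frac{\text{capacity of } \mathcal{F}_{H_n}}{n} + \frac{\tau}{n},
\]
where the fast $1/n$ scaling of the sample term is afforded by the variance bound $\mathbb{E}_{\mathrm{P}}(L\circ f - L\circ f_{L,\mathrm{P}}^*)^2 \lesssim \mathcal{R}_{L,\mathrm{P}}(f) - \mathcal{R}_{L,\mathrm{P}}^*$ valid for the bounded least-squares loss (variance exponent $\vartheta = 1$). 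Since every bound below holds for each realization of $H_n$, integrating against $\mathrm{P}_H$ returns the statement in $\nu_n = \mathrm{P}^n \otimes \mathrm{P}_H$ probability.

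For the approximation error I would exploit that $f_{L,\mathrm{P}}^* \in C^{0,\alpha}$ together with the geometry of the transform cells. Because rotation is an isometry and each bin width satisfies $h_i \in [\underline{h}_0, \overline{h}_0]$, every cell $A_j$ of $\pi_{H_n}$ has Euclidean diameter $\lesssim \sqrt{d}\,\overline{h}_{0,n}$; taking the piecewise-constant candidate that assigns to each bin the $\mathrm{P}_X$-conditional mean of $f_{L,\mathrm{P}}^*$ over that bin, the $\alpha$-H\"older estimate gives $\|f - f_{L,\mathrm{P}}^*\|_\infty \lesssim \overline{h}_{0,n}^{\alpha}$ on $\mathcal{X}$, hence by \eqref{equ::prop1} the excess risk of the candidate is $\lesssim \overline{h}_{0,n}^{2\alpha}$. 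Combined with the penalty and using Assumption \ref{assumption::h} to replace $\underline{h}_0 \asymp \overline{h}_0$, this produces $A(\lambda_n) \lesssim \lambda_n \overline{h}_{0,n}^{-2d} + \overline{h}_{0,n}^{2\alpha}$. For the capacity term I would bound the complexity of $\mathcal{F}_{H_n}$ through the number of cells meeting $B_R$, which is of order $\overline{h}_{0,n}^{-d}$; as $\mathcal{F}_{H_n}$ is a space of piecewise-constant functions on these cells with coefficients in $[-M,M]$, its VC dimension (equivalently its $L_2(\mathrm{D})$-entropy up to logarithmic factors) is controlled by $\overline{h}_{0,n}^{-d}$, so the capacity contribution is of order $\overline{h}_{0,n}^{-d}/n$ up to log terms. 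The over-penalization exponent $2d$ (rather than $d$) in $\lambda_n \underline{h}_0^{-2d}$ is exactly what lets the peeling device (Theorem 7.7 in \cite{StCh08}) run, and these logarithmic slack terms will be swept into the arbitrarily small $\xi$.

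Finally I would substitute the prescribed $\lambda_n$ and $\overline{h}_{0,n}$ and balance. A direct computation shows that both the penalty term $\lambda_n \overline{h}_{0,n}^{-2d}$ and the approximation term $\overline{h}_{0,n}^{2\alpha}$ equal $n^{-\frac{2\alpha}{2\alpha(1+\delta)+d}}$, while the sample term $\overline{h}_{0,n}^{-d}/n$ is of the strictly smaller order $n^{-\frac{2\alpha(1+\delta)}{2\alpha(1+\delta)+d}}$; hence the excess risk is $\lesssim n^{-\frac{2\alpha}{2\alpha(1+\delta)+d}}$. Since $\frac{2\alpha}{2\alpha(1+\delta)+d} \to \frac{2\alpha}{2\alpha+d}$ as $\delta \downarrow 0$, for any prescribed $\xi > 0$ one chooses $\delta = \delta(\xi)$ small enough that $n^{-\frac{2\alpha}{2\alpha(1+\delta)+d}} \leq c\, n^{-\frac{2\alpha}{2\alpha+d}+\xi}$, with $c$ depending on $\delta, d, M, R$. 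I expect the genuine obstacle to be the capacity/entropy estimate and the peeling step delivering the fast $1/n$ sample rate \emph{uniformly} over the random partition --- in particular checking that the VC/entropy bound for $\mathcal{F}_{H_n}$ is uniform in the draw of $H_n$, so that the $\mathrm{P}^n$-probability statement survives integration over $\mathrm{P}_H$ into the claimed $\nu_n$-bound; the approximation-error and parameter-balancing steps are comparatively routine.
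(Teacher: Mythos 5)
Your overall skeleton --- an oracle inequality splitting the penalized excess risk into a data-free approximation term and a data-dependent sample term, the H\"{o}lder-plus-cell-diameter argument giving an approximation error of order $\overline{h}_{0,n}^{2\alpha}$ (this is exactly Proposition \ref{prop::ApproximatiON-ERROR}), and the final balancing with $\delta = \delta(\xi)$ --- matches the paper. The gap is in the step you yourself flagged as the genuine obstacle: the capacity bound. You bound the complexity of $\mathcal{F}_{H_n}$ by the number of cells, of order $\overline{h}_{0,n}^{-d}$, at the \emph{prescribed} bin width, giving a sample term of order $\overline{h}_{0,n}^{-d}/n$ up to logarithms. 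But the estimator \eqref{equ::fDH} is a \emph{joint} minimizer over the bin-width parameter $\underline{h}_0$ and $f \in \mathcal{F}_H$, so the class the empirical minimizer actually ranges over is a union over all admissible bin widths; nothing a priori prevents it from selecting partitions far finer than $\overline{h}_{0,n}$, and a fixed cell-count capacity bound does not apply to that union. Your uniformity concern over the draw of $H_n$ is the easy part (the paper's VC and covering bounds depend only on $R$, $d$ and the bin-width bounds); the hard part is uniformity over the data-selected bin width, which your schematic oracle inequality does not address.

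The paper resolves this precisely by pushing the penalty into the localization: for $g$ in the localized class one has $\lambda \underline{h}_0^{-2d} \leq r$, hence $\underline{h}_0^{-1} \leq (r/\lambda)^{1/(2d)}$ (Lemma \ref{lem::entropynumber}), so the entropy numbers of the localized class grow with $r/\lambda$, and after the peeling argument the oracle inequality (Theorem \ref{thm::OracleTree}) carries the sample term $3c\lambda_n^{-1/(1+2\delta)} n^{-2/(1+2\delta)}$ rather than $\overline{h}_{0,n}^{-d}/n$. Two consequences show your version is not a harmless variant. First, with the prescribed $\lambda_n$ the paper's sample term is of order $n^{-2\alpha/(2\alpha(1+\delta)+d)}$, i.e.\ of the \emph{same} order as the approximation term --- the stated $\lambda_n$ is exactly the value that balances the two --- whereas in your accounting the sample term is strictly dominated and $\delta$ plays no capacity role at all; that picture is an artifact of the unjustified capacity bound. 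Second, if your bound were available one could let $\delta \downarrow 0$ at no cost and obtain the rate $n^{-2\alpha/(2\alpha+d)}$ up to a pure logarithmic factor, which is the classical fixed-partition regressogram result; the paper cannot do this because the constants in its Rademacher estimate (Lemma \ref{lem::EERad}) blow up as $\delta \to 0$, and that blow-up is the price of controlling the bin-width-selecting estimator. So the proof can be repaired, but only by importing the paper's localization of $\underline{h}_0$ through the penalty, which is the actual mathematical content of the sample-error analysis, not a routine peeling check.
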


\subsubsection{Convergence Rates for Ensemble NHT}

The following theorem establishes the convergence rate for histogram transform ensembles $f_{\mathrm{D},T}$ based on \eqref{eq::f_dt}.

\begin{theorem}\label{thm::forest}
Let the histogram transform $H_n$ be defined as in \eqref{HistogramTransform} with bin width $h_n$ satisfying Assumption \ref{assumption::h}, and $f_{\mathrm{D},T}$ be defined in \eqref{eq::f_dt}.
Furthermore, suppose that the Bayes decision function $f_{L, \mathrm{P}}^* \in C^{0,\alpha}$. 
Moreover, for all $\delta \in (0,1)$, let $(\lambda_n)$ and $(\overline{h}_{0,n})$ be defined by
\begin{align*}
\lambda_n := n^{-\frac{2(\alpha+d)}{2\alpha(1+\delta)+d}}, 
\qquad 
\overline{h}_{0,n} := n^{-\frac{1}{2\alpha(1+\delta)+d}}.
\end{align*}
Then for all $\tau > 0$ and any $\xi > 0$, we have
\begin{align*}
\mathcal{R}_{L, \mathrm{P}}(f_{\mathrm{D},T}) - \mathcal{R}_{L,\mathrm{P}}^* 
\leq c \cdot n^{-\frac{2 \alpha}{2 \alpha + d}+\xi},
\end{align*}
holds with probability $\nu_n$ at least $1 - 3 e^{-\tau}$,
where $c$ is some constant depending on $\delta$, $d$, $M$, $R$, and $T$.
\end{theorem}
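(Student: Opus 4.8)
The plan is to reduce the ensemble bound directly to the single-estimator bound of Theorem~\ref{thm::tree} by exploiting the convexity of the squared $L_2(\mathrm{P}_X)$-norm, and then to control the failure probability by a union bound over the $T$ independently drawn transforms. The key structural fact is that for least squares regression the excess risk is itself a squared norm via \eqref{equ::prop1}, so averaging predictors corresponds to averaging inside that norm.

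First I would use \eqref{equ::prop1} to rewrite the excess risk of $f_{\mathrm{D},T} = \frac{1}{T}\sum_{t=1}^T f_{\mathrm{D},H_t}$ as a distance to the Bayes function. Writing $f_{L,\mathrm{P}}^* = \frac{1}{T}\sum_{t=1}^T f_{L,\mathrm{P}}^*$,
\[
\mathcal{R}_{L,\mathrm{P}}(f_{\mathrm{D},T}) - \mathcal{R}_{L,\mathrm{P}}^*
= \left\| \frac{1}{T}\sum_{t=1}^T \bigl(f_{\mathrm{D},H_t} - f_{L,\mathrm{P}}^*\bigr) \right\|_{L_2(\mathrm{P}_X)}^2 .
\]
Applying Jensen's inequality to the convex map $g \mapsto \|g\|_{L_2(\mathrm{P}_X)}^2$ with the uniform weights $1/T$ and then \eqref{equ::prop1} once more on each term gives
\[
\left\| \frac{1}{T}\sum_{t=1}^T \bigl(f_{\mathrm{D},H_t} - f_{L,\mathrm{P}}^*\bigr) \right\|_{L_2(\mathrm{P}_X)}^2
\leq \frac{1}{T}\sum_{t=1}^T \left\| f_{\mathrm{D},H_t} - f_{L,\mathrm{P}}^* \right\|_{L_2(\mathrm{P}_X)}^2
= \frac{1}{T}\sum_{t=1}^T \bigl( \mathcal{R}_{L,\mathrm{P}}(f_{\mathrm{D},H_t}) - \mathcal{R}_{L,\mathrm{P}}^* \bigr).
\]
Thus the ensemble excess risk is dominated by the arithmetic mean of the single-estimator excess risks, each of which is governed by Theorem~\ref{thm::tree}.

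Next I would apply Theorem~\ref{thm::tree} to each $f_{\mathrm{D},H_t}$, $t = 1,\ldots,T$, with the stated choices of $\lambda_n$ and $\overline{h}_{0,n}$. The delicate point is the probabilistic bookkeeping: each single bound holds with probability at least $1-3e^{-\tau}$, but I need all $T$ of them to hold simultaneously while still retaining the confidence level $1-3e^{-\tau}$. I would therefore invoke Theorem~\ref{thm::tree} at the shifted level $\tau' = \tau + \log T$, so that each failure event has probability at most $3e^{-\tau}/T$; since the transforms $H_1,\ldots,H_T$ are drawn independently from $\mathrm{P}_H$ while sharing the same sample $D \sim \mathrm{P}^n$ (i.e.\ the governing measure is $\mathrm{P}^n \otimes \mathrm{P}_H^{\otimes T}$, consistent with $\nu_n$), a union bound over $t$ yields the joint estimate with probability at least $1-3e^{-\tau}$. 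On this joint event the average of the single excess risks is at most $c\cdot n^{-2\alpha/(2\alpha+d)+\xi}$, which combined with the convexity bound proves the claim.

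I expect the main obstacle to be exactly this union-bound accounting rather than any new analytic content. One must check that the single-estimator oracle inequality degrades only logarithmically in the confidence parameter, so that replacing $\tau$ by $\tau + \log T$ leaves the rate $n^{-2\alpha/(2\alpha+d)+\xi}$ intact, the extra $\log T$ being absorbed into the constant (hence the dependence of $c$ on $T$) and into the arbitrarily small $\xi$. It is worth emphasizing that the convexity step is lossy: Jensen discards precisely the variance-reduction that the ensemble provides, so no improvement over the single-NHT rate can emerge from this argument. This is the technical reflection of the paper's observation that, within $C^{0,\alpha}$, the benefit of ensembling over a single NHT cannot be established.
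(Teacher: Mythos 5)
Your proposal is correct and follows essentially the same route as the paper's own proof: Jensen's inequality applied to the squared $L_2(\mathrm{P}_X)$-norm reduces the ensemble excess risk to the average of the single-estimator excess risks, followed by a union bound over the $T$ transforms using Theorem~\ref{thm::tree}. Your bookkeeping via the shifted level $\tau' = \tau + \log T$ is in fact slightly more careful than the paper, which concludes at confidence $1-3Te^{-\tau}$ and silently absorbs the factor $T$ into the constant; your explicit check that the $\log T$ shift only perturbs the $O(\tau/n)$ term and hence preserves the rate is exactly what justifies that step.
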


As shown in Theorems \ref{thm::tree} and \ref{thm::forest}, 
when the Bayesian decision function $f_{L,\mathrm{P}}^*$ lying in the space $C^{0,\alpha}$, 
the single and ensemble NHT both attain almost optimal learning rates,
if we choose the bin width of the order $\overline{h}_{0,n} = n^{-1/(2\alpha(1+\delta)+d)}$.
However, we fail to show the benefits of ensembles over single estimators. Therefore, to study the advantage of ensemble NHT in a learning rate point of view, we turn to the subspace $C^{1,\alpha}$.

\subsection{Results for NHT in the space $C^{1,\alpha}$}\label{subsec::C1}

In this subsection, we provide a result that illustrates the benefits of histogram transform ensembles over single histogram transform regressor by assuming that the Bayes decision function $f \in C^{1, \alpha}$. To this end, we firstly present the convergence rates of ensemble NHT when $T_n$, $\lambda_n$ and $\overline{h}_{0,n}$ are chosen appropriately in Theorem \ref{thm::optimalForest}. Then we obtain the lower bound of the single NHT to show that single histogram transform regressor does not benefit the additional smoothness assumption and fail to achieve the same convergence rates. We underline that the following theorem is conducted under certain conditions on the partial derivative of the decision function $f_{L,\mathrm{P}}^*$. Also, all theoretical results including both parameter selection for $\overline{h}_{0,n}$ and the lower bound, which establishes the exact difference of the convergence rate between the ensemble and single NHT, are verified experimentally in Section \ref{sec::subsec::parameter} and \ref{sec::subsec::syntheticdata}.

\subsubsection{Upper Bound of Convergence Rates for Ensemble NHT}\label{sec::upper}

\begin{theorem}\label{thm::optimalForest}
Let the histogram transform $H_n$ be defined as in \eqref{HistogramTransform} with bin width $h_n$ satisfying Assumption \ref{assumption::h}
and $T_n$ be the number of single estimators contained in the ensembles. 
Furthermore, let $f_{\mathrm{D},\mathrm{E}}$ be defined in \eqref{equ::fDE} and
suppose that the Bayes decision function $f_{L, \mathrm{P}}^* \in C^{1,\alpha}$
and $\mathrm{P}_X$ is the uniform distribution. 
Moreover, let $L_{\overline{h}_0}(x,y,t)$ be the least squares loss function restricted to $B_{R, \sqrt{d} \cdot \overline{h}_0}^+$, that is, 
\begin{align} \label{RestrictedLS}
L_{\overline{h}_0}(x,y,t) := L_{B_{R,\sqrt{d} \cdot \overline{h}_0}^+}(x,y,t) := \eins_{B_{R,\sqrt{d} \cdot \overline{h}_0}^+}(x) L(x,y,t),
\end{align}
where $L(x,y,t)$ is the least squares loss. 
Let the sequences $(T_n)$, $(\lambda_n)$ and $(\overline{h}_{0,n})$ be chosen as
\begin{align} \label{ParameterSelection}
\lambda_n := n^{-\frac{1}{2(1+\alpha)+2d}}, \qquad \overline{h}_{0,n}:=n^{-\frac{1}{2(1+\alpha)(2-\delta)+d}}, \qquad T_n := n^{\frac{2\alpha}{2(1+\alpha)(2-\delta)+d}},
\end{align}
where $\delta := 1/(8(c_d R/\underline{h}_{0,n})^d+1)$.
Then,  for all $\tau > 0$, the na\"{i}ve histogram transform ensemble regressor satisfies
\begin{align} \label{UpperBoundEnsemble}
\mathcal{R}_{L_{\overline{h}_0},\mathrm{P}}(f_{\mathrm{D},\mathrm{E}}) - \mathcal{R}_{L_{\overline{h}_0},\mathrm{P}}^* 
\lesssim n^{-\frac{2(1+\alpha)}{2(1+\alpha)(2-\delta)+d}}
\end{align}
with probability $\mathrm{P}^n$ not less than $1 - 4 e^{-\tau}$
in expectation with respect to $\mathrm{P}_H$.
\end{theorem}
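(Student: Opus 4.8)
The plan is to run the standard regularized empirical risk minimization argument for the class $\mathcal{F}_h^T$, decomposing the restricted excess risk of $\wideparen{f}_{\mathrm{D},\mathrm{E}}$ into a data-free \emph{approximation error} and a data-dependent \emph{sample error}, and then to locate the whole benefit of ensembling inside the approximation term. The restriction of the loss to the interior region $B_{R,\sqrt{d}\cdot\overline{h}_{0,n}}^+$ is exactly what makes this clean: since every rotated and stretched cell $A_{H}(x)$ has diameter at most $\sqrt{d}\cdot\overline{h}_{0,n}$, each $x$ in this region has its entire cell contained in $B_R$, so no truncated boundary cells appear and a full Taylor expansion is available on each cell. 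I would invoke the oracle inequality / peeling device (Theorem 7.7 in \cite{StCh08}, as developed in Section \ref{sec::Error}), which holds for any fixed oracle function $f_0 \in \mathcal{F}_h^T$, to obtain with $\mathrm{P}^n$-probability at least $1-4e^{-\tau}$ a bound of the shape
\begin{align*}
\mathcal{R}_{L_{\overline{h}_0},\mathrm{P}}(\wideparen{f}_{\mathrm{D},\mathrm{E}}) - \mathcal{R}_{L_{\overline{h}_0},\mathrm{P}}^*
\lesssim \bigl(\mathcal{R}_{L_{\overline{h}_0},\mathrm{P}}(f_0) - \mathcal{R}_{L_{\overline{h}_0},\mathrm{P}}^*\bigr) + (\text{regularization}) + (\text{sample error}),
\end{align*}
where the penalty exponent $-2d$ is immaterial to the resulting rate and only serves to activate peeling, exactly as noted after \eqref{equ::fPH}.

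For the approximation error I would take the oracle function $f_0 := \tfrac{1}{T}\sum_{t=1}^T g_{H_t}$, where $g_{H}(x) := \mu(A_H(x))^{-1}\int_{A_H(x)} f_{L,\mathrm{P}}^*(x')\,dx'$ is the cell-average function; since $f_{L,\mathrm{P}}^*$ is $[-M,M]$-valued and $\mathrm{P}_X$ is uniform, each $g_{H_t}\in\mathcal{F}_{H_t}$ and hence $f_0\in\mathcal{F}_h^T$. By the restricted form of \eqref{equ::prop1} the approximation error equals $\int_{B_{R,\sqrt{d}\cdot\overline{h}_{0,n}}^+}(f_0-f_{L,\mathrm{P}}^*)^2\,d\mathrm{P}_X$. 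Writing $e_t := g_{H_t}-f_{L,\mathrm{P}}^*$ and using a first-order Taylor expansion with the $\alpha$-H\"older gradient bound of Definition \ref{def::Cp},
\begin{align*}
e_t(x) = \nabla f_{L,\mathrm{P}}^*(x)\cdot\bigl(\overline{x}_{A_{H_t}(x)}-x\bigr) + O\bigl(\overline{h}_{0,n}^{1+\alpha}\bigr),
\end{align*}
where $\overline{x}_{A_{H_t}(x)}$ is the cell centroid. The decisive computation is that, because the translation $b$ is uniform on $[0,1]^d$, the fractional part of $H_t(x)$ is uniform on the unit cube, whence $\mathbb{E}_{\mathrm{P}_H}\bigl[\overline{x}_{A_{H_t}(x)}-x\bigr]=0$; the first-order term therefore cancels in expectation over $\mathrm{P}_H$, leaving $\mathbb{E}_{\mathrm{P}_H}[e_t(x)] = O(\overline{h}_{0,n}^{1+\alpha})$. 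This cancellation is precisely the mechanism absent for a single NHT, whose bias retains the full $O(\overline{h}_{0,n})$ linear term, and it is the crux of the entire theorem.

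It remains to convert this into a rate through a bias--variance split over the independent transforms. Since the $e_t$ are i.i.d. under $\mathrm{P}_H^{\otimes T}$ and $f_0-f_{L,\mathrm{P}}^*=\tfrac1T\sum_t e_t$, for each $x$ one has $\mathbb{E}_{\mathrm{P}_H}\bigl[(\tfrac1T\sum_t e_t(x))^2\bigr] = \bigl(\mathbb{E}_{\mathrm{P}_H}e_t(x)\bigr)^2 + T_n^{-1}\operatorname{Var}_{\mathrm{P}_H}(e_t(x))$, and the Lipschitz bound $|e_t(x)|\lesssim\overline{h}_{0,n}$ (from $\|\nabla f_{L,\mathrm{P}}^*\|\le c_L$) controls the variance, so integrating gives
\begin{align*}
\mathbb{E}_{\mathrm{P}_H}\int_{B_{R,\sqrt{d}\cdot\overline{h}_{0,n}}^+}\bigl(f_0-f_{L,\mathrm{P}}^*\bigr)^2\,d\mathrm{P}_X
\lesssim \overline{h}_{0,n}^{2(1+\alpha)} + T_n^{-1}\,\overline{h}_{0,n}^{2}.
\end{align*}
Substituting \eqref{ParameterSelection} makes both terms equal to $n^{-2(1+\alpha)/(2(1+\alpha)(2-\delta)+d)}$, the claimed rate. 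The main obstacle is to make the centroid identity and the variance estimate fully rigorous under the \emph{joint} law of rotation, stretching, and translation, and in particular to verify that the $T_n^{-1}$ variance contribution is \emph{balanced} — not dominated — by the squared bias under the prescribed schedule for $T_n$; this balance, rather than the bias alone, is what forces the exact exponents in \eqref{ParameterSelection}.

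Finally, for the sample error I would bound the capacity of $\mathcal{F}_h^T$: each $\mathcal{F}_{H_t}$ is piecewise constant on $\asymp\overline{h}_{0,n}^{-d}$ cells, so its VC/entropy is of that order, and averaging $T_n$ such classes inflates the pseudo-dimension only to $\asymp T_n\,\overline{h}_{0,n}^{-d}$. Combined with the variance-based (Bernstein-type) localization available for the least-squares loss, the sample error is of order $T_n\,\overline{h}_{0,n}^{-d}/n$ up to logarithmic factors; inserting \eqref{ParameterSelection} shows this is of strictly smaller order than $\overline{h}_{0,n}^{2(1+\alpha)}$, while the regularization contribution is absorbed by the peeling device (its exponent being immaterial) once $\underline{h}_{0,n}\asymp\overline{h}_{0,n}$ is used via Assumption \ref{assumption::h}. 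Hence the approximation error dominates, and taking $\mathbb{E}_{\mathrm{P}_H}$ of the whole decomposition — legitimate since the sample-error bound is uniform over transforms with fixed $\overline{h}_{0,n}$ — yields \eqref{UpperBoundEnsemble} in expectation over $\mathrm{P}_H$ with the stated probability over $\mathrm{P}^n$.
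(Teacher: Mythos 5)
Your proposal is structurally sound, and on the decisive step it coincides exactly with the paper: your treatment of the approximation error is precisely Proposition \ref{prop::biasterm} — the decomposition $\mathbb{E}_{\mathrm{P}_H}\bigl(f^*_{\mathrm{P},\mathrm{E}}(x)-f^*_{L,\mathrm{P}}(x)\bigr)^2 = \bigl(\mathbb{E}_{\mathrm{P}_H}e_t(x)\bigr)^2 + T^{-1}\mathrm{Var}_{\mathrm{P}_H}(e_t(x))$, the cancellation of the first-order Taylor term because the fractional part of $H(x)$ is uniform (the paper's $\mathbb{E}_{\mathrm{P}_H}(\tfrac12-u_i)=0$ after the Jacobian change of variables), and the Lipschitz bound giving the $T^{-1}\overline{h}_{0,n}^{2}$ variance term. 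Where you genuinely depart from the paper is the sample error. The paper does not use a conditional pseudo-dimension bound; it controls the capacity of the \emph{convex hull} $\mathrm{Co}(\mathcal{F}_H)$ over all transforms (Lemma \ref{lem::VCreg} fed into Lemma \ref{conv}), whose log covering number scales like $\varepsilon^{-(2-2\delta)}$ with $\delta = 1/(8(c_d R/\underline{h}_0)^d+1)$, and pushes this through Rademacher averages and peeling to get the oracle inequality of Proposition \ref{thm::OracleForest}, whose sample error $\lambda_n^{-1/(3-2\delta)}n^{-2/(3-2\delta)}$ is designed to \emph{balance} the approximation error. That balance is where the $(2-\delta)$ in \eqref{ParameterSelection} and \eqref{UpperBoundEnsemble} structurally comes from. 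In your route the sample error $T_n\overline{h}_{0,n}^{-d}/n$ is strictly negligible under \eqref{ParameterSelection} (the exponent gap is $(-2+2\delta(1+\alpha))/(2(1+\alpha)(2-\delta)+d)<0$), so $\delta$ plays no intrinsic role and only enters because the theorem prescribes it; your argument would in fact prove a slightly stronger statement. What the paper's heavier machinery buys is uniformity: the convex hull contains every ensemble for every $T$ and every realization of the transforms, so the high-probability event in Proposition \ref{thm::OracleForest} does not depend on $H_1,\ldots,H_{T}$, and one may then take $\mathbb{E}_{\mathrm{P}_H}$ of the approximation term freely — which is exactly what licenses the theorem's ``with probability $\mathrm{P}^n$ at least $1-4e^{-\tau}$ in expectation with respect to $\mathrm{P}_H$'' phrasing.

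Two points in your assembly need more care. First, in your conditional route the \emph{event} on which the localized bound holds does depend on the realized transforms; your remark that ``the sample-error bound is uniform over transforms'' is true of the bound's value but not of the event, so interchanging the high-probability statement over $\mathrm{P}^n$ with the expectation over $\mathrm{P}_H$ requires an extra Fubini/Markov step (costing either an additive $M^2 e^{-\tau}$ term or a worsened confidence level), or a capacity bound uniform over transforms as in the paper. Second, your claim that the penalty $\lambda_n\underline{h}_{0,n}^{-2d}$ is ``absorbed by the peeling device'' is too quick: with the prescribed $\lambda_n = n^{-1/(2(1+\alpha)+2d)}$ and $\underline{h}_{0,n}\asymp\overline{h}_{0,n}$ this term is \emph{not} of smaller order than the claimed rate (for $d=1$, $\alpha=1$, $\delta$ small its exponent is $-1/6+2/9>0$); the clean repair available in your framing is to note that once the transforms are fixed the penalty is a constant and cancels from both sides of the ERM comparison, so it never needs to be absorbed. (The paper's own proof is also cavalier here — it lists $\lambda_n\underline{h}_{0,n}^{-2d}$ among the terms claimed to be $\lesssim$ the rate, which fails for the stated $\lambda_n$ — so you should not lean on that part of the paper's reasoning.)
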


Note that as $n \to \infty$, we have $\underline{h}_{0,n} \to 0$ and thus $\delta \to 0$. Therefore,
the upper bound \eqref{UpperBoundEnsemble} of
our ensemble NHT attains 
asymptotically
a convergence rate which is slightly faster than 
\begin{align} \label{UpperBoundEnsemble2}
n^{-\frac{2(1+\alpha)}{4(1+\alpha)+d}},
\end{align}
if we choose the bin width as
$\overline{h}_{0,n} = n^{-1/(4(1+\alpha)+d)}$.
That is, if the bin width is larger or smaller than the optimal oder $\underline{h}_{0,n}$, our NHTE have inferior empirical performance. In contrast, the excess risk decreases as $T_n$ increases at the beginning, and when $T_n$ achieves a certain level, the learning rate ceases to improve and attains the optimal. 
Finally, we mention that the theoretical results \eqref{ParameterSelection} on the parameter selection of $\overline{h}_{0,n}$ and $T_n$ will be experimentally verified in Section \ref{sec::subsec::parameter}.

\subsubsection{Lower Bound of Convergence Rates for Single NHT}\label{sec::lower}

As mentioned at the beginning of this subsection, we now present the lower bound of the single NHT to illustrate the benefit of ensembles. Just to make it clear, the following theorem establishes a worse convergence rate in contrast to one shown in Theorem \ref{thm::optimalForest}.

\begin{theorem}\label{prop::counter}
Let the histogram transform $H$ be defined as in \eqref{HistogramTransform} with bin width $h$ satisfying Assumption \ref{assumption::h} with $\overline{h}_0 \leq 1$.
Moreover, let the regression model defined by 
\begin{align} \label{regressionmodel}
Y := f(X) +\varepsilon,
\end{align}
where $\varepsilon$ is independent of $X$ such that $\mathbb{E}(\varepsilon | X) = 0$ and $\mathrm{Var}(\varepsilon|X) =: \sigma^2 < \infty$.
Assume that $f \in C^{1,\alpha}$ and for a fixed constant $\underline{c}_f \in (0, \infty)$, let $\mathcal{A}_f$ denote the set 
\begin{align} \label{DegenerateSetF}
\mathcal{A}_f := 
\big\{ x \in \mathbb{R}^d : 
\|\nabla f\|_{\infty} \geq \underline{c}_f  
\big\}.
\end{align}
Then, for all $n > N'$ with
\begin{align} \label{N'}
N' := \min \biggl\{ n \in \mathbb{N} : \overline{h}_{0,n} \leq \frac{R}{4\sqrt{d}} \biggr\},
\end{align}
by choosing
\begin{align*}
\overline{h}_{0,n} := n^{-\frac{1}{2+d}},
\end{align*}
there holds
\begin{align}\label{eq::lowerbound}
\mathcal{R}_{L,\mathrm{P}}(f_{\mathrm{D},H_n})-\mathcal{R}_{L,\mathrm{P}}^*
\gtrsim n^{-\frac{2}{2+d}}
\end{align}
in expectation with respect to $\nu_n$.
\end{theorem}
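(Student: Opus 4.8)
The plan is to bypass all data-dependent (estimation-error) analysis and extract the lower bound purely from the \emph{approximation error} of piecewise-constant functions. By \eqref{equ::prop1} and the fact that $\mathbb{E}(\varepsilon\mid X)=0$ forces $f_{L,\mathrm{P}}^*=f$, the excess risk equals $\|f_{\mathrm{D},H_n}-f\|_{L_2(\mathrm{P}_X)}^2$. Since $f_{\mathrm{D},H_n}\in\mathcal{F}_{H_n}$ by \eqref{equ::fDH}, and since $f$ is $[-M,M]$-valued (so the box constraint $c_j\in[-M,M]$ in \eqref{mathcalFH} is inactive for the $L_2$-projection), we get the \emph{pathwise} bound, valid for every realization of the data and of $H_n$,
\begin{align*}
\|f_{\mathrm{D},H_n}-f\|_{L_2(\mathrm{P}_X)}^2
\geq \inf_{g\in\mathcal{F}_{H_n}}\|g-f\|_{L_2(\mathrm{P}_X)}^2
= \sum_{j\in\mathcal{I}_{H_n}}\int_{A_j}\bigl(f-\overline{f}_j\bigr)^2\,d\mathrm{P}_X,
\end{align*}
where $\overline{f}_j$ is the $\mathrm{P}_X$-conditional mean of $f$ on $A_j$. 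Because the right-hand side does not depend on the sample, the expectation over $\mathrm{P}^n$ is trivial, and it suffices to lower-bound this approximation error in expectation over $\mathrm{P}_H$. No variance/oracle-inequality machinery is needed; this is the structural reason a single regressor is provably worse than the ensemble.

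The core is a per-bin variance estimate exploiting the geometry of the rotated, stretched bins. For a bin $A_j$ with center $c_j$ lying in the interior region $B_{R,\sqrt{d}\,\overline{h}_0}^+$ (so $A_j$ is a full parallelepiped inside $B_R$) and with $c_j\in\mathcal{A}_f$, write the first-order Taylor expansion $f(x)=f(c_j)+\nabla f(c_j)^{\top}(x-c_j)+r(x)$; the $C^{1,\alpha}$ assumption gives $|r(x)|\lesssim\|x-c_j\|^{1+\alpha}\lesssim\overline{h}_0^{1+\alpha}$. The input bin is $A_j=S^{-1}R^{\top}(\text{cube}-b)$, a box with edge vectors $h_iR^{\top}e_i$, so writing $x-c_j=R^{\top}D(u-\tfrac12\mathbf 1)$ with $D=\mathrm{diag}(h_i)$ and $u$ uniform on $[0,1]^d$, the linear part $\ell(x)=\nabla f(c_j)^{\top}(x-c_j)$ has conditional variance
\begin{align*}
\mathrm{Var}_{\mathrm{P}_X\mid A_j}(\ell)
=\tfrac{1}{12}\sum_{i=1}^d h_i^2\bigl(R\nabla f(c_j)\bigr)_i^2
\geq \tfrac{1}{12}\,\underline{h}_0^2\,\|R\nabla f(c_j)\|_2^2
=\tfrac{1}{12}\,\underline{h}_0^2\,\|\nabla f(c_j)\|_2^2
\geq \tfrac{1}{12}\,\underline{h}_0^2\,\underline{c}_f^2,
\end{align*}
where orthogonality of $R$ cancels the rotation, $h_i\geq\underline{h}_0$ is Assumption \ref{assumption::h}, and $\|\nabla f\|_2\geq\|\nabla f\|_\infty\geq\underline{c}_f$ on $\mathcal{A}_f$ via \eqref{DegenerateSetF}. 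Since $\mathrm{Var}(r)\lesssim\overline{h}_0^{2(1+\alpha)}$, the Cauchy--Schwarz bound $|\mathrm{Cov}(\ell,r)|\leq\sqrt{\mathrm{Var}(\ell)\mathrm{Var}(r)}\lesssim\underline{h}_0\,\overline{h}_0^{1+\alpha}$ is of lower order than $\mathrm{Var}(\ell)\asymp\underline{h}_0^2$; hence for $n>N'$ (equivalently $\overline{h}_0$ small, cf.\ \eqref{N'}) we obtain $\mathrm{Var}_{\mathrm{P}_X\mid A_j}(f)\geq\tfrac{1}{24}\underline{h}_0^2\underline{c}_f^2$, so that $\int_{A_j}(f-\overline{f}_j)^2\,d\mathrm{P}_X\gtrsim\underline{h}_0^2\underline{c}_f^2\,\mathrm{P}_X(A_j)$.

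Summing over all interior bins with $c_j\in\mathcal{A}_f$ and factoring out the common constant, the total is $\gtrsim\underline{h}_0^2\underline{c}_f^2\,\mathrm{P}_X\bigl(\bigcup_j A_j\bigr)$. The union of these bins covers $\mathcal{A}_f$ up to a boundary layer near $\partial B_R$ and near $\partial\mathcal{A}_f$ of $\mathrm{P}_X$-measure $O(\overline{h}_0)\to0$, so $\mathrm{P}_X(\bigcup_j A_j)\gtrsim\mathrm{P}_X(\mathcal{A}_f)>0$ for $n$ large. Invoking $\underline{h}_0\asymp\overline{h}_0=n^{-1/(2+d)}$ from Assumption \ref{assumption::h} yields an approximation error $\gtrsim n^{-2/(2+d)}$ for essentially every realization of $H_n$ with $n>N'$; taking expectation over $\nu_n=\mathrm{P}^n\otimes\mathrm{P}_H$ gives \eqref{eq::lowerbound}.

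I expect the main obstacle to be the second step: correctly carrying out the parallelepiped variance computation so that the orthogonality of $R$ makes the lower bound \emph{uniform} over the random transform, together with the bookkeeping of the two boundary layers. The hypotheses are used precisely here — $\alpha>0$ guarantees the Taylor remainder is genuinely lower order ($1+\alpha>1$), and the threshold $n>N'$ in \eqref{N'} makes both the remainder and the boundary layers negligible relative to the leading $\underline{h}_0^2$ term, so that the gap between this $n^{-2/(2+d)}$ rate and the ensemble rate of Theorem \ref{thm::optimalForest} emerges cleanly.
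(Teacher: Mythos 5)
Your proposal is essentially correct and takes a genuinely leaner route than the paper. The paper proves Theorem \ref{prop::counter} via the exact (Pythagorean) decomposition \eqref{equ::L2Decomposition} and then lower-bounds \emph{both} terms: the approximation error $\mathbb{E}\,\|f^*_{\mathrm{P},H}-f\|^2_{L_2(\mathrm{P}_X)}\gtrsim \overline{h}_0^2$ (Proposition \ref{counterapprox}, via a Taylor expansion around each point $x$, the cell representation of Lemma \ref{binset}, orthogonality of $R$, and $\mathrm{Var}(\mathrm{Unif})=1/12$) and the estimation error $\gtrsim \sigma^2\overline{h}_0^{-d}n^{-1}$ (Proposition \ref{counterapprox2}, via binomial cell counts and Jensen's inequality), and sums them under $\overline{h}_{0,n}=n^{-1/(2+d)}$. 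You observe that, since the theorem fixes $\overline{h}_{0,n}=n^{-1/(2+d)}$, the approximation error alone already has the right order, and since $f_{\mathrm{D},H_n}\in\mathcal{F}_{H_n}$ the excess risk is \emph{pathwise} at least the squared $L_2(\mathrm{P}_X)$-projection distance of $f$ onto $\mathcal{F}_{H_n}$, which equals $\|f^*_{\mathrm{P},H_n}-f\|^2_{L_2(\mathrm{P}_X)}$; hence the binomial/Jensen machinery of Proposition \ref{counterapprox2} can be discarded entirely. Your core variance computation is the same algebra as the paper's ($\|R\nabla f\|_2=\|\nabla f\|_2$, per-coordinate variance $1/12$, $h_i\geq\underline{h}_0\geq c_0\overline{h}_0$), except that you realize the within-cell uniform variable through $\mathrm{P}_X$ conditional on $H$, whereas the paper realizes it through the random translation $b$, pointwise in $x$ and in expectation over $\mathrm{P}_H$; both routes need the assumption, used but not stated in the theorem, that $\mathrm{P}_X$ is uniform. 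What the paper's longer route buys is the conceptually complete statement that \emph{no} bandwidth choice can beat $n^{-2/(2+d)}$ (the two terms trade off); what yours buys is brevity and a bound holding for essentially every realization of $H_n$, not merely in expectation over $\nu_n$.

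Two local repairs are needed. First, your coverage claim --- that the bins with center in $\mathcal{A}_f$ cover $\mathcal{A}_f$ up to a layer of $\mathrm{P}_X$-measure $O(\overline{h}_0)$ near $\partial\mathcal{A}_f$ --- is unjustified for a general set of the form \eqref{DegenerateSetF}: $\mathrm{P}_X(\{x:\mathrm{dist}(x,\mathcal{A}_f^c)\leq\sqrt{d}\,\overline{h}_0\})$ need not be small. The clean fix uses the hypothesis $f\in C^{1,\alpha}$ that you already have: any cell meeting $\mathcal{A}_f$ has center $c_j$ with $\|\nabla f(c_j)\|_{\infty}\geq\underline{c}_f-c_L(\sqrt{d}\,\overline{h}_0)^{\alpha}\geq\underline{c}_f/2$ once $\overline{h}_0$ is small, so every cell meeting $\mathcal{A}_f\cap B^+_{R,\sqrt{d}\,\overline{h}_0}$ qualifies with constant $\underline{c}_f/2$ and their union trivially covers that set; alternatively, expand around each point $x\in\mathcal{A}_f$ as the paper does, which avoids cell centers altogether. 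Second, the threshold $N'$ of \eqref{N'} only guarantees $\overline{h}_0\leq R/(4\sqrt{d})$; to make the Cauchy--Schwarz cross term (correctly of order $\overline{h}_0^{2+\alpha}$, since Cauchy--Schwarz needs the \emph{upper} bound $\mathrm{Var}(\ell)\lesssim\overline{h}_0^2$) at most half of $\tfrac{1}{12}\underline{h}_0^2\underline{c}_f^2$, you need $\overline{h}_0^{\alpha}\lesssim c_0^2\underline{c}_f^2/c_L$ up to constants, i.e.\ $n$ beyond a threshold depending on $c_0$, $\underline{c}_f$, $c_L$, $\alpha$. This is harmless for the asymptotic claim --- and the paper is no more careful, as it silently drops the same cross term in the display leading to \eqref{equ::approx} --- but it should be stated. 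Finally, the swapped order in your cell parametrization ($R^{\top}D$ instead of $S^{-1}R^{\top}=DR^{\top}$ with $D:=\mathrm{diag}(h_1,\ldots,h_d)$) is inconsequential: either way orthogonality of $R$ yields the same lower bound $\tfrac{1}{12}\underline{h}_0^2\|\nabla f\|_2^2$.
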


Note that for any $\alpha \in (0, 1]$, if $d \geq 2(1+\alpha)/\alpha$, then the upper bound of the convergence rate of ensemble NHT \eqref{UpperBoundEnsemble} or \eqref{UpperBoundEnsemble2}
will be smaller than the lower bound of single NHT \eqref{eq::lowerbound}.
This exactly illustrates the benefits of ensemble NHT over single estimators. 
Moreover, the assumption \eqref{DegenerateSetF} on the derivative of $f$ is quite reasonable and intuitive: if $\mathrm{P}(\mathcal{A}_f) = 0$, then the decision function degenerates into a constant, which can be fitted perfectly by single NHT, and the ensemble procedure is no longer meaningful.

\subsection{Results for KHT in the Space $C^{k,\alpha}$} \label{sec::Re2}

When the regression function resides in the H\"{o}lder space $C^{k,\alpha}$ with large $k$, which contains smoother functions,  the NHTE may not be appropriate anymore. Thus, we consider applying kernel regressors such as support vector machines to achieve kernel HTE. Similar to what we obtain for NHT before, in this section, we aim to develop the learning theory analysis for KHTE in the space $C^{k,\alpha}$ which explores the convergence rates of this estimator resulted from the RERM approach formulated in \eqref{equ::SVM}. Throughout this section, let $\mathrm{P}$ be a distribution on $\mathbb{R}^d \times \mathcal{Y}$, denote the marginal distribution of $\mathrm{P}$ onto $\mathbb{R}^d$ by $\mathrm{P}_X$, write $\mathcal{X} := \supp(\mathrm{P}_X)$, and assume $\mathrm{P}_X(\partial \mathcal{X}) = 0$. Different from the aforementioned conclusion that there exists an optimal parameter $\overline{h}_{0,n}$ with respect to almost optimal convergence rates, in this section, the theoretical results for KHT show that smoother Bayesian decision functions require larger cells. Note that this result is also verified later by the numerical experiments in Section \ref{sec::subsec::realdata}.

\subsubsection{Convergence Rates for Single KHT}

Firstly, we state our main result on the learning rates for single KHT $f_{\mathrm{D}, \gamma_n,H_n}$.

\begin{theorem}\label{thm::convergenceratetree}
Let the histogram transform $H_n$ be defined as in \eqref{HistogramTransform} with bin width $h_n$ satisfying Assumption \ref{assumption::h}, and $f_{\mathrm{D}, \gamma_n,H_n}$ be as in \eqref{equ::SVM}. 
Moreover, let the Bayes decision function satisfy $f_{L, \mathrm{P}}^* \in C^{k,\alpha}$ and
for every $j \in \mathcal{I}_{H_n}$, we choose
\begin{align*}
\lambda_{1,n} := n^{-\frac{1}{2(k+\alpha)+d}}, 
\qquad 
\lambda_{2,n,j} := n^{-1},
\qquad
\gamma_{n,j} := n^{-\frac{1}{2(k+\alpha) + d}}, 
\qquad
\overline{h}_{0,n} := n^0.
\end{align*}
Then, for all $n \geq 1$ and $\xi >0$, there holds
\begin{align*}
\mathcal{R}_{L, \mathrm{P}} (\wideparen{f}_{\mathrm{D}, \gamma_n,H_n}) 
- \mathcal{R}_{L, \mathrm{P}}^*  
\leq c \cdot n^{-\frac{2(k+\alpha)}{2(k+\alpha)+d}+\xi}
\end{align*}
with probability $\nu_n$ not less than $1-3e^{-\tau}$, where $c$ is some constant depending on $M$, $k$, $\alpha$, and $p$, which will be specified in the proof.
\end{theorem}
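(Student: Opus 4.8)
The standard route is to bound the excess risk $\mathcal{R}_{L,\mathrm{P}}(\wideparen{f}_{\mathrm{D},\gamma_n,H_n}) - \mathcal{R}_{L,\mathrm{P}}^*$ via an oracle inequality and then balance the resulting terms by plugging in the prescribed parameter choices. Since $\overline{h}_{0,n} := n^0$, the partition $\pi_{H_n}$ stays coarse: the number of cells $|\mathcal{I}_{H_n}|$ is bounded by a constant (depending on $R$ and $d$), so the estimator behaves essentially like a fixed finite direct sum of Gaussian-kernel SVMs over $O(1)$ bins. This is the decisive structural simplification, and it lets us treat the whole thing by the established SVM learning theory (as in \cite{StCh08}) applied locally on each bin and then summed.

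I would carry out the argument in the usual approximation-error/sample-error split. First, for the \emph{approximation error}, I would construct on each bin $A_j$ an explicit function $f_{0,j} \in \mathcal{H}_{\gamma_{n,j}}$ — typically a convolution of (a local polynomial Taylor surrogate of) $f_{L,\mathrm{P}}^*$ with the Gaussian kernel — and bound $\lambda_{1,n}\overline{h}_0^q + \lambda_{2,n,j}\|f_{0,j}\|_{\mathcal{H}_{\gamma_{n,j}}}^2 + \mathcal{R}_{L,\mathrm{P}}(f_{0,j}) - \mathcal{R}_{L,\mathrm{P}}^*$. The key is that for $f_{L,\mathrm{P}}^* \in C^{k,\alpha}$, the Gaussian-kernel approximation gives a bias of order $\gamma_{n,j}^{2(k+\alpha)}$ while $\|f_{0,j}\|_{\mathcal{H}_{\gamma_{n,j}}}^2$ blows up like $\gamma_{n,j}^{-d}$ (or a comparable power); with $\gamma_{n,j} = n^{-1/(2(k+\alpha)+d)}$ and $\lambda_{2,n,j} = n^{-1}$, the regularization and bias terms balance at the order $n^{-2(k+\alpha)/(2(k+\alpha)+d)}$. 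Summing over the $O(1)$ bins preserves this rate. This is where the smoothness index $k$ enters, and it is the part that requires care: one must verify that the Gaussian RKHS over each bounded cell admits a function approximating a $C^{k,\alpha}$ target to the claimed order while controlling its RKHS norm.

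Second, for the \emph{sample error}, I would control the capacity of the local Gaussian RKHS balls, using entropy-number or covering-number estimates for Gaussian kernels over bounded domains (again from \cite{StCh08}), and apply an oracle inequality together with a variance bound for the clipped least-squares loss. Since clipping gives $\wideparen{f}$ values in $[-M,M]$, the supremum bound and variance bound needed for the concentration step are available, and the least-squares loss is clippable (as established in the excerpt). The stochastic term contributes the extra $\xi$ in the exponent, reflecting a logarithmic slack from the entropy estimate absorbed into an arbitrarily small polynomial factor. The probability $1 - 3e^{-\tau}$ arises from combining three deviation events (typically: a Bernstein-type concentration for the empirical process, a bound relating empirical and population partition quantities, and the oracle inequality itself). \textbf{The main obstacle} I anticipate is the approximation-error estimate: pinning down the precise trade-off between the Gaussian approximation order for a $C^{k,\alpha}$ function and the growth of the RKHS norm, and ensuring this carries over the bin boundaries (where $f_{L,\mathrm{P}}^*$ is only $C^{k,\alpha}$ across the whole domain, yet each local regressor sees only $A_j$). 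The direct-sum RKHS construction in \eqref{DirectSum}–\eqref{DirectSumNorm} is exactly what lets the local estimates be assembled, so the boundary issue reduces to summing finitely many local bounds rather than matching values across bins.
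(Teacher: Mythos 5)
Your plan matches the paper's proof essentially step for step: the paper bounds the approximation error by taking $f_0 = \sum_{j \in \mathcal{I}_H} \eins_{A_j}\cdot(K_j * f_{L,\mathrm{P}}^*)$ with $K_j$ the signed combination of dilated Gaussians from \eqref{Conv} (via Proposition \ref{thm::convolution}), which yields exactly the bias of order $\overline{\gamma}^{2(k+\alpha)}$ and the squared-RKHS-norm growth of order $\gamma_j^{-d}$ that you describe, and it then combines the entropy-number bound of Proposition \ref{lem::jointspacecoveringnumber} with the oracle inequality of Proposition \ref{prop::oracleCk} under the same parameter balancing, the slack $\xi$ arising from the arbitrarily small entropy exponent $p$ just as you anticipate. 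The only cosmetic difference is that the paper's higher-order approximant convolves $f_{L,\mathrm{P}}^*$ itself with the signed Gaussian mixture $K_j$ rather than convolving a local Taylor surrogate with a single Gaussian, but these are interchangeable versions of the same standard device.
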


\subsubsection{Convergence Rates for Ensemble KHT}

We now present the convergence rates for ensemble KHT.

\begin{theorem}\label{thm::convergencerateforest}
Let the histogram transform $H_n$ be defined as in \eqref{HistogramTransform} with bin width $h_n$ satisfying Assumption \ref{assumption::h}, and $f_{\mathrm{D}, \gamma_n,\mathrm{E}}$ be as in \eqref{LSVMEHR}. 
Moreover, let the Bayes decision function satisfy $f_{L, \mathrm{P}}^* \in C^{k,\alpha}$ and
for every $j \in \mathcal{I}_{H_n}$, we choose
\begin{align*}
\lambda_{1,n} := n^{-\frac{1}{2(k+\alpha)+d}}, 
\qquad 
\lambda_{2,n,j} := n^{-1},
\qquad
\gamma_{n,j} := n^{-\frac{1}{2(k+\alpha) + d}}, 
\qquad
\overline{h}_{0,n} := n^{0}.
\end{align*}
Then, for all $n \geq 1$ and $\xi >0$, there holds
\begin{align*}
\mathcal{R}_{L, \mathrm{P}} (\wideparen{f}_{\mathrm{D}, \gamma_n,\mathrm{E}}) 
- \mathcal{R}_{L, \mathrm{P}}^*  
\leq c \cdot n^{-\frac{2(k+\alpha)}{2(k+\alpha)+d}+\xi}
\end{align*}
with probability $\nu_n$ not less than $1-3e^{-\tau}$, where $c$ is some constant depending on $M$, $k$, $\alpha$, $p$, and $T$, which will be specified in the proof.
\end{theorem}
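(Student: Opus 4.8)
The plan is to mirror the analysis of the single kernel histogram transform regressor in Theorem~\ref{thm::convergenceratetree} and to exploit convexity of the least squares excess risk, so that passing to the ensemble changes neither the approximation order nor the achievable rate, but only the constant — which is precisely why $c$ is allowed to depend on $T$. First I would reduce the target quantity. Since the least squares loss is clippable and $f_{L,\mathrm P}^*$ is $[-M,M]$-valued, the identity \eqref{equ::prop1} gives $\mathcal R_{L,\mathrm P}(\wideparen f_{\mathrm D,\gamma_n,\mathrm E})-\mathcal R_{L,\mathrm P}^* = \|\wideparen f_{\mathrm D,\gamma_n,\mathrm E}-f_{L,\mathrm P}^*\|_{L_2(\mathrm P_X)}^2$, and because clipping is the nonexpansive projection onto $[-M,M]$, this is pointwise dominated by $|\frac1T\sum_t (f_{\mathrm D,\gamma,H_t}-f_{L,\mathrm P}^*)|$, whence by convexity of $t\mapsto t^2$ the squared $L_2$ norm of the average is at most the average of the squared $L_2$ norms. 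This convexity step shows why ensembling cannot degrade the rate; rigorously, however, I would apply the general oracle inequality of Section~\ref{sec::Error} directly to the clipped aggregated estimator over all $T$ transforms, so that clipping is handled intrinsically. Running the inequality \emph{once} on the whole ensemble (rather than a union bound over the $T$ components) is what preserves the confidence level $1-3e^{-\tau}$.

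Second, the approximation (data-free) term. Using the standard Gaussian-RKHS approximation of Hölder functions, for each transform and each bin $A_j$ one constructs $f_{0,j}\in\mathcal H_{\gamma_j}$ with $\|f_{0,j}-f_{L,\mathrm P}^*\|_\infty^2 \lesssim \gamma^{2(k+\alpha)}$ and $\|f_{0,j}\|_{\mathcal H_{\gamma_j}}^2\lesssim \gamma^{-d}$; the per-transform sum over bins and then the ensemble average inherit the same orders. Hence the regularized approximation error is of order $\lambda_{2,n}\gamma^{-d}+\gamma^{2(k+\alpha)}$, and with $\lambda_{2,n}=n^{-1}$ and $\gamma_n=n^{-1/(2(k+\alpha)+d)}$ both contributions balance at $n^{-2(k+\alpha)/(2(k+\alpha)+d)}$, while the penalty $\lambda_{1,n}\overline h_{0,n}^{\,q}$ with $\overline h_{0,n}=n^{0}$ is of the same order. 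I would also verify that the choice $\overline h_{0,n}=n^{0}$ (constant-size cells) keeps the number of bins $|\mathcal I_{H_n}|$ bounded, so that the localization through the direct sum $\mathcal H=\bigoplus_{j}\widehat{\mathcal H}_{\gamma_j}$ does not inflate these orders.

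Third, the estimation (sample) term. Here the clipped components lie in $[-M,M]$, so the least squares loss satisfies the variance bound with exponent $\theta=1$ and a supremum bound, which are exactly the ingredients the oracle inequality needs for fast rates. The only genuinely new point relative to the single-transform proof is the capacity of the aggregated class: covering/entropy numbers of a sum of $T$ Gaussian-RKHS balls (on constant-size cells) are controlled by summing the single-transform entropy bounds, so the effective entropy constant scales with $T$ while the $i^{-1/(2p)}$ decay is unchanged. Feeding this entropy bound, the variance bound, and the approximation order of the second step into the oracle inequality and optimizing over the stated $\lambda_{1,n},\lambda_{2,n,j},\gamma_{n,j}$ yields the claimed rate $n^{-2(k+\alpha)/(2(k+\alpha)+d)+\xi}$, with $\xi$ arising from the freedom in the entropy exponent $p$ and the factor $T$ absorbed into the constant $c$.

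The main obstacle will be the capacity control of the aggregated ensemble class together with the direct-sum-over-bins structure: one must show that summing $T$ independent Gaussian-RKHS balls, each supported on its own cell and with bandwidth $\gamma_{n,j}$, produces an entropy bound whose dependence on $T$ is purely multiplicative in the constant and does not enter the $n$-exponent, and simultaneously that the local losses $L_j$ and the behavior near cell boundaries degrade neither the variance bound nor the approximation order. The clipping subtlety — that the ensemble is the clip of an average of possibly large unclipped SVM outputs rather than an average of clipped ones — must be handled with care when invoking the variance and supremum conditions, and is most cleanly resolved by running the oracle inequality on the clipped aggregated class as in the first step rather than on each single estimator separately.
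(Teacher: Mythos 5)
Your central step is the one that fails. You propose to apply the oracle inequality of Section \ref{sec::Error} ``once'' to the clipped aggregated estimator, viewed as an element of the class $\bigl\{\frac{1}{T}\sum_{t=1}^T f_t : f_t \text{ in the direct-sum RKHS of } H_t\bigr\}$. But Proposition \ref{prop::oracleCk} (like Theorem 7.20 of \cite{StCh08} behind it) applies only to functions that (approximately) minimize the regularized \emph{empirical} risk over the class in question, and the KHTE estimator \eqref{LSVMEHR} is not such a minimizer: it is the average of $T$ SVMs trained \emph{separately} via \eqref{equ::SVM}, one per transform. The per-transform optimality cannot be converted into approximate minimality of any aggregated objective, because convexity runs the wrong way: for a competitor $g=\frac{1}{T}\sum_t g_t$ one only has $\mathcal{R}_{L,\mathrm{D}}(g)\le\frac{1}{T}\sum_t\mathcal{R}_{L,\mathrm{D}}(g_t)$, so the averaged empirical objectives (which the ensemble does minimize, since they decouple) can sit far \emph{above} the aggregated objective $\Omega(g)+\mathcal{R}_{L,\mathrm{D}}(g)$ that the oracle inequality requires you to compete against; e.g.\ two components with large, cancelling errors have a near-optimal average but large individual risks. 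This is exactly why the paper can run an ensemble-level oracle inequality for NHT in $C^{1,\alpha}$ (Proposition \ref{thm::OracleForest}) --- there the ensemble is \emph{defined} in \eqref{equ::fDE} as a joint RERM over $\mathcal{F}_h^T$ --- but cannot do so for KHTE, whose definition is an average of separately trained estimators. Your entropy analysis of sums of $T$ Gaussian-RKHS balls, even if carried out correctly, would therefore bound the risk of the wrong object.

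The reason you give for rejecting the union-bound route --- that it would not preserve the level $1-3e^{-\tau}$ --- is also unfounded, and that route is precisely the paper's proof: by Jensen's inequality the ensemble excess risk is at most $\frac{1}{T}\sum_{t=1}^T\bigl(\mathcal{R}_{L,\mathrm{P}}(f_{\mathrm{D},\gamma,H_t})-\mathcal{R}_{L,\mathrm{P}}^*\bigr)$, each summand obeys Theorem \ref{thm::convergenceratetree}, and a union bound yields the rate with probability at least $1-3Te^{-\tau}$; replacing $\tau$ by $\tau+\ln T$ restores $1-3e^{-\tau}$, with the resulting $\ln T$ inflation absorbed into $c$, which the statement explicitly allows to depend on $T$. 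Your remark on clipping --- that the theorem concerns the clip of an average of possibly unbounded SVM outputs rather than an average of clipped ones --- is a fair observation, and indeed a loose end in the paper's own write-up (the clipped ensemble error is pointwise dominated by the average of the \emph{unclipped} single errors, while Theorem \ref{thm::convergenceratetree} is stated for clipped singles); however, your proposed resolution does not repair it, since it rests on the inapplicable ensemble-level oracle inequality. The honest fixes are either to run the Jensen step with quantities the single-transform analysis actually controls, or to define the ensemble as an average of clipped single estimators.
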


As shown in Theorems \ref{thm::convergenceratetree} and \ref{thm::convergencerateforest}, 
in order to achieve almost optimal convergence rates, the bin width of the histogram transforms
$\overline{h}_{0,n}$ should be selected to be of the constant order. 
This phenomenon will also be experimentally verified in Section \ref{sec::subsec::parameter}.

\subsection{Comments and Discussions}\label{sec::comments}

From the above learning theory analysis, it becomes clear that our study provides an effective solution to large-scale regression problems, i.e., a nonparametric vertical method, built upon the partition induced by histogram transforms together with embedded regressors. We now go further in comparing our work with the existing studies.

Recall that the histogram transform estimator varies when the Bayes decision function $f_{L,\mathrm{P}}^*$ satisfies different $(k,\alpha)$-H\"{o}lder continuous assumptions and theoretical analysis on convergence rates is conducted for different estimators in these spaces respectively. For the space $C^{0,\alpha}$, almost optimal convergence rates $O(n^{-2\alpha/(2\alpha+d)+\xi})$ for both single NHT and ensemble NHT are derived in Theorem \ref{thm::tree} and Theorem \ref{thm::forest}. However, to the best of our knowledge, till now there is no existing literature successfully illustrating the exact benefits of ensembles over single estimators due to the same convergence rates for $f_{\mathrm{D},H}$ and $f_{\mathrm{D},T}$ in the space $C^{0,\alpha}$. Therefore, we turn to the subspace $C^{1,\alpha}$ consisting of a class of smoother functions and verify that ensemble NHT converges faster than single NHT. More precisely, 
Theorem \ref{thm::optimalForest} establishes
convergence rates $n^{-(2(1+\alpha))/(2(1+\alpha)(2-\delta)+d)}$, whereas in contrast, 
Theorem \ref{prop::counter} shows that single NHT fails to achieve this rate whose lower bound is of order $O(n^{-2/(d+2)})$. For the smoother space $C^{k, \alpha}$ with $k \geq 2$, constant regressors are no longer adequate for obtaining satisfactory theoretical results, therefore kernel regression strategy is adopted. We then establish almost optimal convergence rates $O(n^{-2(k+\alpha)/(2(k+\alpha)+d)+\xi})$ for both single KHT and ensemble KHT in Theorem \ref{thm::convergenceratetree} and \ref{thm::convergencerateforest} thanks to the use of some convolution technique that helps bounding the approximation error.

For vertical methods, \cite{meister16a} establishes almost optimal convergence rates $O(n^{-2\alpha/(2\alpha+d)+\xi})$ for VP-SVM when the Bayes decision function is assumed to reside in a Besov space with $\alpha$-degrees of smoothness, which coincides with our theoretical results for the  H\"{o}lder continuous function spaces.

For horizontal methods, \cite{Zhang2015Distributed} randomly partitions a dataset containing $n$ samples into several subsets of equal size, following by providing an independent kernel ridge regression estimator for each subset with a careful choice of the regularization parameter, and then synthesize them by performing a average. With the restriction that the Bayes decision function lies in the corresponding reproducing kernel Hilbert space, convergence rates are then presented with respect to different kernels in the sense of mean-squared error. For example, if the kernel has finite rank $r$, they obtain optimal convergence rates of type $O(r/n)$; for the kernel with $\nu$-polynomial eigendecay, the convergence rates of Fast-KRR algorithms turns out to be $O(n^{-2\nu/2\nu+1})$ which is also optimal, while for a kernel with sub-Gaussian eigendecay, the result turns out to be optimal up to a logarithm term $O(\sqrt{\log n}/ n)$. In a similar way, \cite{Lin2017Distributed} constructs random partition with equal sample size and obtain independent kernel ridge regression, but synthesize them by taking a weighted average rather than simple average. Then, under the smoothness assumption with respect to the $r$-th power of the integral operator $L_k$ and an $\alpha$-related capacity assumption, the convergence rate $O(n^{-2\alpha r/ (4\alpha r+1)})$ is verified to be almost optimal. \cite{Guo2017Distributed} focuses on the distributed regression with bias corrected regularization kernel network and derives the learning rates of order $O(n^{-2r/(2r+\beta)})$, where $\beta$ is the capacity related parameter.

Moreover, rather than the aforementioned two methods, there exist a flurry of studies for localized learning algorithms in the literature aiming at the large-scale regression problem. For example, KNN based methods are trained on $k$ samples which are closest to the testing point. Under some additional assumptions on the loss function, \cite{Hable2013universal} establishes the universal consistency for SVM-KNN considering metrics w.r.t. the feature space. In addition, training data is split into clusters and then an individual SVM is applied on each cluster in \cite{Cheng07localizedsupport, Cheng2010Efficient}. However, the presented results are mainly of experimental character.

\section{Error Analysis}\label{sec::Error}

In this section, we conduct error analysis for the single and ensemble estimators $f_{\mathrm{D}, H}$ and $f_{\mathrm{D}, \mathrm{E}}$ in the H\"{o}lder spaces $C^{k,\alpha}$ with $\alpha \in (0, 1]$ and $k = 0$, $k= 1$, and $k \geq 2$.

\subsection{Analysis for NHT in the space $C^{0,\alpha}$}\label{sec::ErrorCons}

In this subsection, we investigate the convergence property of 
$f_{\mathrm{D},H}$ and $f_{\mathrm{D},\mathrm{E}}$ when the Bayes decision function $f_{L,\mathrm{P}}^* \in C^{0,\alpha}$. Recall that $f_{\mathrm{P},H}$ and $f_{\mathrm{P},\mathrm{E}}$ are the population version of single NHT and NHTE estimators respectively, derived as in \eqref{equ::fPH} and \eqref{equ::fPE} within the RERM framework. To this end, we start with considering the single estimator. More precisely, the convergence analysis is conducted with the help of the following error decomposition. To this end, we define $h_f := L\circ f - L\circ f_{L,\mathrm{P}}^*$ for all measurable $f : \mathcal{X} \to \mathbb{R}$. By the definition of $f_{\mathrm{D},H}$, we have
\begin{align*}
\Omega(f_{\mathrm{D},H})+\mathbb{E}_{\mathrm{D}} h_{\wideparen{f}_{\mathrm{D},H}}
\leq \Omega(f_{\mathrm{P},H})+\mathbb{E}_{\mathrm{D}} h_{f_{\mathrm{P},H}},
\end{align*}
and consequently,
for all $D\in (\mathcal{X}\times \mathcal{Y})^n$, there holds
\begin{align}
&\Omega(f_{\mathrm{D},H})+\mathcal{R}_{L,\mathrm{P}}(\wideparen{f}_{\mathrm{D},H})-\mathcal{R}_{L,\mathrm{P}}^*
\nonumber\\
&=\Omega(f_{\mathrm{D},H})+\mathbb{E}_{\mathrm{P}} h_{\wideparen{f}_{\mathrm{D},H}}
\nonumber\\
&\leq \Omega(f_{\mathrm{P},H})+\mathbb{E}_{\mathrm{D}} h_{f_{\mathrm{P},H}}-\mathbb{E}_{\mathrm{D}} h_{\wideparen{f}_{\mathrm{D},H}}+\mathbb{E}_{\mathrm{P}} h_{\wideparen{f}_{\mathrm{D},H}}
\nonumber\\
&=(\Omega(f_{\mathrm{P},H})+\mathbb{E}_{\mathrm{P}} h_{f_{\mathrm{P},H}})+(\mathbb{E}_{\mathrm{D}} h_{f_{\mathrm{P},H}}-\mathbb{E}_{\mathrm{P}} h_{f_{\mathrm{P},H}})+(\mathbb{E}_{\mathrm{P}} h_{\wideparen{f}_{\mathrm{D},H}}-\mathbb{E}_{\mathrm{D}} h_{\wideparen{f}_{\mathrm{D},H}}).
\label{equ::decomc0}
\end{align}

Note that the first term $\Omega(f_{\mathrm{P},H})+\mathbb{E}_{\mathrm{P}} h_{f_{\mathrm{P},H}}$
in the above inequality \eqref{equ::decomc0} represents the approximation error, which is data independent. 
In contrast, both of the remaining terms $\mathbb{E}_{\mathrm{D}} h_{f_{\mathrm{P},H}}-\mathbb{E}_{\mathrm{P}} h_{f_{\mathrm{P},H}}$ and $\mathbb{E}_{\mathrm{P}} h_{\wideparen{f}_{\mathrm{D},H}}-\mathbb{E}_{\mathrm{D}} h_{\wideparen{f}_{\mathrm{D},H}}$ are sample errors depending on the data $D$.

\subsubsection{Bounding the Approximation Error Term}\label{sec::AError1}

Our first theoretical result on bounding the approximation error term in the sense of least squared loss shows that, the $L_2$ distance between $f_{\mathrm{P},H}$ and $f_{L, \mathrm{P}}^*$ 
behaves polynomial in the regularization parameter $\lambda$, by choosing the bin width $\underline{h}_0$ appropriately.

\begin{proposition} \label{prop::ApproximatiON-ERROR}
Let the histogram transform $H$ be defined as in \eqref{HistogramTransform} with bin width $h$ satisfying Assumption \ref{assumption::h}. Moreover, suppose that the Bayes decision function $f_{L, \mathrm{P}}^* \in C^{0,\alpha}$. 
Then, for any fixed $\lambda > 0$, there holds
\begin{align*}
\lambda (\underline{h}_0^*)^{-2d} + \mathcal{R}_{L,\mathrm{P}}(f_{\mathrm{P},H})-\mathcal{R}_{L,\mathrm{P}}^*
&\leq c \cdot \lambda ^{\frac{\alpha}{\alpha+d}},
\end{align*}
where $c$ is some constant depending on $\alpha$, $d$, and $c_0$ as in
Assumption \ref{assumption::h}.
\end{proposition}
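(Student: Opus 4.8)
The plan is to convert the excess risk into an $L_2(\mathrm{P}_X)$ approximation error via \eqref{equ::prop1}, and then to balance this approximation error against the penalty $\lambda \underline{h}_0^{-2d}$ by optimizing over the bin-width scale. Since $(f_{\mathrm{P},H}, \underline{h}_0^*)$ is by \eqref{equ::fPH} a minimizer of the regularized population risk over $\mathcal{F}_H$ and over the scale, it suffices to exhibit, for one well-chosen scale $\underline{h}_0$, a single competitor $f \in \mathcal{F}_H$ whose regularized risk already meets the claimed bound.

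First I would control the geometry of the partition. For a fixed transform $H(x) = R \cdot S \cdot x + b$, each input cell is the image of a unit cube under the map $z \mapsto S^{-1} R^{\top}(z - b)$; since $R^{\top}$ is an isometry and $S^{-1} = \mathrm{diag}(h_1, \ldots, h_d)$ with $h_i \leq \overline{h}_0$, a direct estimate gives $\|S^{-1} R^{\top}(u - v)\|_2 \leq \overline{h}_0 \|u - v\|_2$, so every cell $A_j$ of the partition $\pi_H$ (being contained in such a full transform cell) satisfies $\mathrm{diam}(A_j) \leq \sqrt{d}\,\overline{h}_0$. Next I would bound the approximation error. On each cell $A_j$ the best constant in $L_2(\mathrm{P}_X)$ is the conditional mean $c_j := \mathrm{P}_X(A_j)^{-1}\int_{A_j} f_{L,\mathrm{P}}^* \, d\mathrm{P}_X$, which lies in $[-M,M]$ because $f_{L,\mathrm{P}}^*$ is $[-M,M]$-valued, so $f := \sum_{j} c_j \eins_{A_j} \in \mathcal{F}_H$. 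By Jensen's inequality together with the $(0,\alpha)$-H\"{o}lder bound of Definition \ref{def::Cp}, for $x \in A_j$ we obtain $|f_{L,\mathrm{P}}^*(x) - c_j| \leq c_L (\mathrm{diam}(A_j))^{\alpha} \leq c_L (\sqrt{d}\,\overline{h}_0)^{\alpha}$; squaring, integrating against $\mathrm{P}_X$, and invoking \eqref{equ::prop1} yields
\[
\mathcal{R}_{L,\mathrm{P}}(f) - \mathcal{R}_{L,\mathrm{P}}^*
= \bigl\| f - f_{L,\mathrm{P}}^* \bigr\|_{L_2(\mathrm{P}_X)}^2
\leq c_L^2 \, d^{\alpha} \, \overline{h}_0^{2\alpha}.
\]

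Then, since $f_{\mathrm{P},H}$ is a minimizer and Assumption \ref{assumption::h} gives $\overline{h}_0 \leq c_0^{-1}\underline{h}_0$, I would deduce that for every admissible scale $\underline{h}_0$,
\[
\lambda (\underline{h}_0^*)^{-2d} + \mathcal{R}_{L,\mathrm{P}}(f_{\mathrm{P},H}) - \mathcal{R}_{L,\mathrm{P}}^*
\leq \lambda \underline{h}_0^{-2d} + c_L^2 \, d^{\alpha} c_0^{-2\alpha} \, \underline{h}_0^{2\alpha}.
\]
Minimizing the right-hand side, i.e.\ minimizing $u \mapsto \lambda u^{-2d} + C u^{2\alpha}$ over $u > 0$, the optimum sits at $u \asymp \lambda^{1/(2\alpha + 2d)}$, at which both terms are of order $\lambda^{\alpha/(\alpha+d)}$; substituting this choice of $\underline{h}_0$ closes the argument with the stated constant $c$ depending on $\alpha$, $d$, and $c_0$.

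The main obstacle I anticipate is the first step, namely rigorously controlling the cell diameters uniformly, together with the bookkeeping near $\partial B_R$, where a cell $A_j = A'_j \cap B_R$ may be a genuine sliver; the saving grace is that intersecting with $B_R$ can only decrease the diameter, so the uniform bound $\mathrm{diam}(A_j) \leq \sqrt{d}\,\overline{h}_0$ survives for boundary cells as well. A secondary subtlety is the mild abuse in treating $\underline{h}_0$ simultaneously as the fixed transform's lower bin-width and as the free scale variable in \eqref{equ::fPH}; this is reconciled precisely by Assumption \ref{assumption::h}, which pins $\overline{h}_0$ and $\underline{h}_0$ to the same order and thereby collapses the trade-off to a single-variable optimization.
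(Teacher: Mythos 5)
Your proposal is correct and follows essentially the same route as the paper: the cell-wise conditional means as the (optimal) piecewise-constant competitor, the diameter bound $\mathrm{diam}(A_j) \leq \sqrt{d}\,\overline{h}_0$ combined with the H\"{o}lder condition and \eqref{equ::prop1} to get an approximation error of order $\overline{h}_0^{2\alpha} \lesssim c_0^{-2\alpha}\underline{h}_0^{2\alpha}$, and then balancing against $\lambda \underline{h}_0^{-2d}$ to obtain the rate $\lambda^{\alpha/(\alpha+d)}$. If anything, your version is slightly more careful than the paper's (you keep the constant $c_L$ explicit, handle boundary cells $A'_j \cap B_R$, and spell out the competitor/minimality step that the paper compresses into identifying $f_{\mathrm{P},H}$ directly), but the underlying argument is the same.
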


\subsubsection{Bounding the Sample Error Term}\label{sec::SError1}

In order to bound the sample error term, we give four descriptions of the capacity of the function set in Definition \ref{def::VCdimension}, Definition \ref{def::Covering Numbers}, Definition \ref{def::entropy numbers} and Definition \ref{def::RademacherDefinition}.

Firstly, there is a need for some constraints on the complexity of the function set so that the set will have a finite VC dimension \citep{vapnik1971on}, and therefore make the algorithm PAC learnable \citep{valiant1984learnable}, see e.g., \cite[Definition 3.6.1]{Gine2016Infinitedimension}.

\begin{definition}[VC dimension] \label{def::VCdimension}
Let $\mathcal{B}$ be a class of subsets of $\mathcal{X}$ and $A \subset \mathcal{X}$ be a finite set. The trace of $\mathcal{B}$ on $A$ is defined by $\{ B \cap A : B \in  \mathcal{B} \}$. Its cardinality is denoted by $\Delta^{\mathcal{B}}(A)$. We say that $\mathcal{B}$ shatters $A$ if $\Delta^{\mathcal{B}}(A) = 2^{\#(A)}$, that is, if for every $\tilde{A} \subset A$, there exists a $B \subset \mathcal{B}$ such that $\tilde{A} = B \cap A$. For $k \in \mathbb{N}$, let 
\begin{align*}
m^{\mathcal{B}}(k)  := \sup_{A \subset \mathcal{X}, \, \#(A) = k}  \Delta^{\mathcal{B}}(A).
\end{align*}
Then, the set $\mathcal{B}$ is a Vapnik-Chervonenkis class if there exists $k < \infty$ such that $m^{\mathcal{B}}(k) < 2^k$ and the minimal of such $k$ is called the \emph{VC dimension} of $\mathcal{B}$, and abbreviated as $\mathrm{VC}(\mathcal{B})$.
\end{definition}

Recall that $H$ is a histogram transform, $\pi_H := (A_j)_{j\in \mathcal{I}_H}$ is a partition of $B_r$ with the index set $\mathcal{I}_H$ induced by $H$. And let $\Pi_H$ be the gathering of all partitions $\pi_H$, that is, $\Pi_H := \{\pi_{H} : H \sim \mathrm{P}_H \}$. To bound the estimation error, we need to introduce some more notations. To this end, let $\pi_h$ denote the collection of all cells in $\pi_H$, that is, 
\begin{align}  \label{Bh}
\pi_h := \{ A_j :  A_j \in \pi_H \in \Pi_H \}. 
\end{align} 
Moreover, we define
\begin{align} \label{equ::pih}
\Pi_h := \biggl\{ B : B = \bigcup_{j \in I} A_j,  I \subset \mathcal{I}_H, A_j \in \pi_H \in \Pi_H \biggr\}.
\end{align}

The following lemma presents the upper bound of VC dimension for the interested sets $\pi_h$ and $\Pi_h$.

\begin{lemma} \label{VCindex}
Let the histogram transform $H$ be defined as in \eqref{HistogramTransform} with bin width $h$ satisfying Assumption \ref{assumption::h}. Moreover,
let $\pi_h$ and $\Pi_h$ be defined as in \eqref{Bh} and \eqref{equ::pih}, respectively. Then we have
\begin{align*} 
\mathrm{VC}(\pi_h) \leq 2^d + 2
\end{align*} 
and
\begin{align} \label{VCMathcalBh}
\mathrm{VC}(\Pi_h)
\leq \bigl( d (2^d - 1) + 2 \bigr) \bigl( 2 R \sqrt{d} / \underline{h}_0 + 1 \bigr)^d.
\end{align}
\end{lemma}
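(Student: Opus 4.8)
The plan is to bound the two VC dimensions separately, treating $\pi_h$ as a class of individual cells and $\Pi_h$ as the much larger class of all finite unions of cells. For the first bound, I would observe that each cell $A_j \in \pi_h$ is the intersection of $B_r$ with a transformed unit cube $A'_j$. Since the histogram transform $H(x) = R \cdot S \cdot x + b$ is an affine map, each cell $A_j$ is (the restriction to $B_r$ of) a parallelepiped, i.e.\ an intersection of $2d$ half-spaces whose normals come from the rows of the fixed affine matrix $H' = R \cdot S$. The class of sets formed by intersecting a fixed number of half-spaces with parallel-pair structure has bounded VC dimension; a direct counting of how many subsets of a given point set can be carved out by such a slab-intersection shows $\mathrm{VC}(\pi_h) \leq 2^d + 2$. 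The factor $2^d$ reflects the $d$ independent coordinate directions (each contributing a pair of parallel bounding hyperplanes after the rotation), and the constant $+2$ absorbs degenerate shattering configurations.

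For the harder bound on $\Pi_h$, the key geometric fact is that although $\Pi_h$ consists of arbitrary unions $\bigcup_{j \in I} A_j$, the total number of cells that can intersect $B_r$ is controlled. Because each cell has volume $(\det H')^{-1} = \prod_i s_i^{-1} = \prod_i h_i$ and $h_i \in [\underline{h}_0, \overline{h}_0]$ with Assumption~\ref{assumption::h} forcing $\underline{h}_0 \asymp \overline{h}_0$, the number of cells meeting the ball $B_r \subset B_R$ is at most of order $(2R\sqrt{d}/\underline{h}_0 + 1)^d$ — this counts the cells along each of the $d$ transformed axes, using that the diameter of $B_R$ in any direction is bounded by $2R\sqrt{d}$ and each cell spans roughly $\underline{h}_0$ in width. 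I would then combine this cardinality bound with the single-cell VC bound: a union of cells drawn from a collection is handled by the standard result that if a base class has VC dimension $V$ and we take unions over a family indexed by at most $N$ cells, the shattering capacity multiplies appropriately. Concretely, writing $V = \mathrm{VC}(\pi_h) \leq 2^d + 2$ and noting that forming unions effectively replaces the single-cell count by $d(2^d-1)+2$ (the refined bound for the union structure across $d$ directions), I obtain
\begin{align*}
\mathrm{VC}(\Pi_h) \leq \bigl(d(2^d-1)+2\bigr)\bigl(2R\sqrt{d}/\underline{h}_0 + 1\bigr)^d.
\end{align*}

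The main obstacle I anticipate is the combinatorial step for $\Pi_h$: one must argue carefully that taking \emph{arbitrary} unions of cells does not blow up the VC dimension beyond the product of the per-direction slab count and the cell-population bound. The subtlety is that $\Pi_h$ is genuinely richer than $\pi_h$, so I cannot simply reuse the single-cell argument; instead I would invoke a structural lemma expressing any $B \in \Pi_h$ as a Boolean combination of the $O((2R\sqrt{d}/\underline{h}_0+1)^d)$ bounding hyperplanes and then bound the VC dimension of sets defined by a bounded number of hyperplane tests, using the fact that $n$ hyperplanes in $\mathbb{R}^d$ partition space into $O(n^d)$ regions. Keeping the dependence on $d$ sharp enough to land exactly at the stated constant $d(2^d-1)+2$ — rather than a looser exponential — is where the care is required, and this is the step I would develop most explicitly in the full proof.
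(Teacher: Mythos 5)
Your proposal has genuine gaps in both halves, and in each case the missing piece is exactly the idea the paper's proof is built on. For the bound $\mathrm{VC}(\pi_h)\le 2^d+2$ you give no argument at all: the phrase ``a direct counting \ldots shows'' is an assertion, and the heuristic offered to support it does not point toward the right constant --- $d$ directions each contributing a pair of parallel bounding hyperplanes is the structure of \emph{fixed-orientation} boxes, whose VC dimension is $2d$, not $2^d$, and since $\pi_h$ contains cells of \emph{all} rotations no slab-counting of this kind is available. The paper's argument uses a different mechanism relying only on convexity of the cells: given $2^d+2$ points, either some point lies in the convex hull of the others, in which case no convex cell can pick out that hull-set without also picking up the interior point, or all points are in convex position, in which case a parity two-coloring of the vertices of their convex hull (odd/even graph distance from a fixed vertex) produces a subset that no convex cell can cut out exactly. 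Neither of these steps appears in your outline.

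For the bound on $\Pi_h$, the step you yourself flag as the main obstacle is precisely where your plan fails. The two routes you sketch --- the generic result on unions of $N$ sets from a VC class, and Boolean combinations of $N$ bounding hyperplanes --- are of order $NV\log N$ and (via counting regions of a hyperplane arrangement) exponentially worse in $d$, respectively; neither exploits the fact that each $B\in\Pi_h$ is a union of \emph{disjoint} cells from a \emph{single} partition, and neither can land on the clean product $\bigl(d(2^d-1)+2\bigr)\bigl(2R\sqrt d/\underline{h}_0+1\bigr)^d$. Moreover you misidentify the first factor: $d(2^d-1)+2$ is not a ``refined union bound across $d$ directions.'' The paper's mechanism is localization plus pigeonhole: cover $B_R$ by a reference grid of cubes of diameter at most $\underline{h}_0$, of which there are at most $\bigl(\lfloor 2R\sqrt d/\underline{h}_0\rfloor+1\bigr)^d$; any point set of the stated cardinality then places at least $d(2^d-1)+2$ points in one reference cube $A$; because every histogram cell has bin width at least $\underline{h}_0$, at most one vertex of the transformed grid can lie inside $A$, so the trace of any $B\in\Pi_h$ on $A$ is a union of at most $2^d$ pieces generated by at most $2^d-1$ oblique splits; finally the auxiliary Lemma \ref{VCindexPre} (unions of cells of a partition with $p$ splits have VC dimension at most $dp+2$, proved by an alternating parallel-hyperplane construction) shows these $d(2^d-1)+2$ points cannot be shattered, so $\Pi_h$ cannot shatter the full set. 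This localization step --- converting the global union problem into a local one about a $(2^d-1)$-split partition inside a cube too small to contain two grid vertices --- together with Lemma \ref{VCindexPre} itself, is the content of the proof, and it is absent from your proposal; without it your approach would at best yield a bound with an extra logarithmic factor and a substantially worse dependence on $d$.
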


To bound the capacity of an infinite function set, we need to introduce the following fundamental descriptions which enables an approximation by finite subsets, see e.g. (\cite{StCh08}, Definition 6.19).

\begin{definition}[Covering Numbers]\label{def::Covering Numbers}
Let $(X, d)$ be a metric space, $A \subset X$ and $\varepsilon > 0$. We call $A' \subset A$ an $\varepsilon$-net of $A$ if for all $x \in A$ there exists an $x' \in A'$ such that $d(x, x') \leq \varepsilon$. Moreover, the $\varepsilon$-covering number of $A$ is defined as
\begin{align*}
\mathcal{N}(A, d, \varepsilon)
= \inf \biggl\{ n \geq 1 : \exists x_1, \ldots, x_n \in X \text{ such that } A \subset \bigcup_{i=1}^n B_d(x_i, \varepsilon) \biggr\},
\end{align*}
where $B_d(x, \varepsilon)$ denotes the closed ball in $X$ centered at $x$ with radius $\varepsilon$.
\end{definition}

Let $\mathcal{B}$ be a class of subsets of $\mathcal{X}$, denote $\eins_{\mathcal{B}}$ as the collection of the indicator functions of all $B \in \mathcal{B}$, that is, $\eins_{\mathcal{B}} := \{ \eins_B : B \in \mathcal{B} \}$. Moreover, as usual, for any probability measure $\mathrm{Q}$, $L_2(\mathrm{Q})$ is denoted as the $L_2$ space with respect to $Q$ equipped with the norm $\|\cdot\|_{L_2(\mathrm{Q})}$.

\begin{lemma} \label{ScriptBhCoveringNumber}
Let $\pi_h$ and $\Pi_h$ be defined as in \eqref{Bh} and \eqref{equ::pih}, respectively. Then, for all $0 < \varepsilon < 1$, there exists a universal constant $K$ such that for any probability measure $\mathrm{Q}$, there hold
\begin{align} \label{CollectionCoveringNumber} 
\mathcal{N}(\eins_{\pi_h}, \|\cdot\|_{L_2(\mathrm{Q})}, \varepsilon) 
\leq K (2^d+2) (4e)^{2^d+2} (1/\varepsilon)^{2(2^d+1)} 
\end{align} 
and
\begin{align} \label{BpCoveringNumber}
\mathcal{N}(\eins_{\Pi_h}, \|\cdot\|_{L_2(\mathrm{Q})}, \varepsilon) 
\leq K (c_d R/\underline{h}_0)^d 
(4 e)^{(c_d R/\underline{h}_0)^d} 
(1/\varepsilon)^{2((c_d R/\underline{h}_0)^d - 1)},
\end{align}
where the constant $c_d := 3 \cdot 2^{1+\frac{1}{d}} \cdot d^{\frac{1}{d}+\frac{1}{2}}$.
\end{lemma}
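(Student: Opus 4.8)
The plan is to obtain both estimates directly from the VC-dimension bounds of Lemma \ref{VCindex} by invoking the classical relation between the VC dimension of a set system and the covering numbers of its indicator functions. The quantitative tool I would use is the following standard fact: for any class $\mathcal{C}$ of measurable subsets of $\mathcal{X}$ with finite VC dimension $V := \mathrm{VC}(\mathcal{C})$, there is a universal constant $K$ such that for every probability measure $\mathrm{Q}$ and every $0 < \varepsilon < 1$,
\[
\mathcal{N}(\eins_{\mathcal{C}}, \|\cdot\|_{L_2(\mathrm{Q})}, \varepsilon) \leq K V (4e)^V (1/\varepsilon)^{2(V-1)}.
\]
This is the $L_2$-form of the classical $L_1$ estimate (e.g.\ Theorem 2.6.7 in van der Vaart and Wellner, or the analogous statement in \cite{StCh08}), which I would derive from the identity $\|\eins_A - \eins_B\|_{L_2(\mathrm{Q})}^2 = \|\eins_A - \eins_B\|_{L_1(\mathrm{Q})}$ valid for indicator functions: an $\varepsilon$-net in $\|\cdot\|_{L_2(\mathrm{Q})}$ is exactly an $\varepsilon^2$-net in $\|\cdot\|_{L_1(\mathrm{Q})}$, so the $L_1$ exponent $V-1$ becomes the $L_2$ exponent $2(V-1)$. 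I would also record the elementary observation that, for $1/\varepsilon > 1$ and $4e > 1$, the right-hand side above is nondecreasing in $V$, so an upper bound on the VC dimension may be substituted freely.

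For the first inequality \eqref{CollectionCoveringNumber}, I would apply the displayed bound to $\mathcal{C} = \pi_h$ with $V = \mathrm{VC}(\pi_h) \leq 2^d+2$ from Lemma \ref{VCindex}. Since $2(V-1) = 2(2^d+1)$ and $(4e)^V = (4e)^{2^d+2}$, the substitution reproduces the claimed bound verbatim after using monotonicity in $V$.

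For the second inequality \eqref{BpCoveringNumber}, I would apply the same bound to $\mathcal{C} = \Pi_h$ with $V = \mathrm{VC}(\Pi_h) \leq V_0 := \bigl(d(2^d-1)+2\bigr)\bigl(2R\sqrt{d}/\underline{h}_0+1\bigr)^d$ from \eqref{VCMathcalBh}, and then, again by monotonicity in $V$, replace $V_0$ by the cleaner majorant $(c_d R/\underline{h}_0)^d$. The only real work is verifying $V_0 \leq (c_d R/\underline{h}_0)^d$, which after taking $d$-th roots amounts to
\[
\bigl(d(2^d-1)+2\bigr)^{1/d}\bigl(2\sqrt{d}\,R/\underline{h}_0+1\bigr) \leq c_d R/\underline{h}_0.
\]
Here I would use $d(2^d-1)+2 \leq 2d\cdot 2^d$, so that $\bigl(d(2^d-1)+2\bigr)^{1/d} \leq 2^{1+1/d}d^{1/d}$, and the natural regime condition $\underline{h}_0 \leq R$ (which holds whenever the bins are finer than the domain; otherwise the partition of $B_R$ is essentially trivial), giving $R/\underline{h}_0 \geq 1$ and hence $2\sqrt{d}\,R/\underline{h}_0+1 \leq (2\sqrt{d}+1)R/\underline{h}_0 \leq 3\sqrt{d}\,R/\underline{h}_0$. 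Multiplying the two factors produces exactly $3\cdot 2^{1+1/d} d^{1/d+1/2}\,R/\underline{h}_0 = c_d R/\underline{h}_0$, which pins down the stated constant $c_d$.

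I expect the main obstacle to be bookkeeping rather than conceptual: one must reconcile the VC-dimension convention in Definition \ref{def::VCdimension} with that of the cited covering-number theorem (tracking a possible off-by-one in the index so that the exponents and the factor $(4e)^V$ come out correctly), and then carry out the constant-chasing that yields precisely $c_d = 3\cdot 2^{1+1/d} d^{1/d+1/2}$, taking care that the mild assumption $\underline{h}_0 \leq R$ is what allows the additive $+1$ inside the parenthesis to be absorbed into the multiplicative constant.
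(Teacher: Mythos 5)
Your proposal is correct and follows essentially the same route as the paper: the paper also invokes the classical VC-to-covering-number theorem (Theorem 9.2 in \cite{Kosorok2008introcuction}, equivalent to the van der Vaart--Wellner result you cite) applied to the VC bounds of Lemma \ref{VCindex}, and then performs exactly your constant-chasing $\bigl(d(2^d-1)+2\bigr)\bigl(2R\sqrt{d}/\underline{h}_0+1\bigr)^d \leq 2d\cdot 2^d\bigl(3R\sqrt{d}/\underline{h}_0\bigr)^d = (c_dR/\underline{h}_0)^d$ to identify $c_d$. Your explicit remarks on monotonicity in $V$, the $L_1$-to-$L_2$ conversion, and the implicit assumption $\underline{h}_0 \lesssim R$ needed to absorb the additive $+1$ are all points the paper uses silently, so they strengthen rather than diverge from its argument.
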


Let us first consider the complexity of the function set of binary value assignment case. To this end, we define
\begin{align}\label{equ::equalBHO}
\mathcal{F}_H^b := \biggl\{ \sum_{j \in \mathcal{I}_H} c_j \eins_{A_j} \ : \ c_j \in \{ -1, 1 \}, A_j \in \pi_H \in \Pi_H \biggr\}.
\end{align}
Note that for all $g \in \mathcal{F}_H^b$, there exists some $B \in \Pi_H \in \Pi_h$ such that $g$ can be expressed as $g = \eins_{B} - \eins_{B^c}$. Therefore, $\mathcal{F}_H^b$ can be equivalently formulated as
\begin{align}\label{equ::equalBH}
\mathcal{F}_H^b  := \{\eins_B - \eins_{B^c} : B\in \Pi_h \}.
\end{align}
The following lemma gives a upper bound for the covering number of $\mathcal{F}_H^b$.

\begin{lemma}\label{lem::CoveringNumberBH}
Let $\mathcal{F}_H^b$ be defined as in \eqref{equ::equalBHO} or \eqref{equ::equalBH}. 
Then for all $\varepsilon \in (0, 1)$, there exists a universal constant $c < \infty$ such that
\begin{align*}
\mathcal{N} (\mathcal{F}_H^b, \|\cdot\|_{L_2(\mathrm{P}_X)}, \varepsilon) 
\leq c (c_d R/\underline{h}_0 + 1)^d
(4 e)^{(c_d R/\underline{h}_0 + 1)^d}
(2/\varepsilon)^{2 ( (c_d R/\underline{h}_0 + 1)^d - 1)},
\end{align*}
where the constant $c_d := 3 \cdot 2^{1+\frac{1}{d}} \cdot d^{\frac{1}{d}+\frac{1}{2}}$.
\end{lemma}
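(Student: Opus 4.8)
The plan is to reduce the covering number of $\mathcal{F}_H^b$ to that of the indicator class $\eins_{\Pi_h}$, which has already been controlled in Lemma \ref{ScriptBhCoveringNumber}. The key observation is that, by the representation \eqref{equ::equalBH}, every element of $\mathcal{F}_H^b$ has the form $\eins_B - \eins_{B^c}$ with $B \in \Pi_h$, and since $\eins_{B^c} = 1 - \eins_B$ holds pointwise, this equals $2 \eins_B - 1$. Thus the affine map $T : f \mapsto 2f - 1$ carries $\eins_{\Pi_h}$ bijectively onto $\mathcal{F}_H^b$, and it extends to a bijection of the whole ambient space $L_2(\mathrm{P}_X)$.

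Next I would record how $T$ acts on the $L_2(\mathrm{P}_X)$ metric. For any $B_1, B_2 \in \Pi_h$ there holds
\begin{align*}
\bigl\| (\eins_{B_1} - \eins_{B_1^c}) - (\eins_{B_2} - \eins_{B_2^c}) \bigr\|_{L_2(\mathrm{P}_X)}
= 2 \, \| \eins_{B_1} - \eins_{B_2} \|_{L_2(\mathrm{P}_X)},
\end{align*}
so $T$ scales all pairwise distances by exactly the factor $2$. Consequently a collection of centers forms an $\varepsilon$-net of $\mathcal{F}_H^b$ if and only if the corresponding preimages under $T$ form an $\varepsilon/2$-net of $\eins_{\Pi_h}$; because $T$ is a bijection of the ambient space, this applies to the external covering number (whose centers may lie outside the class) and yields
\begin{align*}
\mathcal{N}\bigl( \mathcal{F}_H^b, \| \cdot \|_{L_2(\mathrm{P}_X)}, \varepsilon \bigr)
= \mathcal{N}\bigl( \eins_{\Pi_h}, \| \cdot \|_{L_2(\mathrm{P}_X)}, \varepsilon/2 \bigr).
\end{align*}

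Finally I would invoke Lemma \ref{ScriptBhCoveringNumber} with the probability measure $\mathrm{Q} = \mathrm{P}_X$ and radius $\varepsilon/2$ in place of $\varepsilon$. Substituting $\varepsilon/2$ turns the factor $(1/\varepsilon)^{2((c_d R/\underline{h}_0)^d - 1)}$ into $(2/\varepsilon)^{2((c_d R/\underline{h}_0)^d - 1)}$, which already produces the advertised dependence on $\varepsilon$. To reach the precise stated form it then remains to relax $c_d R/\underline{h}_0$ to $c_d R/\underline{h}_0 + 1$ everywhere: since $\varepsilon < 1$ we have $2/\varepsilon > 1$ and $4e > 1$, so all three factors are monotone increasing in the quantity $(c_d R/\underline{h}_0)^d$, and the elementary inequality $(c_d R/\underline{h}_0)^d \leq (c_d R/\underline{h}_0 + 1)^d$ upgrades the estimate to the claimed bound after absorbing the universal constant $K$ into $c$. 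The argument is essentially a one-line reduction, so I do not anticipate a genuine obstacle; the only point requiring care is the verification that the external covering number transforms correctly under the affine map $T$, which holds precisely because $T$ is a global bijection of $L_2(\mathrm{P}_X)$ scaling distances by the constant factor $2$.
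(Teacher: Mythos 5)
Your proposal is correct and follows essentially the same route as the paper: both reduce the problem to covering $\eins_{\Pi_h}$ via the identity $\eins_B - \eins_{B^c} = 2\eins_B - 1$, observe that this affine map scales $L_2(\mathrm{P}_X)$-distances by exactly $2$, conclude $\mathcal{N}(\mathcal{F}_H^b, \|\cdot\|_{L_2(\mathrm{P}_X)}, \varepsilon) \leq \mathcal{N}(\eins_{\Pi_h}, \|\cdot\|_{L_2(\mathrm{P}_X)}, \varepsilon/2)$, and then apply Lemma \ref{ScriptBhCoveringNumber}. Your refinement that the bijection yields equality of (external) covering numbers, rather than just the inequality the paper records, is a harmless strengthening and changes nothing in the argument.
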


We further need the following concept of entropy numbers to illustrate the capacity of an infinite function set, for more details we refer to A.5.6 in \cite{StCh08}.

\begin{definition}[Entropy Numbers] \label{def::entropy numbers}
Let $(X, d)$ be a metric space, $A \subset X$ and $n \geq 1$ be an integer. The $n$-th entropy number of $(A, d)$ is defined as
\begin{align*}
e_n(A, d) = \inf \biggl\{ \varepsilon > 0 : \exists x_1, \ldots, x_{2^{n-1}} \in X \text{ such that } A \subset \bigcup_{i=1}^{2^{n-1}} B_d(x_i, \varepsilon) \biggr\}.
\end{align*}
\end{definition}
Before we proceed, there is a need to introduce an important conclusion establishing the equivalence of covering number and entropy number. To be specific, entropy and covering numbers are in some sense inverse to each other. For all constants $a > 0$ and $q > 0$, the implication
\begin{align}\label{EntropyCover}
e_i (T, d) \leq a i^{-1/q}, 
\,
\forall i \geq 1 
\quad 
\Longrightarrow 
\quad 
\ln \mathcal{N}(T, d, \varepsilon) \leq \ln(4) (a/\varepsilon)^q, 
\, 
\forall \varepsilon > 0
\end{align}
holds by Lemma 6.21 in \cite{StCh08}. Additionally, Exercise 6.8 in \cite{StCh08} yields the opposite implication, namely
\begin{align}\label{CoverEntropy}
\ln \mathcal{N}(T, d, \varepsilon) < (a/\varepsilon)^q, 
\,
\forall \varepsilon > 0 
\quad 
\Longrightarrow 
\quad 
e_i(T, d) \leq 3^{1/q} a i^{-1/q}, 
\,
\forall i \geq 1.
\end{align}

Now we introduce some notations of the oracle inequality for general $\varepsilon$-CR-ERMs (see also Definition 7.18 in \cite{StCh08}). Denote
\begin{align}\label{equ::r*}
r_b^* 
:= \inf_{f \in \mathcal{F}_H^b} \lambda \underline{h}_0^{-2d} 
+ \mathcal{R}_{L, \mathrm{P}}(f) - \mathcal{R}_{L, \mathrm{P}}^*.
\end{align}
Then for $r > r_b^*$, we write
\begin{align}
\mathcal{F}_r^b 
& := \bigl\{ g \in \mathcal{F}_H^b : \lambda \underline{h}_0^{-2d} 
+ \mathcal{R}_{L, \mathrm{P}}(g) - \mathcal{R}_{L, \mathrm{P}}^* \leq r \bigr\}, 
\label{equ::Ar}
\\
\mathcal{H}_r^b 
& := \{ L \circ g - L \circ f_{L,\mathrm{P}}^* : g \in \mathcal{F}_r^b \},
\label{equ::Hr}
\end{align}
where $L \circ g$ denotes the least squares loss of $g$. Moreover, in a similar way, let
\begin{align}
r^* :=  \inf_{f \in \mathcal{F}_H} \lambda \underline{h}_0^{-2d} + \mathcal{R}_{L, \mathrm{P}}(f) - \mathcal{R}_{L, \mathrm{P}}^*,
\end{align}
and for $r > r^*$, write
\begin{align}
\mathcal{F}_r 
& := \bigl\{ g \in \mathcal{F}_H : \lambda \underline{h}_0^{-2d} 
+ \mathcal{R}_{L, \mathrm{P}}(g) - \mathcal{R}_{L, \mathrm{P}}^* \leq r \bigr\}, 
\label{equ::Atilder}
\\
\mathcal{H}_r 
& := \{ L \circ g - L \circ f_{L,\mathrm{P}}^* : g \in \mathcal{F}_r \},
\label{equ::Htilder}
\end{align}
where $L \circ g$ denotes the least squares loss of $g$.

\begin{lemma}\label{lem::entropynumber}
Let $\mathcal{H}_r^b$ be defined as in \eqref{equ::Hr}. 
Then for all $\delta \in (0,1)$, the $i$-th entropy number of $\mathcal{H}_r^b$ satisfies
\begin{align*}
\mathbb{E}_{D \sim \mathrm{P}^n} e_i(\mathcal{H}_r^b, \|\cdot\|_{L_2(\mathrm{D})}) 
\leq \bigl( 33/(2e\delta) (2 c_d R (r/\lambda)^{1/(2d)})^d \bigr)^{\frac{1}{2\delta}} i^{-\frac{1}{2\delta}}.
\end{align*}
\end{lemma}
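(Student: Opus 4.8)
The plan is to reduce the entropy numbers of the loss class $\mathcal{H}_r^b$ to covering numbers of the underlying function class $\mathcal{F}_H^b$, for which Lemma \ref{lem::CoveringNumberBH} already supplies an explicit bound, and then to convert covering numbers back into entropy numbers by means of the implication \eqref{CoverEntropy}. First I would exploit the Lipschitz continuity of the least squares loss. For any $g, g' \in \mathcal{F}_r^b$, both bounded and measured against the fixed function $f_{L,\mathrm{P}}^*$, the difference $(L\circ g - L\circ f_{L,\mathrm{P}}^*) - (L\circ g' - L\circ f_{L,\mathrm{P}}^*) = L\circ g - L\circ g'$ is controlled pointwise by $|g - g'|$ up to a factor $C$ (for the $\{-1,1\}$-valued device one checks directly that $C = 2M$ suffices, which we treat as an absolute constant). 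Hence, for every empirical measure $\mathrm{D}$,
\[
\mathcal{N}(\mathcal{H}_r^b, \|\cdot\|_{L_2(\mathrm{D})}, \varepsilon) \leq \mathcal{N}(\mathcal{F}_r^b, \|\cdot\|_{L_2(\mathrm{D})}, \varepsilon/C) \leq \mathcal{N}(\mathcal{F}_H^b, \|\cdot\|_{L_2(\mathrm{D})}, \varepsilon/C),
\]
where the last step uses $\mathcal{F}_r^b \subseteq \mathcal{F}_H^b$ from \eqref{equ::Ar}.

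Next I would use the regularization constraint defining $\mathcal{F}_r^b$ to control the bin width. Since the excess risk is nonnegative, every $g \in \mathcal{F}_r^b$ satisfies $\lambda \underline{h}_0^{-2d} \leq r$, whence $\underline{h}_0^{-1} \leq (r/\lambda)^{1/(2d)}$ and therefore $c_d R/\underline{h}_0 \leq c_d R (r/\lambda)^{1/(2d)}$. Substituting this into the covering number bound of Lemma \ref{lem::CoveringNumberBH} — which, through Lemma \ref{ScriptBhCoveringNumber}, is valid for an arbitrary probability measure $\mathrm{Q}$ and in particular for $\mathrm{Q} = \mathrm{D}$ — and abbreviating $V := (c_d R (r/\lambda)^{1/(2d)} + 1)^d$, I obtain a bound of the shape $\ln \mathcal{N}(\mathcal{F}_H^b, \|\cdot\|_{L_2(\mathrm{D})}, \varepsilon/C) \lesssim V + V \ln(1/\varepsilon)$, uniformly over $\mathrm{D}$. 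Using $c_d R (r/\lambda)^{1/(2d)} \geq 1$ I would further replace $V$ by the slightly larger $(2 c_d R (r/\lambda)^{1/(2d)})^d$ so as to match the constant appearing in the statement.

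The delicate step is then converting the logarithmic $\varepsilon$-dependence into a power law. Applying the elementary inequality $\ln x \leq x^{s}/(se)$ (valid for $x > 0$, $s > 0$) with $s = 2\delta$ to $x$ proportional to $1/\varepsilon$, and absorbing the $\varepsilon$-free contributions into a constant multiple of $V$ (legitimate since $V \geq 1$ and $\varepsilon \in (0,1)$ give $1 \leq \varepsilon^{-2\delta}$), I arrive at $\ln \mathcal{N}(\mathcal{H}_r^b, \|\cdot\|_{L_2(\mathrm{D})}, \varepsilon) < (b/\varepsilon)^{2\delta}$ with $b^{2\delta} = \frac{11}{2e\delta}\,(2 c_d R (r/\lambda)^{1/(2d)})^d$. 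Invoking implication \eqref{CoverEntropy} with $q = 2\delta$ then yields $e_i(\mathcal{H}_r^b, \|\cdot\|_{L_2(\mathrm{D})}) \leq 3^{1/(2\delta)} b\, i^{-1/(2\delta)}$, and since $3^{1/(2\delta)} b = \bigl(\tfrac{33}{2e\delta}(2 c_d R (r/\lambda)^{1/(2d)})^d\bigr)^{1/(2\delta)}$, this is precisely the claimed form. Because the whole chain of bounds holds for every realization of $D$ with a constant that does not depend on $D$, the final expectation $\mathbb{E}_{D\sim\mathrm{P}^n}$ leaves the estimate unchanged.

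The main obstacle is the constant bookkeeping in the third step: one must track the Lipschitz factor $C$, the constants $c$ and $\ln(4e)$ from Lemma \ref{lem::CoveringNumberBH}, the $2/\varepsilon$ rescaling, and the factor $3^{1/(2\delta)}$ coming from \eqref{CoverEntropy}, and verify that they combine into exactly $33/(2e\delta)$. In particular the inequality $\ln x \leq x^s/(se)$ must be applied at the right scale so that the prefactor $1/(2e\delta)$ emerges cleanly, with the residual universal constants bounded by the gap between $11$ and $33$ after the factor $3$ from \eqref{CoverEntropy} is accounted for; everything else is routine.
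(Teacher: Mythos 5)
Your proposal is correct and follows essentially the same route as the paper's proof: reduce covering numbers of $\mathcal{H}_r^b$ to those of $\mathcal{F}_H^b$ via the structure of the least squares loss, extract $\underline{h}_0^{-1} \leq (r/\lambda)^{1/(2d)}$ from the regularization constraint, trade the logarithm for an $\varepsilon^{-2\delta}$ power via $\sup_{\varepsilon \in (0,1)} \varepsilon^{2\delta}\log(1/\varepsilon) = 1/(2e\delta)$ (your $\ln x \leq x^s/(se)$ is the same inequality), and invoke \eqref{CoverEntropy} to pick up the factor $3^{1/(2\delta)}$, so that $33 = 3\cdot 11$ with the $11$ absorbing all $\varepsilon$-free contributions. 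The only cosmetic differences are that the paper exploits the $\{-1,1\}$-valued structure to compute the loss difference directly (with factor $2$, implicitly taking $M=1$ for this binary device) where you use a generic Lipschitz constant $2M$, and that it substitutes the bound on $\underline{h}_0$ at the end rather than upfront.
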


The following definition uses Rademacher sequences to introduce a new type of expectation of suprema, see e.g., Definition 7.9 in \cite{StCh08}. This new type will then be used to bound the capacity of function set $\mathcal{H}_r$ with the help of 
the capacity estimate of the binary-valued function set $\mathcal{H}_r^b$.

\begin{definition}[Empirical Rademacher Average] \label{def::RademacherDefinition}
Let $\{\varepsilon_i\}_{i=1}^m$ be a Rademacher sequence with respect to some distribution $\nu$, that is, a sequence of i.i.d.~random variables such that $\nu(\varepsilon_i = 1) = \nu(\varepsilon_i = -1) = 1/2$. The $n$-th empirical Rademacher average of $\mathcal{F}$ is defined as
\begin{align*}
\mathrm{Rad}_\mathrm{D} (\mathcal{F}, n)
:= \mathbb{E}_{\nu} \sup_{h \in \mathcal{F}} 
\biggl| \frac{1}{n} \sum_{i=1}^n \varepsilon_i h(x_i) \biggr|.
\end{align*}
\end{definition}

\begin{lemma}\label{lem::EERad}
Let $\mathcal{H}_r^b$ and $\mathcal{H}_r$ be defined as in \eqref{equ::Hr} and \eqref{equ::Htilder}, respectively. Then for all $\delta \in (0, 1)$, there exist constants $c'_1(\delta)$, $c'_2(\delta)$, $c''_1(\delta)$, and $c''_2(\delta)$ depending on $\delta$ such that
\begin{align*}
\mathbb{E}_{D \sim \mathrm{P}^n} \mathrm{Rad}_{\mathrm{D}}(\mathcal{H}_r^b, n) 
\leq \max \Bigl\{ c'_1(\delta) \lambda^{-\frac{1}{4}} r^{\frac{3-2\delta}{4}} n^{-\frac{1}{2}}, 
c'_2(\delta) \lambda^{-\frac{1}{2(1+\delta)}} r^{\frac{1}{2(1+\delta)}} n^{-\frac{1}{1+\delta}} \Bigr\}
\end{align*}
and 
\begin{align*}
\mathbb{E}_{D \sim \mathrm{P}^n} \mathrm{Rad}_{\mathrm{D}}(\mathcal{H}_r, n) 
\leq \max \Bigl\{ c''_1(\delta) \lambda^{-\frac{1}{4}} r^{\frac{3-2\delta}{4}} n^{-\frac{1}{2}}, 
c''_2(\delta) \lambda^{-\frac{1}{2(1+\delta)}} r^{\frac{1}{2(1+\delta)}} n^{-\frac{1}{1+\delta}} \Bigr\}.
\end{align*}
\end{lemma}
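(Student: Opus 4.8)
The goal is to control the expected empirical Rademacher averages of the loss classes $\mathcal{H}_r^b$ and $\mathcal{H}_r$, and the natural tool is a Dudley-type chain of entropy estimates feeding into an entropy-integral bound. The plan is to first handle the binary class $\mathcal{H}_r^b$ and then deduce the bound for the full class $\mathcal{H}_r$ by relating the two: since every $g \in \mathcal{F}_r$ is a $\{c_j\}$-weighted sum of indicators with $c_j \in [-M,M]$, its associated loss difference $L\circ g - L\circ f_{L,\mathrm P}^*$ is, up to a bounded Lipschitz factor coming from the least-squares loss on $[-M,M]$, comparable to the binary-valued objects analyzed in $\mathcal{H}_r^b$. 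I would therefore prove the $\mathcal{H}_r^b$ bound carefully and then transfer it.

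\emph{Step 1 (entropy input).} Start from Lemma \ref{lem::entropynumber}, which gives the in-expectation bound
\begin{align*}
\mathbb{E}_{D} e_i(\mathcal{H}_r^b, \|\cdot\|_{L_2(\mathrm D)})
\leq \bigl( (33/(2e\delta))\, (2 c_d R (r/\lambda)^{1/(2d)})^d \bigr)^{\frac{1}{2\delta}} i^{-\frac{1}{2\delta}}
=: a\, i^{-\frac{1}{2\delta}},
\end{align*}
so the entropy numbers decay like $i^{-1/(2\delta)}$ with exponent parameter $q := 2\delta$ and prefactor $a$ that packages the $(r/\lambda)^{1/2}$ dependence (note $(r/\lambda)^{1/(2d)}$ raised to the $d$-th power inside, then to the $1/(2\delta)$, yields an $a$ scaling like $\lambda^{-1/(4\delta)} r^{1/(4\delta)}$ in the relevant factors). \emph{Step 2 (entropy integral / Rademacher bound).} Feed this entropy decay into the standard bound relating empirical Rademacher averages to entropy numbers — the result of the form $\mathrm{Rad}_{\mathrm D}(\mathcal{F},n) \lesssim \max\{\,\text{const}\cdot a^{q/2} n^{-1/2},\ \text{const}\cdot a\, n^{-1/q}\,\}$ obtained by splitting the Dudley integral at the crossover point between the $n^{-1/2}$ regime (small entropy exponent) and the $n^{-1/q}$ regime (entropy-dominated tail). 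This is exactly the mechanism that produces the two competing terms inside the $\max$ in the statement. \emph{Step 3 (bookkeeping).} Substitute $q = 2\delta$ and the explicit $a$ and collect powers of $\lambda$, $r$, and $n$. The first branch $a^{q/2} n^{-1/2}$ should reduce to $c_1'(\delta)\lambda^{-1/4} r^{(3-2\delta)/4} n^{-1/2}$ after noting that the $r$-dependence also enters through the radius of $\mathcal{F}_r^b$ in $L_2(\mathrm D)$ (the diameter of the class contributes an extra $r^{(1-\delta)/2}$-type factor beyond the naive $a^{q/2}$), and the second branch $a\, n^{-1/q}$ should reduce to $c_2'(\delta)\lambda^{-1/(2(1+\delta))} r^{1/(2(1+\delta))} n^{-1/(1+\delta)}$; the $1+\delta$ appearing in denominators (rather than the bare $\delta$) signals that a reparametrization $q \mapsto 2\delta/(1+\cdots)$ or an optimization over the split point is absorbed into the constants. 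I would treat the exact matching of exponents as the place to be most careful, since the stated exponents $(3-2\delta)/4$ and $1/(2(1+\delta))$ are delicate and depend on how the $L_2(\mathrm D)$-radius of $\mathcal{F}_r^b$ is bounded in terms of $r$.

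\emph{Step 4 (transfer to $\mathcal{H}_r$).} For the full class, observe that $\mathcal{F}_r$ differs from $\mathcal{F}_r^b$ only by allowing coefficients in $[-M,M]$ rather than $\{-1,1\}$; by the contraction/Lipschitz principle for Rademacher averages together with the boundedness of all functions on $[-M,M]$, the empirical Rademacher average of $\mathcal{H}_r$ is controlled by that of $\mathcal{H}_r^b$ up to a constant depending on $M$ and $\delta$, yielding the identical form with constants $c_1''(\delta), c_2''(\delta)$. \textbf{The main obstacle} will be Step 3: correctly tracking how the class radius $r$ enters both through the entropy prefactor $a$ and through the $L_2(\mathrm D)$-diameter of $\mathcal{F}_r^b$, so that the two branches come out with exactly the advertised exponents $(3-2\delta)/4$ and $1/(2(1+\delta))$ rather than the slightly different exponents one gets from a careless application of the generic Dudley bound. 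Getting the interplay between $q=2\delta$, the split point of the entropy integral, and the variance/radius factor right is where the real work lies; everything else is routine substitution.
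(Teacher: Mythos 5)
Your plan is correct and follows the paper's own proof essentially verbatim: the paper likewise feeds the entropy bound of Lemma \ref{lem::entropynumber} (decay exponent $p=\delta$), together with the supremum bound $\|h\|_{\infty}\leq 4$ and the variance bound $\mathbb{E}_{\mathrm{P}}h^2\leq 16r$ for $h\in\mathcal{H}_r^b$, into Theorem 7.16 of \cite{StCh08}, whose two branches $a^{\delta}\sigma^{1-\delta}n^{-1/2}$ and $a^{2\delta/(1+\delta)}B^{(1-\delta)/(1+\delta)}n^{-1/(1+\delta)}$ give exactly the advertised exponents, since $a^{\delta}\sim\lambda^{-1/4}r^{1/4}$ and $\sigma^{1-\delta}\sim r^{(1-\delta)/2}$ combine to $r^{(3-2\delta)/4}$ — precisely the radius-versus-prefactor interplay you flag in Step 3. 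The transfer to $\mathcal{H}_r$ is also done as you propose, via $\mathbb{E}_{D\sim\mathrm{P}^n}\mathrm{Rad}_{\mathrm{D}}(\mathcal{H}_r,n)\leq M\,\mathbb{E}_{D\sim\mathrm{P}^n}\mathrm{Rad}_{\mathrm{D}}(\mathcal{H}_r^b,n)$.
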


\subsubsection{Oracle Inequality for Single NHT} \label{sec::OracleTreeC0}

Now we are able to establish an oracle inequality for the single na\"{i}ve histogram transform regressor $f_{\mathrm{D},H_n}$ based on the least squares loss and determining rule \eqref{equ::fDH}.

\begin{theorem}\label{thm::OracleTree}
Let the histogram transform $H_n$ be defined as in \eqref{HistogramTransform} with bin width $h_n$ satisfying Assumption \ref{assumption::h}, and $f_{\mathrm{D}, H_n}$ be defined in \eqref{equ::fDH}. 
Then for all $\tau > 0$ and $\delta \in (0,1)$, the single na\"{i}ve histogram transform regressor satisfies
\begin{align*}
& \lambda_n \underline{h}_{0,n}^{-2d} + \mathcal{R}_{L,\mathrm{P}}(f_{\mathrm{D},H_n}) - \mathcal{R}_{L,\mathrm{P}}^* 
\\
& \leq 9 \bigl( \lambda(\underline{h}_{0,n}^*)^{-2d} 
+ \mathcal{R}_{L,\mathrm{P}}(f_{\mathrm{P},H_n}) - \mathcal{R}_{L,\mathrm{P}}^* \bigr)
+ 3 c \lambda_n^{-\frac{1}{1+2\delta}} n^{-\frac{2}{1+2\delta}}
+ 3456 M^2 \tau / n
\end{align*}
with probability $\mathrm{P}^n$ not less than $1-3e^{-\tau}$, where $c$ is some constant depending on $\delta$, $d$, $M$, and $R$ which will be later specified in the proof.
\end{theorem}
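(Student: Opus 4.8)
The plan is to derive this oracle inequality by specializing a general oracle inequality for clipped regularized empirical risk minimizers, in the spirit of Theorem~7.20 in \cite{StCh08}, to the estimator $f_{\mathrm{D},H_n}$ defined in \eqref{equ::fDH} with penalty $\Omega(f) = \lambda_n \underline{h}_{0,n}^{-2d}$. The natural starting point is the error decomposition \eqref{equ::decomc0}, which splits the regularized excess risk into the data-free approximation term $\Omega(f_{\mathrm{P},H_n}) + \mathbb{E}_{\mathrm{P}} h_{f_{\mathrm{P},H_n}}$ and the two empirical-process terms $\mathbb{E}_{\mathrm{D}} h_{f_{\mathrm{P},H_n}} - \mathbb{E}_{\mathrm{P}} h_{f_{\mathrm{P},H_n}}$ and $\mathbb{E}_{\mathrm{P}} h_{f_{\mathrm{D},H_n}} - \mathbb{E}_{\mathrm{D}} h_{f_{\mathrm{D},H_n}}$. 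Since every element of $\mathcal{F}_{H_n}$ takes values in $[-M,M]$ by construction, clipping is vacuous here, so $\wideparen{f}_{\mathrm{D},H_n} = f_{\mathrm{D},H_n}$ and the risk may be written without the clip.

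First I would verify the two structural conditions needed for fast rates. Because the least squares loss is clippable at $M$ (as noted after Example~2.26 of \cite{StCh08}) and all relevant functions as well as $f_{L,\mathrm{P}}^*$ are $[-M,M]$-valued, the loss differences $h_f = L\circ f - L \circ f_{L,\mathrm{P}}^*$ obey the supremum bound $\|h_f\|_\infty \leq B := 8M^2$ and, crucially, the variance bound $\mathbb{E}_{\mathrm{P}} h_f^2 \leq V\, \mathbb{E}_{\mathrm{P}} h_f$ with $V := 16 M^2$, which follows from $h_f = (f_{L,\mathrm{P}}^*-f)(2y-f-f_{L,\mathrm{P}}^*)$ together with \eqref{equ::prop1}. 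This Bernstein-type inequality (variance exponent $\vartheta = 1$) is the standard one for least squares and is what ultimately produces the $M^2 \tau / n$ confidence term. For the capacity input I would feed in the Rademacher-average estimate of Lemma~\ref{lem::EERad} applied to $\mathcal{H}_r$ (equivalently, the entropy-number bound of Lemma~\ref{lem::entropynumber} for $\mathcal{H}_r^b$ together with the comparison \eqref{EntropyCover}).

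The heart of the argument is then the localization/peeling step that converts the capacity estimate into a sample-error rate. With the variance bound above, the relevant fixed point $r^*$ is the solution of $\mathbb{E}_{D\sim\mathrm{P}^n}\mathrm{Rad}_{\mathrm{D}}(\mathcal{H}_r,n) \asymp r$. Substituting the two branches of Lemma~\ref{lem::EERad}, the first branch gives $\lambda^{-1/4} r^{(3-2\delta)/4} n^{-1/2} \asymp r$, whence $r^{(1+2\delta)/4} \asymp \lambda^{-1/4} n^{-1/2}$, and the second branch gives $\lambda^{-1/(2(1+\delta))} r^{1/(2(1+\delta))} n^{-1/(1+\delta)} \asymp r$; both solve to $r^* \asymp \lambda_n^{-1/(1+2\delta)} n^{-2/(1+2\delta)}$, which is exactly the middle term of the claim. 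Combining this with the concentration bound for the two empirical processes (a union over three events, which is the source of the probability $1 - 3e^{-\tau}$) and tracking the explicit constants from the general theorem yields the factor $9$ on the approximation error, the factor $3$ on the $r^*$ term, and the explicit confidence constant $3456 M^2 \tau / n$.

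The step I expect to be the main obstacle is reconciling the $r$-dependence of the capacity constant with the localization. In Lemma~\ref{lem::entropynumber} the entropy bound $e_i(\mathcal{H}_r^b, \|\cdot\|_{L_2(\mathrm{D})}) \lesssim a\, i^{-1/(2\delta)}$ has $a \propto (r/\lambda)^{1/2}$, so the constant is not $r$-free and the off-the-shelf ``$a\,i^{-1/(2p)}$ with $a$ fixed'' form of the oracle inequality does not apply verbatim; one must either pass to the Rademacher formulation of Lemma~\ref{lem::EERad}, which has already absorbed this dependence into a clean power of $r$, or carry the $r$-power through the peeling carefully. Checking that the two branches of the Rademacher bound produce the \emph{same} critical radius, and that this radius is compatible with the variance-based localization so that the fast regime (rather than the slow $n^{-1/2}$ regime) is realized, is the delicate bookkeeping on which the stated rate hinges.
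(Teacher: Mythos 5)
Your proposal is correct and follows essentially the same route as the paper: the paper's proof also feeds the supremum bound and the least-squares variance bound ($V = 16M^2$, $\vartheta = 1$) together with the Rademacher estimate of Lemma \ref{lem::EERad} into (a modified) Theorem 7.20 of \cite{StCh08}, verifies the peeling condition in the form $\varphi_n(4r) \leq 2\sqrt{2}\,\varphi_n(r)$, and solves $r \geq 75\varphi_n(r)$ on both branches to obtain the critical radius $r \asymp \lambda_n^{-1/(1+2\delta)} n^{-2/(1+2\delta)}$. The obstacle you flag — that the entropy constant in Lemma \ref{lem::entropynumber} depends on $r$ through $(r/\lambda)^{1/2}$, so the off-the-shelf oracle inequality does not apply verbatim — is precisely the point the paper handles by passing to the Rademacher bound of Lemma \ref{lem::EERad} and re-running the peeling with the relaxed constant $2\sqrt{2} < 4$.
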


Note that the above oracle inequality shows that the excess error can be bounded by approximation error, which is a crucial step in proving the convergence rate.

\subsection{Analysis for NHT in the space $C^{1,\alpha}$}\label{sec::ErrorCons1}

A drawback to the analysis in $C^{0,\alpha}$, as shown in Section \ref{sec::ErrorCons}, is that the usual Taylor expansion involved techniques for error estimation may not apply directly. As a result, we fail to prove the exact benefits of our ensemble estimators over the single one. Therefore, in this subsection, we turn to the function space $C^{1,\alpha}$ consisting of smoother functions. To be specific, we study the convergence rates of $f_{\mathrm{D},\mathrm{E}}$ and  $f_{\mathrm{D},H}$ to the Bayes decision function $f_{L, \mathrm{P}}^* \in C^{1,\alpha}$. To this end, there is a point in introducing some notations.
First of all, for  any fixed $t \in \{ 1, \ldots, T \}$, we define
\begin{align} \label{def::localBayesHt}
f^*_{\mathrm{P},H_t}(x) = \mathbb{E}_{\mathrm{P}} \bigl( f_{L, \mathrm{P}}^*(X) | A_{H_t}(x) \bigr),
\qquad
x \in \mathrm{supp}(\mathrm{P}_X),
\end{align}
where $\mathbb{E}_{\mathrm{P}}(\cdot | A_{H_t}(x))$ denotes the conditional expectation with respect to $\mathrm{P}$ on $A_{H_t}(x)$. With the ensembles of the population version
\begin{align}\label{equ::interes}
f^*_{\mathrm{P}, \mathrm{E}}(x) := \frac{1}{T} \sum^T_{t=1} f^*_{\mathrm{P},H_t}(x),
\end{align}
we make the error decomposition
\begin{align}     \label{equ::L2de}
\mathbb{E}_{\nu_n} \bigl( \mathcal{R}_{L, \mathrm{P}}(f_{\mathrm{D},\mathrm{E}}) - \mathcal{R}_{L, \mathrm{P}}^*\bigr)
&=\mathbb{E}_{\nu_n}\mathbb{E}_{\mathrm{P}_X} \bigl( f_{\mathrm{D},\mathrm{E}}(X) - f_{L, \mathrm{P}}^*(X) \bigr)^2
\nonumber
\\
&= \mathbb{E}_{\nu_n}\mathbb{E}_{\mathrm{P}_X} \bigl( f_{\mathrm{D},\mathrm{E}}(X) - f^*_{\mathrm{P},\mathrm{E}}(X) \bigr)^2
\nonumber
\\
&\phantom{=}
+ \mathbb{E}_{\nu_n}\mathbb{E}_{\mathrm{P}_X} \bigl( f^*_{\mathrm{P},\mathrm{E}}(X) - f_{L, \mathrm{P}}^*(X) \bigr)^2.
\end{align}
In our study, the consistency and convergence analysis of the histogram transform ensembles $f_{\mathrm{D},\mathrm{E}}$
in the space $C^{1,\alpha}$ will be mainly conducted with the help of the decomposition \eqref{equ::L2de}.

In particular, in the case that $T = 1$, i.e., when there is only single na\"{i}ve histogram transform regressor, we are concerned with the lower bound of $f_{\mathrm{D}, H}$ to $f_{L, \mathrm{P}}^*$. With the population version
\begin{align}
f^*_{\mathrm{P},H}(x) := \mathbb{E}_{\mathrm{P}} ( f_{L, \mathrm{P}}^*(X) | A_H(x) ),
\qquad
x \in \mathrm{supp}(\mathrm{P}_X),
\end{align}
we make the error decomposition
\begin{align}     
\mathbb{E}_{\nu_n} \bigl( \mathcal{R}_{L, \mathrm{P}}(f_{\mathrm{D},H}) - \mathcal{R}_{L, \mathrm{P}}^*\bigr)
&=\mathbb{E}_{\nu_n}\mathbb{E}_{\mathrm{P}_X} \bigl( f_{\mathrm{D},H}(X) - f_{L, \mathrm{P}}^*(X) \bigr)^2
\nonumber
\\
&= \mathbb{E}_{\nu_n}\mathbb{E}_{\mathrm{P}_X} \bigl( f_{\mathrm{D},H}(X) - f^*_{\mathrm{P},H}(X) \bigr)^2
\nonumber
\\
&\phantom{=}
+ \mathbb{E}_{\nu_n}\mathbb{E}_{\mathrm{P}_X} \bigl( f^*_{\mathrm{P},H}(X) - f_{L, \mathrm{P}}^*(X) \bigr)^2.
\label{equ::L2Decomposition}
\end{align}
It is important to note that both of the two terms on the right-hand side of \eqref{equ::L2de} and \eqref{equ::L2Decomposition} are data- and partition-independent due to the expectation with respect to $\mathrm{D}$ and $H$. Loosely speaking, the first error term corresponds to the expected estimation error of the estimators $f_{\mathrm{D},\mathrm{E}}$ or $f_{\mathrm{D},H}$, while the second one demonstrates the expected approximation error.

\subsubsection{Bounding the Approximation Error for Ensemble NHT} \label{sec::AError3}

In this subsection, we firstly establish the upper bound for the approximation error term of histogram transform ensembles $f_{\mathrm{P},\mathrm{E}}$ and further find a lower bound of this error for single estimator $f_{\mathrm{P},H}$.

\begin{proposition}\label{prop::biasterm}
Let the histogram transform $H$ be defined as in \eqref{HistogramTransform} with bin width $h$ satisfying Assumption \ref{assumption::h} and $T$ be the number of single estimators contained in the ensembles. 
Furthermore, let $\mathrm{P}_X$ be the uniform distribution and $L_{\overline{h}_0}(x,y,t)$ be the restricted least squares loss defined as in \eqref{RestrictedLS}.
Moreover, let the Bayes decision function satisfy $f_{L, \mathrm{P}}^* \in C^{1,\alpha}$. 
Then for all $\tau > 0$, there holds
\begin{align}\label{eq::PE}
\mathcal{R}_{L_{\overline{h}_0},\mathrm{P}}(f_{\mathrm{P},\mathrm{E}}^*)-\mathcal{R}_{L_{\overline{h}_0},\mathrm{P}}^* 
\leq c_L^2 \overline{h}_0^{2(1+\alpha)} + \frac{1}{T} \cdot d c_L^2 \overline{h}_0^2
\end{align}
in expectation with respect to $\mathrm{P}_H$.
\end{proposition}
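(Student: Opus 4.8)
The plan is to reduce the claim to an $L_2$-estimate and then split it into a squared-bias part and a variance part. By the loss-to-norm identity \eqref{equ::prop1} applied to the restricted loss $L_{\overline h_0}$,
\[
\mathcal{R}_{L_{\overline h_0},\mathrm{P}}(f^*_{\mathrm{P},\mathrm{E}}) - \mathcal{R}_{L_{\overline h_0},\mathrm{P}}^{*}
= \int_{B_{R,\sqrt d\cdot\overline h_0}^{+}}\bigl( f^*_{\mathrm{P},\mathrm{E}}(x) - f_{L,\mathrm{P}}^{*}(x) \bigr)^{2}\, d\mathrm{P}_X(x).
\]
Setting $Z_t(x) := f^*_{\mathrm{P},H_t}(x) - f_{L,\mathrm{P}}^{*}(x)$, recall from \eqref{equ::interes} that $f^*_{\mathrm{P},\mathrm{E}}(x) - f_{L,\mathrm{P}}^{*}(x) = \tfrac1T\sum_{t=1}^{T}Z_t(x)$, where the $Z_t(x)$ are i.i.d.\ under $\mathrm{P}_H$. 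Taking $\mathbb{E}_{\mathrm{P}_H}$ inside the integral (Fubini) and using the elementary identity for i.i.d.\ averages,
\[
\mathbb{E}_{\mathrm{P}_H}\Bigl( \tfrac1T\sum_{t=1}^{T}Z_t(x) \Bigr)^{2} = \bigl( \mathbb{E}_{\mathrm{P}_H}Z_1(x) \bigr)^{2} + \tfrac1T\,\mathrm{Var}_{\mathrm{P}_H}\bigl( Z_1(x) \bigr),
\]
so it suffices to bound $\sup_x(\mathbb{E}_{\mathrm{P}_H}Z_1(x))^2$ by $c_L^2\overline h_0^{2(1+\alpha)}$ and $\sup_x\mathrm{Var}_{\mathrm{P}_H}(Z_1(x))$ by $d\,c_L^2\overline h_0^2$, since $\mathrm{P}_X$ is a probability measure.

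For both terms I would expand $f_{L,\mathrm{P}}^{*}\in C^{1,\alpha}$ around $x$ over the cell $A_H(x)$. Because $x\in B_{R,\sqrt d\cdot\overline h_0}^{+}$ and every cell has diameter at most $\sqrt d\cdot\overline h_0$ (the unit cube in the transformed frame is mapped back by $S^{-1}R^{\top}$, whose operator norm is $\max_i h_i\le\overline h_0$), the cell lies entirely in $B_R$; hence $f^*_{\mathrm{P},H}(x)=\mathbb{E}_{\mathrm{P}}(f_{L,\mathrm{P}}^{*}(X)\mid A_H(x))$ is a genuine unweighted average of $f_{L,\mathrm{P}}^{*}$ over a full cell under uniform $\mathrm{P}_X$ --- which is precisely why the restricted loss is used. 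Writing $f_{L,\mathrm{P}}^{*}(z)-f_{L,\mathrm{P}}^{*}(x)=\nabla f_{L,\mathrm{P}}^{*}(x)^{\top}(z-x)+r(z,x)$ with $\lvert r(z,x)\rvert\le c_L\lVert z-x\rVert^{1+\alpha}$, the cell-average of the linear part equals $\nabla f_{L,\mathrm{P}}^{*}(x)^{\top}(c_A-x)$, where $c_A$ is the cell centroid. A short computation in the transformed frame gives $c_A-x=S^{-1}R^{\top}\bigl(\tfrac12\eins-(H(x)-\lfloor H(x)\rfloor)\bigr)$, and since $b$ is uniform on $[0,1]^d$ the fractional part $H(x)-\lfloor H(x)\rfloor$ is uniform on $[0,1)^d$, so that $\mathbb{E}_{\mathrm{P}_H}\bigl[\tfrac12\eins-(H(x)-\lfloor H(x)\rfloor)\bigr]=0$. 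Thus the first-order contribution to the bias cancels in expectation --- this cancellation is the heart of the proof and the mechanism by which the ensemble exploits the extra smoothness of $C^{1,\alpha}$. Only the remainder survives, giving $\lvert\mathbb{E}_{\mathrm{P}_H}Z_1(x)\rvert\lesssim c_L(\sqrt d\,\overline h_0)^{1+\alpha}$ and therefore the claimed squared-bias bound.

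For the variance I would bound $\mathrm{Var}_{\mathrm{P}_H}(Z_1(x))\le\mathbb{E}_{\mathrm{P}_H}Z_1(x)^2$, whose leading term is the linear one, the remainder now contributing only at the higher order $\overline h_0^{2(1+\alpha)}$. By Cauchy--Schwarz, $\bigl(\nabla f_{L,\mathrm{P}}^{*}(x)^{\top}(c_A-x)\bigr)^2\le\lVert\nabla f_{L,\mathrm{P}}^{*}(x)\rVert^2\lVert c_A-x\rVert^2\le c_L^2\overline h_0^2\lVert u\rVert^2$ with $u:=\tfrac12\eins-(H(x)-\lfloor H(x)\rfloor)\in[-\tfrac12,\tfrac12]^d$, and $\mathbb{E}_{\mathrm{P}_H}\lVert u\rVert^2=d/12$, which produces $\mathrm{Var}_{\mathrm{P}_H}(Z_1(x))\lesssim d\,c_L^2\overline h_0^2$; the prefactor $1/T$ then yields the second summand of \eqref{eq::PE}. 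Assembling the two estimates through the variance decomposition and integrating against $\mathrm{P}_X$ gives \eqref{eq::PE}. The main obstacle is the centroid/uniform-fractional-part argument: one must verify carefully that under uniform $\mathrm{P}_X$ the conditional expectation over $A_H(x)$ is the unweighted geometric cell-average, that the cell never meets $\partial B_R$ on the restricted region, and that $H(x)-\lfloor H(x)\rfloor$ is exactly uniform on $[0,1)^d$ --- once these are in place, the bias and variance bounds reduce to routine Taylor-remainder bookkeeping.
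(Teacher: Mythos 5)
Your proposal is correct and follows essentially the same route as the paper's proof: the bias--variance decomposition of $\mathbb{E}_{\mathrm{P}_H}\bigl(f^*_{\mathrm{P},\mathrm{E}}(x)-f_{L,\mathrm{P}}^*(x)\bigr)^2$ for i.i.d.\ transforms, then a first-order Taylor expansion whose linear term vanishes in expectation because the fractional part $H(x)-\lfloor H(x)\rfloor$ is uniform on $[0,1)^d$ (the paper's Lemma \ref{binset}), leaving the $O(\overline{h}_0^{1+\alpha})$ remainder as the bias. The only cosmetic difference is the variance term, which the paper bounds directly by the Lipschitz/diameter estimate $\bigl(c_L\,\mathrm{diam}(A_H(x))\bigr)^2\le c_L^2 d\,\overline{h}_0^2$ rather than via Cauchy--Schwarz and $\mathbb{E}\|u\|^2=d/12$; both are routine and yield the same order.
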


\subsubsection{Bounding the Sample Error for Ensemble NHT} \label{sec::SError3}

\begin{lemma}\label{lem::VCreg}
Let the function space $\mathcal{F}_H$ be defined as in \eqref{mathcalFH}.
Then we have
\begin{align*}
\mathrm{VC}(\mathcal{F}_H) \leq 
(2(d+1)(2^d-1)+2) \biggl( \biggl\lfloor \frac{2R\sqrt{d}}{\underline{h}_0} \biggr\rfloor+1 \biggr)^d.
\end{align*}
Moreover, for any probability measure $\mathrm{Q}$ on $X$, there holds
\begin{align*}
\mathcal{N}(\mathcal{F}_H, L_2(\mathrm{Q}), M \varepsilon) \leq 2K(c_d R/\overline{h}_0)^d(16e)^{2(c_d R/\overline{h}_0)^d}(1/\varepsilon)^{4(c_d R/\overline{h}_0)^d}.
\end{align*}
\end{lemma}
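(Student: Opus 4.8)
The plan is to deduce the covering number bound from the VC dimension bound, so I would prove the VC estimate first and then feed it into a standard VC-subgraph covering inequality. For the VC dimension I would pass to the subgraph class: writing $f = \sum_{j \in \mathcal{I}_H} c_j \eins_{A_j}$, its subgraph in $\mathbb{R}^{d+1}$ is
\[
\mathrm{sg}(f) = \bigcup_{j \in \mathcal{I}_H} \bigl( A_j \times (-\infty, c_j) \bigr).
\]
Because each cell $A_j = \{x : \lfloor H(x) \rfloor = \lfloor H(x')\rfloor\}$ is a unit box in the rotated-and-stretched coordinates, it is an intersection of $2d$ affine half-spaces; hence every ``extended cell'' $A_j \times (-\infty, c_j)$ is an intersection of $2d+1$ half-spaces in $\mathbb{R}^{d+1}$, and $\mathrm{sg}(f)$ is a union of such pieces running over a single partition. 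I would first bound the number of cells meeting $B_R$ by $N := (\lfloor 2R\sqrt{d}/\underline{h}_0 \rfloor + 1)^d$, exactly as in the counting underlying Lemma \ref{VCindex}: since $R$ is orthogonal, $S$ stretches by factors at most $\underline{h}_0^{-1}$, and $\mathrm{diam}(B_R) = 2R\sqrt{d}$, the extent of $H(B_R)$ along each transformed axis is at most $2R\sqrt{d}/\underline{h}_0$.

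Next I would estimate how this subgraph class shatters a finite set $(x_i, t_i)_{i=1}^m \subset \mathbb{R}^{d+1}$. The trace of $\mathrm{sg}(f)$ on these points is controlled by two decoupled sources of freedom: the spatial partition $\pi_H$, which assigns each $x_i$ to a cell and whose combinatorics are already captured by the VC bound on $\Pi_h$ in Lemma \ref{VCindex}; and the single threshold coordinate, which among points sharing a cell can only separate them according to the linear order of their heights $t_i$ relative to the common value $c_j$. Superimposing this one extra monotone direction onto the spatial partition complexity is what I expect to convert the factor $d(2^d-1)+2$ appearing in $\mathrm{VC}(\Pi_h)$ into $2(d+1)(2^d-1)+2$ while leaving the cell-count factor $N$ unchanged, giving the asserted VC dimension.

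For the second assertion I would invoke the standard covering number estimate for uniformly bounded VC-subgraph classes — the same tool underlying Lemmas \ref{ScriptBhCoveringNumber} and \ref{lem::CoveringNumberBH} — which turns a VC index $V$ into a bound of the shape $\mathcal{N}(\mathcal{F}_H, L_2(\mathrm{Q}), M\varepsilon) \lesssim V (16e)^V (1/\varepsilon)^{2(V-1)}$, valid for every probability measure $\mathrm{Q}$ and $\varepsilon \in (0,1)$. Substituting $V \asymp (c_d R/\overline{h}_0)^d$, which results from the VC bound of the first part once Assumption \ref{assumption::h} is used to trade $\underline{h}_0$ for $\overline{h}_0$ and the geometric and combinatorial prefactors are absorbed into the definition $c_d = 3 \cdot 2^{1+1/d} d^{1/d+1/2}$, then produces the stated covering number.

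The hard part will be the middle step, namely obtaining the exact constant $2(d+1)(2^d-1)+2$ instead of a crude $O(d\,2^d)$ multiple of $N$. This demands a careful separation of the freedom coming from the random rotation, stretching and translation (which generate $\pi_H$) from the freedom in the heights $c_j$, and a verification that the extra threshold direction contributes precisely the passage from $d$ to $2(d+1)$ in the constant. By comparison, the cell-count bound $N$ and the conversion from VC dimension to covering number are routine.
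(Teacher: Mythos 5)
The covering-number half of your plan is fine and matches the paper: both routes feed a VC bound of order $(c_d R/\underline{h}_0)^d$ into the standard VC-subgraph covering estimate (Theorem 2.6.7 in van der Vaart and Wellner), and your remark about using Assumption \ref{assumption::h} to pass between $\underline{h}_0$ and $\overline{h}_0$ is reasonable. But the first half --- the VC bound, which is the actual content of the lemma --- is not proved in your proposal: you describe a ``decoupling'' of spatial freedom from threshold freedom and then explicitly defer the verification (``what I expect to convert \ldots into $2(d+1)(2^d-1)+2$'', ``the hard part will be the middle step''). That deferred step is the whole lemma. Moreover, you have misread the role of the factor $N = (\lfloor 2R\sqrt{d}/\underline{h}_0\rfloor+1)^d$: in the paper it is \emph{not} the number of histogram cells meeting $B_R$, but the number of cells of an \emph{auxiliary uniform grid} of $B_R$ whose cells have diameter at most $\underline{h}_0$, and it enters through a pigeonhole argument. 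Concretely: given a data set of the claimed cardinality, some auxiliary grid cell $A$ must contain at least $2(d+1)(2^d-1)+2$ of the points $(x_i,t_i)$; since $\mathrm{diam}(A)\le \underline{h}_0$ while every histogram cell has side length at least $\underline{h}_0$, at most one vertex of \emph{any} admissible histogram partition can lie in $A$, so the trace of the subgraph class on $A\times[-M,M]$ is, uniformly over all transforms $H$, a partition into at most $2^{d+1}$ pieces generated by $2(2^d-1)$ splitting hyperplanes; Lemma \ref{VCindexPre}, applied in dimension $d+1$ with $p=2(2^d-1)$ splits, then caps the number of shatterable points inside $A\times[-M,M]$ at $2(d+1)(2^d-1)+1$, a contradiction.

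Without this localization your decoupling cannot be completed as stated. The class ranges over all histogram transforms $H$, so the partition freedom and the per-cell threshold freedom interact: a growth-function argument that multiplies ``number of cells'' by ``per-cell threshold complexity'' and separately invokes $\mathrm{VC}(\Pi_h)$ from Lemma \ref{VCindex} incurs Sauer-type losses over the possible partition traces on the sample, yielding extra factors (at best $N\log N$-type bounds with unspecified constants) rather than the clean product bound, and certainly not the specific constant $2(d+1)(2^d-1)+2$, which arises exactly from counting hyperplanes inside one small cell. The missing idea, then, is the localization to a grid cell of diameter at most $\underline{h}_0$, on which every admissible histogram partition becomes uniformly simple, followed by an application of Lemma \ref{VCindexPre} one dimension up.
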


\begin{lemma}\label{conv}
Let $\mathrm{Co}(\mathcal{F}_H)$ be the convex hull of $\mathcal{F}_H$, then 
for any probability measure $\mathrm{Q}$ on $X$, there holds
\begin{align*}
\log\mathcal{N}(\mathrm{Co}(\mathcal{F}_H), L_2(\mathrm{Q}), M \varepsilon) 
\leq K \big(1/\varepsilon)^{2 - 2/(8(c_d R/\underline{h}_0)^d+1)}.
\end{align*}
\end{lemma}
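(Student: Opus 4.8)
The plan is to derive the bound on the convex hull by feeding the polynomial covering number of the base class $\mathcal{F}_H$, which is already established in Lemma \ref{lem::VCreg}, into the classical entropy estimate for convex hulls of uniformly bounded classes. Recall that Lemma \ref{lem::VCreg} provides, uniformly over every probability measure $\mathrm{Q}$ on $X$, a bound of the form $\mathcal{N}(\mathcal{F}_H, L_2(\mathrm{Q}), M\varepsilon) \leq C\,(1/\varepsilon)^{V}$ with exponent
\[
V := 4(c_d R/\overline{h}_0)^d
\]
and a constant $C$ depending only on $d$ and $\overline{h}_0$. Since every $f \in \mathcal{F}_H$ takes values in $[-M,M]$, the class carries the constant envelope $M$, so the structural hypotheses of the convex-hull entropy theorem are met and the covering numbers we feed in are measured against the envelope-scaled radius $M\varepsilon$.

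First I would invoke the well-known entropy bound for convex hulls of classes with polynomial covering numbers (van der Vaart and Wellner): if a uniformly bounded class has covering numbers growing at most polynomially with exponent $V$ uniformly in $\mathrm{Q}$, then its closed convex hull satisfies $\log \mathcal{N}(\mathrm{Co}(\mathcal{F}_H), L_2(\mathrm{Q}), M\varepsilon) \lesssim (1/\varepsilon)^{2V/(V+2)}$. Substituting $V = 4(c_d R/\overline{h}_0)^d$ and simplifying the exponent gives
\[
\frac{2V}{V+2} = \frac{8(c_d R/\overline{h}_0)^d}{4(c_d R/\overline{h}_0)^d + 2} = 2 - \frac{2}{2(c_d R/\overline{h}_0)^d + 1}.
\]
The remaining step is to pass from $\overline{h}_0$ to $\underline{h}_0$ to match the stated form. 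Since the bin widths obey $\underline{h}_0 \leq \overline{h}_0$, we have $(c_d R/\underline{h}_0)^d \geq (c_d R/\overline{h}_0)^d$, whence $8(c_d R/\underline{h}_0)^d + 1 \geq 2(c_d R/\overline{h}_0)^d + 1$ and therefore
\[
2 - \frac{2}{2(c_d R/\overline{h}_0)^d + 1} \leq 2 - \frac{2}{8(c_d R/\underline{h}_0)^d + 1}.
\]
Because $1/\varepsilon \geq 1$ for $\varepsilon \in (0,1)$, raising to the larger exponent only enlarges the quantity, so the previous display is dominated by $(1/\varepsilon)^{2 - 2/(8(c_d R/\underline{h}_0)^d + 1)}$; absorbing all multiplicative constants into $K$ then yields the claim.

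The main obstacle I anticipate is not the arithmetic but the careful invocation of the convex-hull entropy theorem together with the bookkeeping of its constant. In particular, the constant produced by that theorem depends on the base exponent $V = 4(c_d R/\overline{h}_0)^d$, which itself grows with $1/\overline{h}_0$, so one must check that this dependence is tolerable for the downstream sample-error estimate in the $C^{1,\alpha}$ analysis and is consistent with the single symbol $K$ in the statement. A secondary point worth verifying is that the covering-number bound of Lemma \ref{lem::VCreg} genuinely holds uniformly over $\mathrm{Q}$ (which it does, as it is stated for an arbitrary probability measure $\mathrm{Q}$), since the convex-hull theorem requires taking the supremum over $\mathrm{Q}$; once this is confirmed, the deduction is routine and the slack introduced by replacing $\overline{h}_0$ with $\underline{h}_0$ in the final exponent is exactly what makes the stated bound an honest upper bound.
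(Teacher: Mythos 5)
Your route is the same as the paper's: both proofs feed the covering bound of Lemma \ref{lem::VCreg} into the convex-hull entropy theorem of van der Vaart and Wellner (Theorem 2.6.9, reproduced in the paper as Lemma \ref{thm::vart}). Your exponent arithmetic is also correct: with $V = 4(c_d R/\overline{h}_0)^d$ one gets $2V/(V+2) = 2 - 2/(2(c_d R/\overline{h}_0)^d + 1)$, and since $\underline{h}_0 \leq \overline{h}_0$ and $1/\varepsilon \geq 1$ this is dominated by the stated exponent $2 - 2/(8(c_d R/\underline{h}_0)^d + 1)$.

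The step you flag and defer, however, is not a formality; it is the crux, and it is exactly what the paper's proof is organized to avoid. In Theorem 2.6.9 of van der Vaart and Wellner, the constant in front of $(1/\varepsilon)^{2V/(V+2)}$ depends on both the multiplicative constant $C$ and the exponent $V$ of the assumed polynomial covering bound. With your inputs, $C = 2K(c_d R/\overline{h}_0)^d (16e)^{2(c_d R/\overline{h}_0)^d}$ and $V = 4(c_d R/\overline{h}_0)^d$ both blow up as $\overline{h}_0 \to 0$. The $C$-dependence is harmless: $C^{2/(V+2)}$ stays bounded by a numerical constant because the exponent $2/(V+2)$ decays at the same rate at which $\log C$ grows. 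But the $V$-dependence of the theorem's constant is left uncontrolled in your argument, and the slack gained by relaxing the exponent cannot absorb it uniformly in $\varepsilon$: for $\varepsilon$ bounded away from $0$ the factor $(1/\varepsilon)^{\mathrm{slack}}$ is $O(1)$. This matters because downstream (Proposition \ref{thm::OracleForest}, where $K$ enters through $a := K^{1/(2\delta')} M$, and the rate analysis in which $\underline{h}_{0,n} \to 0$) the constant $K$ of this lemma is used as if it were free of the bin width. The paper sidesteps the problem by first normalizing the covering bound: for $\varepsilon$ below a universal threshold, $\log \mathcal{N}(\mathcal{F}_H, \|\cdot\|_{L_2(\mathrm{D})}, M\varepsilon) \leq 16 (c_d R/\underline{h}_0)^d \log(1/\varepsilon)$, i.e.\ a polynomial bound with leading constant exactly $1$ and the enlarged exponent $v = 16(c_d R/\underline{h}_0)^d$; Lemma \ref{thm::vart} is then invoked with $c = 1$, so its constant carries no bin-width dependence, and $2v/(v+2) = 2 - 2/(8(c_d R/\underline{h}_0)^d + 1)$ is exactly the stated exponent. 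In other words, the ``$8$'' and the ``$\underline{h}_0$'' in the statement come from this normalization, not from a post-hoc monotonicity relaxation. To complete your argument you should carry out the same normalization --- absorb $\log C$ into the exponent before applying the convex-hull theorem --- rather than deferring the constant check.
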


\subsubsection{Oracle Inequality for Ensemble NHT} 

\begin{proposition}\label{thm::OracleForest}
Let the histogram transform $H_n$ be defined as in \eqref{HistogramTransform} with bin width $h$ satisfying Assumption \ref{assumption::h} and $\overline{h}_{0,n} \leq 1$. 
Let $f_{\mathrm{D}, \mathrm{E}}$ and $f_{\mathrm{P}, \mathrm{E}}$ be defined in \eqref{equ::fDE} and  \eqref{equ::fPE} respectively. 
Then for all $\tau > 0$ and $\delta \in (0,1)$, the single na\"{i}ve histogram transform regressor satisfies
\begin{align*}
& \lambda_n \underline{h}_{0,n}^{-2d} + \mathcal{R}_{L,\mathrm{P}}(f_{\mathrm{D},\mathrm{E}}) - \mathcal{R}_{L,\mathrm{P}}^* 
\\
& \leq 9 \bigl( \lambda(\underline{h}_{0,n}^*)^{-2d} 
+ \mathcal{R}_{L,\mathrm{P}}(f_{\mathrm{P},\mathrm{E}}) - \mathcal{R}_{L,\mathrm{P}}^* \bigr)
+ 3 c \lambda_n^{-\frac{1}{1+2\delta}} n^{-\frac{2}{1+2\delta}}
+ 3456 M^2 \tau / n
\end{align*}
with probability $\mathrm{P}^n$ not less than $1-3e^{-\tau}$, where $c$ is some constant depending on $\delta$, $d$, $M$, and $R$ which will be later specified in the proof.
\end{proposition}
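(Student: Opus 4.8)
The plan is to prove this oracle inequality by the same route as its single-transform counterpart, Theorem \ref{thm::OracleTree}, feeding the ingredients into the generic oracle inequality for regularized empirical risk minimizers (Theorem 7.20 in \cite{StCh08}); the only genuinely new element is the capacity estimate, which must be arranged so as not to depend on $T$. First I would write the RERM error decomposition for the ensemble in exact analogy with \eqref{equ::decomc0}. Setting $h_f := L\circ f - L\circ f_{L,\mathrm{P}}^*$ and $\Omega(f):=\lambda\underline{h}_0^{-2d}$, the defining property of $f_{\mathrm{D},\mathrm{E}}$ as a minimizer over $\mathcal{F}_h^T$ in \eqref{equ::fDE} gives
\begin{align*}
\Omega(f_{\mathrm{D},\mathrm{E}})+\mathcal{R}_{L,\mathrm{P}}(\wideparen{f}_{\mathrm{D},\mathrm{E}})-\mathcal{R}_{L,\mathrm{P}}^*
&\leq\bigl(\Omega(f_{\mathrm{P},\mathrm{E}})+\mathbb{E}_{\mathrm{P}}h_{f_{\mathrm{P},\mathrm{E}}}\bigr)
+\bigl(\mathbb{E}_{\mathrm{D}}h_{f_{\mathrm{P},\mathrm{E}}}-\mathbb{E}_{\mathrm{P}}h_{f_{\mathrm{P},\mathrm{E}}}\bigr)\\
&\quad+\bigl(\mathbb{E}_{\mathrm{P}}h_{\wideparen{f}_{\mathrm{D},\mathrm{E}}}-\mathbb{E}_{\mathrm{D}}h_{\wideparen{f}_{\mathrm{D},\mathrm{E}}}\bigr),
\end{align*}
where the first bracket is the data-free approximation term retained on the right-hand side and the last two brackets form the sample error.

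Next I would supply the distribution-level inputs, all of which are insensitive to the ensemble structure. Because $\wideparen{f}_{\mathrm{D},\mathrm{E}}$ and $f_{\mathrm{P},\mathrm{E}}$ take values in $[-M,M]$ and $\mathcal{Y}=[-M,M]$, the clipped least squares loss obeys the supremum bound $\|h_f\|_\infty\lesssim M^2$ and, by \eqref{equ::prop1}, the variance bound $\mathbb{E}_{\mathrm{P}}h_f^2\leq 16M^2\,\mathbb{E}_{\mathrm{P}}h_f$, i.e.\ the Bernstein condition with exponent $\theta=1$. The middle bracket, which involves only the fixed deterministic function $f_{\mathrm{P},\mathrm{E}}$, is then controlled by a single Bernstein inequality and accounts for the $M^2\tau/n$ remainder; the final supremum bracket is the one requiring a uniform control over the whole hypothesis class.

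The decisive step is to bound the capacity of $\mathcal{F}_h^T$ uniformly in $T$, which is exactly why the constant $c$ in the statement does not involve $T$. The key observation is that every $f=\tfrac1T\sum_{t=1}^T f_t\in\mathcal{F}_h^T$ is a convex combination of piecewise-constant functions, one from each partition of bin width $h$, and therefore lies in the convex hull $\mathrm{Co}(\mathcal{F}_H)$, whose $L_2(\mathrm{Q})$ covering number is controlled, uniformly in $T$, by Lemma \ref{conv}. Treating the ensemble cell-by-cell or through a product of the $T$ partitions would instead introduce a VC dimension growing with $T$, and sidestepping this blow-up is the main obstacle. From the covering bound of Lemma \ref{conv} I would pass to entropy numbers through the implication \eqref{CoverEntropy}, and, exploiting the local Lipschitz continuity of the least squares loss on $[-M,M]$ to carry the capacity from $\mathcal{F}_h^T$ over to the excess-loss class $\mathcal{H}_r$, derive an expected Rademacher-average bound of the same shape as in Lemma \ref{lem::EERad}, with the convex-hull exponent of Lemma \ref{conv} playing the role of $\delta$.

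Finally I would insert the supremum bound, the variance bound, and this Rademacher estimate into the peeling argument underlying Theorem 7.20 in \cite{StCh08}. Solving the resulting fixed-point inequality for $r$, by balancing the dominant Rademacher term against $r$, produces the sample-error contribution $3c\,\lambda_n^{-1/(1+2\delta)}n^{-2/(1+2\delta)}$; combining it with the factor $9$ in front of the approximation term and the $3456M^2\tau/n$ remainder, and using the clipping inequality $\mathcal{R}_{L,\mathrm{P}}(\wideparen{f}_{\mathrm{D},\mathrm{E}})\leq\mathcal{R}_{L,\mathrm{P}}(f_{\mathrm{D},\mathrm{E}})$ where needed, yields the asserted bound with $\mathrm{P}^n$-probability at least $1-3e^{-\tau}$. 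Apart from the convex-hull capacity argument, the computation is a transcription of the proof of Theorem \ref{thm::OracleTree} with $f_{\mathrm{D},H_n},f_{\mathrm{P},H_n}$ replaced by $f_{\mathrm{D},\mathrm{E}},f_{\mathrm{P},\mathrm{E}}$.
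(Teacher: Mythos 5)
Your proposal is correct and follows essentially the same route as the paper: the paper likewise treats the ensemble through the convex hull $\mathrm{Co}(\mathcal{F}_H)$ (defining $\mathcal{F}_r^c \subset \mathrm{Co}(\mathcal{F}_H)$ and the excess-loss class $\mathcal{H}_r^c$), invokes Lemma \ref{conv} for the $T$-independent covering bound with exponent $2\delta' = 2 - 2/(8(c_d R/\underline{h}_0)^d+1)$, converts to entropy numbers via \eqref{CoverEntropy}, uses $\lambda\underline{h}_{0}^{-2d} \leq r$ together with $\overline{h}_{0,n}\leq 1$ to insert the $(r/\lambda)$ factor, bounds the expected Rademacher average by Theorem 7.16 of \cite{StCh08}, and finishes with the peeling argument and Theorem 7.20 exactly as in Theorem \ref{thm::OracleTree}. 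The only cosmetic difference is that the paper does not restate the decomposition \eqref{equ::decomc0} but works with these classes from the outset, leaving that step inside the cited oracle-inequality machinery.
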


\subsubsection{Lower Bound of the Approximation Error for Single NHT} \label{sec::AError4}

\begin{proposition}\label{counterapprox}
Let the histogram transform $H$ be defined as in \eqref{HistogramTransform} with bin width $h$ satisfying Assumption \ref{assumption::h} and $\overline{h}_0 \leq 1$.
Moreover, let the regression model defined by \eqref{regressionmodel} with $f \in C^{1,\alpha}$. For a fixed constant $\underline{c}_f \in (0, \infty)$, let $\mathcal{A}_f$ be defined as in \eqref{DegenerateSetF} and $N'$ be defined as in \eqref{N'}.
Then for all $n > N'$,  there holds
\begin{align*}
\mathcal{R}_{L,\mathrm{P}}(f^*_{\mathrm{P},H})
- \mathcal{R}_{L,\mathrm{P}}^*
\geq \frac{d}{12} \biggl( \frac{R}{2} \biggr)^d c_0^2
\mathrm{P}_X(\mathcal{A}_f) \underline{c}_f^2 \cdot
\overline{h}_0^2 
\end{align*}
in expectation with respect to $\mathrm{P}_H$.
\end{proposition}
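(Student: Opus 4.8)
The plan is to convert the excess risk into a within-bin variance and then to bound that variance from below using the nondegeneracy of $\nabla f$ on $\mathcal{A}_f$. First I would use the model \eqref{regressionmodel}: since $\mathbb{E}(\varepsilon\mid X)=0$ the Bayes decision function coincides with the regression function, $f_{L,\mathrm{P}}^*=f$, so by \eqref{equ::prop1} the left-hand side is the squared $L_2(\mathrm{P}_X)$-distance
\begin{align*}
\mathcal{R}_{L,\mathrm{P}}(f^*_{\mathrm{P},H})-\mathcal{R}_{L,\mathrm{P}}^*
=\mathbb{E}_{\mathrm{P}_X}\bigl(f^*_{\mathrm{P},H}(X)-f(X)\bigr)^2.
\end{align*}
Because $f^*_{\mathrm{P},H}$ is, by \eqref{def::localBayesHt}, constant on each cell $A_j$ and equal there to the $\mathrm{P}_X$-conditional mean of $f$, this distance splits over the partition as $\sum_{j}\mathrm{P}_X(A_j)\,\mathrm{Var}_{\mathrm{P}_X(\cdot\mid A_j)}(f)$, a weighted sum of the within-bin variances of $f$.

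The core step is a per-cell lower bound on $\mathrm{Var}_{\mathrm{P}_X(\cdot\mid A_j)}(f)$. Writing the parallelepiped $A_j$ through the inverse transform $x=S^{-1}R^{\top}(z-b)$ of a unit cube (on which, under the uniform distribution, the conditional law of $\mathrm{P}_X$ is again uniform), I would Taylor-expand $f$ to first order about the cell center; since $f\in C^{1,\alpha}$ the remainder is controlled by $c_L\,(\mathrm{diam}\,A_j)^{1+\alpha}=O(\overline{h}_0^{1+\alpha})$, of strictly smaller order than the linear term $O(\overline{h}_0)$ once $\overline{h}_0$ is small — exactly what the threshold $N'$ in \eqref{N'} and the restriction $\overline{h}_0\le R/(4\sqrt d)$ secure. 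The variance of the linear part over a uniform cell is $\tfrac1{12}\|S^{-1}\nabla f\|_2^2$: the transformed-space gradient equals $R S^{-1}\nabla f$, and since $R$ is orthogonal the dependence on the rotation cancels, leaving $\tfrac1{12}\sum_{i}h_i^2(\partial_i f)^2$. On $\mathcal{A}_f$ we have $\|\nabla f\|_\infty\ge \underline{c}_f$ by \eqref{DegenerateSetF}, and Assumption \ref{assumption::h} gives $h_i\ge\underline{h}_0\ge c_0\overline{h}_0$, so every contributing full cell has variance $\gtrsim c_0^2\,\underline{c}_f^2\,\overline{h}_0^2$.

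Finally I would sum only over the interior cells that meet $\mathcal{A}_f$ and lie inside $B_{R,\sqrt d\,\overline{h}_0}^+$ (so that each is a genuine full parallelepiped rather than a truncated boundary piece), use that these cells cover the relevant part of $\mathcal{A}_f$ to recover the factor $\mathrm{P}_X(\mathcal{A}_f)$, and then take the expectation over $\mathrm{P}_H$ — which is harmless because the per-cell estimate is rotation-invariant and uniform in the translation $b$. The geometric and dimensional bookkeeping of these interior cells against the volume of the ambient cube is what produces the explicit constant $\tfrac{d}{12}(R/2)^d$.

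The main obstacle is the per-cell variance bound in two respects. First, one must control the $C^{1,\alpha}$ Taylor remainder so that it does not swamp the linear $\overline{h}_0^2$ term; this is precisely why the statement is restricted to $n>N'$, forcing $\overline{h}_0$ below the threshold where the linear contribution dominates. Second is the boundary bookkeeping: cells straddling $\partial B_R$ are only partial parallelepipeds where the clean uniform-variance identity fails, so one must restrict to the interior region $B_{R,\sqrt d\,\overline{h}_0}^+$ and quantify the probability mass retained there, and it is this quantification that introduces the dimensional factor $d$ and the volume factor $(R/2)^d$.
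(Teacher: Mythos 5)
Your proposal follows essentially the same route as the paper's proof: reduce the excess risk to the squared $L_2(\mathrm{P}_X)$ distance between $f^*_{\mathrm{P},H}$ and $f$, represent the within-cell deviation via a first-order Taylor expansion with $C^{1,\alpha}$ remainder, identify the variance of the linear part as $\tfrac{1}{12}\sum_{j} h_j^2 (\partial f/\partial x_j)^2$ through the orthogonality of $R$, and lower-bound this on $\mathcal{A}_f \cap B_{R,\sqrt{d}\cdot\overline{h}_0}^+$ using $\|\nabla f\|_{\infty}\geq \underline{c}_f$ and $h_j \geq \underline{h}_0 \geq c_0\overline{h}_0$, with the interior-cell bookkeeping producing $(R/2)^d\,\mathrm{P}_X(\mathcal{A}_f)$. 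The one substantive difference is the treatment of the Taylor remainder: you argue it is of strictly smaller order, which would actually require $\overline{h}_0^{\alpha}$ to be small relative to $c_0\underline{c}_f/c_L$ (not guaranteed by $n>N'$ alone, since $N'$ only enforces $\overline{h}_0\leq R/(4\sqrt{d})$), whereas the paper instead exploits that the random within-cell position satisfies $\mathbb{E}(1/2-u_i)=0$, so the cross term between the linear part and the remainder is discarded in expectation and the remainder's square can only increase the lower bound.
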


\subsubsection{Lower Bound of the Sample Error for Single NHT} \label{sec::SError4}

\begin{proposition}\label{counterapprox2}
Let the histogram transform $H$ be defined as in \eqref{HistogramTransform} with bin width $h$ satisfying Assumption \ref{assumption::h}. Let the the regression model be defined as in \eqref{regressionmodel} with $f \in C^{1,\alpha}$. Moreover, assume that $\varepsilon$ is independent of $X$ such that $\mathbb{E}(\varepsilon | X) = 0$ and $\mathrm{Var}(\varepsilon|X) =: \sigma^2 \leq 4 M^2$ hold almost surely for some $M > 0$. Then there holds
\begin{align*}
\mathcal{R}_{L,\mathrm{P}}(f_{\mathrm{D},H})
- \mathcal{R}_{L,\mathrm{P}}(f^*_{\mathrm{P},H})
\geq 4 R^d \sigma^2 (1 - 2 e^{-1}) c_0^d \cdot \overline{h}_0^{-d} \cdot n^{-1}
\end{align*}
in expectation with respect to $\mathrm{P}^n$,
where the constant $c_0$ is as in Assumption \ref{assumption::h}.
\end{proposition}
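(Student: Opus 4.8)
The plan is to first collapse the excess-risk gap into a single $L_2$ quantity and then isolate the contribution of the noise; throughout, the histogram transform $H$ is fixed and only the data $\mathrm{D} \sim \mathrm{P}^n$ is random. Under the model \eqref{regressionmodel} we have $f_{L,\mathrm{P}}^* = f$, so $f^*_{\mathrm{P},H}$ is exactly the $L_2(\mathrm{P}_X)$-orthogonal projection of $f$ onto the space $\mathcal{F}_H$ of functions constant on each cell of $\pi_H$, while $f_{\mathrm{D},H}$ is itself an element of $\mathcal{F}_H$, taking on $A_j$ the empirical average of those $y_i$ with $x_i \in A_j$ (which lies in $[-M,M]$, so no clipping intervenes). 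Writing $c_j^{\mathrm{D}}$, $c_j^{\mathrm{P}}$ for the respective constants on $A_j$ and $p_j := \mathrm{P}_X(A_j)$, the risk identity \eqref{equ::prop1} together with the Pythagorean theorem gives
\begin{align*}
\mathcal{R}_{L,\mathrm{P}}(f_{\mathrm{D},H}) - \mathcal{R}_{L,\mathrm{P}}(f^*_{\mathrm{P},H})
= \bigl\| f_{\mathrm{D},H} - f^*_{\mathrm{P},H} \bigr\|_{L_2(\mathrm{P}_X)}^2
= \sum_{j \in \mathcal{I}_H} p_j \bigl( c_j^{\mathrm{D}} - c_j^{\mathrm{P}} \bigr)^2 .
\end{align*}
This reduces the claim to a lower bound on the expected squared fluctuation of each cell average around its population value.

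Next I would perform a per-cell bias--variance split. Conditioning on the sample locations and letting $N_j$ count the $x_i$ in $A_j$, the noise $\varepsilon$ being centered and independent of $X$ gives, on $\{N_j \geq 1\}$, the identity $\mathbb{E}_\varepsilon\bigl[(c_j^{\mathrm{D}} - c_j^{\mathrm{P}})^2 \mid (x_i)\bigr] = (\overline{f}_j - c_j^{\mathrm{P}})^2 + \sigma^2/N_j \geq \sigma^2/N_j$, where $\overline{f}_j$ is the empirical mean of $f$ on $A_j$. Discarding the nonnegative squared-bias term and averaging over the locations, for which $N_j \sim \mathrm{Bin}(n, p_j)$, yields $\mathbb{E}_{\mathrm{P}^n}[(c_j^{\mathrm{D}} - c_j^{\mathrm{P}})^2] \geq \sigma^2\, \mathbb{E}[ N_j^{-1} \eins_{\{N_j \geq 1\}} ]$. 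To handle this inverse-count moment I would invoke conditional Jensen, $\mathbb{E}[N_j^{-1} \mid N_j \geq 1] \geq (\mathbb{E}[N_j \mid N_j \geq 1])^{-1} = (1 - (1-p_j)^n)/(n p_j)$, so that $\mathbb{E}[N_j^{-1} \eins_{\{N_j \geq 1\}}] \geq (1 - (1-p_j)^n)^2/(n p_j)$. Summing over cells, the factors $p_j$ cancel and we obtain
\begin{align*}
\mathcal{R}_{L,\mathrm{P}}(f_{\mathrm{D},H}) - \mathcal{R}_{L,\mathrm{P}}(f^*_{\mathrm{P},H})
\geq \frac{\sigma^2}{n} \sum_{j \in \mathcal{I}_H} \bigl( 1 - (1-p_j)^n \bigr)^2 .
\end{align*}

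The final step turns this sum into the advertised $\overline{h}_0^{-d}$ count. Using $(1 - (1-p_j)^n)^2 \geq 1 - 2 e^{-n p_j}$, every cell with $n p_j \geq 1$ contributes at least $1 - 2 e^{-1}$, and since the summands are nonnegative I may restrict to cells whose closure lies in $B_R$. For $\mathrm{P}_X$ uniform on $B_R$, Assumption \ref{assumption::h} gives $\mu(A_j) \geq \underline{h}_0^d \geq c_0^d \overline{h}_0^d$ and hence $p_j \geq c_0^d \overline{h}_0^d / (2R)^d$, which under the choice $\overline{h}_{0,n} = n^{-1/(2+d)}$ and $n > N'$ forces $n p_j \geq 1$ on all such interior cells. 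It then remains to count them: each cell has side lengths at most $\overline{h}_0$ and thus diameter at most $\sqrt{d}\,\overline{h}_0$, so the interior cells cover a shrunken box such as $B_{R,\sqrt{d}\,\overline{h}_0}^+$, whose volume is bounded below via the definition \eqref{N'} of $N'$; dividing by the maximal cell volume $\overline{h}_0^d$ lower-bounds their number by a multiple of $R^d \overline{h}_0^{-d}$. Carrying the constants through this geometric bookkeeping yields the factor $4 R^d c_0^d \overline{h}_0^{-d}$, and multiplying by the per-cell constant $(1 - 2e^{-1})\sigma^2/n$ gives the claim.

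I expect the main obstacle to be this geometric cell count, together with the uniform verification that $n p_j \geq 1$ holds across the interior cells. Unlike the algebraic manipulations above, it requires controlling how the rotation, anisotropic stretching, and translation in \eqref{HistogramTransform} position the cells relative to the boundary of $B_R$, and it is precisely here that the stated constant $4 R^d c_0^d$ must be extracted. A secondary but essential point is the sharpness of the inverse-count estimate: a crude substitute such as $\sigma^2/N_j \geq \sigma^2/n$ would replace the per-cell contribution $\sigma^2/n$ by $p_j \sigma^2/n$ and thereby erase the $\overline{h}_0^{-d}$ factor upon summation, so retaining the $1/(n p_j)$ order through the Jensen bound is what makes the whole argument deliver the correct scaling.
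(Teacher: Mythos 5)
Your proposal follows essentially the same route as the paper's proof: both reduce the risk gap to the estimation error $\|f_{\mathrm{D},H}-f^*_{\mathrm{P},H}\|_{L_2(\mathrm{P}_X)}^2$, apply a per-cell bias--variance split that discards the nonnegative bias and keeps $\sigma^2/N_j$, control the inverse count via Jensen's inequality to obtain $(1-(1-p_j)^n)^2 \geq 1-2e^{-n p_j}$, and finish by restricting to the interior cells, of order $R^d\,\overline{h}_0^{-d}$ in number, each contributing $(1-2e^{-1})\sigma^2/n$. The loose ends you flag (verifying $n p_j \geq 1$ uniformly over interior cells and extracting the exact constant $4R^d c_0^d$) are treated no more carefully in the paper itself, which implicitly assumes $n\underline{h}_0^d \geq 1$ and is equally coarse in its geometric bookkeeping.
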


\subsection{Analysis for KHT in the space $C^{k,\alpha}$}\label{sec::ErrorSVM}

\subsubsection{Bounding the Approximation Error Term}\label{sec::AError2}

Recall that the target function $f_{L,\mathrm{P}}^*$ is assumed to satisfy $(k,\alpha)$-H\"{o}lder continuity condition, to derive the bound for approximation error of KHT, there is a need to introduce another device to measure the smoothness of functions, that is, the modulus of smoothness (see e.g. \cite{DeVore1993Constructive}, p. 44; \cite{DeVore1988Interpolation}, p. 398; as well as \cite{Berens1978Quantitative}, p. 360). 
Denote by $\|\cdot \|_2$ the Euclidean norm and let $\mathcal{X} \subset B_R \subset \mathbb{R}^d$ be a subset with non-empty interior, $\nu$ be an arbitrary measure on $\mathcal{X}$, $p \in (0,\infty]$, and $f: \mathcal{X} \to \mathbb{R}$ be contained in $L_p(\nu)$. Then, for $q \in \mathbb{N}$, the $q$-th modulus of smoothness of $f$ is defined by
\begin{align} \label{q-thModulusSmoothness}
\omega_{q,L_p(\nu)}(f, t) 
:= \sup_{\|h\|_2 \leq t} \|\triangle_h^q(f, \cdot)\|_{L_p(\nu)}, 
\qquad 
t \geq 0,
\end{align}
where $\triangle_h^q(f,\cdot)$ denotes the $q$-th difference of $f$ given by
\begin{align} \label{q-thDifference}
\triangle_h^q(f,x) =
\begin{cases}
\sum_{j=0}^q  \binom{q}{j} (-1)^{q-j} f(x + j h) & \text{ if } x \in \mathcal{X}_{q,h}
\\
0 & \text{ if } x \notin \mathcal{X}_{q,h}
\end{cases}
\end{align}
for $h = (h_1,\ldots, h_d)\in \mathbb{R}^d$ and $\mathcal{X}_{q,h} := \{x\in \mathcal{X}:x+th\in\mathcal{X} \text{ f.a. } t \in [0,q] \}$. Moreover, for fixed $\gamma_j > 0$, we define the function $K_j : \mathbb{R}^d \to \mathbb{R}$ by
\begin{align}\label{Conv}
K_j(x) := \sum_{\ell=1}^{k+1} {k+1 \choose \ell} (-1)^{1-\ell} 
\biggl( \frac{2}{\ell^2 \gamma_j^2 \pi} \biggr)^{d/2} 
\exp \biggl( - \frac{2\|x\|_2^2}{\ell^2 \gamma_j^2} \biggr).
\end{align}
Then we use the convolution with the kernel $K_j$ to approximate the target function $f^*_{L, \mathrm{P}} \in C^{k,\alpha}(B_R)$ in terms of $L_{\infty}$-norm.

\begin{proposition}\label{thm::convolution}
Assume that $\mathrm{P}_X$ is a finite measure on $\mathbb{R}^d$ with $\mathrm{supp}(\mathrm{P}_X) =: \mathcal{X} \subset B_R$. Let $(A'_j)_{j=1,\ldots,m}$ be a partition of $B_R$. Then, $A_j := A'_j \cap \mathcal{X}$ for all $j \in \{1,\ldots,m\}$ defines a partition $(A_j)_{j=1,\ldots,m}$ of $\mathcal{X}$. Furthermore, suppose that $f \in C^{k, \alpha}(\mathcal{X})$. For the functions $K_j$, $j \in \{1, \ldots, m\}$, defined by \eqref{Conv}, where $\gamma_1, \ldots, \gamma_m > 0$, we then have
\begin{align*}
\biggl\| \sum_{j=1}^m \eins_{A_j} \cdot (K_j * f) - f \biggr\|_{L_{\infty}(\nu)} 
\leq c_{k, \alpha} \biggl(\frac{\overline{\gamma}}{\underline{\gamma}}
\biggr)^{\frac{d}{2}} \overline{\gamma}^{s},
\end{align*}
where the constant
$c_{k, \alpha} := c_L \pi^{-\frac{1}{4}} 2^{-\frac{k + \alpha}{2}-\frac{1}{2}} 
d^{\frac{k + \alpha}{2}+1} \Gamma^{\frac{1}{2}} (k + \alpha+\frac{1}{2})$.
\end{proposition}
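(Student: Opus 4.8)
The plan is to argue cell by cell: I will show that on every bin $A_j$ the local convolution $K_j * f$ reproduces $f$ up to an error of order $\gamma_j^{k+\alpha}$, and then assemble the cell-wise estimates into the global $L_\infty(\nu)$ bound using that the cells $A_j$ partition $\mathcal{X}$, so that $\sum_j \eins_{A_j} = \eins_{\mathcal{X}}$. The kernel $K_j$ in \eqref{Conv} is engineered to be a $(k+1)$-th order kernel, and the whole point is to exploit this structure to convert the approximation error into a quantity that the $C^{k,\alpha}$ hypothesis controls.

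The heart of the argument is an exact algebraic identity that rewrites the convolution error as a Gaussian average of a finite difference of $f$. First I would observe that each summand in \eqref{Conv}, namely $\varphi_\ell(x) := (2/(\ell^2\gamma_j^2\pi))^{d/2}\exp(-2\|x\|_2^2/(\ell^2\gamma_j^2))$, is exactly the Lebesgue density of a centered Gaussian on $\mathbb{R}^d$ with covariance $(\ell\gamma_j/2)^2 I_d$, so that $\int_{\mathbb{R}^d}\varphi_\ell = 1$ and, by the dilation property of Gaussians, $(\varphi_\ell * f)(x) = \int_{\mathbb{R}^d}\varphi_1(s)\,f(x - \ell s)\,ds$. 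Substituting this into $K_j * f = \sum_{\ell=1}^{k+1}\binom{k+1}{\ell}(-1)^{1-\ell}\,\varphi_\ell * f$ and using $\int\varphi_1 = 1$ together with the binomial identity $\sum_{\ell=0}^{k+1}\binom{k+1}{\ell}(-1)^{1-\ell}f(x-\ell s) = (-1)^k\triangle_{-s}^{k+1}(f,x)$, one arrives at
\[
f(x) - (K_j * f)(x) = (-1)^{k+1}\int_{\mathbb{R}^d}\varphi_1(s)\,\triangle_{-s}^{k+1}(f,x)\,ds,
\]
i.e. the error is the $\varphi_1$-expectation of the $(k+1)$-th difference of $f$ with step $-s$.

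It then remains to bound this average using the smoothness hypothesis. Since $f \in C^{k,\alpha}(\mathcal{X})$, I would estimate the $(k+1)$-th difference through the integral representation of the $k$-th difference in terms of the $k$-th directional derivative, followed by the $\alpha$-Hölder continuity of $\nabla^k f$ from Definition \ref{def::Cp}, which yields $|\triangle_{-s}^{k+1}(f,x)| \lesssim c_L\|s\|_2^{k+\alpha}$. Integrating against $\varphi_1$, the Gaussian of width $\gamma_j$, reduces matters to the moment $\int\varphi_1(s)\|s\|_2^{k+\alpha}\,ds$; bounding $\|s\|_2^{k+\alpha}\le d^{(k+\alpha)/2}\sum_{i}|s_i|^{k+\alpha}$ and applying Cauchy--Schwarz to the resulting one-dimensional Gaussian moment produces precisely the constant $c_{k,\alpha}$ together with the scaling $\gamma_j^{k+\alpha}$, giving $\|(K_j*f) - f\|_{L_\infty(A_j)} \le c_{k,\alpha}\gamma_j^{k+\alpha}$ on each cell.

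Finally, because the $A_j$ partition $\mathcal{X}$, for $\nu$-a.e. $x$ exactly one term contributes, so $|\sum_j \eins_{A_j}(x)(K_j*f)(x) - f(x)|$ equals the single-cell error on the bin containing $x$; taking the supremum and reconciling the per-cell widths $\gamma_j\le\overline\gamma$ against a common reference width (whose differing Gaussian normalizations are controlled by powers of $\overline\gamma/\underline\gamma$) yields the stated bound $c_{k,\alpha}(\overline\gamma/\underline\gamma)^{d/2}\overline\gamma^{k+\alpha}$. I expect two places to require the most care. The first is the finite-difference/Hölder estimate together with the Gaussian-moment computation: the algebra is routine, but the choices (the $\|s\|_2 \le \sqrt d\,\|s\|_\infty$ reduction and the Cauchy--Schwarz step on a single coordinate) must be made exactly so as to recover $\pi^{-1/4}$, $2^{-(k+\alpha)/2-1/2}$, $d^{(k+\alpha)/2+1}$ and $\Gamma^{1/2}(k+\alpha+1/2)$. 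The second, and the genuine obstacle, is the domain bookkeeping: the convolution $(K_j*f)(x)$ samples $f$ at points $x-\ell s$ that may fall outside $\mathcal{X}$, whereas $\triangle_{-s}^{k+1}(f,x)$ is defined to vanish there, so the clean identity above is only valid once $f$ is extended as a $C^{k,\alpha}$ function (or the boundary layer of width $\mathcal{O}(\overline\gamma)$ is treated separately); it is in controlling this reconciliation across cells of different widths that the factor $(\overline\gamma/\underline\gamma)^{d/2}$ most plausibly arises.
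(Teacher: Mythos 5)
Your proposal is correct, and its core mechanism is the same as the paper's: rewrite $K_j * f - f$ as a Gaussian average of the $(k+1)$-th finite difference (the paper performs exactly this dilation/change of variables and uses $\int (2/(\gamma_j^2\pi))^{d/2} e^{-2\|h\|_2^2/\gamma_j^2}\,dh = 1$ to absorb the $\ell=0$ term), bound that difference via $C^{k,\alpha}$-smoothness (the paper's Lemma \ref{lem::modulusone}, $\omega_{k+1,L_\infty}(f,t) \le c_L d\, t^{k+\alpha}$, proved through a B-spline representation and a coordinate-wise decomposition rather than your directional-derivative route, but to the same effect), and evaluate the Gaussian moment via Cauchy--Schwarz plus a norm-embedding inequality; both readings also treat the exponent $s$ in the statement as the typo it is for $k+\alpha$. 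Where you genuinely diverge is the assembly, and your route is cleaner and strictly stronger. The paper does \emph{not} argue per cell: to pull $\sum_j \eins_{A_j}$ outside a single $j$-independent integral, it bounds the Gaussian prefactor uniformly by $(2/(\underline{\gamma}^2\pi))^{d/2}$ while bounding the exponential decay uniformly by $e^{-2\|h\|_2^2/\overline{\gamma}^2}$; the mismatch between these two replacements is the \emph{sole} source of the factor $(\overline{\gamma}/\underline{\gamma})^{d/2}$. Your per-cell computation, with normalization and decay matched at width $\gamma_j$, gives exactly $\|\eins_{A_j}\cdot(K_j*f) - \eins_{A_j} f\|_{\infty} \le c_{k,\alpha}\gamma_j^{k+\alpha}$ (the constants do recombine to $c_{k,\alpha}$ with your order of operations), so the supremum over cells yields $c_{k,\alpha}\overline{\gamma}^{k+\alpha}$ and the stated bound follows a fortiori since $(\overline{\gamma}/\underline{\gamma})^{d/2}\ge 1$. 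Consequently your closing speculation — that the ratio factor arises from boundary bookkeeping or from reconciling cells of different widths — is mistaken: no such reconciliation is needed in your argument or the paper's, and the factor is an artifact of the paper's uniform-in-$j$ bounding, not a necessity. The error is harmless, since you only need an upper bound. Finally, the extension issue you flag is real but afflicts the paper equally: the binomial-sum identity requires $f$ at the points $x+\ell h$, and both the identity and the very definition of $K_j * f$ only make sense with $f$ regarded as a $C^{k,\alpha}$ function on all of $\mathbb{R}^d$, which is how the proposition is used downstream (the Bayes function is treated as an element of $C^{k,\alpha}\cap L_2(\mathbb{R}^d)$); it contributes nothing to the constants and, in particular, nothing to the ratio factor.
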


\subsubsection{Bounding the Sample Error Term}\label{sec::SError2}

In this section, in order to bound the sample error, we derive some results related to the capacity of the function spaces.
First of all, Lemma \ref{lem::CoveringNumber} shows that the covering number of the direct sum of subspaces can be upper bounded by the product of the covering number of these subspaces. Then Lemma \ref{lem::jointspace} establishes the upper bound of the covering number of the composition of two function subspaces of interest, that is, $B_{\mathcal{H}}$ and $\eins_{\pi_h}$. Finally, 
in Proposition \ref{lem::jointspacecoveringnumber},
we give the upper bound on the capacity of the composite function space by means of expected entropy numbers.

\begin{lemma}\label{lem::CoveringNumber}
Let $\mathrm{P}_X$ be a distribution on $\mathcal{X}$ and $A, B \subset \mathcal{X}$ with $A \cap B = \emptyset$. Moreover, let $\mathcal{H}_A$ and $\mathcal{H}_B$ be RKHSs on $A$ and $B$ that are embedded into $L_2(\mathrm{P}_{X|A})$ and $L_2(\mathrm{P}_{X|B})$, respectively. Let the extended RKHSs $\widehat{\mathcal{H}}_A$ and $\widehat{\mathcal{H}}_B$ be defined as in \eqref{entendedRKHS} and denote their direct sum by $\mathcal{H}$ as in \eqref{DirectSum}, where the norm is given by \eqref{DirectSumNorm} with $\lambda_A, \lambda_B > 0$. Then, for the $\varepsilon$-covering number of $\mathcal{H}$ w.r.t.~$\| \cdot \|_{L_2(\mathrm{P}_X)}$, there holds
\begin{align*}
\mathcal{N}(B_{\mathcal{H}}, \|\cdot\|_{L_2(\mathrm{P}_X)}, \varepsilon) 
\leq \mathcal{N}\bigl( \lambda_A^{-1/2} B_{\widehat{\mathcal{H}}_A}, \|\cdot\|_{L_2(\mathrm{P}_{X|A})}, \varepsilon_A \bigr)
\cdot \mathcal{N} \bigl( \lambda_B^{-1/2} B_{\widehat{\mathcal{H}}_B}, \|\cdot\|_{L_2(\mathrm{P}_{X|B})}, \varepsilon_B \bigr),
\end{align*}
where $\varepsilon_A, \varepsilon_B > 0$ and $\varepsilon := (\varepsilon_A^2 + \varepsilon_B^2)^{1/2}$.
\end{lemma}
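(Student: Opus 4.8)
The plan is to exploit the two structural facts that make the direct-sum norm split cleanly: the \emph{unique decomposition} $f = \widehat{f}_A + \widehat{f}_B$ of every $f \in \mathcal{H}$, and the \emph{disjointness} $A \cap B = \emptyset$, which forces $\widehat{f}_A$ and $\widehat{f}_B$ to have disjoint supports. First I would translate the constraint $f \in B_{\mathcal{H}}$ into constraints on the two components. By the definition \eqref{DirectSumNorm} of the norm, $\|f\|_{\mathcal{H}}^2 = \lambda_A \|\widehat{f}_A\|_{\widehat{\mathcal{H}}_A}^2 + \lambda_B \|\widehat{f}_B\|_{\widehat{\mathcal{H}}_B}^2 \leq 1$ implies that each summand is at most $1$, so that $\widehat{f}_A \in \lambda_A^{-1/2} B_{\widehat{\mathcal{H}}_A}$ and $\widehat{f}_B \in \lambda_B^{-1/2} B_{\widehat{\mathcal{H}}_B}$. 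This is exactly what reduces covering $B_{\mathcal{H}}$ to covering the two scaled balls separately.

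Next I would build a product net. Let $\{g_i\}_{i=1}^{N_A}$ be an $\varepsilon_A$-net of $\lambda_A^{-1/2} B_{\widehat{\mathcal{H}}_A}$ w.r.t.\ $\|\cdot\|_{L_2(\mathrm{P}_{X|A})}$ and let $\{k_j\}_{j=1}^{N_B}$ be an $\varepsilon_B$-net of $\lambda_B^{-1/2} B_{\widehat{\mathcal{H}}_B}$ w.r.t.\ $\|\cdot\|_{L_2(\mathrm{P}_{X|B})}$, where $N_A$ and $N_B$ are the respective covering numbers. Because members of $\widehat{\mathcal{H}}_A$ vanish off $A$ and members of $\widehat{\mathcal{H}}_B$ vanish off $B$ by the zero-extension convention \eqref{entendedRKHS}, I may take each $g_i$ supported on $A$ and each $k_j$ supported on $B$, their values off the support being irrelevant to the conditional $L_2$ norms. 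I then claim that the $N_A N_B$ functions $g_i + k_j$ form an $\varepsilon$-net of $B_{\mathcal{H}}$ in $L_2(\mathrm{P}_X)$.

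To verify the claim, fix $f \in B_{\mathcal{H}}$ and choose indices $i,j$ with $\|\widehat{f}_A - g_i\|_{L_2(\mathrm{P}_{X|A})} \leq \varepsilon_A$ and $\|\widehat{f}_B - k_j\|_{L_2(\mathrm{P}_{X|B})} \leq \varepsilon_B$. Since $\widehat{f}_A - g_i$ is supported on $A$, $\widehat{f}_B - k_j$ on $B$, and $f = \widehat{f}_A + \widehat{f}_B$, the integrand of $\|f - (g_i+k_j)\|_{L_2(\mathrm{P}_X)}^2$ carries no cross term and the integral over $\mathcal{X}$ decomposes into the integral over $A$ plus the integral over $B$. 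The only remaining step is the normalization between $\mathrm{P}_X$ and the conditionals: $\int_A (\widehat{f}_A - g_i)^2 \, d\mathrm{P}_X = \mathrm{P}_X(A)\, \|\widehat{f}_A - g_i\|_{L_2(\mathrm{P}_{X|A})}^2 \leq \varepsilon_A^2$ using $\mathrm{P}_X(A) \leq 1$, and likewise over $B$. Adding the two pieces yields $\|f - (g_i+k_j)\|_{L_2(\mathrm{P}_X)}^2 \leq \varepsilon_A^2 + \varepsilon_B^2 = \varepsilon^2$, which establishes the net property and hence the asserted product bound on $\mathcal{N}(B_{\mathcal{H}}, \|\cdot\|_{L_2(\mathrm{P}_X)}, \varepsilon)$.

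I expect the only delicate points to be bookkeeping rather than conceptual: making precise that the zero-extension convention lets me treat the conditional-space net centers as globally defined functions with disjoint supports, and keeping the measure-normalization factors $\mathrm{P}_X(A), \mathrm{P}_X(B) \leq 1$ on the favorable side of the inequalities. The Pythagorean splitting of the $L_2(\mathrm{P}_X)$ norm across the disjoint regions $A$ and $B$ is the crux that removes the cross term and turns the covering of the sum space into a product of coverings.
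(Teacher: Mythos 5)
Your proof is correct and follows essentially the same route as the paper's: both deduce from the direct-sum norm that each component of $f \in B_{\mathcal{H}}$ lies in the corresponding scaled ball, form the product net $\{g_i + k_j\}$, and use the disjoint supports to split $\|f - (g_i+k_j)\|_{L_2(\mathrm{P}_X)}^2$ into the two pieces bounded by $\varepsilon_A^2$ and $\varepsilon_B^2$. Your explicit handling of the normalization factor $\mathrm{P}_X(A) \leq 1$ is in fact slightly more careful than the paper, which writes that step as an equality between the $L_2(\mathrm{P}_X)$ and conditional norms.
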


Recall from \eqref{Bh} that $\pi_h$ is defined as the collection of all cells in $\pi_H$. Therefore, for any $H \sim \mathrm{P}_H$, we have $A_j \in \pi_h$ for all $j \in \mathcal{I}_H$. In what follows, we aim at bounding the complexity of $B_{\mathcal{H}}\circ \eins_{\pi_h}$, that is, the composite space of the partition space $\eins_{\pi_h}$ and RKHS $B_{\mathcal{H}}$.

\begin{lemma}\label{lem::jointspace}
Let $B_{\mathcal{H}}$ be the unit ball of the RKHS $\mathcal{H}$ over $\mathcal{X}$ with the Gaussian kernel. Concerning with the joint space of $B_{\mathcal{H}} \circ \eins_{\pi_h}$, where $B_{\mathcal{H}} \circ \eins_{\pi_h} = \{f\circ g : f \in B_{\mathcal{H}}, g \in \eins_{\pi_h} \}$, there holds
\begin{align*}
\mathcal{N}(B_{\mathcal{H}}\circ \eins_{\pi_h}, \|\cdot\|_{L_2(\mathrm{P}_X)}, 2\varepsilon) \leq \mathcal{N}(\eins_{\pi_h}, \|\cdot\|_{L_2(\mathrm{P}_X)}, \varepsilon) \cdot \mathcal{N}(B_{\mathcal{H}}, \|\cdot\|_{L_2(\mathrm{P}_X)}, \varepsilon).
\end{align*}
\end{lemma}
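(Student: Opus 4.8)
The plan is to run a standard \emph{product-net} argument: cover each of the two factor spaces $B_{\mathcal{H}}$ and $\eins_{\pi_h}$ separately at scale $\varepsilon$, and then show that the collection of all pairwise products of net elements covers the composite space $B_{\mathcal{H}} \circ \eins_{\pi_h}$ at scale $2\varepsilon$. Concretely, I would choose a minimal $\varepsilon$-net $\{f_1,\ldots,f_N\} \subset B_{\mathcal{H}}$ of $B_{\mathcal{H}}$ with respect to $\|\cdot\|_{L_2(\mathrm{P}_X)}$, so that $N = \mathcal{N}(B_{\mathcal{H}}, \|\cdot\|_{L_2(\mathrm{P}_X)}, \varepsilon)$, and a minimal $\varepsilon$-net $\{g_1,\ldots,g_K\} \subset \eins_{\pi_h}$ of $\eins_{\pi_h}$, so that $K = \mathcal{N}(\eins_{\pi_h}, \|\cdot\|_{L_2(\mathrm{P}_X)}, \varepsilon)$, both in the sense of Definition \ref{def::Covering Numbers} (so the net elements lie in the respective sets). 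I then claim that $\{ f_i \circ g_k : 1 \leq i \leq N,\ 1 \leq k \leq K\}$ is a $2\varepsilon$-net of $B_{\mathcal{H}} \circ \eins_{\pi_h}$, and its cardinality $NK$ is precisely the right-hand side of the asserted inequality.

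The core of the argument is the triangle-inequality split. Given an arbitrary element $f \circ g \in B_{\mathcal{H}} \circ \eins_{\pi_h}$ with $f \in B_{\mathcal{H}}$ and $g = \eins_{A_j} \in \eins_{\pi_h}$, pick net members $f_i$ and $g_k$ with $\|f - f_i\|_{L_2(\mathrm{P}_X)} \leq \varepsilon$ and $\|g - g_k\|_{L_2(\mathrm{P}_X)} \leq \varepsilon$, and write
\begin{align*}
\|f \circ g - f_i \circ g_k\|_{L_2(\mathrm{P}_X)}
\leq \|(f - f_i)\, g\|_{L_2(\mathrm{P}_X)} + \|f_i\, (g - g_k)\|_{L_2(\mathrm{P}_X)}.
\end{align*}
The first summand is at most $\|g\|_{\infty}\,\|f - f_i\|_{L_2(\mathrm{P}_X)} \leq \varepsilon$, since $g = \eins_{A_j}$ obeys $\|g\|_{\infty} \leq 1$; the second summand is at most $\|f_i\|_{\infty}\,\|g - g_k\|_{L_2(\mathrm{P}_X)} \leq \varepsilon$, provided $\|f_i\|_{\infty} \leq 1$. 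Summing the two bounds yields $2\varepsilon$, which establishes the covering claim and hence the stated product bound on the covering numbers.

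The one genuinely substantive ingredient, and the step I would treat most carefully, is the uniform sup-norm estimate $\|f_i\|_{\infty} \leq 1$ for elements of $B_{\mathcal{H}}$; everything else is routine. This is exactly where the Gaussian structure of $\mathcal{H}$ is used: because $k_{\gamma_j}(x,x) = 1$ for all $x$, the reproducing property gives $|f_j(x)| \leq \|f_j\|_{\mathcal{H}_{\gamma_j}}\sqrt{k_{\gamma_j}(x,x)} = \|f_j\|_{\mathcal{H}_{\gamma_j}}$ for each local component. Since $\mathcal{H} = \bigoplus_{j \in \mathcal{I}_H} \widehat{\mathcal{H}}_{\gamma_j}$ is a direct sum of disjointly supported extended RKHSs, for $x \in A_j$ we have $f(x) = f_j(x)$, and the direct-sum norm controls $\|f_j\|_{\mathcal{H}_{\gamma_j}} \leq \|f\|_{\mathcal{H}} \leq 1$ on $B_{\mathcal{H}}$; hence $\|f\|_{\infty} \leq 1$. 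With this uniform bound in place, multiplication by the indicator $g$ (or by $g_k$) cannot inflate the $L_2$ error, so the product-net estimate closes. I would also record the minor bookkeeping point that the sup-norm bound is applied to genuine members of $B_{\mathcal{H}}$ and $\eins_{\pi_h}$, which is why the $\varepsilon$-nets are taken as proper nets whose elements lie in the respective sets.
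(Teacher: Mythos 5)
Your proposal is correct and takes essentially the same route as the paper's proof: $\varepsilon$-nets for the two factors, a triangle-inequality split of $\|f\circ g - f_i\circ g_k\|_{L_2(\mathrm{P}_X)}$, and the two sup-norm bounds $\|\eins_{A_j}\|_{\infty}\leq 1$ and $\|f\|_{\infty}\leq\|f\|_{\mathcal{H}}\sup_x\sqrt{k_{\gamma}(x,x)}\leq 1$ for the Gaussian unit ball. The only quibble is your detour through the direct sum, where the claim $\|f_j\|_{\mathcal{H}_{\gamma_j}}\leq\|f\|_{\mathcal{H}}$ holds under the weighted norm \eqref{DirectSumNorm} only when $\lambda_j\geq 1$; applying the reproducing-property bound directly to the Gaussian RKHS over $\mathcal{X}$ (as the lemma's statement and the paper's proof do, via $\|g_j\|_{\infty}\leq\|k_{\gamma}\|_{\infty}$) gives the needed uniform bound without that assumption.
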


With the help of the above lemmas we present
the following proposition 
which gives the upper bound for the expected entropy numbers of
the localized RKHS of Gaussian RBF kernels.

\begin{proposition} \label{lem::jointspacecoveringnumber}
Let $A_j \subset \mathcal{X}, j \in \mathcal{I}_H$ be pairwise disjoint partitions induced by the histogram transform $H$. For $j \in \mathcal{I}_H$, let $\mathcal{H}_j$ be a separable RKHS of a measurable kernel $k_{\gamma_j}$ over $A_j$ such that $\|k_{\gamma_j}\|_{L_2(\mathrm{P}_{X|A_j})}^2 < \infty$. Moreover, define the zero-extended RKHSs $(\widehat{\mathcal{H}}_j)_{j \in \mathcal{I}_H}$ by \eqref{entendedRKHS} and the joined RKHS $\mathcal{H}$ by \eqref{DirectSum} with the norm \eqref{DirectSumNorm}. 
Then, there exist constants $p \in (0,1)$ and $a_j'$ such that
\begin{align*}
\mathbb{E}_{D \sim \mathrm{P}^n}e_i(\lambda_{2,j}^{-1/2} B_{\widehat{\mathcal{H}}_j} \circ \eins_{A_j}, \|\cdot\|_{L_2(\mathrm{P}_X)}) \leq a_j' i^{-\frac{1}{2p}}, 
\qquad 
i \geq 1,
\end{align*}
where $a_j'$ satisfies
\begin{align}
\biggl( \sum_{j \in \mathcal{I}_H} \max \{ a_j', B \} \biggr)^{2p}
& \leq 2^{2p} 3 \ln(4) 4^{2p} c_p^{2p} (\sqrt{d} \cdot \overline{h}_0)^d |\mathcal{I}_H|^{1-p}
\biggl( \sum_{j \in \mathcal{I}_H} \lambda_{2,j}^{-1} \mathrm{P}_X(A_j) \gamma_j^{-\frac{d+2p}{p}} \biggr)^p
\nonumber\\
& \phantom{=}
+ 2^{2p} |\mathcal{I}_H|^{2p} \frac{2^{d+6}}{2pe} +2^{2p} |\mathcal{I}_H|^{2p} \biggl( \frac{B}{2} \biggr)^{2p}.
\label{equ::EntropyNum}
\end{align}
\end{proposition}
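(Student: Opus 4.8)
The plan is to first prove the per-cell decay estimate by reducing $\lambda_{2,j}^{-1/2}B_{\widehat{\mathcal H}_j}\circ\eins_{A_j}$ to a plain Gaussian RKHS ball over the single cell $A_j$, and then to verify the aggregate inequality \eqref{equ::EntropyNum} by elementary power inequalities together with a single application of Hölder's inequality across the cells.

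For the per-cell step I would use three reductions, all available from the construction in Section \ref{sec::Al2}. The zero-extension $f\mapsto\widehat f$ is an isometry from $\mathcal H_{\gamma_j}$ onto $\widehat{\mathcal H}_{\gamma_j}$, so composing with $\eins_{A_j}$ merely restricts attention to functions supported on $A_j$; for such functions $\|\cdot\|_{L_2(\mathrm P_X)}=\mathrm P_X(A_j)^{1/2}\|\cdot\|_{L_2(\mathrm{P}_{X|A_j})}$, so the entropy numbers acquire a factor $\mathrm P_X(A_j)^{1/2}$; and rescaling the ball by $\lambda_{2,j}^{-1/2}$ rescales all entropy numbers by the same factor. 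After these reductions the remaining object is the entropy-number behaviour of the unit ball of a Gaussian RKHS of width $\gamma_j$ over a set of diameter at most $\sqrt d\cdot\overline h_0$, for which I would invoke the standard Gaussian entropy bound (Theorem 7.34 in \cite{StCh08}), valid for every $p\in(0,1)$ with decay of order $i^{-1/(2p)}$ and a constant polynomial in $\gamma_j^{-1}$ and in the cell diameter. Collecting the three factors and the theorem's constant yields an explicit per-cell constant of the form $a_j'\asymp c_p\,(\sqrt d\cdot\overline h_0)^{d/(2p)}\,\lambda_{2,j}^{-1/2}\,\mathrm P_X(A_j)^{1/2}\,\gamma_j^{-(d+2p)/(2p)}$. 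Since the estimate $e_i\le a_j'\,i^{-1/(2p)}$ must also hold at $i=1$, where $e_1$ equals the radius of the set and is controlled by the sup-norm bound $B$, I would replace $a_j'$ by $\max\{a_j',B\}$, which is exactly the quantity appearing in \eqref{equ::EntropyNum}. The passage from the empirical $L_2(\mathrm D)$ form of the Gaussian estimate to $L_2(\mathrm P_X)$ is carried out in expectation, which is why the statement is phrased with $\mathbb E_{D\sim\mathrm P^n}$.

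For the aggregate inequality I would first bound $\sum_j\max\{a_j',B\}\le\sum_j a_j'+|\mathcal I_H|B$ and use $(u+v)^{2p}\le 2^{2p}(u^{2p}+v^{2p})$, so that the $B$-part contributes the term $2^{2p}|\mathcal I_H|^{2p}(B/2)^{2p}$. Choosing $p$ small enough that $2p\le1$, the power inequality $(\sum_j a_j')^{2p}\le\sum_j(a_j')^{2p}$ applies, and raising the explicit per-cell constant to the power $2p$ pulls out the common factor $(\sqrt d\cdot\overline h_0)^d$ and leaves $\sum_j\lambda_{2,j}^{-p}\mathrm P_X(A_j)^p\gamma_j^{-(d+2p)}$. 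Writing each summand as $1\cdot\bigl(\lambda_{2,j}^{-1}\mathrm P_X(A_j)\gamma_j^{-(d+2p)/p}\bigr)^p$ and applying Hölder's inequality with conjugate exponents $1/(1-p)$ and $1/p$ produces the factor $(\sum_j 1)^{1-p}=|\mathcal I_H|^{1-p}$ together with $\bigl(\sum_j\lambda_{2,j}^{-1}\mathrm P_X(A_j)\gamma_j^{-(d+2p)/p}\bigr)^p$, which is precisely the first term of \eqref{equ::EntropyNum}; the remaining numerical factors $3\ln(4)$, $4^{2p}$, $c_p^{2p}$ and the additive middle term $2^{2p}|\mathcal I_H|^{2p}2^{d+6}/(2pe)$ are accounted for by carrying the constants of the Gaussian entropy theorem and of the entropy--covering equivalences \eqref{EntropyCover} and \eqref{CoverEntropy} through these steps, the latter contributing an additive constant in $a_j'$ that is independent of $\gamma_j$, $\lambda_{2,j}$ and $\mathrm P_X(A_j)$.

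The main obstacle is the aggregation rather than the per-cell estimate, and within it the delicate point is the exact matching of exponents: the per-cell constant scales like $\lambda_{2,j}^{-1/2}$ and $\mathrm P_X(A_j)^{1/2}$, so it is only after raising to the power $2p$ and then splitting off the factor $1$ in Hölder's inequality at the exponents $1/(1-p)$ and $1/p$ that the weights $\lambda_{2,j}^{-1}\mathrm P_X(A_j)$ with the exponent $(d+2p)/p$ emerge; getting these powers to line up is what fixes the precise form of the sum in \eqref{equ::EntropyNum} and where essentially all of the bookkeeping lives. A secondary difficulty is keeping every estimate uniform in the histogram transform $H$, since the cells $A_j$, their number $|\mathcal I_H|$ and their masses $\mathrm P_X(A_j)$ all depend on $H$, and the truncation at $B$ is what guarantees that the per-cell decay remains valid at $i=1$ even for low-mass cells whose Gaussian constant would otherwise fall below the radius.
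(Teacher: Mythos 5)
Your proposal is correct, but it takes a genuinely different route from the paper's at the one step where the two could diverge: the treatment of the composition with $\eins_{A_j}$. You observe that, under the paper's (multiplicative) meaning of $\circ$, every function in $\widehat{\mathcal{H}}_j$ already vanishes off $A_j$, so $\lambda_{2,j}^{-1/2}B_{\widehat{\mathcal{H}}_j}\circ\eins_{A_j}$ coincides with $\lambda_{2,j}^{-1/2}B_{\widehat{\mathcal{H}}_j}$ as a set of functions, and the per-cell estimate then follows directly from the Gaussian entropy bound (the paper's Lemma \ref{thm::entropygaussian}, built on Theorem 6.27 of \cite{StCh08}) combined with the zero-extension isometry, the factor $\mathrm{P}_X(A_j)^{1/2}$ from passing to $\|\cdot\|_{L_2(\mathrm{P}_X)}$, and the $\lambda_{2,j}^{-1/2}$ rescaling. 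The paper instead \emph{enlarges} the single indicator $\eins_{A_j}$ to the whole class $\eins_{\pi_h}$ of cells over all histogram transforms, and then applies the product covering bound of Lemma \ref{lem::jointspace} together with the VC-type covering bound of Lemma \ref{ScriptBhCoveringNumber}; it is precisely this VC term (not the entropy--covering conversions \eqref{EntropyCover} and \eqref{CoverEntropy}, as you suggest) that generates the additive constant $2^{d+6}/(2pe)$ inside $a_j'$ and hence the middle term of \eqref{equ::EntropyNum}. In your route no such additive term arises at all, which is harmless: your bound is tighter, and since that term is non-negative it still implies \eqref{equ::EntropyNum}. What each approach buys: yours is shorter, fully proves the proposition as literally stated (for a fixed transform $H$ the expectation over $D$ is vacuous, because neither the class nor $\|\cdot\|_{L_2(\mathrm{P}_X)}$ depends on $D$ --- so your remark about transferring an empirical $L_2(\mathrm{D})$ estimate is unnecessary), and makes the role of each constant transparent; the paper's detour through $\eins_{\pi_h}$ buys a bound that is uniform over the random partition class, which is the form that is actually fed into the oracle inequality of Proposition \ref{prop::oracleCk} and is the reason the stated bound carries the extra terms. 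Your aggregation step --- subadditivity of $x\mapsto x^{2p}$ for $2p\le 1$ to split off the $B$-term and to pass from $(\sum_j a_j')^{2p}$ to $\sum_j (a_j')^{2p}$, followed by H\"{o}lder with exponents $1/(1-p)$ and $1/p$ to extract $|\mathcal{I}_H|^{1-p}$ --- is exactly the paper's, as is the implicit use of $\gamma_j\le 1$ to pass from $\gamma_j^{-d}$ to $\gamma_j^{-(d+2p)}$.
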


\subsubsection{Oracle Inequality for Single KHT} \label{sec::OracleKernelCk}

Now we are able to establish an oracle inequality to bound the excess risk for the single KHT $f_{\mathrm{D}, \gamma,H_n}$ based on the least squares loss and determining rule \eqref{equ::SVM}.

\begin{proposition}\label{prop::oracleCk}
For all $j =1,\ldots,m$, let $L : \mathcal{X}\times \mathcal{Y} \times \mathbb{R} \to [0,\infty)$ be a locally Lipschitz continuous loss that can be clipped at $M > 0$ and satisfies the supremum bound for a $B > 0$. Moreover, let $\mathcal{H} = \oplus_{j=1}^m \widehat{\mathcal{H}}_{\gamma_j}$ be the direct sum of separable RKHSs of related measurable kernels $k_{\gamma_j}$ over $A_j$ and $\mathrm{P}$ be a distribution on $\mathcal{X}\times\mathcal{Y}$ such that the variance bound is satisfied for constants $\vartheta \in [0,1]$, $V \geq B^{2-\vartheta}$, and all $f \in \mathcal{H}$. Assume that for fixed $n\geq 1$ there exist constants $p \in (0,1)$ and $a_j' \geq B$ such that
\begin{align*}
\mathbb{E}_{\mathrm{D}_j \sim \mathbb{P}^{|D_j|}} e_i(\mathrm{id}:\widehat{\mathcal{H}}_{\gamma_j} \to L_2(\mathrm{D}_j)) \leq a_j' i^{-\frac{1}{2p}} \qquad i\geq 1.
\end{align*}
Finally, fix an $f_0 \in \mathcal{H}$ and a constant $B_0\geq B$ such that $\|L\circ f_0\|_{\infty} \leq B_0$. Then, for all fixed $\tau >0$, the SVM derived by \eqref{equ::SVM} satisfies
\begin{align*}
&\lambda_1 (\underline{h}_0^*)^{q}+ \lambda_2\|f_{D,\gamma}\|_{\mathcal{H}}^2+\mathcal{R}_{L,\mathrm{P}}(f_{D,\gamma})-\mathcal{R}_{L,\mathrm{P}}^*
\\
&\leq 9(\lambda_1 \underline{h}_0^{q}+ \lambda_2\|f_0\|_{\mathcal{H}}^2+\mathcal{R}_{L,\mathrm{P}}(f_0)-\mathcal{R}_{L,\mathrm{P}}^*)
\\
&\phantom{=}+K\bigg(\frac{\big(\sum_{j=1}^ma_j'\big)^{2p}}{\lambda_2^p m^{p-1} n} \bigg)^{\frac{1}{2-p-\vartheta-\vartheta p}} +3\bigg(\frac{72V\tau}{n}\bigg)^{\frac{1}{2-\vartheta}}+\frac{15 B_0 \tau}{n}
\end{align*}
with probability $\mathrm{P}^n$ not less than $1-3e^{-\tau}$, where $K\geq 1$ is a constant only depending on $p$, $M$, $B$, $\vartheta$ and $V$.
\end{proposition}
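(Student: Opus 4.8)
The plan is to recognize that $f_{D,\gamma}$ is the clipped solution of a regularized empirical risk minimization problem over the direct-sum RKHS $\mathcal{H} = \oplus_{j=1}^m \widehat{\mathcal{H}}_{\gamma_j}$ with regularizer $\Upsilon(f) := \lambda_1 \underline{h}_0^{q} + \lambda_2 \|f\|_{\mathcal{H}}^2$, and then to invoke the generic oracle inequality for such estimators, namely Theorem 7.23 in \cite{StCh08}. To apply that result I would verify its four hypotheses in turn: (i) clippability of $L$ at $M$ together with the supremum bound $B$, both assumed; (ii) the variance bound with parameters $(\vartheta, V)$ and $V \geq B^{2-\vartheta}$, also assumed; (iii) the existence of the comparison function $f_0$ with $\|L\circ f_0\|_\infty \leq B_0$, which is given; and (iv) an entropy-number bound of exponent $p$ for the function class associated with $\mathcal{H}$. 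Only (iv) requires genuine work, since the per-cell bounds $\mathbb{E}_{D_j} e_i(\mathrm{id}: \widehat{\mathcal{H}}_{\gamma_j} \to L_2(D_j)) \leq a_j' i^{-1/(2p)}$ must be lifted to a global bound on $\mathcal{H}$.

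The heart of the argument is therefore the entropy aggregation. Since the cells $(A_j)_{j \in \mathcal{I}_H}$ are pairwise disjoint, every $f \in \mathcal{H}$ decomposes uniquely as $f = \sum_j \widehat{f}_j$ with $\|f\|_{L_2(\mathrm{P}_X)}^2 = \sum_j \|\widehat{f}_j\|_{L_2(\mathrm{P}_X)}^2$ and $\|f\|_{\mathcal{H}}^2 = \sum_j \lambda_{2,j}\|f_j\|_{\widehat{\mathcal{H}}_{\gamma_j}}^2$. I would apply Lemma \ref{lem::CoveringNumber} iteratively to bound the covering number of the scaled ball $\lambda_2^{-1/2} B_{\mathcal{H}}$ by the product of the per-cell covering numbers, then pass back to entropy numbers via the equivalence \eqref{EntropyCover}--\eqref{CoverEntropy}. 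Optimizing the allocation of the total entropy budget $i$ across the $m$ summands — the step that produces the $|\mathcal{I}_H|^{1-p}$ and $\lambda_2^{-p}$ factors — is exactly the content of Proposition \ref{lem::jointspacecoveringnumber}, which yields $\mathbb{E}_D e_i(\lambda_2^{-1/2} B_{\mathcal{H}} \circ \eins_{\pi_h}, \|\cdot\|_{L_2(\mathrm{D})}) \lesssim a' i^{-1/(2p)}$ with $(a')^{2p}$ controlled by $\bigl(\sum_{j} a_j'\bigr)^{2p}$ up to the $m^{1-p}$ reduction. Because $L$ is locally Lipschitz and clippable, composing with $L$ and subtracting $L \circ f_{L,\mathrm{P}}^*$ preserves this entropy exponent, so the loss-difference class inherits the bound with constant of order $a'$; here Lemma \ref{lem::jointspace} is what lets me treat the composite class $B_{\mathcal{H}} \circ \eins_{\pi_h}$ rather than $B_{\mathcal{H}}$ alone.

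With (i)--(iv) in hand I would feed the parameters $(\vartheta, V, B, B_0, p, a')$ into Theorem 7.23 of \cite{StCh08}. The comparison against $(f_0, \underline{h}_0)$ — admissible because $f_{D,\gamma}$ also minimizes over $\underline{h}_0$, so $\Upsilon$ may be compared to any fixed admissible pair — yields the leading regularized approximation term with the factor $9$; the capacity contribution assembles into $K\bigl((\sum_{j} a_j')^{2p}/(\lambda_2^p m^{p-1} n)\bigr)^{1/(2-p-\vartheta-\vartheta p)}$, with the displayed exponent coming directly from the entropy exponent $p$ and the variance exponent $\vartheta$; and the remaining two summands $3(72V\tau/n)^{1/(2-\vartheta)}$ and $15 B_0 \tau/n$ are the standard variance and supremum terms of that theorem, all holding on an event of probability at least $1 - 3e^{-\tau}$.

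The main obstacle will be step (iv), and specifically obtaining the sharp $m^{p-1}$ dependence in the capacity term. Naively bounding the entropy of the direct sum by the sum of the per-cell entropies loses a polynomial factor in $m$; recovering the correct exponent requires the careful budget-splitting in Proposition \ref{lem::jointspacecoveringnumber} together with the orthogonality of the $L_2(\mathrm{P}_X)$ contributions from disjoint cells. A secondary technical point is to ensure the variance bound and the clipping interact correctly with the composition $B_{\mathcal{H}} \circ \eins_{\pi_h}$, so that the inherited entropy bound genuinely governs the loss-difference class $\{L\circ \wideparen{f} - L\circ f_{L,\mathrm{P}}^* : f \in \mathcal{H}\}$ entering Theorem 7.23 rather than merely the RKHS ball itself.
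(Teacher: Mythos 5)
Your overall strategy---verify clippability, the supremum bound, the variance bound, and an entropy bound, then run the Steinwart--Christmann oracle-inequality machinery---is the same as the paper's, but two of your key steps do not work as stated. First, Theorem 7.23 of \cite{StCh08} cannot be invoked as a black box: it is formulated for a single RKHS with regularizer $\lambda\|f\|_{\mathcal{H}}^2$, whereas $f_{\mathrm{D},\gamma}$ minimizes the composite functional $\lambda_1\underline{h}_0^q+\sum_{j}\lambda_{2,j}\|f_j\|_{\widehat{\mathcal{H}}_{\gamma_j}}^2+\mathcal{R}_{L,\mathrm{D}}(f)$ jointly over $\underline{h}_0$ and over a weighted direct sum. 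The paper therefore reruns the underlying machinery by hand: it defines the localized classes $\mathcal{F}_r:=\bigl\{f\in\mathcal{H}:\lambda_1\underline{h}_0^q+\lambda_2\|f\|_{\mathcal{H}}^2+\mathcal{R}_{L,\mathrm{P}}(f)-\mathcal{R}_{L,\mathrm{P}}^*\leq r\bigr\}$, splits the radius $r=\sum_j r_j$ so that $\mathcal{F}_r=\bigoplus_j\widehat{\mathcal{F}}_{j,r}$ with $\widehat{\mathcal{F}}_{j,r}\subset(r_j/\lambda_2)^{1/2}B_{\widehat{\mathcal{H}}_{\gamma_j}}$, aggregates the assumed per-cell bounds through the elementary entropy-number inequality $\mathbb{E}\,e_i(\mathcal{H}_r,L_2(\mathrm{D}))\leq|L|_{M,1}\sum_j\mathbb{E}\,e_{i/m}(\widehat{\mathcal{F}}_{j,r},L_2(\mathrm{D}_j))$, feeds the result into Theorem 7.16 of \cite{StCh08} to bound the expected Rademacher average by a function $\varphi_n(r)$, checks the peeling condition $\varphi_n(4r)\leq 2\varphi_n(r)$, and only then applies the general CR-ERM oracle inequality, Theorem 7.20 of \cite{StCh08}, not Theorem 7.23. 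This localization-then-aggregation is exactly the ``budget splitting'' you point to as the heart of the argument, yet nothing in your outline actually performs it.

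Second, your reliance on Proposition \ref{lem::jointspacecoveringnumber} (and Lemmas \ref{lem::CoveringNumber}, \ref{lem::jointspace}) inside this proof is misplaced and borders on circular. Within Proposition \ref{prop::oracleCk}, the per-cell bounds $\mathbb{E}_{\mathrm{D}_j}e_i(\mathrm{id}:\widehat{\mathcal{H}}_{\gamma_j}\to L_2(\mathrm{D}_j))\leq a_j' i^{-1/(2p)}$ are \emph{hypotheses}; Proposition \ref{lem::jointspacecoveringnumber} is used elsewhere, in the proof of Theorem \ref{thm::convergenceratetree}, to \emph{verify} those hypotheses for Gaussian kernels. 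It does not allocate an entropy budget across cells, and it controls population $L_2(\mathrm{P}_X)$ entropy rather than the expected empirical entropy of the localized loss-difference class that the Rademacher bound requires. You also omit the concluding case analysis the paper needs: when $m\geq(\sum_j a_j')^{-2p}\lambda_2^p n$, the derived lower bound on $r$ is not of the claimed form, and the paper closes this case by showing the left-hand side is then trivially dominated by $\lambda_1\overline{h}_0^q+2B\bigl((\sum_j a_j')^{2p}m/(\lambda_2^p n)\bigr)^{1/(2-p-\vartheta-\vartheta p)}$. Incidentally, your worry about the $m^{p-1}$ factor is well taken---the naive summation that the paper itself performs produces $m$ rather than $m^{1-p}$ in the capacity term---but Proposition \ref{lem::jointspacecoveringnumber} is not the tool that recovers the sharper exponent.
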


\section{Numerical Experiments} \label{sec::numerical_experiments}

In this section, we present the computational experiments that we have carried out. In Section \ref{sec::subsec::experisetup}, we firstly give a brief account for the generation process of our histogram transforms, following by the other two regression methods and two effective measures of estimation accuracy named Mean Squared Error (\emph{MSE})
and efficiency named Average Running Time (\emph{ART}). We proceed by studying the behavior of our histogram transform ensembles depending on the values of tunable parameters in Section \ref{sec::subsec::parameter}. Then in Section \ref{sec::subsec::syntheticdata} we perform a simulation for synthetic data generated from a regression model to validate the exact difference of convergence rate between ensembles and single estimators. Finally, we compare our approach with other regression estimation methods for real data in terms of \emph{MSE} in Section \ref{sec::subsec::realdata}.

\subsection{Experimental Setup} \label{sec::subsec::experisetup}

\subsubsection{Generation Process for Histogram Transforms}

Firstly, note that the random rotation matrix $R$ is generated in the manner coinciding with Section \ref{subsec::HT}. For the elements of the scaling matrix $S$, applying the well known Jeffreys prior for scale parameters referred to \cite{Jeffreys1946An}, we draw $\log (s_i)$ from the uniform distribution over certain real-valued interval $[\log(\underline{s}_0),\log(\overline{s}_0)]$ with
\begin{align*}
\log (\underline{s}_0) & := s_{\min} +\log (\widehat{s}),
\\
\log (\overline{s}_0) & := s_{\max} +\log (\widehat{s}),
\end{align*}
where $s_{\min}, s_{\max} \in \mathbb{R}$ are tunable parameters with $s_{\min} < s_{\max}$ and the scale parameter $\widehat{s}$ is the inverse of the bin width $\widehat{h}$ measured on the input space, which is defined by
\begin{align*}
\widehat{s} := (\widehat{h})^{-1} = (3.5 \sigma)^{-1} n^{\frac{1}{2+d}}.
\end{align*}
Here, the standard deviation $\sigma := \sqrt{\mathrm{trace}(V)/d}$ with $V := \frac{1}{n-1} \sum_{i=1}^n (x_i - \bar{x})(x_i-\bar{x})^{\top}$ and $\bar{x} := \frac{1}{n} \sum_{i=1}^n x_i$ combines the information from all the dimensions of the input space.

\subsubsection{Performance Evaluation Criterion}

When it comes to the empirical performances for various different regression estimators $\widehat{f}$, two of our biggest concerns are accuracy and efficiency, where appropriate measurements are in demand.

On the one hand, we adopt the ubiquitous Mean Squared Error (\emph{MSE}) conducted over $m$ test samples $\{x_j\}_{j=1}^m$:
\begin{align}\label{equ::MSE}
\text{\emph{MSE}}(\widehat{f}) = \frac{1}{m} \sum_{j=1}^m (y_j - \widehat{f}(x_j))^2
\end{align}
Obviously, the lower \emph{MSE} implies the better performance of a regression function $\widehat{f}$.

On the other hand, we take the Average Running Time (\emph{ART}) of $m$ repeated experiments as the measure of efficiency, that is,
\begin{align}\label{equ::ATT}
\text{\emph{ART}}(\widehat{f}) = \frac{1}{m} \sum_{j=1}^m t_j(\widehat{f}),
\end{align}
where $t_j(\widehat{f})$ denotes the training time of the $j$-th experiment.

Either \emph{MSE}, the measure of accuracy, or \emph{ART}, the representative of efficiency, is not sufficient to be a comprehensive evaluation criterion of an algorithm. For relatively small-scale data sets or synthetic data, the training speed of an algorithm is often fast enough. Therefore, we mainly focus on the precision of the following simulations in Section \ref{sec::subsec::parameter} and \ref{sec::subsec::syntheticdata}. However, for moderate sized or large-scale real data sets, the discrepancy of training time among algorithms is no longer negligible. That is, not only should a good algorithm have desirable predicting accuracy, but it is also expected to be comparable in training time with other state-of-the-art regression methods. Therefore, 
in Section \ref{sec::subsec::realdata},
we consider the trade-off between \emph{MSE} and \emph{ART} in the real data analysis.

\subsection{Study of the Parameters}\label{sec::subsec::parameter}

In this subsection, taking NHTE as an instance, we perform an experiment dealing with the parameters of our HTE algorithm, namely the number of histogram transform estimators $T$ and the lower and upper scale parameters $s_{\min}, s_{\max} \in \mathbb{R}$. In what follows, we consider a synthetic data set following the regression model 
\begin{align}\label{model::syn1}
Y  = \sin(16 X) + \varepsilon,
\end{align} 
where $X \in \mathrm{Unif}[0,1]$ and
$\varepsilon \sim N(0, 0.1^2)$.

We firstly explore the influence of parameter $T$ on the experimental results of our algorithm. For each experiment, the empirical performance will be compared by average \emph{MSE} introduced in \eqref{equ::MSE}.
We have carried out experiments with $n = 2000, 3000, 4000, 5000$, and the number of test samples in each case is $m = 2000$. For every $n$ and $T$ we have made 300 runs of experiments, with fixed $(s_{\min}, s_{\max}) = (0,1)$. The results are shown in Figure \ref{fig::MSET}.

\begin{figure}[H]
\begin{minipage}[t]{0.62\textwidth}  
\centering  
\includegraphics[width=\textwidth]{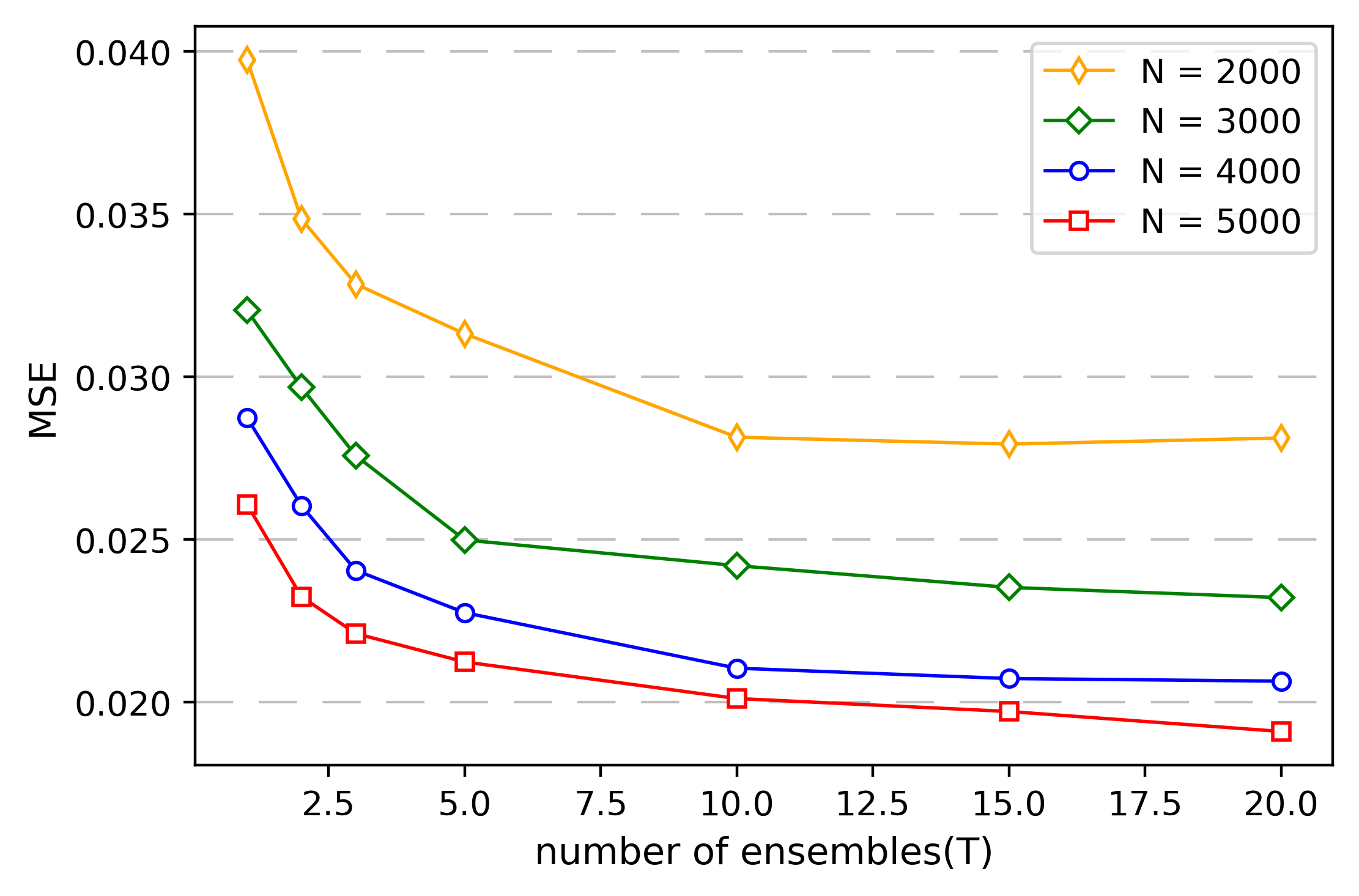}  
\end{minipage}  
\centering  
\caption{Average MSE for different values of $T$ applied for the synthetic dataset.}
\label{fig::MSET}
\end{figure}

As we can see, the performance of our histogram transform estimator enhances as $n$ grows which can be seen from the downward average \emph{MSE} of each line. On the other hand, the results improve dramatically when we go from $T = 1$ to $T = 20$, but then a steady state is reached, no matter how many larger ensembles we consider. This behavior is extremely convenient, since it means that increasing the number of components in an ensemble by raising $T$ does not have any significant effect beyond certain limit. Consequently, we have decided to use $T = 10$ in the subsequent experiment.

We now examine the dependency of our method with respect to the choice of the lower and upper scale parameters $s_{\min}$, $s_{\max}$. We recall that the scale parameters $s_{\min}$, $s_{\max}$ in the distribution of stretching matrix $S$ actually control the size of histograms. If the local structure of the input data set is very detailed, we need high values of both of them to have smaller histogram bins, and vice versa. On the other hand, if the local structure is finer in some regions of the data set and coarser in other regions, we need that both parameters have very different values to cope with the varying scales, while an homogeneous structure can be accommodated with a narrower range of histogram bin sizes. In order to illustrate this, we have obtained our ensemble NHT with $n = 500$ training data and then conducted the experiment with 1000 test observations, for the following values the scale parameters: $s_{\min} = 0$, $s_{\max} = 2$; $s_{\min} = 1$, $s_{\max} = 3$; and $s_{\min} = 2$, $s_{\max} = 4$. The results are shown in Figure \ref{fig::sminsmax}.

\begin{figure}[H]
\centering
{
\begin{minipage}{0.45\textwidth}
	\centering
	\includegraphics[width=\textwidth]{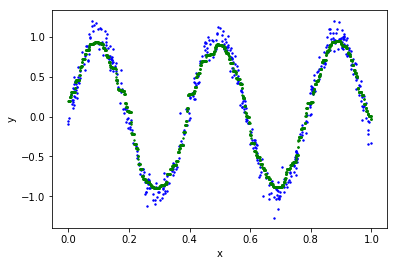}
\end{minipage}
}
{
\begin{minipage}{0.45\textwidth}
	\centering
	\includegraphics[width=\textwidth]{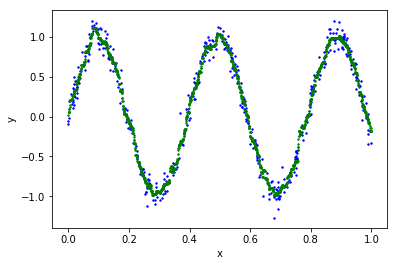}
\end{minipage}
}
{
\begin{minipage}{0.45\textwidth}
	\centering
	\includegraphics[width=\textwidth]{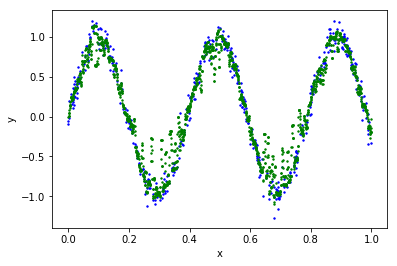}
\end{minipage}
}
\caption{Blue points represent the true sample and green ones are predictive values. 
Upper Left: $s_{\min} = 0$, $s_{\max} = 2$. 
Upper Right: $s_{\min} = 1$, $s_{\max} = 3$. 
Lower: $s_{\min} = 2$, $s_{\max} = 4$.}
\label{fig::sminsmax}
\end{figure}

As seen, lower values of these parameters yield a coarser approximation of the input distribution leading to the loss of precision (see the top left subfigure). Conversely, if the parameters are too high, then there are zones where no training samples exist. On this occasion, chances are high that more predictive points tend to be close to zero (see the lower subfigure). Therefore an optimization procedure is needed to obtain good values for $s_{\min}$ and $s_{\max}$, given an input data set.

To further illustrate the effect of bin width with regard to accuracy, we extend parameter grid of $(s_{\min}, s_{\max})$ to 5 pairs, that is $(-1,1)$, $(0,2)$, $(1,3)$, $(2,4)$, and $(3,5)$. In addition, to ensure the stability of this experimental result, we generate $10$ sets of synthetic data with the generating model \eqref{model::syn1}, and carry out $10$ runs with each set. In other words, we carry out $100$ runs of experiments in total, and 
utilize the average of \emph{MSE} for each experiment to represent the testing error.

\begin{figure}[H]
\begin{minipage}[t]{0.60\textwidth}  
\centering  
\includegraphics[width=\textwidth]{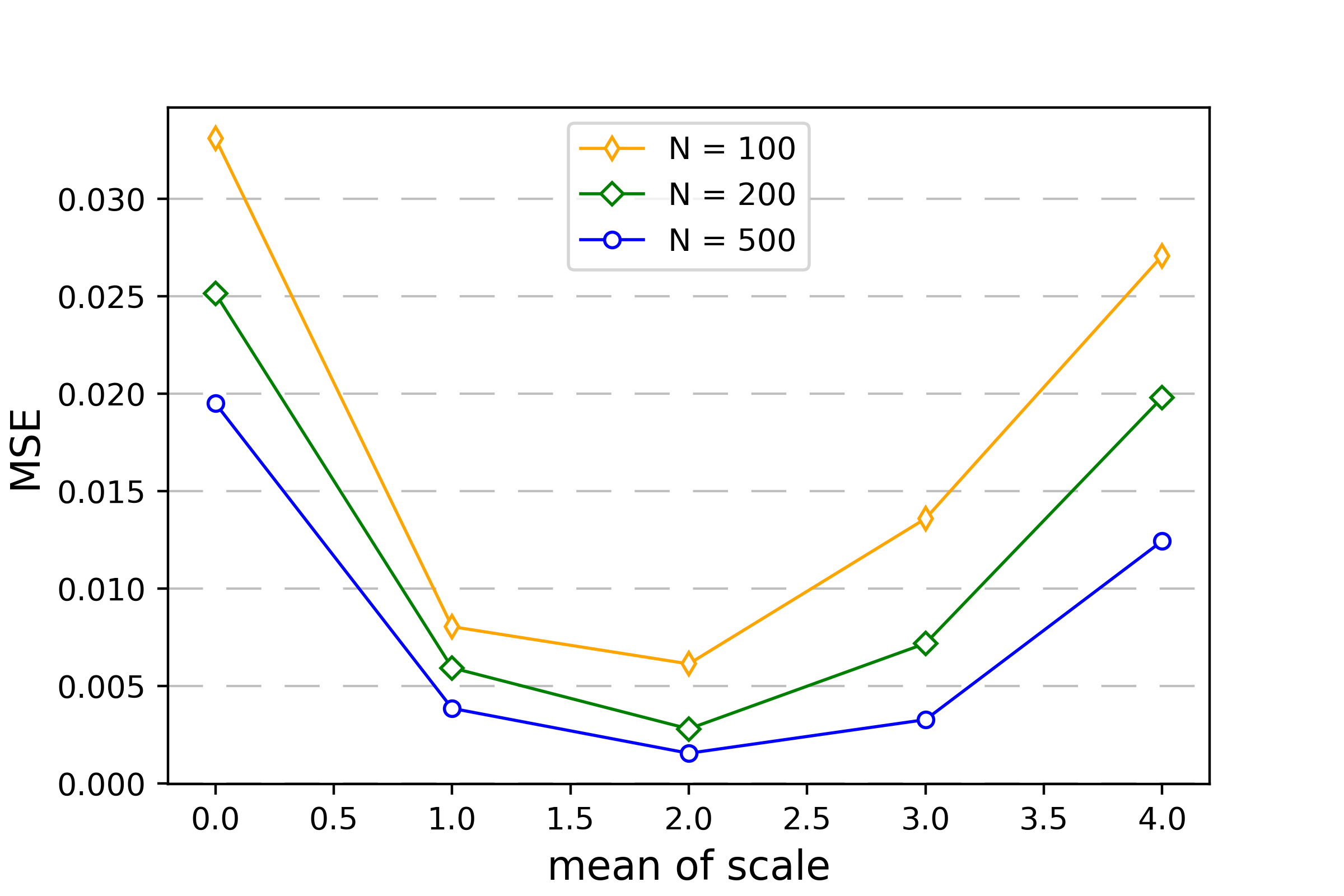}  
\end{minipage}  
\centering  
\caption{Average MSE for different values of $(s_{\min}, s_{\max})$ applied for the synthetic dataset. Note that the x-axis represents for the mean value of $s_{\min}$ and $s_{\max}$.}
\label{fig::MSEs}
\end{figure}

A clear trend can be seen from Figure \ref{fig::MSEs}. When the bin width is relatively large, the average \emph{MSE} for NHTE decreases with $s_{\min}$ and $s_{\max}$ increasing, that is, the empirical performance gets better with bin width decreasing. However, \emph{MSE} then attains the minimum at $(s_{\min}, s_{\max}) = (1,3)$. Subsequently, when the bin width is relatively small, further increasing of $s_{\min}$ and $s_{\max}$ leads to the deterioration of testing error. 
This exactly verifies the theoretical result in Section \ref{sec::upper} that there exists an optimal bin width with regard to the convergence rate.

\subsection{Synthetic Data Analysis} \label{sec::subsec::syntheticdata}

In order to give a more comprehensive understanding of this section, the reader will be reminded of the significance to illustrate the benefits of our histogram transform ensembles over a single estimator. Therefore, we start this simulation by constructing the above mentioned counterexample as the synthetic data. To be specific, we base the simulations on one particular distribution construction approach generating a toy example with dimension $d = 3$. Assume that the regression model for random vector $X = (X_1, X_2, X_3)^{\top} \in \mathbb{R}^3$, 
\begin{align*}
Y = \sum_{i=1}^3 10 X_i \cdot \sin(2 X_i - 3)+ \varepsilon,
\end{align*}
where $X_i \in \mathrm{Unif}[0,1]$, $i = 1, 2, 3$, and
$\varepsilon \sim N(0, 0.1^2)$.

It can be apparently seen that this example is based on all the three dimensions. We perform the synthetic data experiment with $m=1000$ and parameter pair $(s_{\min}, s_{\max})=(0,1)$. For every $T$ and $n$ we repeated the experiment 30 times and the resulting average \emph{MSE} versus $T$ are shown as follows.

\begin{figure}[H]
\begin{minipage}[t]{0.60\textwidth}  
\centering  
\includegraphics[width=\textwidth]{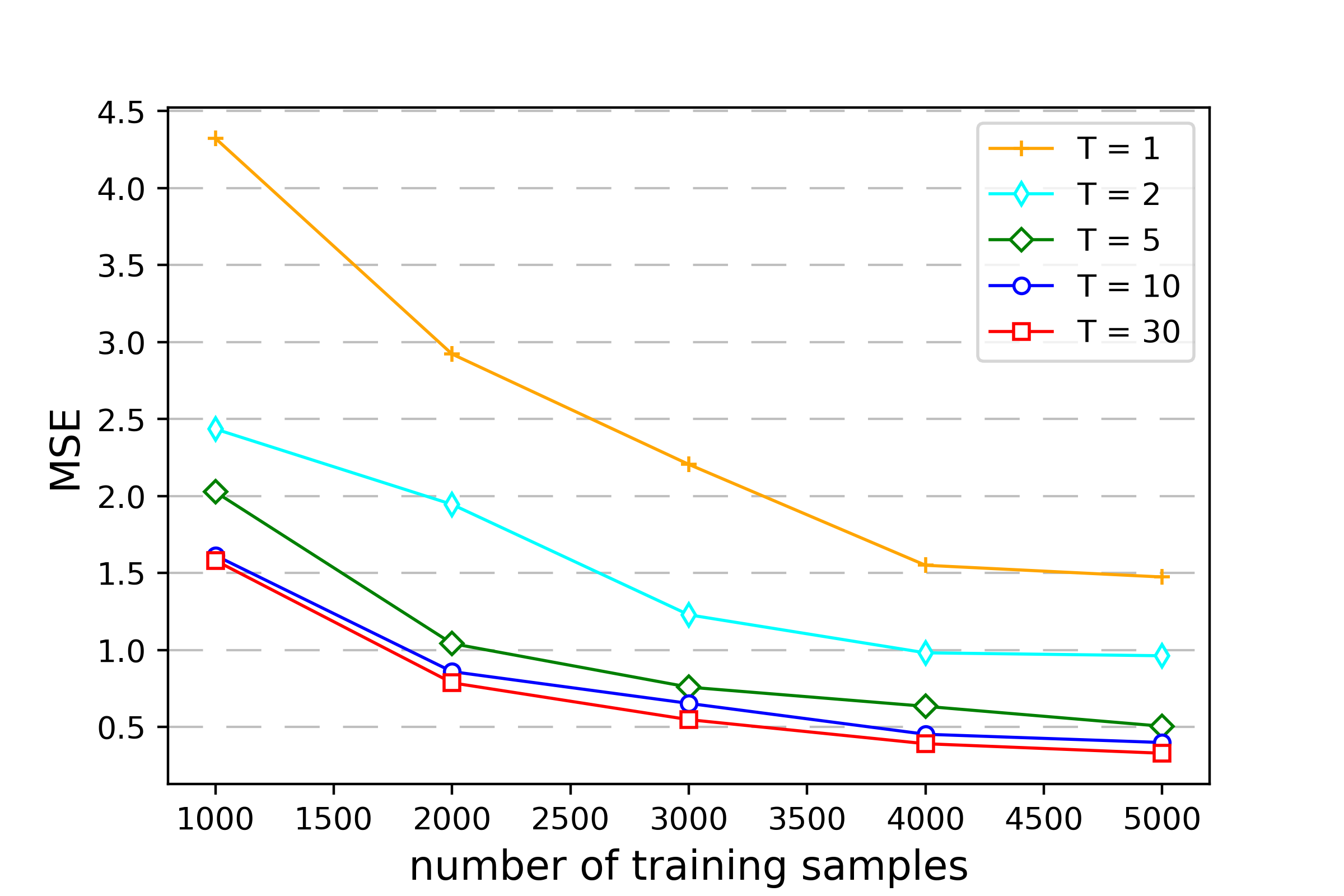}  
\end{minipage}  
\centering  
\caption{Average \textit{MSE} for different values of $T$ applied for the artificial counterexample dataset.}
\label{fig::counterMSET}
\end{figure}

Figure \ref{fig::counterMSET} captures the \emph{MSE} performance of our model for $T = 1, 2, 5, 10, 30$ respectively. The result is twofold: First of all, the lower \emph{MSE} of the steady state for $T > 1$ states that ensembles behave better than single estimator in terms of accuracy. Moreover, the difference of slope before the curves reach flat illustrates the lower bound of the convergence rate of single estimator to some extent.

\subsection{Real Data Analysis} \label{sec::subsec::realdata}

We have designed two sets of experiments with real data and comparisons with other state-of-the-art regression algorithms demonstrate the accuracy and efficiency of our algorithm.

\subsubsection{Adaptive KHTE Algorithm}

Recall that from the view point of algorithm architecture, the essence of our HTE lies in the following facts: firstly,  
the large diversity of random histogram transform and the inherent nature of ensembles help the algorithm overcome the long-standing boundary discontinuity;
on the other hand, taking full advantage of the data-independent partition process, this vertical method successfully achieves high efficiency via parallel computing. Till now, the partition processes considered have only performed in an equal-size histogram manner, however, in order to bring more resistance and taking the local adaptivity into account, all histogram transforms in the following experiments adopt the adaptive random stretching criterion to significantly improve the balancing property of splits and hence to increase the accuracy.

The \textit{adaptive splitting} technique helps formulate a data dependent partition. Instead of selecting the bin indices as the round points, where each cell shares the same size, this adaptive method creates more splits on fractions where samples points are densely resided, while  it splits less on sample-sparse areas. Therefore, every cell in the partition contains roughly the same number of sample points. 
A concrete description of the construction process of \textit{adaptive splitting} is shown in the following Algorithm \ref{alg::adaptiveSplitting}.

\begin{algorithm}[H]
\caption{Adaptive Splitting}
\label{alg::adaptiveSplitting}
\KwIn{
Transformed sample space $D^{\top}$ 
;
\\
\quad\quad\quad\quad Minimal number of samples required to split  $m$; \\
\quad\quad\quad\quad Number of splits $p$ initiated as 1.
\\
}
\Repeat{$\max(\textit{number of samples in all cells}) \leq m$}{
$k_t^p$ is the number of cells before the $p$-th split for the $t$-th partition; 
\\ 
\For{$j =1 \to k_t^p$}{
	\If{$\textit{number of samples in the j-th cell} > m$}{
		Select out the dimension with the largest variance; \\
		Select the split point as the median of samples in this dimension;
	}
}
$p++$. \\
}

\KwOut{Adaptive partition of the transformed sample space $D^{\top}$.}

\end{algorithm}

To avoid a cell to have too less samples or even no sample at all, we impose a stopping criterion when a cell contains less than $m$ samples. Then we focus on every \textit{qualified} cell with enough sample points, and select the to-be-split dimension as the one with the largest variance, and moreover, we choose the split point as the median of samples in the $d$-th dimension. By this means, we're able to make full use of the potential information containing in samples. On one hand, we reckon that the most varied dimension contains the most information. On the other hand, by splitting on the median, we are able to obtain two newly generated cells with even number of samples. Then we repeat this splitting method until all cells meet the stopping criterion.

With the help of \textit{adaptive splittings} and the improved stopping criterion, we are now ready to present our \textit{adaptive} KHTE algorithm.

\begin{algorithm}[H]
\caption{\textit{Adaptive} Kernel Histogram Transform Ensembles (\textit{Adaptive} KHTE)}
\label{alg::adaptiveKHTE}
\KwIn{
Training data $D:=((X_1, Y_1), \ldots, (X_n,Y_n))$;
;
\\
\quad\quad\quad\quad Number of histogram transforms $T$; 
\\
\quad\quad\quad\quad Regularization parameter $\lambda$ and bandwidth parameter of Gaussian kernel $\gamma$.
} 
\For{$t =1 \to T$}{
Generate random affine transform matrix $H_t=R_t$; 
\\
Apply \textit{adaptive splitting} to the transformed sample space; \\
Apply SVM to each cell \&  
compute 
global regression mapping $f_{\mathrm{D},\lambda,\gamma,H_t}(x)$
.
}
\KwOut{The kernel histogram transform ensemble for regression is
\begin{align*}
f_{\mathrm{D},\lambda,\gamma,\mathrm{E}}(x)
= \frac{1}{T} \sum_{t=1}^T f_{\mathrm{D},\lambda,\gamma,H_t}(x).
\end{align*}
}
\end{algorithm}

\subsubsection{Study of Parameters}\label{sec::subsec::realdataParameters}

This subsection delves into the study of parameters $T$ and $m$ in Algorithm \ref{alg::adaptiveSplitting}, that is, the number of partitions in an ensemble and the minimum number of samples required to split an internal node. 
We carry out experiments based on a real data set {\tt PTS}, the \textit{Physicochemical Properties of Protein Tertiary Structure Data Set}, available on UCI. It contains totally $45, 730$ samples of dimension $9$, with $70\%$ samples randomly selected as the training set, and the remaining $30\%$ as the testing set. 
The parameter grids of  $T$ and $m$ are $[1, 2, 5, 10]$ and $[200, 400, 1000, 1500, 2000]$. In addition, all experiments are repeated for $50$ times.

\begin{figure}[H]
\centering
{
\begin{minipage}{0.45\textwidth}
	\centering
	\includegraphics[width=\textwidth]{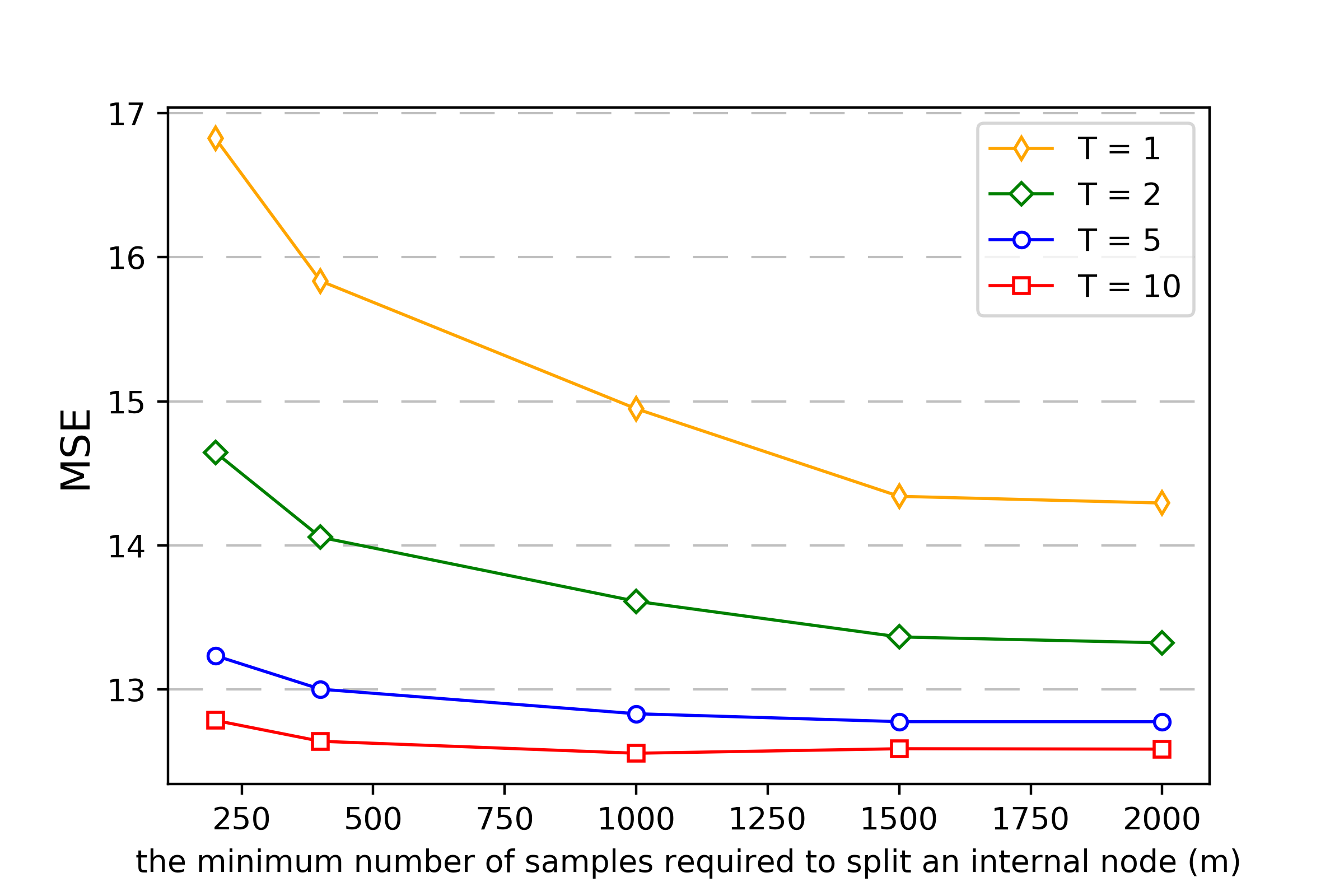}
\end{minipage}
}
{
\begin{minipage}{0.45\textwidth}
	\centering
	\includegraphics[width=\textwidth]{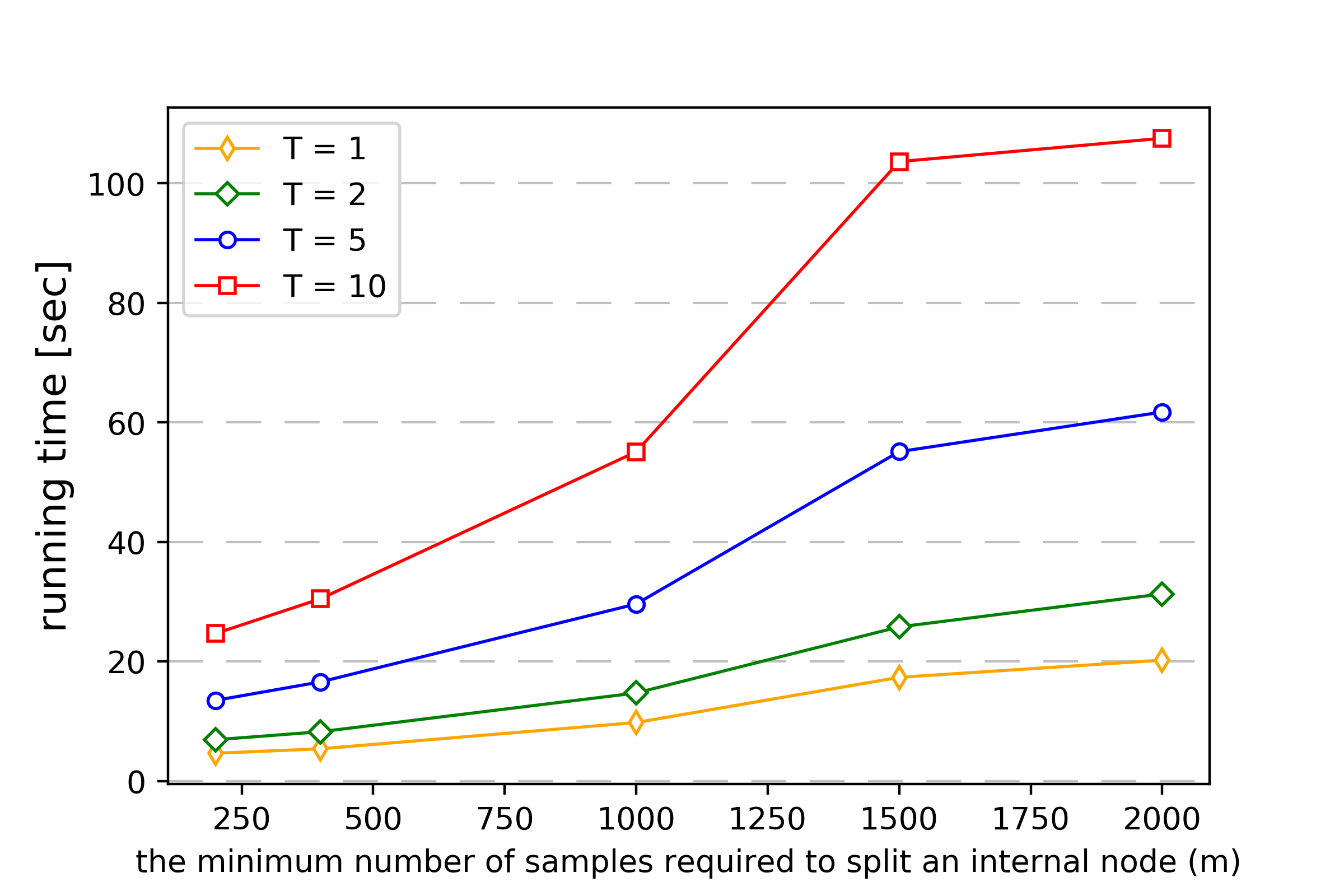}
\end{minipage}
}
\caption{Average \emph{MSE}/\emph{ART} for different values of $T$ and $m$.}
\label{fig::MSEvsm}
\end{figure}

As can be seen from the above figure, on the one hand, for a fixed $m$, when the number of partitions $T$ increases, training error decreases while the corresponding running time increases. On the other hand, when $T$ fixed, we can see \emph{MSE} decreases as $m$, the minimum number of samples required to split, increasing, with sacrifice of training time.

\subsubsection{Introduction to Other Large-scale Regressors}

In our experiments, comparisons are conducted among our adaptive KHTE, Patchwork Kriging (PK), and Voronoi partition SVM (VP-SVM).
\begin{itemize}[leftmargin=*]
\item 
\textbf{PK}: 
Patchwork kriging (PK) proposed by \cite{park2018patchwork} is an approach for Gaussian process (GP) regression for large datasets. 
This method involves partitioning the regression input domain into multiple local regions via spacial tree and apply a different local GP model fitted in each region.  
Different from previous Gaussian process vertical methods put forward in \cite{park11a} and \cite{park16a}, which tried to join up the boundaries of the adjacent local GP models by imposing various equal boundary constraints, PK presents a simple and natural way to enforce continuity by creating additional pseudo-observations around the boundaries. However, there stand some challenges. Firstly, although the employed spatial tree generates data partitioning of uniform sizes when data is unevenly distributed, artificially determined decomposition process brings a great impact on the final predictor. Secondly, this approach loses its competitive edge possessing the desirable global property of GPs as well as suffers from curse of dimensionality. Last but not least, when encountering data with high dimension and large volume, in order to achieve better prediction accuracy, more pseudo-observations need to be added to the boundaries, which leads to a significant growth in computational complexity. 
\item 
\textbf{VP-SVM}: 
Support vector machines for regression being a global algorithm is impeded by super-linear computational requirements in terms of the number of training samples in large-scale applications. To address this, \cite{meister16a} employs a spatially oriented method to generate the chunks in feature space, and fit LS-SVMs for each local region using training data belonging to the region. This is called the Voronoi partition support vector machine (VP-SVM). However, the boundaries are artificially selected and the boundary discontinuities do exist.
\end{itemize}

\subsubsection{Real world Data Set Analysis}

We have designed three sets of experiments on our adaptive KHTE, PK and VP-SVM. All experiments are conducted on the {\tt PTS} data set introduced in Section \ref{sec::subsec::realdataParameters}  and other data sets presented as follows.

\begin{itemize}
\item 
{\tt AEP}: The Appliances energy prediction ({\tt AEP}) data set available on UCI contains $19, 735$ samples of dimension $27$ with attribute “date” removed from the original data set. The data is used to predict the appliances energy use in a low energy building.
\item 
{\tt HPP}: This data set House-Price-8H prototask ({\tt HPP}) is originally from {\tt DELVE} dataset. 
It consists of $22,784$ observations of dimension $8$. Note that for the sake of clarity, all house prices in the original data set has been modified to be counted in thousands.
\item 
{\tt CAD}: This spacial data can be traced back to \cite{Pace1997Sparse}. It consists $20,640$ observations on housing prices with $9$ economic covariates. Similar as the data preprocessing for {\tt HPP}, all house prices in the original data set has been modified to be counted in thousands.
\item 
{\tt MSD}: The \textit{Year Prediction MSD Data Set} ({\tt MSD}) is available on UCI. It contains $463,715$ training samples and $51,630$ testing samples with $90$ attributes, depicting the timbre average and timbre covariance of songs released between 1922 and and 2011. The main task is to learn the audio features of a song and to predict its release year.
\end{itemize}

Samples in data sets {\tt AEP}, {\tt HPP}, {\tt PTS} and {\tt CAD} are scaled to zero mean and unit variance, and experiments carried on such data sets are repeated for 50 times. In addition, we randomly split each data set into training, with $70\%$ of the observations, and testing, containing the remaining $30\%$. Whereas for the {\tt MSD} data set, we respect the following train/test split that the first $463,715$ examples are treated as training set and the last $51,630$ are treated as testing set. In addition, because VP-SVM cannot run {\tt MSD} data set with the above standardization for some reason, data are rescaled such that all feature values are in the range $[0, 1]$. Moreover, experiments for {\tt MSD} data set are repeated for 10 times to obtain a relatively stable result, without consuming too much training time on such a large-scale data set.

In experiment, we set $(T,m)$ pair to be  $(5,1200)$ and $(20,1200)$ except for {\tt MSD} data set, where we select $(5,2000)$ and $(20,3000)$, for the trade off between accuracy and running time. We adopt grid search method for other hyperparameter selections. To be specific, for data sets {\tt HPP}, {\tt CAD}, {\tt PTS} and {\tt AEP}, the regularization parameter $\lambda$ and the kernel bin width $\gamma$ are selected from $7$ and $8$values, respectively, from $10^{-3}$ to $10^3$ and from $0.05$ to $10$, spaced evenly on a log scale with a geometric progression. For {\tt MSD} data set, we choose $\lambda$ from $\{0.01,1,100\}$, and $\gamma$ from $\{0.001, 0.1, 10\}$. We randomly split $30\%$ samples from training sets for validation in hyper-parameter selection.

Now we summarize the comparison results of KHTE, VP-SVM, PK in Table \ref{tab::Realdata}.

\begin{table}[H] 
\setlength{\tabcolsep}{11pt}
\centering
\captionsetup{justification=centering}
\caption{\footnotesize{Average \textit{MSE} and \textit{ART} over real data sets}}
\label{tab::Realdata} 
\resizebox{0.9\textwidth}{30mm}{
\begin{tabular}{ccccccccccc}
	\toprule
	\multirow{2}*{\text{Datasets}}
	& \multirow{2}*{$(n,d)$} 
	& \multicolumn{2}{c}{\text{KHTE (T=5)}}
	& \multicolumn{2}{c}{\text{KHTE (T=20)}}
	& \multicolumn{2}{c}{\text{PK}} 
	& \multicolumn{2}{c}{\text{VP-SVM}}\\
	\cline{3-10}
	\multicolumn{2}{c}{}&\textit{MSE}&\textit{ART}
	&\textit{MSE}&\textit{ART} &\textit{MSE}&\textit{ART} &\textit{MSE}&\textit{ART} \\
	\hline 
	\hline
	\multirow{2}*{{\tt CAD}}
	& \multirow{2}*{$(20640, 9)$} 
	& $2993.64$ & $\textbf{15.38}$ & $\textbf{2951.61}$ & $50.78$ & $3008.84$ & $99.17$    & $3010.75$ & $19.71$ \\ 
	& & $(66.58)$ & $(0.18)$ & $(70.34)$ & $(0.37)$ & $(82.70)$ & $(35.63)$ & $(76.06)$ & $(0.95)$ \\ 
	\hline
	\multirow{2}*{{\tt PTS}}
	& \multirow{2}*{$(45730, 9)$} 
	& $12.78$ & $55.12$ & $\textbf{12.52}$ & $200.43$ & $17.08$ & $176.56$    & $13.74$ & $\textbf{52.33}$ \\ 
	& & $(0.21)$ & $(1.30)$ & $(0.21)$ & $(1.57)$ & $(0.83)$ & $(40.88)$ & $(0.19)$ & $(1.60)$ \\ 
	\hline
	\multirow{2}*{{\tt AEP}}
	& \multirow{2}*{$(19735, 27)$} 
	& $6535.21$ & $21.40$ & $\textbf{6402.25}$ & $71.09$ & $7418.10$ & $132.21$    & $6827.94$ & $\textbf{11.48}$ \\ 
	& & $(369.08)$ & $(0.16)$ & $(358.35)$ & $(0.36)$ & $(461.60)$ & $(48.84)$ & $(341.90)$ & $(0.48)$ \\ 
	\hline
	\multirow{2}*{{\tt HPP}}
	& \multirow{2}*{$(22784, 8)$} 
	& $1260.52$ & $23.16$ & $\textbf{1242.53}$ & $77.34$ & $1349.17$ & $124.27$    & $1272.97$ & $\textbf{14.50}$ \\ 
	& & $(75.98)$ & $(1.06)$ & $(75.43)$ & $(0.97)$ & $(74.56)$ & $(39.72)$ & $(68.79)$ & $(0.88)$ \\ 
	\hline
	\multirow{2}*{{\tt MSD}}
	& \multirow{2}*{$(515345, 90)$} 
	& $82.88$ & $448.30$ & $\textbf{81.05}$ & $1674.63$ & \multirow{2}*{$--$} & \multirow{2}*{$\geq 36$h}    & $85.10$ & $\textbf{386.03}$ \\ 
	& & $(0.15)$ & $(1.81)$ & $(0.15)$ & $(25.43)$ &  & & $(0.00)$ & $(2.48)$ \\ 
	\bottomrule 
\end{tabular}}
\begin{minipage}{0.9\textwidth}
\begin{tablenotes}
	\footnotesize
	\item {*} The best results are marked in \textbf{bold}, and the standard deviation is reported in the parenthesis under each value. Note that, since PK doesn't fit in the parallel computing framework, its training time exceeds a $36$ hour-limit, and thus no average \textit{MSE} is reported.
\end{tablenotes}
\end{minipage}
\end{table}

As it can be seen from Table \ref{tab::Realdata}, our \textit{adaptive} KHTE method with $T=20$ outperforms the other two state-of-the-art algorithms VP-SVM and PK in terms of predicting accuracy, due to high level of smoothness brought about by a relatively large $T$, which, however, leads to more training time sacrificed. Therefore, we turn to the less time consuming case $T=5$. Maintaining desirable accuracy, our KHTE shows comparable or even smaller training time compared with the extremely efficient VP-SVM.

Experimental results presented so far are those we have temporarily tuned. More accurate results can be obtained if we sacrifice more training time, which is different from other methods, for their  accuracy are hard to be increased. Readers  interested in these experiments are encouraged to try various hyperparameters to further investigate even lower testing errors.

\section{Proofs}\label{sec::proofs}

\subsection{Proofs of Results for NHT in the space $C^{0,\alpha}$}

\subsubsection{Proofs Related to Section \ref{sec::AError1}}

\begin{proof}[of Proposition \ref{prop::ApproximatiON-ERROR}]
For a fixed $\underline{h}_0$, we write
\begin{align*}
f_{\mathrm{P},H} := \argmin_{f \in \mathcal{F}_H} \mathcal{R}_{L,\mathrm{P}}(f)-\mathcal{R}_{L,\mathrm{P}}^*.
\end{align*}
In other words, $f_{\mathrm{P},H}$ is the function that minimizes the excess risk $\mathcal{R}_{L,\mathrm{P}}(f)-\mathcal{R}_{L,\mathrm{P}}^*$ over the function set $\mathcal{F}_H$ with bin width $h \in [\underline{h}_0, \overline{h}_0]$. Then, elementary calculation yields
\begin{align*}
f_{\mathrm{P},H}  = \sum_{j \in \mathcal{I}_H} \frac{\int_{A_j} \mathbb{E}(Y|X)\, d\mathrm{P}_X}{\mathrm{P}_X(A_j)}\eins_{A_j}= \sum_{j \in \mathcal{I}_H} \frac{\int_{A_j} f_{L, \mathrm{P}}^*\, d\mathrm{P}_X}{\mathrm{P}_X(A_j)}\eins_{A_j}.
\end{align*}
The assumption $f_{L, \mathrm{P}}^* \in C^{0,\alpha}$ implies
\begin{align*}
\mathcal{R}_{L,\mathrm{P}}(f_{\mathrm{P},H}) - \mathcal{R}_{L,\mathrm{P}}^*
& = \|f_{\mathrm{P},H} - f_{L, \mathrm{P}}^*\|_{L_2(\mathrm{P}_X)}^2
\\
& = \bigg\| \sum_{j \in \mathcal{I}_H}
\frac{\int_{A_j} f_{L, \mathrm{P}}^* (x') \ d \mathrm{P}_X(x')}{\mathrm{P}_X(A_j)}
\eins_{A_j} (x) - \sum_{j\in \mathcal{I}_H} f_{L, \mathrm{P}}^* (x) \eins_{A_j} (x) \bigg\|_{L_2(\mathrm{P}_X)}^2 
\\
& = \bigg\| \sum_{j \in \mathcal{I}_H}
\frac{\eins_{A_j}(x)}{\mathrm{P}_X(A_j)} \int_{A_j} f_{L, \mathrm{P}}^*(x') - f_{L, \mathrm{P}}^*(x)  \ d \mathrm{P}_X(x')
\bigg\|_{L_2(\mathrm{P}_X)}^2 
\\
& \leq \bigg\| \sum_{j \in \mathcal{I}_H}
\frac{\eins_{A_j}(x)}{\mathrm{P}_X(A_j)} \int_{A_j} \big|f_{L, \mathrm{P}}^*(x') - f_{L, \mathrm{P}}^*(x) \big| \ d \mathrm{P}_X(x')
\bigg\|_{L_2(\mathrm{P}_X)}^2 
\\
& \leq \bigg\| \sum_{j \in \mathcal{I}_H}
\frac{\eins_{A_j}(x)}{\mathrm{P}_X(A_j)} \int_{A_j} \|x'-x\|^{\alpha}  \ d \mathrm{P}_X(x')
\bigg\|_{L_2(\mathrm{P}_X)}^2 
\\
& \leq \bigg\| \sum_{j \in \mathcal{I}_H}
\frac{\eins_{A_j}(x)}{\mathrm{P}_X(A_j)} (\sqrt{d}\cdot\overline{h}_0)^{\alpha}\mathrm{P}_X(A_j)
\bigg\|_{L_2(\mathrm{P}_X)}^2 
\\
& \leq (\sqrt{d}\cdot\overline{h}_0)^{2\alpha} 
\\
& \leq d^{\alpha}c_0^{-2\alpha}\underline{h}_0^{2\alpha},
\end{align*}
where the last inequality follows from Assumption \ref{assumption::h}.
Consequently we obtain
\begin{align*}
\lambda \underline{h}_0^{-2d} 
+ \mathcal{R}_{L,\mathrm{P}}(f_{\mathrm{P},\underline{h}}) - \mathcal{R}_{L,\mathrm{P}}^* 
& \leq \lambda \underline{h}_0^{-2d} + d^{\alpha}c_0^{-2\alpha}\underline{h}_0^{2\alpha}
\\
& \leq \bigl( (\underline{h}_0^*)^{-2d} + d^{\alpha} c_0^{-2\alpha} (\underline{h}_0^*)^{2\alpha} \bigr) \lambda^{\frac{\alpha}{\alpha+d}}
\\
&:= c\lambda^{\frac{\alpha}{\alpha+d}}
\end{align*}
with $\underline{h}_0^* := (d^{1-\alpha}c_0^{2\alpha} \alpha)^{\frac{1}{2\alpha + 2d}}$, where $c = (\underline{h}_0^*)^{-2d} + d^{\alpha} c_0^{-2\alpha} (\underline{h}_0^*)^{2\alpha}$ is a constant depending on $c_0$, $d$, and $\alpha$. 
This proves the desired assertion.
\end{proof}

\subsubsection{Proofs Related to Section \ref{sec::SError1}}

To prove Lemma \ref{VCindex}, we need the following fundamental lemma concerning with the VC dimension of purely random partitions which follows the idea put forward by \cite{bremain2000some} of the construction of purely random forest. To this end, let $p \in \mathbb{N}$ be fixed and $\pi_p$ be a partition of $\mathcal{X}$ with number of splits $p$ and $\pi_{(p)}$ denote the collection of all partitions $\pi_p$.

\begin{lemma} \label{VCindexPre}
Let $\mathcal{B}_p$ be defined by
\begin{align} \label{Bp}
\mathcal{B}_p := \biggl\{ B : B = \bigcup_{j \in J} A_j, J \subset \{ 0, 1, \ldots, p \}, A_j \in \pi_p \subset \pi_{(p)} \biggr\}.
\end{align}
Then the VC dimension of $\mathcal{B}_p$ can be upper bounded by $d p + 2$.
\end{lemma}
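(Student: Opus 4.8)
The plan is to bound the VC dimension through the shatter coefficient: by Definition \ref{def::VCdimension}, to get $\mathrm{VC}(\mathcal{B}_p)\le dp+2$ it suffices to show $m^{\mathcal{B}_p}(dp+2)<2^{dp+2}$, i.e.\ that \emph{no} set of $dp+2$ points can be shattered by $\mathcal{B}_p$. The first, purely structural, observation is that a partition $\pi_p$ built from $p$ successive axis-parallel splits has exactly $p+1$ cells $A_0,\ldots,A_p$ (each split raises the cell count by one), and that every $B\in\mathcal{B}_p$ is a union of some subcollection of these $p+1$ cells. Fixing a finite set $A$ with $\#(A)=N$, I would reformulate shattering as a \emph{separation} statement: $A$ is shattered exactly when, for every $2$-coloring of $A$, some $p$-split partition makes every cell monochromatic (after which the matching union of cells is always available). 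This viewpoint makes transparent why the dimension $d$ should enter linearly, since each split is a hyperplane orthogonal to one of only $d$ coordinate axes.

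The main quantitative step is to bound $\Delta^{\mathcal{B}_p}(A)$ from above. Every trace $B\cap A$ factors through two independent pieces of data: (i) the partition that the $p$ splits induce on the finite set $A$, and (ii) the choice of which of the resulting groups lie in $B$. Since the induced partition has at most $p+1$ nonempty groups, piece (ii) contributes at most $2^{p+1}$ traces. For piece (i) I would exploit that each split chooses one of only $d$ coordinate directions together with a threshold, and that for a fixed direction the single-coordinate thresholds form a class of VC dimension one, so on $N$ points they realize at most $N+1$ distinct cuts. The $p+1$-cell hierarchical structure is then used to control how these at most $p$ directional cuts can interact, so that the number of distinct induced partitions — and hence $\Delta^{\mathcal{B}_p}(A)$ — stays far below $2^N$. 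Substituting $N=dp+2$ and checking $\Delta^{\mathcal{B}_p}(A)<2^{dp+2}$ for every such $A$ then yields the claim.

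The hard part will be exactly the count in piece (i), and it is essential to do it sharply rather than crudely. The obstruction is that the splitting hyperplanes are \emph{not} fixed but adapt to the target subset: a different $p$-split partition may be used to realize each $B\cap A$, which is precisely what defeats any naive induction on $p$ that tries to condition on the first or the last split. Consequently one must perform a \emph{global} count that charges a factor of order $d$ (the directional choice) to each of the $p$ splits while invoking the exact $p+1$-cell count to forbid the $2^{p}$-type blow-up of a full hyperplane arrangement. A crude product bound over the $p$ splits carries spurious logarithmic factors and in fact already fails to beat $2^{N}$ for small $d$; the resolution is to use the guillotine/box geometry of the cells, following Breiman's purely-random-partition construction, to collapse the estimate to the clean linear form $dp+2$. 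I expect this refined combinatorial estimate of the number of induced partitions to be the crux of the whole argument.
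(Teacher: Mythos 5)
Your proposal has a genuine gap, and the route you choose cannot be repaired in the form you set it up. The step you defer to the end---a ``refined combinatorial estimate'' of the number of partitions that $p$ splits induce on a set $A$ of $N=dp+2$ points---is not merely the hard part; the factorized bound you build around it is provably useless. You bound $\Delta^{\mathcal{B}_p}(A)$ by (number of induced partitions of $A$) times $2^{p+1}$, and you need this product to be $<2^{dp+2}$. Already for $d=1$ this fails: with $N=p+2$ points on a line, the distinct partitions induced by $p$ thresholds number $\sum_{k=0}^{p}\binom{N-1}{k}=2^{p+1}-1$, so your product is of order $2^{2p+2}$ while $2^N=2^{p+2}$; the bound overshoots by a factor of roughly $2^p$. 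Worse, the true trace count in this case is exactly $2^N-2$: a union of cells of a $p$-threshold partition traces out precisely the label patterns with at most $p$ alternations along the line, and $2\sum_{k=0}^{p}\binom{p+1}{k}=2^{p+2}-2$. The class misses only the two fully alternating dichotomies, so there is essentially \emph{zero} slack---any counting argument that loses even a small multiplicative factor (yours loses an exponential one, since the same trace is produced by many (partition, cell-subset) pairs) must exceed $2^N$ and prove nothing. A viable argument has to identify \emph{which} dichotomies are unrealizable, which is no longer a counting argument.

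That identification is exactly what the paper does. It exhibits, for a configuration of $dp+2$ points, one explicit dichotomy---labels alternating across $p$ parallel hyperplane layers, each layer spanned by $d$ same-class points, the one-dimensional picture being precisely the alternating pattern above---and argues by induction on $p$ that no union of cells of any $p$-split partition can realize it, since realizing it would require every cell to be monochromatic and $p$ splits cannot separate $p+1$ alternating layers. Your reformulation of shattering (every $2$-coloring must be made cell-wise monochromatic by some partition) is correct and is the right lens, but it should lead you to hunt for a single uncolorable dichotomy rather than to count. Finally, your structural premise is wrong for this paper: the splits are \emph{not} axis-parallel. The partitions come from histogram transforms containing a random rotation, so the splitting hyperplanes are oblique; the paper's proof closes by noting exactly this (``our hyperplanes can be generated both vertically and obliquely''). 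An argument resting on guillotine/box geometry of axis-aligned cells would not even cover the class $\mathcal{B}_p$ of the lemma, and Lemma~\ref{VCindexPre} is later applied (in Lemma~\ref{VCindex} and Lemma~\ref{lem::VCreg}) to cells cut by such oblique hyperplanes.
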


\begin{proof}[of Lemma \ref{VCindexPre}] 
The proof will be conducted by dint of geometric constructions, and we proceed by induction.

\begin{figure*}[htbp]
\centering
\begin{minipage}[b]{0.18\textwidth}
	\centering
	\includegraphics[width=\textwidth]{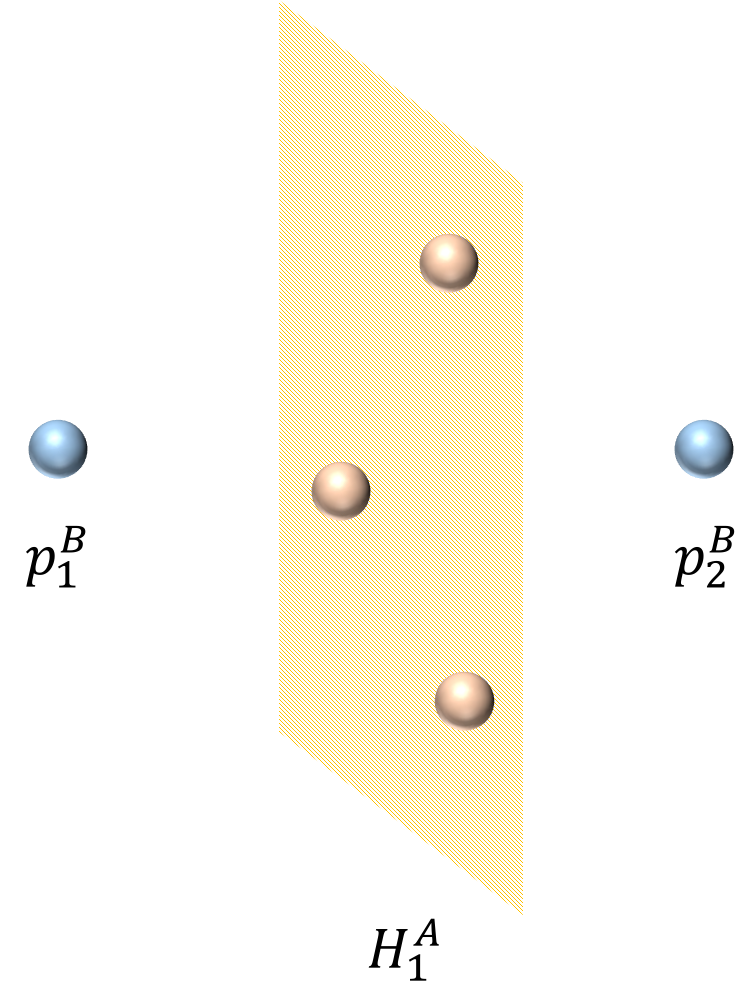}
	$p=1$
	\centering
	\label{fig::p=1}
\end{minipage}
\qquad
\begin{minipage}[b]{0.25\textwidth}
	\centering
	\includegraphics[width=\textwidth]{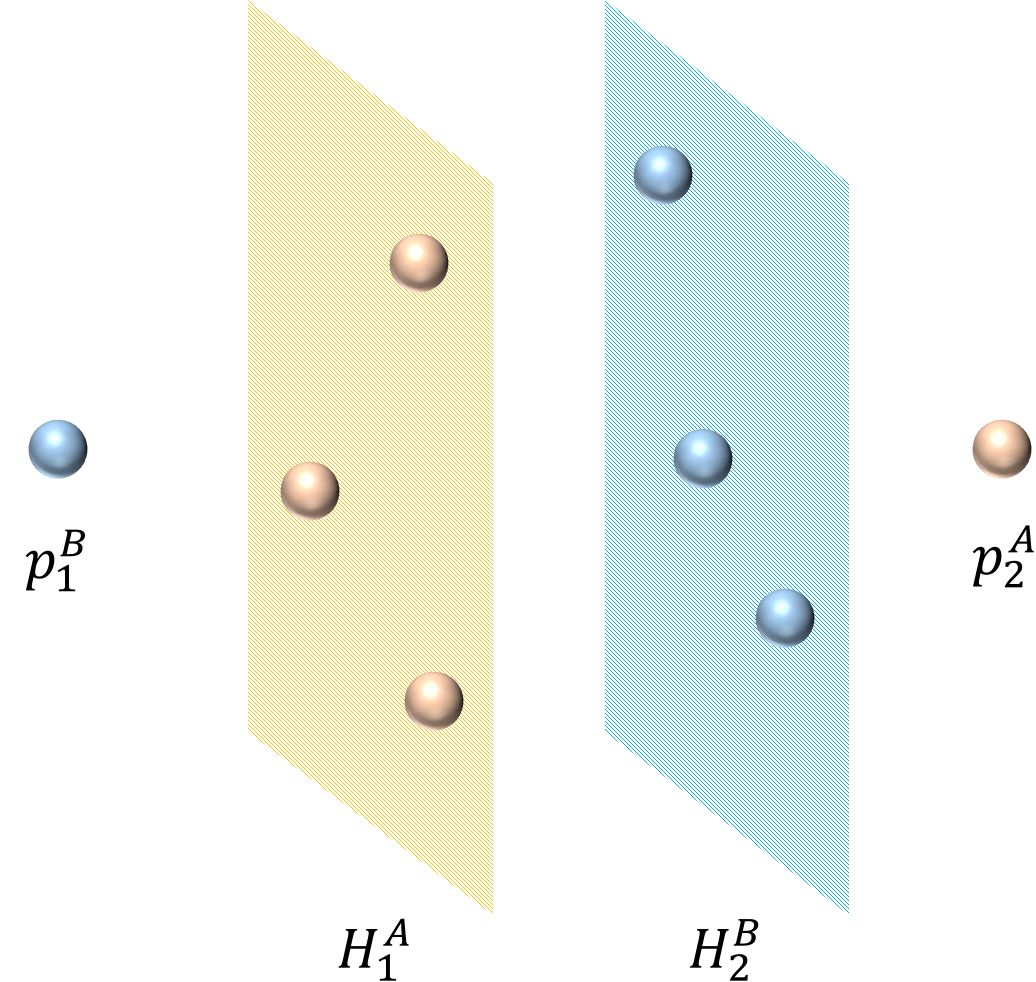}
	$p=2$
	\label{fig::p=2}
\end{minipage}
\qquad
\begin{minipage}[b]{0.43\textwidth}
	\centering
	\includegraphics[width=\textwidth]{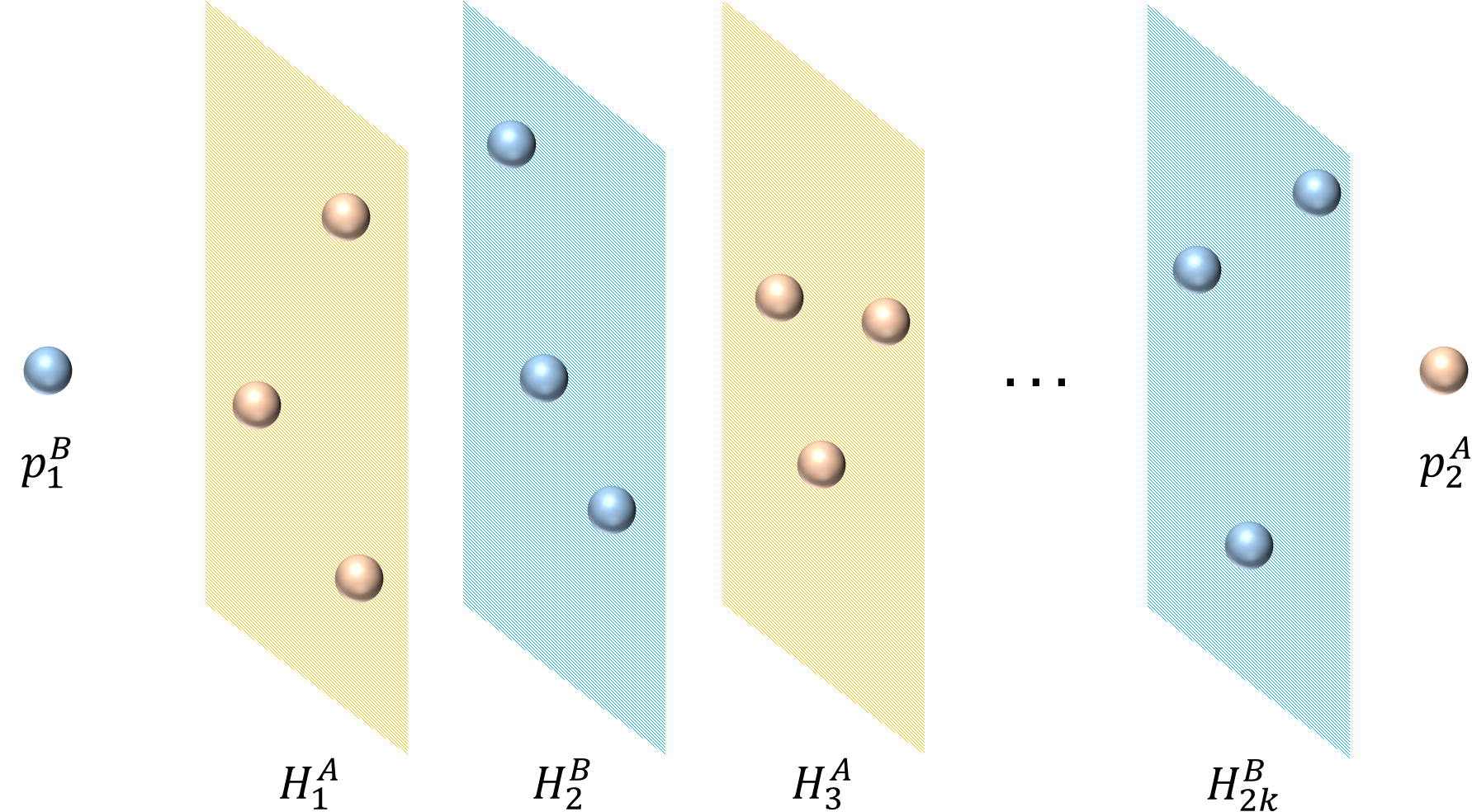}
	$p=2k$
	\label{fig::p=2k}
\end{minipage}
\caption{We take one case with $d=3$ as an example to illustrate the geometric interpretation of the VC dimension. The yellow balls represent samples from class $A$, blue ones are from class $B$ and slices denote the hyperplanes formed by samples. }
\label{fig::VC}
\end{figure*}

We begin by observing a partition with number of splits $p = 1$. On account that the dimension of the feature space is $d$, the smallest number of points that cannot be divided by $p = 1$ split is $d + 2$. Specifically, considering the fact that $d$ points can be used to form $d - 1$ independent vectors and therefore a hyperplane of a $d$-dimensional space, we now focus on the case where there is a hyperplane consisting of $d$ points all from the same class labeled as $A$, and there are two points from the other class $B$ on either side of the hyperplane. We denote the hyperplane by $H_1^A$ for brevity. In this case, points from two classes cannot be separated by one split, i.e. one hyperplane, which means that $\mathrm{VC}(\mathcal{B}(\pi_1)) \leq d + 2$.

We next turn to consider the partition with number of splits $p = 2$ which is an extension of the above case. Once we pick one point out of the two located on either side of the above hyperplane $H_1^A$, a new hyperplane $H_2^B$ parallel to $H_1^A$ can be constructed by combining the selected point with $d - 1$ newly-added points from class $B$. Subsequently, a new point from class $A$ is added to the side of the newly constructed hyperplane $H_2^B$. Notice that the newly added point should be located on the opposite side to $H_1^A$. Under this situation, $p = 2$ splits can never separate those $2 d + 2$ points from two different classes. As a result, we prove that $\mathrm{VC}(\mathcal{B}(\pi_2)) \leq 2 d + 2$.

If we apply induction to the above cases, the analysis of VC index can be extended to the general case where $p \in \mathbb{N}$. What we need to do is to add new points continuously to form $p$ mutually parallel hyperplanes with any two adjacent hyperplanes being built from different classes. Without loss of generality, we assume that $p = 2k+1$, $k \in \mathbb{N}$, and there are two points denoted by $p_1^B, p_2^B$ from class $B$ separated by $2 k + 1$ alternately appearing hyperplanes. Their locations can be represented by $p_1^B, H_1^A, H_2^B, H_3^A, H_4^B, \ldots, H_{(2k+1)}^A, p_2^B$. According to this construction, we demonstrate that the smallest number of points that cannot be divided by $p$ splits is $d p + 2$, which leads to $\mathrm{VC}(\mathcal{B}(\pi_p)) \leq d p + 2$.

It should be noted that our hyperplanes can be generated both vertically and obliquely, which is in line with our splitting criteria for the random partitions. This completes the proof. 
\end{proof}

\begin{proof}[of Lemma \ref{VCindex}]  
Again, the proof will be conducted by dint of geometric constructions.

Let us choose a data set $A \subset \mathbb{R}^d$ with $\#(A) = 2^d + 2$ and consider firstly the general case that there exists $x \in A$ such that $x \in \mathcal{C}(A \setminus \{x\})$, that is, $x$ lies in the convex hull of the set $A \setminus \{x\}$. Then there exists a set $A_1 \subset (A \setminus \{x\})$ such that
\begin{align*}
\#(A_1) = \#(A) - 2
\quad
\text{ and } 
\quad
x \in \mathcal{C}(A_1).
\end{align*}
Then for a fixed $B \in \pi_h$ with $A_1 \subset A \cap B$, there always holds
\begin{align*}
A_1 \cup \{x\} \subset A \cap B.
\end{align*}
Clearly, there exists no $B \in \pi_h$ such that $A \cap B = A_1$ and therefore $\pi_h$ cannot shatter $A$.

It remains to consider the case when $x \not\in \mathcal{C}(A \setminus \{x\})$ holds for all $x \in A$. Obviously, the convex hull of $A$ forms a hyperpolyhedron whose vertices are the points of $A$. Note that the hyperpolyhedron can be regarded as an undirected graph, therefore as usual, we define the distance $d(x_1 , x_2)$ between a pair of samples $x_1$ and $x_2$ on the graph by the shortest path between them. Clearly, there exists a starting point $x_0 \in A$ such that $\deg(x) = 2^{d - 1}$. Then we construct another data set $A_2 \neq A_1$ by
\begin{align*}
A_2 = \{ y : d(x_0, y) \mod 2 = 1, y \in A \}. 
\end{align*}
Again, for a fixed $B \in \pi_h$ such that $A_2 \subset A \cap B$, we deduce that there exists no $B \in \pi_h$ such that $A \cap B = A_2$ and therefore $\pi_h$ cannot shatter $A$ as well. By Definition  \ref{def::VCdimension}, we immediately obtain
\begin{align*} 
\mathrm{VC}(\pi_h) \leq 2^d+2.
\end{align*}

Next, we turn to prove the second assertion. The choice $k := \lfloor \frac{2R\sqrt{d}}{\underline{h}_0} \rfloor+1$ leads to the partition of $B_R$ of the form $\pi_k := \{ A_{i_1, \ldots, i_d} \}_{i_j = 1,\ldots,k}$ with
\begin{align} \label{def::cells}
A_{i_1, \ldots, i_d} 
:= \prod_{j=1}^d A_{i_j}
:= \prod_{j=1}^d \biggl[ - R + \frac{2R(i_j-1)}{k}, -R+\frac{2Ri_j}{k} \biggr).
\end{align}
Obviously, we have $|A_{i_j}| \leq \frac{\underline{h}_0}{\sqrt{d}}$. Let $D$ be a data set with 
\begin{align*}
\#(D) = (d (2^d - 1) + 2) \biggl( \biggl\lfloor \frac{2R\sqrt{d}}{\underline{h}_0} \biggr\rfloor + 1 \biggr)^d. 
\end{align*}
Then there exists at least one cell $A$ with 
\begin{align} \label{DcapANo}
\#(D \cap A) \geq d(2^d-1)+2.
\end{align}
Moreover, for any $x, x' \in A$, the construction of the partition \eqref{def::cells} implies $\|x - x'\| \leq \underline{h}_0$.
Consequently, at most one vertex of $A_j$ induced by histogram transform $H$ lies in $A$, since the bin width of $A_j$ is larger than $\underline{h}_0$. Therefore, 
\begin{align*} 
{\Pi_h}_{|A} := \{B \cap A : B \in \Pi_h\}
\end{align*}
forms a partition of $A$ with $\#({\Pi_h}_{|A}) \leq 2^d$. It is easily seen that this partition can be generated by $2^d-1$ splitting hyperplanes. In this way, Lemma \ref{VCindexPre} implies that ${\Pi_h}_{|A}$ can only shatter a dataset with at most $d(2^d-1)+1$ elements. Thus \eqref{DcapANo} indicates that ${\Pi_h}_{|A}$ fails to shatter $D \cap A$ and therefore
$\Pi_h$ cannot shatter the data set $D$ as well. By Definition \ref{def::VCdimension}, we immediately get 
\begin{align*} 
\mathrm{VC}(\Pi_h) \leq 
(d(2^d-1)+2) \biggl( \biggl\lfloor \frac{2R\sqrt{d}}{\underline{h}_0} \biggr\rfloor+1 \biggr)^d
\end{align*}
and the assertion is thus proved.
\end{proof}

\begin{proof}[of Lemma \ref{ScriptBhCoveringNumber}]
The first assertion concerning covering numbers of $\pi_h$ follows directly from Theorem 9.2 in \cite{Kosorok2008introcuction}. For the second estimate, we find the upper bound \eqref{VCMathcalBh}  of $\mathrm{VC}(\Pi_h)$ satisfies
\begin{align*} 
\bigl( d(2^d-1) + 2 \bigr) (2 R \sqrt{d} / \underline{h}_0 + 1)^d 
& \leq \bigl( (d+1)2^d \bigr) (3 R \sqrt{d}/\underline{h}_0)^d
\\
& \leq 2 d \cdot 2^d (3 R \sqrt{d}/\underline{h}_0)^d
\\
& =: (c_d R/\underline{h}_0)^d,
\end{align*}
where the constant $c_d := 3 \cdot 2^{1+\frac{1}{d}} \cdot d^{\frac{1}{d}+\frac{1}{2}}$. Again, Theorem 9.2 in \cite{Kosorok2008introcuction} yields the second assertion and thus completes the proof.
\end{proof}

\begin{proof}[of Lemma \ref{lem::CoveringNumberBH}]
Denote the covering number of $\eins_{\Pi_h}$ with respect to $L_2(\mathrm{P}_X)$ as $\mathcal{N}(\varepsilon) := \mathcal{N} (\eins_{\Pi_h}, \|\cdot\|_{L_2(\mathrm{P}_X)}, \varepsilon)$. Then, there exist $B_1, \ldots, B_{\mathcal{N}(\varepsilon)} \in \Pi_h$ such that the function set $\{\eins_{B_1}, \ldots, \eins_{B_{\mathcal{N}(\varepsilon)}}\}$ is an $\varepsilon$-net of $\eins_{\Pi_h}$ in the sense of $L_2(\mathrm{P}_X)$. That is, for any $\eins_{B} \in \eins_{\Pi_h}$, there exists a $j \in \{1, \ldots, \mathcal{N}(\varepsilon)\}$ such that $\|\eins_{B} - \eins_{B_j}\|_{L_2(\mathrm{P}_X)} \leq \varepsilon$. Now, for all $g \in \mathcal{F}_H^b$, the equivalent definition \eqref{equ::equalBH} implies that $g$ can be written as $g = \eins_B - \eins_{B^c} = 2\eins_B -1$ for some $B \in \Pi_H \in \Pi_h$. The above discussion yields that there exists a $j \in \{1, \ldots, \mathcal{N}(\varepsilon)\}$ such that for $g_j := 2\eins_{B_j}-1$, there holds
\begin{align*}
\| g - g_j\|_{L_2(\mathrm{P}_X)} 
& = \|(2 \eins_B - 1) - (2 \eins_{B_j} - 1)\|_{L_2(\mathrm{P}_X)} 
\\
& = \|2 \eins_B - 2 \eins_{B_j}\|_{L_2(\mathrm{P}_X)} 
\\
& = 2 \|\eins_B - \eins_{B_j}\|_{L_2(\mathrm{P}_X)} 
\\
& \leq 2 \varepsilon.
\end{align*}
This implies that $\{g_1, \ldots, g_{\mathcal{N}(\varepsilon)}\}$ is a $2\varepsilon$-net of $\mathcal{F}_H^b$ with respect to $\|\cdot\|_{L_2(\mathrm{P}_X)}$. Consequently, we obtain
\begin{align*}
\mathcal{N} (\mathcal{F}_H^b, \|\cdot\|_{L_2(\mathrm{P}_X)}, \varepsilon) 
& \leq \mathcal{N} (\eins_{\Pi_h}, \|\cdot\|_{L_2(\mathrm{P}_X)}, \varepsilon/2)
\\
& \leq K (c_d R/\underline{h}_0 + 1)^d 
(4e)^{(c_d R/\underline{h}_0 + 1)^d}
(2/\varepsilon)^{2(c_d R/\underline{h}_0 + 1)^d-2}.
\end{align*}
This proves the assertion.
\end{proof}

\begin{proof}[of Lemma \ref{lem::entropynumber}]
For any $h_i \in \mathcal{H}_r^b$ with $h_i = L \circ g_i - L \circ f_{L,\mathrm{P}}^*$, $i = 1,2$, there holds
\begin{align*}
\|h_1 - h_2\|_{L_2(\mathrm{D})} 
& = \biggl( \frac{1}{n} \sum_{i=1}^n ( h_1(x_i, y_i) - h_2(x_i, y_i))^2 \biggr)^{1/2} 
\\
& = 2 \biggl( \frac{1}{n} \sum_{i=1}^n (g_1(x_i) - g_2(x_i))^2 \biggr)^{1/2}
\\
& = 2 \|g_1 - g_2\|_{L_2(\mathrm{D})}.
\end{align*}
This together with Lemma \ref{lem::CoveringNumberBH} yields
\begin{align*}
\mathcal{N}(\mathcal{H}_r^b, \|\cdot\|_{L_2(\mathrm{D})}, \varepsilon) 
& \leq \mathcal{N}(\mathcal{F}_r^b, \|\cdot\|_{L_2(\mathrm{D})}, \varepsilon/2)
\\
& \leq \mathcal{N}(\mathcal{F}_H^b, \|\cdot\|_{L_2(\mathrm{D})}, \varepsilon/2)
\\
& \leq K (c_d R/\underline{h}_0+1)^d 
(4 e)^{(c_d R/\underline{h}_0 + 1)^d}
(4/\varepsilon)^{2 (c_d R/\underline{h}_0 + 1)^d - 2}.
\end{align*}
Elementary calculations show that for any $\varepsilon \in (0, 1/\max \{e, K\})$, there holds
\begin{align*}
& \log \mathcal{N}(\mathcal{H}_r, \|\cdot\|_{L_2(\mathrm{D})}, \varepsilon) 
\\
& \leq \log \Bigl( K (c_d R/\underline{h}_0 + 1)^d 
(4 e)^{(c_d R/\underline{h}_0 + 1)^d}
(4/\varepsilon)^{2 ( c_d R / \underline{h}_0 + 1)^d - 2} \Bigr)
\\
& = \log K 
+ d \log (c_d R/\underline{h}_0 + 1) 
+ (c_d R/\underline{h}_0 + 1)^d \log (4 e) 
+ 2 (c_d R/\underline{h}_0 + 1)^d \log (4/\varepsilon)
\\
& \leq 11 (2 c_d R/\underline{h}_0)^d \log (1/\varepsilon),
\end{align*}
where the last inequality is based on the following basic inequalities:
\begin{align*}
\log K 
& \leq \log (1/\varepsilon) 
\leq (c_d R/\underline{h}_0 + 1)^d \log (1/\varepsilon) 
\leq (2 c_d R/\underline{h}_0)^d \log (1/\varepsilon),
\\
d \log (c_d R/\underline{h}_0 + 1) 
& \leq (c_d R/\underline{h}_0 + 1)^d
\leq (c_d R/\underline{h}_0 + 1)^d \log (1/\varepsilon) 
\leq (2 c_d R/\underline{h}_0)^d \log (1/\varepsilon),
\\
(c_d R/\underline{h}_0 + 1)^d \log (4 e) 
& \leq (c_d R/\underline{h}_0 + 1)^d \log(e^3) 
\leq 3 (c_d R/\underline{h}_0 + 1)^d 
\leq 3 (2 c_d R/\underline{h}_0)^d \log (1/\varepsilon),
\\
2 (c_d R/\underline{h}_0 + 1)^d \log (4/\varepsilon)
& = 2 (c_d R/\underline{h}_0 + 1)^d (\log 4 + \log (1/\varepsilon))
\leq 2 (2 c_d R/\underline{h}_0)^d (\log e^2 + \log (1/\varepsilon))
\\
& = 2 (2 c_d R/\underline{h}_0)^d (2 + \log (1/\varepsilon)) 
\leq 6 (2 c_d R/\underline{h}_0)^d \log (1/\varepsilon).
\end{align*}
Consequently, for all $\delta \in (0,1)$, we have
\begin{align}\label{equ::CoverNumHr}
\sup_{\varepsilon \in (0, 1/\max\{e,K\})} \varepsilon^{2\delta} \log \mathcal{N}(\mathcal{H}_r, \|\cdot\|_{L_2(\mathrm{D})}, \varepsilon) 
\leq 11 (2 c_d R/\underline{h}_0)^d
\sup_{\varepsilon \in (0, 1)} \varepsilon^{2\delta} \log (1/\varepsilon).
\end{align}
Simple analysis shows that the right hand side of \eqref{equ::CoverNumHr} is maximized at $\varepsilon^* = e^{-1/(2\delta)}$ and we obtain
\begin{align*}
\log \mathcal{N}(\mathcal{H}_r, \|\cdot\|_{L_2(\mathrm{D})}, \varepsilon) 
\leq 11/(2e\delta) (2 c_d R/\underline{h}_0)^d \varepsilon^{-2\delta}.
\end{align*}

Next, we shall use $r$ to bound $\underline{h}_0$ in the space $\mathcal{F}_r^b$. 
For all $g \in \mathcal{F}_r^b$, there holds
\begin{align*}
\lambda \underline{h}_0^{-2d} 
\leq \lambda \underline{h}_0^{-2d} + \mathcal{R}_{L, \mathrm{P}}(g) - \mathcal{R}_{L, \mathrm{P}}^* 
\leq r
\end{align*}
and consequently we have
\begin{align*}
\underline{h}_0^{-1} \leq (r/\lambda)^{1/(2d)}. 
\end{align*}
Then Exercise 6.8 in \cite{StCh08} implies that the entropy number of $\mathcal{H}_r^b$ with respect to $L_2(\mathrm{D})$ satisfies
\begin{align*}
e_i(\mathcal{H}_r^b, \|\cdot\|_{L_2(\mathrm{D})}) 
\leq \bigl( 33/(2e \delta) (2 c_d R/\underline{h}_0)^d \bigr)^{\frac{1}{2\delta}} i^{-\frac{1}{2\delta}} 
\leq \bigl( 33/(2e \delta) (2 c_d R (r/\lambda)^{\frac{1}{2d}})^d \bigr)^{\frac{1}{2\delta}} i^{-\frac{1}{2\delta}}.
\end{align*}
Taking expectation on both sides of the above inequality, we get
\begin{align*}
\mathbb{E}_{\mathrm{D}\sim \mathrm{P}} e_i (\mathcal{H}_r^b, \|\cdot\|_{L_2(\mathrm{D})})
\leq \bigl( 33/(2e \delta) (2 c_d R (r/\lambda)^{\frac{1}{2d}})^d \bigr)^{\frac{1}{2\delta}} i^{-\frac{1}{2\delta}}.
\end{align*}
Thus, we finished the proof.
\end{proof}

\begin{proof}[of Lemma \ref{lem::EERad}]
First of all, we notice that for all $h \in \mathcal{H}_r^b$, there holds
\begin{align*}
\|h\|_{\infty} \leq 4 =: B_1,
\qquad
\mathbb{E}_{\mathrm{P}} h^2 \leq 16 r =: \sigma^2.
\end{align*}
Then $a := ( \frac{33}{2e\delta} ( 2 c_d R (\frac{r}{\lambda})^{1/2})^d )^{\frac{1}{2\delta}} \geq B_1$ in Lemma \ref{lem::entropynumber} together with Theorem 7.16 in \cite{StCh08} yields that there exist constants $c_1(\delta) > 0$ and $c_2(\delta) > 0$ depending only on $\delta$ such that
\begin{align*}
\mathbb{E}_{D \sim \mathrm{P}^n} \mathrm{Rad}_{\mathrm{D}} (\mathcal{H}_r^b, n) 
& \leq \max \Bigl\{ 
c_1(\delta) \bigl( 33/(2e\delta) (2 c_d R (r/\lambda)^{\frac{1}{2d}})^d \bigr)^{\frac{1}{2}} 
(16 r)^{\frac{1-\delta}{2}} n^{-\frac{1}{2}}, 
\\
& \phantom{=} \qquad \qquad
c_2(\delta) \bigl( 33/(2e\delta) (2 c_d R (r/\lambda)^{\frac{1}{2d}})^d \bigr)^{\frac{1}{1+\delta}} 4^{\frac{1-\delta}{1+\delta}} n^{-\frac{1}{1+\delta}} \Bigr\}
\\
&= \max \Bigl\{ c'_1(\delta)
\lambda^{-\frac{1}{4}} r^{\frac{3-2\delta}{4}} n^{-\frac{1}{2}}, 
c'_2(\delta)  \lambda^{-\frac{1}{2(1+\delta)}} r^{\frac{1}{2(1+\delta)}} n^{-\frac{1}{1+\delta}} \Bigr\},
\end{align*}
where the constants are
\begin{align*}
c'_1(\delta) & := c_1(\delta) (33/(2e\delta)^{\frac{1}{2}} 16^{\frac{1-\delta}{2}}
(2 c_d R)^{\frac{d}{2}},
\\
c'_2(\delta) & := c_2(\delta) (33/(2e\delta)^{\frac{1}{1+\delta}} 4^{\frac{1-\delta}{1+\delta}} 
(2 c_d R)^{\frac{d}{1+\delta}}.
\end{align*}
Consequently we obtain
\begin{align*}
\mathbb{E}_{D \sim \mathrm{P}^n} \mathrm{Rad}_{\mathrm{D}}(\mathcal{H}_r,n) 
& \leq M \mathbb{E}_{D \sim \mathrm{P}^n} \mathrm{Rad}_{\mathrm{D}} (\mathcal{H}_r^b, n) 
\\
& \leq \max \bigl\{ c''_1(\delta) \lambda^{-\frac{1}{4}} r^{\frac{3-2\delta}{4}} n^{-\frac{1}{2}},    
c''_2(\delta) \lambda^{-\frac{1}{2(1+\delta)}} r^{\frac{1}{2(1+\delta)}} n^{-\frac{1}{1+\delta}} \bigr\},
\end{align*}
where $c''_1(\delta) := M c'_1(\delta)$ and 
$c''_2(\delta) := M c'_2(\delta)$.
This proves the assertion.
\end{proof}

\subsubsection{Proofs Related to Section \ref{sec::OracleTreeC0}}

\begin{proof}[of Theorem \ref{thm::OracleTree}]
For the least square loss $L$, the supremum bound
\begin{align*}
L(x,y,t) \leq 4M^2 =: B, \quad \forall (x,y) \in \mathcal{X} \times \mathcal{Y}, t\in [-M,M]
\end{align*}
and the variance bound
\begin{align*}
\mathbb{E}(L \circ g - L \circ f_{L, \mathrm{P}}^*)^2 
\leq V(\mathbb{E}(L \circ g - L \circ f_{L, \mathrm{P}}^*))^{\vartheta}
\end{align*}
holds for $V = 16 M^2$ and $\vartheta = 1$. 
Moreover, Lemma \ref{lem::EERad} implies that the expected empirical Rademacher average of $\mathcal{H}_r$ can be bounded by the function $\varphi_n(r)$ as
\begin{align*}
\varphi_n(r) := 
\max \bigl\{ c''_1(\delta) \lambda^{-\frac{1}{4}} r^{\frac{3-2\delta}{4}} n^{-\frac{1}{2}}, 
c''_2(\delta) \lambda^{-\frac{1}{2(1+\delta)}} r^{\frac{1}{2(1+\delta)}} n^{-\frac{1}{1+\delta}} \bigr\},
\end{align*}
where $c''_1(\delta)$ and $c''_2(\delta)$ are 
some constants depending on $\delta$.
Simple algebra shows that the condition $\varphi_n(4r) \leq 2\sqrt{2} \varphi_n(r)$ is satisfied. 
Since $2\sqrt{2} < 4$, similar arguments show that the statements of the Peeling Theorem 7.7 in \cite{StCh08} still hold. Therefore, Theorem 7.20 in \cite{StCh08} can also be applied, if 
the assumptions on $\varphi_n$ and $r$ are modified to $\varphi_n(4r) \leq 2\sqrt{2} \varphi_n(r)$ and
$r \geq \max\{75\varphi_n(r), 1152M^2\tau/n, r^*\}$,
respectively. Some elementary calculations show that the condition $r > 75\varphi_n(r)$ is satisfied if 
\begin{align*}
r & \geq \max \Bigl\{ 
(75 c''_1(\delta) \lambda^{-\frac{1}{4}} n^{-\frac{1}{2}})^{\frac{4}{1+2\delta}}, 
(75 c''_2(\delta) \lambda^{-\frac{1}{2(1+\delta)}} n^{-\frac{1}{1+\delta}})^{\frac{2(1+\delta)}{1+2\delta}} 
\Bigr\}
\\
& = \max \Bigl\{
(75 c''_1(\delta))^{\frac{4}{1+2\delta}},
(75 c''_2(\delta))^{\frac{2(1+\delta)}{1+2\delta}} \Bigr\} 
\cdot
\lambda^{-\frac{1}{1+2\delta}} n^{-\frac{2}{1+2\delta}},
\end{align*}
which yields the assertion.
\end{proof}

\subsubsection{Proofs Related to Section \ref{sec::C0}}

\begin{proof}[of Theorem \ref{thm::tree}]
Theorem \ref{thm::OracleTree} and Proposition \ref{prop::ApproximatiON-ERROR} imply that with probability $\nu_n$ at least $1 - 3 e^{-\tau}$, there holds 
\begin{align}\label{equ::Excessbound}
\lambda \underline{h}_{0,n}^{-2d} 
+ \mathcal{R}_{L,\mathrm{P}}(f_{\mathrm{D},H_n}) 
- \mathcal{R}_{L,\mathrm{P}}^* 
\leq 9 c \lambda ^{\frac{\alpha}{\alpha+d}} 
+ 3 c_{\delta} \lambda^{-\frac{1}{1+2\delta}} n^{-\frac{2}{1+2\delta}}
+ 3456 M^2 \tau / n,
\end{align}
where $c$ and $c_{\delta}$ are the constants defined as in Proposition \ref{prop::ApproximatiON-ERROR} and Theorem \ref{thm::OracleTree}, respectively. 
Minimizing the right hand side of \eqref{equ::Excessbound} with respect to $\lambda$, 
by choosing 
\begin{align*}
\lambda := n^{-\frac{2(\alpha+d)}{d+2\alpha(1+\delta)}}, 
\end{align*}
we get
\begin{align*}
\lambda\underline{h}_{0,n}^{-2d} + \mathcal{R}_{L,\mathrm{P}}(f_{\mathrm{D},H_n}) -\mathcal{R}_{L,\mathrm{P}}^* \leq c n^{-\frac{2\alpha}{d+2\alpha (1+\delta)}},
\end{align*}
where $c$ is some constant depending on $c_0$, $\delta$, $d$, $M$, and $R$.
Moreover, there holds
\begin{align*}
n^{-\frac{2\alpha}{d+2\alpha (1+\delta)}}=n^{-\frac{2\alpha}{d+2\alpha}\cdot \frac{d+2\alpha}{d+2\alpha(1+\delta)}}=n^{-\frac{2\alpha}{d+2\alpha}\cdot (1-\frac{2\alpha\delta}{d+2\alpha(1+\delta)})}=n^{-\frac{2\alpha}{d+2\alpha}+\xi}
\end{align*}
where $\xi := \frac{4\alpha^2\delta}{(d+2\alpha)(d+2\alpha(1+\delta))} > 0$ can be arbitrarily small.
Thus, the assertion is proved.
\end{proof}

\begin{proof}[of Theorem \ref{thm::forest}]
According to Jensen's inequality, there holds
\begin{align*}
\biggl( \sum_{t=1}^T f_{\mathrm{D},H_t} - f_{L,\mathrm{P}}^* \biggr)^2
\leq T \sum_{t=1}^T (f_{\mathrm{D},H_t} - f_{L,\mathrm{P}}^*)^2
\end{align*}
and consequently we have
\begin{align*}
\mathcal{R}_{L, \mathrm{P}}(f_{\mathrm{D},T}) - \mathcal{R}_{L,\mathrm{P}}^* 
& = \int_{\mathcal{X}} \biggl( \frac{1}{T} \sum_{t=1}^T f_{\mathrm{D},H_t} - f_{L, \mathrm{P}}^* \biggr)^2 \, d\mathrm{P}_X
\\
& \leq \frac{1}{T} \sum_{t=1}^T \int_{\mathcal{X}} (f_{\mathrm{D},H_t} - f_{L, \mathrm{P}}^*)^2 \, d\mathrm{P}_X
\\
& = \frac{1}{T} \sum_{t=1}^T (\mathcal{R}_{L, \mathrm{P}}(f_{\mathrm{D},H_t}) - \mathcal{R}_{L,\mathrm{P}}^*).
\end{align*}
Then the union bound together with Theorem \ref{thm::tree} implies
\begin{align*}
& \nu_n 
\Bigl( \mathcal{R}_{L, \mathrm{P}}(f_{\mathrm{D},T}) - \mathcal{R}_{L,\mathrm{P}}^*  
\leq cn^{-\frac{2\alpha}{2 \alpha + d}+\xi} \Bigr) 
\\
& \geq 1 - \sum_{t=1}^T \mathrm{P} \otimes \mathrm{P}_H 
\Bigl( \mathcal{R}_{L, \mathrm{P}}(f_{\mathrm{D},H_t}) - \mathcal{R}_{L,\mathrm{P}}^*  
>c n^{-\frac{2\alpha}{2\alpha+d}+\xi} \Bigr)
\\
& \geq 1 - 3 T e^{-\tau}.
\end{align*}
As a result, we obtain
\begin{align*}
\mathcal{R}_{L, \mathrm{P}}(f_{\mathrm{D},T}) - \mathcal{R}_{L,\mathrm{P}}^* 
\leq c n^{-\frac{2\alpha}{2\alpha+d}+\xi}
\end{align*}
with probability $\nu_n$ at least $1 - 3 e^{-\tau}$, where $c$ is some constant depending on $c_0$, $\delta$, $d$, $M$, $R$, and $T$.
\end{proof}

\subsection{Proofs of Results  for NHT  in the space $C^{1,\alpha}$}

The following Lemma presents the explicit representation of $A_H(x)$ which will  play a key role later in the proofs of subsequent sections.

\begin{lemma}\label{binset}
Let the histogram transform $H$ be defined as in \eqref{HistogramTransform} 
and $A'_H$, $A_H$ be as in \eqref{TransBin} and \eqref{equ::InputBin} respectively. 
Then for any $x \in \mathbb{R}^d$, the set $A_H(x)$ can be represented as 
\begin{align*}
A_H(x) = \bigl\{ x + (R \cdot S)^{-1} z :  z \in [-b', 1 - b'] \bigr\},
\end{align*}
where $b' \sim \mathrm{Unif}(0, 1)^d$.
\end{lemma}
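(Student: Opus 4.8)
The plan is to unwind the two definitions \eqref{TransBin} and \eqref{equ::InputBin} and then perform a single linear change of variables, after which the only nontrivial point is a distributional one. First I would observe that, since the transformed bins are indexed by the round points $\lfloor H(x) \rfloor$, the transformed bin $A'_H(x)$ is exactly the half-open unit cube $\prod_{i=1}^d [\lfloor H(x)_i \rfloor, \lfloor H(x)_i \rfloor + 1)$ that contains $H(x)$.

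Next, I would use that $H$ is the affine bijection $H(x') = R S x' + b$, with inverse $x' = (RS)^{-1}(w-b)$ for $w := H(x')$; here $RS = H'$ is invertible since $\det(RS) = \prod_{i=1}^d s_i \neq 0$. By \eqref{equ::InputBin}, $x' \in A_H(x)$ iff $H(x') \in A'_H(x)$, i.e. iff $w$ lies in the above unit cube. Setting $z := w - H(x) = H(x') - H(x)$ and subtracting the identities $x = (RS)^{-1}(H(x)-b)$ and $x' = (RS)^{-1}(w-b)$ gives $x' = x + (RS)^{-1} z$. As $w$ ranges over $\prod_i [\lfloor H(x)_i\rfloor, \lfloor H(x)_i\rfloor + 1)$, the shift $z$ ranges over $\prod_i [-b'_i, 1 - b'_i)$, where $b'_i := H(x)_i - \lfloor H(x)_i \rfloor \in [0,1)$ is the fractional part of the $i$-th coordinate of $H(x)$. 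This already yields the claimed identity $A_H(x) = \{x + (RS)^{-1} z : z \in [-b', 1-b']\}$, modulo the harmless half-open-versus-closed boundary, which is a $\mathrm{P}_X$-null set and hence irrelevant for the later integral estimates.

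Finally, it remains to identify the distribution of $b'$. Here I would write $b'_i = (R S x)_i + b_i - \lfloor (RSx)_i + b_i \rfloor$ and exploit that $b = (b_i)_{i=1}^d$ is drawn from $\mathrm{Unif}[0,1]^d$ independently of $(R,S)$ and coordinatewise. Conditioning on $R$, $S$ and the fixed point $x$, each $(RSx)_i$ is a constant, and the fractional part of a constant plus a $\mathrm{Unif}[0,1]$ variable is again $\mathrm{Unif}[0,1)$ (mod-$1$ invariance of the uniform law); independence across $i$ then gives $b' \sim \mathrm{Unif}(0,1)^d$ conditionally, and therefore also unconditionally.

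The algebraic steps are routine linear algebra, so the only genuinely delicate point is this last distributional claim $b' \sim \mathrm{Unif}(0,1)^d$, which is precisely where the uniform choice of the translation vector $b$ and its independence from the rotation and stretching are used.
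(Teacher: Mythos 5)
Your proof is correct and follows essentially the same route as the paper's: define $b'$ as the fractional part of $H(x)$, change variables via $z := H(x') - H(x) = (R\cdot S)(x'-x)$, and note that the bin condition $\lfloor H(x')\rfloor = \lfloor H(x)\rfloor$ translates into $z \in [-b',1-b']$. The only difference is that you spell out two points the paper leaves implicit, namely the mod-$1$ invariance argument (conditioning on $R$, $S$ and using independence of $b$) for the claim $b' \sim \mathrm{Unif}(0,1)^d$, which the paper simply asserts ``according to the definition of $H$'', and the half-open versus closed boundary discrepancy, both of which you handle correctly.
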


\begin{proof}[of lemma \ref{binset}]
For any $x \in \mathbb{R}^d$, we define $b' := H(x) - \lfloor H(x) \rfloor \in \mathbb{R}^d$.
Then we have $b' \sim \mathrm{Unif}(0,1)^d$ according to the definition of $H$. 
For any $x' \in A'_H(x)$, we define
\begin{align*}
z := H(x') - H(x) = (R \cdot S) (x' - x).
\end{align*}
Then we have
\begin{align*}
x' = x + (R \cdot S)^{-1} z.
\end{align*}
Moreover, 
since $\lfloor H(x') \rfloor = \lfloor H(x) \rfloor$, we have $z \in [-b', 1 - b']$. 
\end{proof}

\subsubsection{Proofs Related to Section \ref{sec::AError3}}

\begin{proof}[of Proposition \ref{prop::biasterm}]
According to the generation process,
the histogram transforms $\{H_t\}_{t=1}^T$ are independent and identically distributed.
Therefore, for any $x \in B_R$, the expected approximation error term can be decomposed as follows:
\begin{align}
&\mathbb{E}_{\mathrm{P}_H}  \bigl(f_{\mathrm{P},\mathrm{E}}^*(x) - f_{L, \mathrm{P}}^*(x) \bigr)^2 
\nonumber
\\
&=\mathbb{E}_{\mathrm{P}_H}\bigl((f_{\mathrm{P},\mathrm{E}}^*(x)-\mathbb{E}_{\mathrm{P}_H}(f_{\mathrm{P},\mathrm{E}}^*(x)))+(\mathbb{E}_{\mathrm{P}_H}(f_{\mathrm{P},\mathrm{E}}^*(x))-f_{L,\mathrm{P}}^*(x)) \bigr)^2
\nonumber
\\
&=\mathrm{Var}(f_{\mathrm{P},\mathrm{E}}^*(x))+(\mathbb{E}_{\mathrm{P}_H}(f_{\mathrm{P},\mathrm{E}}^*(x))-f_{L,\mathrm{P}}^*(x))^2
\nonumber
\\
&= \frac{1}{T} \cdot \mathrm{Var}_{\mathrm{P}_H}(f_{\mathrm{P}, H_1}^*(x))+\bigl( \mathbb{E}_{\mathrm{P}_H} ( f_{\mathrm{P},H_1}^*(x) ) - f_{L, \mathrm{P}}^*(x) \bigr)^2.
\label{equ::biasvarianceDecom}
\end{align}
In the following, 
for the simplicity of notations, we drop the subscript of $H_1$ and write $H$ instead of $H_1$ when there is no confusion.

For the first term in \eqref{equ::biasvarianceDecom}, the assumption $f_{L,\mathrm{P}}^* \in C^{1,\alpha}$ implies
\begin{align}\label{equ::first}
\mathrm{Var}_{\mathrm{P}_H} \bigl( f_{\mathrm{P},H}^*(x) \bigr)
&=\mathbb{E}_{\mathrm{P}_H}(f_{\mathrm{P},H}^*(x)-\mathbb{E}_{\mathrm{P}_H}(f_{\mathrm{P},H}^*(x)))^2
\nonumber
\\
&\leq \mathbb{E}_{\mathrm{P}_H} \bigl( f_{\mathrm{P},H}^*(x) - f_{L, \mathrm{P}}^*(x) \bigr)^2
\nonumber
\\
&=\mathbb{E}_{\mathrm{P}_H} \biggl( \frac{\int_{A_H(x)} f_{L, \mathrm{P}}^*(x') \, dx'}{\mu(A_H(x))} - f_{L, \mathrm{P}}^*(x) \biggr)^2
\nonumber
\\
&=\mathbb{E}_{\mathrm{P}_H}\biggl(\frac{\int_{A_H(x)} f_{L, \mathrm{P}}^*(x') - f_{L, \mathrm{P}}^*(x) \, dx'}{\mu(A_H(x))} \biggr)^2
\nonumber
\\
&\leq \mathbb{E}_{\mathrm{P}_H} \bigl( c_L \mathrm{diam} \bigl( A_H(x) \bigr) \bigr)^2
\nonumber
\\
&\leq c_L^2 d \overline{h}_0^2.
\end{align}

We now consider the second term in \eqref{equ::biasvarianceDecom}. Lemma \ref{binset} implies that for any $x' \in A_H(x)$, there exist a random vector $u \sim \mathrm{Unif}[0,1]^d$ and a vector $v \in [0,1]^d$ such that 
\begin{align} \label{xPrimex}
x' = x + S^{-1} R^{\top} (- u + v).
\end{align}
Therefore, we have
\begin{align}
dx' = \det \biggl( \frac{dx'}{dv} \biggr) dv
& = \det \biggl( \frac{d(x + S^{-1} R^{\top}(- u + v))}{dv} \biggr) dv
\nonumber\\
& = \det (R S^{-1}) dv
= \biggl( \prod_{i=1}^d h_i \biggr) dv.
\label{JacobiTrans}
\end{align}
Taking the first-order Taylor expansion of $f_{L,\mathrm{P}}^*(x')$ at $x$, we get
\begin{align} \label{TaylorExpansion}
f_{L,\mathrm{P}}^*(x') - f_{L,\mathrm{P}}^*(x) = \int_0^1 \bigl( \nabla f_{L,\mathrm{P}}^*(x + t(x' - x)) \bigr)^{\top} (x' - x) \, dt. 
\end{align}
Moreover, we obviously have
\begin{align} \label{Trivial}
\nabla f_{L,\mathrm{P}}^*(x)^{\top} (x' - x)
= \int_0^1 \nabla f_{L,\mathrm{P}}^*(x)^{\top} (x' - x) \, dt. 
\end{align}
Thus, \eqref{TaylorExpansion} and \eqref{Trivial} imply that for any $f_{L,\mathrm{P}}^* \in C^{1, \alpha}$, there holds
\begin{align*}
&\bigl| f_{L,\mathrm{P}}^*(x') - f_{L,\mathrm{P}}^*(x) - \nabla f_{L,\mathrm{P}}^*(x)^{\top} (x' - x) \bigr|
\\
&= \biggl| \int_0^1 \bigl( \nabla f_{L,\mathrm{P}}^*(x + t(x' - x)) - \nabla f_{L,\mathrm{P}}^*(x) \bigr)^{\top} (x' - x) \, dt \biggr|
\\
& \leq \int^1_0 c_L (t \|x' - x\|_2)^{\alpha} \|x' - x\|_2 \, dt
\\
& \leq c_L \|x' - x\|^{1+\alpha}. 
\end{align*}
This together with \eqref{xPrimex} yields
\begin{align*}
\bigl| f_{L,\mathrm{P}}^*(x') - f_{L,\mathrm{P}}^*(x) - \nabla f_{L,\mathrm{P}}^*(x)^{\top} S^{-1} R^{\top} (- u + v) \bigr|
\leq c_L \overline{h}_0^{1+\alpha}
\end{align*}
and consequently there exists a constant $c_{\alpha} \in [-c_L, c_L]$ such that
\begin{align} \label{TaylorEntwicklung}
f_{L,\mathrm{P}}^*(x') - f_{L,\mathrm{P}}^*(x) = \nabla f_{L,\mathrm{P}}^*(x)^{\top} S^{-1}R^{\top} (- u + v) + c_{\alpha} \overline{h}_0^{1+\alpha}.
\end{align}
Therefore, there holds
\begin{align*}
f_{\mathrm{P},H}^*(x) = \frac{1}{\mathrm{P}_X(A_H(x))}\int_{A_H(x)} f_{L,\mathrm{P}}^*(x')\ dx' = \frac{1}{\mu(A_H(x))}\int_{A_H(x)} f_{L,\mathrm{P}}^*(x')\ dx'.
\end{align*}
This together with \eqref{TaylorEntwicklung} and \eqref{JacobiTrans} yields
\begin{align}
f_{\mathrm{P},H}^*(x) - f_{L,\mathrm{P}}^*(x)
& = \frac{1}{\mu(A_H(x))} \int_{A_H(x)} f_{L,\mathrm{P}}^*(x') \, dx' - f_{L,\mathrm{P}}^*(x)
\nonumber\\
& = \frac{1}{\mu(A_H(x))} \int_{A_H(x)} \bigl( f_{L,\mathrm{P}}^*(x') - f_{L,\mathrm{P}}^*(x) \bigr) \, dx'
\nonumber\\
& = \frac{\prod_{i=1}^d h_i}{\mu(A_H(x))}  
\int_{[0,1]^d} \Bigl( \nabla f_{L,\mathrm{P}}^*(x)^{\top} S^{-1} R^{\top} (- u + v) + c_{\alpha} \overline{h}_0^{1+\alpha} \Bigr) \, dv
\nonumber\\
& = \bigg( \int_{[0,1]^d} (- u + v)^{\top} \, dv \biggr) R S^{-1} \nabla f_{L,\mathrm{P}}^*(x) + c_{\alpha} \overline{h}_0^{1+\alpha}
\nonumber\\
& = \biggl( \frac{1}{2} - u \biggr)^{\top} R S^{-1} \nabla f_{L,\mathrm{P}}^*(x) + c_{\alpha} \overline{h}_0^{1+\alpha}.
\label{StepOne}
\end{align}
Since the random variables $(u_i)_{i=1}^d$ are independent and identically distributed as $\mathrm{Unif}[0, 1]$, we have
\begin{align} \label{CrossTermPropertyy1}
\mathbb{E}_{\mathrm{P}_H} \bigg(\frac{1}{2}-u_i \bigg) = 0,
\qquad \qquad
i = 1, \ldots, d.
\end{align}
Combining \eqref{StepOne} with \eqref{CrossTermPropertyy1}, we obtain
\begin{align}
\mathbb{E}_{\mathrm{P}_H} ( f_{\mathrm{P},H}^*(x) - f_{L,\mathrm{P}}^*(x) )
= 0 + c_{\alpha} \overline{h}_0^{1+\alpha}
= c_{\alpha} \overline{h}_0^{1+\alpha}
\end{align}
and consequently
\begin{align}\label{equ::biasbound}
\bigl( \mathbb{E}_{\mathrm{P}_H} ( f_{\mathrm{P},H_1}^*(x) ) - f_{L,\mathrm{P}}^*(x) \bigr)^2
\leq c_L^2 \overline{h}_0^{2(1+\alpha)}.
\end{align}
Combining \eqref{equ::biasvarianceDecom} with \eqref{equ::biasbound} and \eqref{equ::first}, we obtain
\begin{align*}
\mathbb{E}_{\mathrm{P}_H} \bigl( f_{\mathrm{P},\mathrm{E}}^*(x) - f_{L,\mathrm{P}}^*(x) \bigr)^2
\leq c_L^2 \overline{h}_0^{2(1+\alpha)} + \frac{1}{T} \cdot d c_L^2 \overline{h}_0^2,
\end{align*}
which completes the proof.
\end{proof}

\subsubsection{Proofs Related to Section \ref{sec::SError3}}

\begin{proof}[of Lemma \ref{lem::VCreg}]
The choice $k := \lfloor \frac{2R\sqrt{d}}{\underline{h}_0} \rfloor+1$ leads to the partition of $B_R$ of the form $\pi_k := \{ A_{i_1, \ldots, i_d} \}_{i_j = 1,\ldots,k}$ with
\begin{align} \label{def::cells2}
A_{i_1, \ldots, i_d} 
:= \prod_{j=1}^d A_{i_j}
:= \prod_{j=1}^d \biggl[ - R + \frac{2R(i_j-1)}{k}, -R+\frac{2Ri_j}{k} \biggr).
\end{align}
Obviously, we have $|A_{i_j}| \leq \frac{\underline{h}_0}{\sqrt{d}}$. Let $D$ be a data set of the form 
\begin{align*}
D := \{ (x_i, t_i) : x_i \in B_R, t_i \in [-M, M], i = 1, \cdots, \#(D) \}
\end{align*} 
and
\begin{align*}
\#(D) = \big(2(d+1)(2^{d} - 1) + 2) \biggl( \biggl\lfloor \frac{2R\sqrt{d}}{\underline{h}_0} \biggr\rfloor + 1 \biggr)^d. 
\end{align*}
Then there exists at least one cell $A$ with 
\begin{align} \label{DcapANo2}
\#(D \cap (A\times [-M,M])) \geq 2(d+1)(2^{d}-1)+2.
\end{align}
Moreover, for any $x, x' \in A$, the construction of the partition \eqref{def::cells2} implies $\|x - x'\| \leq \underline{h}_0$.
Consequently, at most one vertex of $A_j$ induced by histogram transform $H$ lies in $A$, since the bin width of $A_j$ is larger than $\underline{h}_0$. The VC dimension of $\mathcal{F}_H$ represents the largest number of points can be shattered by 
\begin{align*}
\big\{\{(x, t): t \leq f(x)\}, f\in \mathcal{F}_H\big\},
\end{align*}
which is the subset of the collection 
\begin{align*}
\Pi'_h := \biggl\{
\bigcup_{j \in \mathcal{I}_H} \{(x,t): x\in A_j, a_j(c_j - t) \leq 0 \} : (a_j)_{j\in\mathcal{I}_H}  \in \{-1,1\}^{\mathcal{I}_H}, \pi_H \in \Pi_h \biggr\}.
\end{align*}
Obviously, the restriction of $\Pi'_h$ on the set $A \times [-M, M]$, that is,
\begin{align*}
{\Pi'_h}_{|A\times [-M,M]} := \{B \cap (A\times [-M,M]) : B \in \Pi'_h\}
\end{align*}
forms a partition of $A \times [-M,M]$ with cardinality 
$\#( {\Pi'_h}_{|A\times [-M,M]}) \leq 2^{d+1}$,
which can be generated by $2(2^d-1)$ splitting hyperplanes. In this way, Lemma \ref{VCindexPre} implies that ${\Pi_h}_{|A\times [-M,M]}$ can only shatter a dataset with at most $2(d+1)(2^d-1)+1$ elements.
However, \eqref{DcapANo} indicates that
$D \cap (A\times [-M,M])$ has at least $2(d+1)(2^d-1)+2$ elements and consequently ${\Pi'_h}_{|A\times [-M,M]}$ fails to shatter $D \cap (A\times [-M,M])$. Therefore,
the data set $D$ cannot be shattered by $\Pi'_h$. By Definition \ref{def::VCdimension}, we then have 
\begin{align*} 
\mathrm{VC}(\Pi'_h) \leq 
(2(d+1)(2^d-1)+2) \biggl( \biggl\lfloor \frac{2R\sqrt{d}}{\underline{h}_0} \biggr\rfloor+1 \biggr)^d
\end{align*}
and thus the first assertion is proved.

For the second assertion, we find 
\begin{align*} 
(2(d+1)(2^d-1)+2) \biggl( \biggl\lfloor \frac{2R\sqrt{d}}{\underline{h}_0} \biggr\rfloor+1 \biggr)^d
& \leq \bigl( 2(d+1)(2^d-1) + 2 \bigr) (2 R \sqrt{d} / \underline{h}_0 + 1)^d 
\\
& \leq \bigl( (d+1)2^{d+1} \bigr) (3 R \sqrt{d}/\underline{h}_0)^d
\\
& \leq 2 d \cdot 2^{d+1} (3 R \sqrt{d}/\underline{h}_0)^d
\\
& =: 2(c_d R/\underline{h}_0)^d,
\end{align*}
where the constant $c_d := 3 \cdot 2^{1+\frac{1}{d}} \cdot d^{\frac{1}{d}+\frac{1}{2}}$. 
Then Theorem 2.6.7 in \cite{vandervart1996weak} yields 
\begin{align*}
\mathcal{N}(\mathcal{F}_H, L_2(\mathrm{Q}), M \varepsilon) 
\leq 2K(c_d R/\overline{h}_0)^d(16e)^{2(c_d R/\overline{h}_0)^d}(1/\varepsilon)^{4(c_d R/\overline{h}_0)^d},
\end{align*}
which proves the second assertion and thus completes the proof.
\end{proof}

The following lemma follows directly from Theorem 2.6.9 in \cite{vandervart1996weak}. 
For the sake of completeness, we present the proof.

\begin{lemma}\label{thm::vart}
Let $Q$ be a probability measure on $X$ and 
\begin{align*}
\mathcal{F} := \bigl\{ f : X \to \mathbb{R} : f\in [-M, M] \text{ and } \|f\|_{L_2(\mathrm{Q})} < \infty \bigr\}.
\end{align*}
Assume that for some fixed $\varepsilon > 0$ and $v > 0$, the covering number of $\mathcal{F}$ satisfies
\begin{align} \label{CoverAssumption}
\mathcal{N}(\mathcal{F}, L_2(\mathrm{Q}), M \varepsilon)
\leq c (1/\varepsilon)^v.
\end{align}
Then there exists a universal constant $c$ such that
\begin{align*}
\log \mathcal{N}(\mathrm{Co}(\mathcal{F}), L_2(\mathrm{Q}), M \varepsilon)
\leq c' \, c^{-2/(v+2)} \varepsilon^{-2v/(v+2)}.
\end{align*}
\end{lemma}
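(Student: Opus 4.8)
The plan is to follow the classical Maurey-type empirical approximation argument underlying Theorem 2.6.9 in \cite{vandervart1996weak}, combined with a multiscale refinement that produces the sharp exponent $2v/(v+2)$ rather than the naive exponent $2$. The starting observation is that any $g \in \mathrm{Co}(\mathcal{F})$ can be written as $g = \sum_i \lambda_i f_i$ with $\lambda_i \geq 0$, $\sum_i \lambda_i = 1$, and $f_i \in \mathcal{F}$, so that $g = \mathbb{E}_{\lambda} Y$ for the $[-M,M]$-valued random function $Y$ that equals $f_i$ with probability $\lambda_i$. First I would establish the one-step (Maurey) approximation: drawing $k$ i.i.d.\ copies $Y_1, \ldots, Y_k$ of $Y$ and setting $\bar{Y}_k := \frac{1}{k} \sum_{j=1}^k Y_j$, Fubini and $|Y| \leq M$ give
\begin{align*}
\mathbb{E}_{\lambda} \| \bar{Y}_k - g \|_{L_2(\mathrm{Q})}^2
= \frac{1}{k} \bigl( \mathbb{E}_{\lambda} \|Y\|_{L_2(\mathrm{Q})}^2 - \|g\|_{L_2(\mathrm{Q})}^2 \bigr)
\leq \frac{M^2}{k},
\end{align*}
so some deterministic realization satisfies $\|\bar{Y}_k - g\|_{L_2(\mathrm{Q})} \leq M/\sqrt{k}$; that is, every $g \in \mathrm{Co}(\mathcal{F})$ lies within $M/\sqrt{k}$ of an average of $k$ (not necessarily distinct) members of $\mathcal{F}$.

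Next I would discretize and count. Fixing a minimal net $\{g_1, \ldots, g_N\}$ of $\mathcal{F}$ in $L_2(\mathrm{Q})$ with $N = \mathcal{N}(\mathcal{F}, L_2(\mathrm{Q}), M\delta) \leq c\,\delta^{-v}$ by hypothesis \eqref{CoverAssumption}, and replacing each sampled function by a nearest net point, the average $\bar{Y}_k$ is approximated to within an additional $\delta M$ by an average of $k$ net points. Since the number of such $k$-fold averages (multisets of size $k$ drawn from $N$ points) is at most $N^k$, one obtains
\begin{align*}
\log \mathcal{N}\bigl( \mathrm{Co}(\mathcal{F}), L_2(\mathrm{Q}), M/\sqrt{k} + \delta M \bigr)
\leq k \log N
\leq k \bigl( \log c + v \log(1/\delta) \bigr).
\end{align*}
Balancing the two error contributions at a single scale, namely $\delta \asymp k^{-1/2} \asymp \varepsilon$, would already yield $\log \mathcal{N}(\mathrm{Co}(\mathcal{F}), L_2(\mathrm{Q}), M\varepsilon) \lesssim \varepsilon^{-2} \log(1/\varepsilon)$, which is too weak and accounts for the gap to the claimed bound.

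The crux, and the main obstacle, is the multiscale refinement that upgrades the exponent from $2$ to $2v/(v+2)$. Rather than approximating the $k$ sampled functions at a single fine resolution, I would expand each along a geometric sequence of scales $\delta_{\ell} = 2^{-\ell}$, at each level replacing a function by a net point at scale $\delta_{\ell}$ and recursing on the residual with a decreasing budget of terms $k_{\ell}$. Writing the total approximation error as a telescoping sum controlled by $\sum_{\ell} M/\sqrt{k_{\ell}}$ and the total log-cardinality as $\sum_{\ell} k_{\ell} \log \mathcal{N}(\mathcal{F}, L_2(\mathrm{Q}), M\delta_{\ell}) \lesssim \sum_{\ell} k_{\ell}\,\ell$ via \eqref{CoverAssumption}, one optimizes the schedule $\{k_{\ell}\}$ against the target accuracy $M\varepsilon$. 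Solving the resulting balance, equivalently the entropy recursion of Theorem 2.6.9 in \cite{vandervart1996weak}, yields $\log \mathcal{N}(\mathrm{Co}(\mathcal{F}), L_2(\mathrm{Q}), M\varepsilon) \lesssim \varepsilon^{-2v/(v+2)}$, with the universal constant $c'$ emerging from the geometric sums and depending only on $c$ and $v$. I expect the main bookkeeping effort to lie in verifying that the residual classes inherit the polynomial covering bound \eqref{CoverAssumption} at each scale and that the geometric series converge with the claimed exponent, which is precisely the delicate step that separates the sharp rate from the naive single-scale estimate.
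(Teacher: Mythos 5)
Your skeleton --- Maurey's empirical approximation plus a multiscale telescoping over nets of $\mathcal{F}$ --- is the right family of ideas, and it is indeed the skeleton of the paper's proof (which reproduces Theorem 2.6.9 of \cite{vandervart1996weak}). But the step you defer to ``bookkeeping'' is exactly where the argument breaks. With your counting, in which the approximants at scale $\delta_\ell$ are counted as ordered $k_\ell$-tuples from a net of cardinality $N_\ell$ (cost $\sum_\ell k_\ell \log N_\ell \lesssim \sum_\ell k_\ell\,\ell$), the exponent $2v/(v+2)$ is unreachable. Concretely: the per-level errors are nonnegative, so already the coarsest refinement level, whose residuals have norm of order $M$ (not of order $M\delta_\ell$) and whose net has at least two elements, must satisfy $M/\sqrt{k_1}\lesssim M\varepsilon$; this forces $k_1\gtrsim \varepsilon^{-2}$ and hence a cost $k_1\log N_1\gtrsim \varepsilon^{-2}$ in your count. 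More generally, minimizing $\sum_\ell k_\ell\,\ell$ subject to $\sum_\ell \delta_\ell k_\ell^{-1/2}\leq\varepsilon$ (even granting the correct residual norms $M\delta_\ell$, which your stated error bound $\sum_\ell M/\sqrt{k_\ell}$ actually drops) gives, by a Lagrange computation, a value of order $\varepsilon^{-2}\bigl(\sum_\ell \ell^{1/3}\delta_\ell^{2/3}\bigr)^3\asymp\varepsilon^{-2}$, since that series converges. So the multiscale refinement as you describe it removes only the logarithm from your single-scale bound; it never improves the exponent $2$.

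The missing ingredient --- and the actual crux of the paper's proof --- is a finite-dimensional (volumetric) covering of the convex hull of the coarse nets, used there \emph{instead of} sampling. In the paper, $\mathrm{Co}(\mathcal{F}_m)$ with $\#(\mathcal{F}_m)\leq m$ is covered by induction at a coarse scale, and then each cell, being isometric to a subset of a ball of radius $c_1M'm^{-v'}$ in $\mathbb{R}^m$ (with $v'=1/2+1/v$), is refined to the target scale at a cost of at most $(6(n/m)^{v'})^{m}$ cells, i.e.\ exponential in the \emph{dimension} $m$ rather than in the Maurey sample size; the Maurey bound (Lemma 2.6.11 of \cite{vandervart1996weak}) is applied only to the residual class $\mathcal{G}_n$, whose members have the small norm $M'm^{-1/v}$. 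It is exactly this interplay that makes the crossover scale $\delta^*\asymp\varepsilon^{2/(v+2)}$ --- where the Maurey budget $(\delta^*/\varepsilon)^2$ and the net cardinality $(\delta^*)^{-v}$ both equal $\varepsilon^{-2v/(v+2)}$ --- dominate both series, and the outer induction over $k$ keeps all constants summable. If you prefer to stay in your sampling language, the repair is to count unordered multisets at the coarse scales, $\binom{N_\ell+k_\ell-1}{k_\ell}\leq \bigl(e(N_\ell+k_\ell)/N_\ell\bigr)^{N_\ell}$, which is the volumetric bound in disguise; only with that replacement does optimizing your schedule $\{k_\ell\}$ produce $\varepsilon^{-2v/(v+2)}$.
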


\begin{proof}[of Lemma \ref{thm::vart}]
Let $\mathcal{F}_{\varepsilon}$ be an $\varepsilon$-net over $\mathcal{F}$. 
Then, for any $f \in \mathrm{Co}(\mathcal{F})$, there exists an $f_{\varepsilon} \in \mathrm{Co}(\mathcal{F}_{\varepsilon})$
such that $\|f - f_{\varepsilon}\|_{L_2(\mathrm{Q})} \leq \varepsilon$.
Therefore, we can assume without loss of generality that $\mathcal{F}$ is finite.

Obviously, \eqref{CoverAssumption} holds for $1 \leq \varepsilon \leq c^{1/v}$.
Let $v' := 1/2 + 1/v$ and $M' := c^{1/v}M$. Then \eqref{CoverAssumption} implies that
for any $n \in \mathbb{N}$, there exists $f_1, \ldots, f_n \in \mathcal{F}$ such that
for any $f \in \mathcal{F}$, there exists an $f_i$ such that
\begin{align*}
\|f - f_i\|_{L_2(\mathrm{Q})} \leq M' n^{-1/v}.
\end{align*}
Therefore, for each $n \in \mathbb{N}$, we can find
sets $\mathcal{F}_1 \subset \mathcal{F}_2 \subset \cdots \subset \mathcal{F}$ such that 
the set $\mathcal{F}_n$ is a $M' n^{-1/v}$-net over $\mathcal{F}$ and $\#(\mathcal{F}_n) \leq n$.

In the following, we show by induction that for $q \geq 3 + v$, there holds
\begin{align}\label{eq::main}
\log \mathcal{N} \bigl( \mathrm{Co}(\mathcal{F}_{nk^q}), L_2(\mathrm{Q}), c_k M' n^{-v'} \bigr)
\leq c'_k n,
\qquad 
n, k \geq 1,
\end{align}
where $c_k$ and $c'_k$ are constants depending only on $c$ and $v$ such that $\sup_k \max \{ c_k, c'_k \} < \infty$.
The proof of \eqref{eq::main} will be conducted by a nested induction argument.

Let us first consider the case $k = 1$. For a fixed $n_0$, let $n \leq n_0$. Then for $c_1$ satisfying $c_1 M' n_0^{-v'} \geq M$, there holds 
\begin{align*}
\log \mathcal{N} \bigl( \mathrm{Co}(\mathcal{F}_{nk^q}), L_2(\mathrm{Q}), c_k M' n^{-v'} \bigr) = 0,
\end{align*}
which immediately implies \eqref{eq::main}. For a general $n \in \mathbb{N}$, let $m := n/\ell$ for large enough $\ell$ to be chosen later. Then for any $f \in \mathcal{F}_n \setminus \mathcal{F}_m$, there exists an $f^{(m)} \in \mathcal{F}_m$ such that
\begin{align*}
\|f - f^{(m)}\|_{L_2(\mathrm{Q})} \leq M' m^{-1/v}.
\end{align*}
Let $\pi_m : \mathcal{F}_n \setminus \mathcal{F}_m \to \mathcal{F}_m$ be the projection operator. Then for any $f \in \mathcal{F}_n \setminus \mathcal{F}_m$, there holds
\begin{align*}
\|f - \pi_m f\|_{L_2(\mathrm{Q})} \leq M' m^{-1/v}
\end{align*}
and consequently for $\lambda_i, \mu_j \geq 0$ and $\sum_{i=1}^n \lambda_i = \sum_{j=1}^m \mu_j = 1$, we have
\begin{align*}
\sum_{i=1}^n \lambda_i f^{(n)}_i 
= \sum_{j=1}^m \mu_j f^{(m)}_j 
+ \sum_{k=m+1}^n \lambda_k \bigl( f^{(n)}_k - \pi_m f^{(n)}_k \bigr).
\end{align*}
Let $\mathcal{G}_n$ be the set 
\begin{align*}
\mathcal{G}_n := \{ 0 \} \cup \{ f - \pi_m f : f \in \mathcal{F}_n \setminus \mathcal{F}_m \}.
\end{align*}
Then we have $\#(\mathcal{G}_n) \leq n$ and for any $g \in \mathcal{G}_n$, there holds
\begin{align*}
\|g\|_{L_2(\mathrm{Q})} 
\leq M'm^{-1/v}.
\end{align*}
Moreover, we have
\begin{align} \label{SpaceSplit}
\mathrm{Co}(\mathcal{F}_n) \subset \mathrm{Co}(\mathcal{F}_m) + \mathrm{Co}(\mathcal{G}_n).
\end{align}

Applying Lemma 2.6.11 in \cite{vandervart1996weak} with $\varepsilon := \frac{1}{2} c_1 m^{1/v} n^{-v'}$ 
to $\mathcal{G}_n$, we can find a $\frac{1}{2}c_1 M'n^{-v'}$-net over $\mathrm{Co}(\mathcal{G}_n)$ consisting of at most 
\begin{align} \label{CapacityCoGn}
(e + e n \varepsilon^2)^{2/\varepsilon^2} 
\leq \biggl( e + \frac{e c_1^2}{\ell^{2/v}} \biggr)^{8 \ell^{2/v} c_1^{-2} n}
\end{align}
elements.

Suppose that \eqref{eq::main} holds for $k = 1$ and $n = m$.
In other words, there exists a $c_1 M' m^{-v'}$-net over $\mathrm{Co}(\mathcal{F}_m)$ consisting of at most $e^m$ elements, which partitions $\mathrm{Co}(\mathcal{F}_m)$ into $m$-dimensional cells of diameter at most $2 c_1 M' m^{-v'}$. 
Each of these cells can be isometrically identified with a subset of a ball of radius $c_1 M' m^{-v'}$ in $\mathbb{R}^m$
and can be therefore further partitioned into
\begin{align*}
\bigg(\frac{3c_1 M' m^{-v'}}{\frac{1}{2} c_1 M' n^{-v'}} \bigg)^m = (6\ell^{v'})^{n/\ell}
\end{align*}
cells of diameter $\frac{1}{2}c_1 M' n^{-v'}$. As a result, we get a $\frac{1}{2}c_1 M' n^{-v'}$-net of $\mathrm{Co}(\mathcal{F}_m)$ containing at most 
\begin{align} \label{CapacityCoFm}
e^m \cdot (6\ell^{v'})^{n/\ell}
\end{align}
elements.

Now, \eqref{SpaceSplit} together with \eqref{CapacityCoGn} and \eqref{CapacityCoFm}
yields that there exists a $c_1 M' n^{-v'}$-net of $\mathrm{Co}(\mathcal{F}_n)$ 
whose cardinality can be bounded by 
\begin{align*}
e^{n/\ell} \bigl( 6 \ell^{v'} \bigr)^{n/\ell} 
\biggl( e + \frac{e c_1^2}{\ell^{2/v}} \biggr)^{8 \ell^{2/v} c_1^{-2} n}
\leq e^n,
\end{align*}
for suitable choices of $c_1$ and $\ell$ depending only on $v$. 
This concludes the proof of \eqref{eq::main} for $k=1$ and every $n \in \mathbb{N}$.

Let us consider a general $k \in \mathbb{N}$. 
Similarly as above, there holds
\begin{align} \label{SpaceSplitGeneral}
\mathrm{Co}(\mathcal{F}_{nk^q}) \subset \mathrm{Co}(\mathcal{F}_{n(k-1)^q}) + \mathrm{Co}(\mathcal{G}_{n,k}),
\end{align}
where the set $\mathcal{G}_{n,k}$ contains at most $n k^q$ elements with norm smaller than $M'(n(k-1)^q)^{-1/v}$.
Applying Lemma 2.6.11 in \cite{vandervart1996weak} to $\mathcal{G}_{n,k}$, we can find an $M'k^{-2}n^{-v'}$-net over $\mathrm{Co}(\mathcal{G}_{n,k})$ consisting of at most 
\begin{align} \label{CapacityCoGnk}
\bigl( e + e k^{2q/v-4+q} \bigr)^{2^{2q/v+1}k^{4-2q/v}n}
\end{align}
elements. Moreover, by the induction hypothesis, we have a $c_{k-1}M'n^{-v'}$-net over $\mathrm{Co}(\mathcal{F}_{n(k-1)^q})$ consisting of at most 
\begin{align} \label{CapacityCoFnk-1q}
e^{c'_{k-1}n}
\end{align}
elements. Using \eqref{SpaceSplitGeneral}, \eqref{CapacityCoGnk}, and \eqref{CapacityCoFnk-1q},
we obtain a $c_k M' n^{-v'}$-net over $\mathrm{Co}(\mathcal{F}_{nk^q})$ consisting of at most $e^{c'_k n}$ elements, where
\begin{align*}
c_k &= c_{k-1} + \frac{1}{k^2},
\\
c'_k &= c'_{k-1} + 2^{2q/v+1}\frac{1+\log(1+k^{2q/v-4+q})}{k^{2q/v-4}}.
\end{align*}
Form the elementary analysis we know that if $2q/v - 5 = 2$, 
then there exist constants $c''_1$, $c''_2$, and $c''_3$ such that
\begin{align*}
\lim_{k \to \infty} c_k 
& = c^{-1/v} n_0^{(v+2)/2v} + \sum_{i=2}^{\infty} 1/i^2 
\leq c''_1 c^{-1/v} + c''_2,
\\
\lim_{k \to \infty} c'_k 
& = 1 + c \sum_{i=1}^{\infty} 2 (2/i)^{2q/v}i^5 
\leq c''_3.
\end{align*}
Thus \eqref{eq::main} is proved. 
Taking $\varepsilon := c_k M' n^{-v'} / M$ in \eqref{eq::main}, we get 
\begin{align*}
\log \mathcal{N} ( \mathrm{Co}(\mathcal{F}_{nk^q}), L_2(\mathrm{Q}), M \varepsilon)
\leq c'_k c_k^{1/v'} (M')^{1/v'}M^{-1/v'}\varepsilon^{-1/v'}.
\end{align*}
This together with $(M')^{1/v'} = c^{2v/(v+2)} M \leq c^2M$ yields
\begin{align*}
\log \mathcal{N} ( \mathrm{Co}(\mathcal{F}), L_2(\mathrm{Q}), M \varepsilon)
\leq c' c^{-2/(v+2)}\varepsilon^{-2v/(v+2)},
\end{align*}
where the constant $c'$ depends on the constants $c''_1$, $c''_2$ and $c''_3$.
This completes the proof.
\end{proof}

\begin{proof}[of Lemma \ref{conv}]
Lemma \ref{lem::VCreg} tells us that for any probability measure $\mathrm{Q}$, there holds
\begin{align*}
\mathcal{N}(\mathcal{F}_H, L_2(\mathrm{Q}), M \varepsilon) \leq 2K(c_d R/\overline{h}_0)^d(16e)^{2(c_d R/\overline{h}_0)^d}(1/\varepsilon)^{4(c_d R/\overline{h}_0)^d}.
\end{align*}
Consequently, for any $\varepsilon \in (0, 1/\max \{e, 2K\})$, we have
\begin{align*}
& \log \mathcal{N}(\mathcal{F}_H, \|\cdot\|_{L_2(\mathrm{D})}, M \varepsilon) 
\\
& \leq \log \Bigl( 2K(c_d R/\overline{h}_0)^d(16e)^{2(c_d R/\overline{h}_0)^d}(1/\varepsilon)^{4(c_d R/\overline{h}_0)^d}\Bigr)
\\
& = \log 2K 
+ d \log (c_d R/\underline{h}_0) 
+ 2 (c_d R/\underline{h}_0)^d \log (16e)
+ 4(c_d R/\underline{h}_0)^d \log(1/\varepsilon)
\\
& \leq 16 (c_d R/\underline{h}_0)^d \log (1/\varepsilon),
\end{align*}
where the last inequality is based on the following basic inequalities:
\begin{align*}
\log 2K 
& \leq \log (1/\varepsilon) 
\leq (c_d R/\underline{h}_0)^d \log (1/\varepsilon),
\\
d \log (c_d R/\underline{h}_0) 
& \leq (c_d R/\underline{h}_0)^d
\leq (c_d R/\underline{h}_0)^d \log (1/\varepsilon),
\\
(c_d R/\underline{h}_0)^d \log (16 e) 
& \leq (c_d R/\underline{h}_0)^d \log(e^5) 
\leq 5 (c_d R/\underline{h}_0)^d 
\leq 5 (c_d R/\underline{h}_0)^d \log (1/\varepsilon).
\end{align*}
Consequently, for all $\delta \in (0,1)$, we have
\begin{align}
\mathcal{N}(\mathcal{F}_H, \|\cdot\|_{L_2(\mathrm{D})}, \varepsilon) \leq (1/\varepsilon)^{16 (c_d R/\underline{h}_0)^d}.
\end{align} 
Applying Lemma \ref{thm::vart} with $v = \mathrm{VC}(\mathrm{Co}(\mathcal{F}_H))$, we then have 
\begin{align}
\log \mathcal{N} \bigl( \mathrm{Co}(\mathcal{F}_H), L_2(\mathrm{Q}), M \varepsilon \bigr) 
& \leq K (1/\varepsilon)^{2v/(v+2)} 
\nonumber\\
& \leq K (1/\varepsilon)^{2 - 4/(16(c_d R/\underline{h}_0)^d+2)}
\nonumber\\
& = K (1/\varepsilon)^{2 - 2/(8(c_d R/\underline{h}_0)^d+1)},
\label{equ::convexhulllogcovering}
\end{align}
which proves the assertion.
\end{proof}

\begin{proof}[of Proposition \ref{thm::OracleForest}]
Denote
\begin{align*}
r_c^* := \inf_{f \in \mathrm{Co}(\mathcal{F}_H)} \lambda\underline{h}_0^{-2d}+\mathcal{R}_{L,\mathrm{P}}(f)-\mathcal{R}_{L,\mathrm{P}}^*,
\end{align*}
and for $r > r_c^*$, we write
\begin{align*}
\mathcal{F}_r^c &:= \{f \in \mathrm{Co}(\mathcal{F}_H) : \lambda\underline{h}_0^{-2d}+\mathcal{R}_{L,\mathrm{P}}(f)-\mathcal{R}_{L,\mathrm{P}}^* \leq r \},
\\
\mathcal{H}_r^c &:= \{L\circ f - L\circ f_{L,\mathrm{P}}^* : f \in \mathcal{F}_r^c \}.
\end{align*}
Let $\delta := 1/(8(c_d R/\underline{h}_0)^d+1)$,
$\delta' := 1 - \delta$, and $a := K^{1/(2\delta')}M$. Then \eqref{equ::convexhulllogcovering} implies 
\begin{align*}
\log\mathcal{N}(\mathcal{H}_r^c, L_2(\mathrm{Q}), \varepsilon) 
&\leq \log\mathcal{N}(\mathrm{Co}(\mathcal{F}_H), L_2(\mathrm{Q}), \varepsilon) 
\\
&\leq K\big(M/\varepsilon)^{2 - 2/(8(c_d R/\underline{h}_0)^d+1)}
= (a/\varepsilon)^{2\delta'}.
\end{align*}
This together with \eqref{equ::r*} yields 
\begin{align*}
e_i(\mathcal{H}_r^c, \|\cdot\|_{L_2(\mathrm{Q})}) \leq 3^{1/(2\delta')}ai^{-1/(2\delta')}=(3K)^{1/(2\delta')}Mi^{-1/(2\delta')}.
\end{align*}
Taking expectation with respect to $\mathrm{P}^n$, we get
\begin{align} \label{EntropyEstimateHrc}
\mathbb{E}_{D \sim \mathrm{P}^n} e_i(\mathcal{H}_r^c, \|\cdot\|_{L_2(\mathrm{Q})})
\leq (3K)^{1/(2\delta')}Mi^{-1/(2\delta')}.
\end{align}
From the definition of $\mathcal{F}_r^c$ we easily find
\begin{align*}
\lambda\underline{h}_0^{-2d} 
\leq \lambda\underline{h}_0^{-2d}+\mathcal{R}_{L,\mathrm{P}}(g)-\mathcal{R}_{L,\mathrm{P}}^*
\leq r,
\end{align*}
which yields 
\begin{align*}
\underline{h}_0^{-1} \leq (r/\lambda)^{1/(2d)}.
\end{align*}
Therefore, if $\underline{h}_0 \leq 1$, then we have $r/\lambda \geq 1$ and 
\eqref{EntropyEstimateHrc} can be further estimated by
\begin{align*}
\mathbb{E}_{D \sim \mathrm{P}^n}e_i(\mathcal{H}_r^c, \|\cdot\|_{L_2(\mathrm{Q})})
& \leq (3K)^{1/(2\delta')}Mi^{-1/(2\delta')} 
\\
& \leq (3K)^{1/(2\delta')}M (r/\lambda)^{1/(4\delta')}i^{-1/(2\delta')}.
\end{align*}
From the definition of $\mathcal{H}_r^c$ we easily see that
for all $h \in \mathcal{H}_r^c$, there holds
\begin{align*}
\|h\|_{\infty} \leq 4 =: B_1,
\qquad
\mathbb{E}_{\mathrm{P}} h^2 \leq 16 r =: \sigma^2.
\end{align*}
Then Theorem 7.16 in \cite{StCh08} with 
$a := (3K)^{1/(2\delta')} M (r/\lambda)^{1/(4\delta')} \geq B_1$
yields that there exist constants $c_1(\delta) > 0$ and $c_2(\delta) >0$ depending only on $\delta$ such that
\begin{align*}
\mathbb{E}_{D\sim \mathrm{P}^n} \mathrm{Rad}_D(\mathcal{H}_r^c, n)&\leq \max \Bigl\{ c_1(\delta)(3K)^{1/2}M^{\delta'}r^{1/4}\lambda^{-1/4}(16r)^{\frac{1-\delta'}{2}}n^{-\frac{1}{2}}, 
\\ 
&\phantom{=} \qquad \quad
c_2(\delta)(3K)^{\frac{1}{1+\delta'}}M^{\frac{2\delta'}{1+\delta'}}r^{\frac{1}{2(1+\delta')}}\lambda^{-\frac{1}{2(1+\delta')}}4^{\frac{1-\delta'}{1+\delta'}}n^{-\frac{1}{1+\delta'}} \Bigr\}
\\
&=\max \Bigl\{  
c'_1(\delta)
\lambda^{-\frac{1}{4}}n^{-\frac{1}{2}} \cdot r^{\frac{3-2\delta'}{4}},
c'_2(\delta)
\lambda^{-\frac{1}{2(1+\delta')}}n^{-\frac{1}{1+\delta'}} \cdot
r^{\frac{1}{2(1+\delta')}} \Bigr\} := \varphi_n(r)
\end{align*}
with the constants 
$c'_1(\delta) :=
c_1(\delta)(3K)^{1/2}M^{\delta'}16^{\frac{1-\delta'}{2}}$ and
$c'_2(\delta) :=
c_2(\delta)(3K)^{\frac{1}{1+\delta'}}M^{\frac{2\delta'}{1+\delta'}}4^{\frac{1-\delta'}{1+\delta'}}$.
Simple algebra shows that
the condition $\varphi_n(4r) \leq 2\sqrt{2}\varphi_n(r)$ is satisfied. 
Since $2\sqrt{2} < 4$, similar arguments 
show that the statements of the peeling Theorem 7.7 in \cite{StCh08} still hold. Therefore, Theorem 7.20 in \cite{StCh08} can be applied, 
if the assumptions on $\varphi_n$ and $r$ are modified to $\varphi_n(4r) \leq 2\sqrt{2}\varphi_n(r)$ and 
$r\geq \max\{75\varphi_n(r),1152M^2\tau/n, r^* \}$,
respectively.
Some elementary calculations show that the condition $r\geq 75\varphi_n(r)$ is satisfied if
\begin{align*}
r &\geq \max\bigg\{(75 c'_1(\delta) \lambda^{-1/4}n^{-\frac{1}{2}})^{\frac{4}{1+2\delta'}}, (75 c'_2(\delta) \lambda^{-\frac{1}{2(1+\delta')}}n^{-\frac{1}{1+\delta'}})^{\frac{2(1+\delta')}{1+2\delta'}}\bigg \}
\\
&= \max\Big\{(75 c'_1(\delta))^{\frac{4}{1+2\delta'}}, (75 c'_2(\delta))^{\frac{2(1+\delta')}{1+2\delta'}}\Big \}\lambda^{-\frac{1}{1+2\delta'}}n^{-\frac{2}{1+2\delta'}},
\end{align*}
which yields the assertion.
\end{proof}

\subsubsection{Proofs Related to Section \ref{sec::AError4}}

\begin{proof}[of Proposition \ref{counterapprox}]
Recall that the regression model is defined as $Y = f(X) +\varepsilon$. Considering the case when $X$ follows the uniform distribution, for any $x=(x_1,\ldots,x_d) \in \mathcal{X}$, we have
\begin{align*}
f_{\mathrm{P},H}^*(x) = \frac{1}{\mathrm{P}_X(A_H(x))}\int_{A_H(x)} f(x')\ dx' = \frac{1}{\mu(A_H(x))}\int_{A_H(x)} f(x')\ dx'.
\end{align*}
Then we get
\begin{align*}
(f_{\mathrm{P},H}^*(x)-f(x))^2 
&= \biggl(f(x)-\frac{1}{\mu(A_H(x))}\int_{A_H(x)}f(x')\ dx'\biggr)^2
\\
&=\frac{1}{\mu(A_H(x))^2}\biggl(\int_{A_H(x)} f(x') - f(x)\ dx' \biggr)^2.
\end{align*}
Lemma \ref{binset} implies that for any $x' \in A_H(x)$, there exist a random vector $u \sim \mathrm{Unif}[0,1]^d$ and a vector $v \in [0,1]^d$ such that 
\begin{align} 
x' = x + S^{-1} R^{\top} (- u + v).
\end{align}
Therefore, we have
\begin{align}
dx' = \det \biggl( \frac{dx'}{dv} \biggr) dv
& = \det \biggl( \frac{d(x + S^{-1} R^{\top}(- u + v))}{dv} \biggr) dv
\nonumber\\
& = \det (R S^{-1}) dv
= \biggl( \prod_{i=1}^d h_i \biggr) dv.
\end{align}
Moreover, \eqref{TaylorEntwicklung} yields that
there exists a constant $c_{\alpha} \in [-c_L, c_L]$ such that
\begin{align}
f(x') - f(x) = \nabla f(x)^{\top} S^{-1}R^{\top} (- u + v) + c_{\alpha} \overline{h}_0^{1+\alpha}.
\end{align}
Taking expectation with regard to $\mathrm{P}_H$ and $\mathrm{P}_X$, we get
\begin{align}\label{equ::approxc1}
&\mathbb{E}_{\mathrm{P}_X}(f_{\mathrm{P},H}^*(X)-f(X))^2
\nonumber
\\
&\geq \mathbb{E}_{\mathrm{P}_X}(f_{\mathrm{P},H}^*(X)-f_{L,\mathrm{P}}^*(X))^2\eins_{B_{R,\sqrt{d}\cdot \overline{h}_0}^+}(X)
\nonumber
\\
&= \int_{B_{R,\sqrt{d}\cdot \overline{h}_0}^+}(f_{\mathrm{P},H}^*(x)-f_{L,\mathrm{P}}^*(x))^2\ d\mathrm{P}_X
\nonumber
\\
&= \int_{B_{R,\sqrt{d}\cdot \overline{h}_0}^+} \frac{1}{\mu(A_H(x))^2}\biggl(\int_{A_H(x)} \nabla f(x)^{\top} S^{-1} R^{\top} (- u + v)+c_{\alpha} \overline{h}_0^{1+\alpha} \ dy \biggr)^2 \ d\mathrm{P}_X
\nonumber
\\
&= \int_{B_{R,\sqrt{d}\cdot \overline{h}_0}^+} \frac{(\prod_{i=1}^d h_i)^2}{\mu(A_H(x))^2}  \biggl(\int_{[0,1]^d} (-u+v)^T \ dv R S^{-1}\nabla f(x) +c_{\alpha} \overline{h}_0^{1+\alpha} \biggr)^2 \ d\mathrm{P}_X
\nonumber
\\
&=\int_{B_{R,\sqrt{d}\cdot \overline{h}_0}^+}\biggl(\biggl(\frac{1}{2}-u\biggr)^T R S^{-1}\nabla f(x)+c_{\alpha} \overline{h}_0^{1+\alpha} \biggr)^2 \ d\mathrm{P}_X
\nonumber
\\
&=\int_{B_{R,\sqrt{d}\cdot \overline{h}_0}^+}\biggl( \sum_{i=1}^d\biggl(\frac{1}{2}-u_i \biggr)\sum_{j=1}^d R_{ij}h_j\frac{\partial f}{\partial x_j}+c_{\alpha} \overline{h}_0^{1+\alpha} \biggr)^2 \ d\mathrm{P}_X.
\end{align}
Since the random variables $(u_i)_{i=1}^d$ are independent and identically distributed as $\mathrm{Unif}[0, 1]$, we have
\begin{align} \label{CrossTermProperty3}
\mathbb{E}_{\mathrm{P}_H} \bigg(\frac{1}{2}-u_i \bigg)=0,
\qquad \qquad
i = 1, \ldots, d,
\end{align}
and
\begin{align} \label{CrossTermProperty4}
\mathbb{E}_{\mathrm{P}_H} \bigg(\frac{1}{2}-u_i \bigg)^2=\frac{1}{12},
\qquad \qquad
i = 1, \ldots, d.
\end{align}
Therefore, we have
\begin{align*}
&\mathbb{E}_{\mathrm{P}_H}\int_{B_{R,\sqrt{d}\cdot \overline{h}_0}^+}\biggl( \sum_{i=1}^d\biggl(\frac{1}{2}-u_i \biggr)\sum_{j=1}^d R_{ij}h_j\frac{\partial f}{\partial x_j}+c_{\alpha} \overline{h}_0^{1+\alpha} \biggr)^2 \ d\mathrm{P}_X
\nonumber
\\
&=\int_{B_{R,\sqrt{d}\cdot \overline{h}_0}^+} \mathbb{E}_{\mathrm{P}_H}\sum_{i=1}^d \biggl(\frac{1}{2}-u_i \biggr)^2\biggl(\sum_{j=1}^d R_{ij}h_j\frac{\partial f}{\partial x_j} \biggr)^2 \ d\mathrm{P}_X.
\end{align*}
Moreover, the orthogonality \eqref{RotationMatrix} of the rotation matrix $R$ tells us that
\begin{align} \label{CrossTermProperty1}
\sum_{i=1}^d R_{ij} R_{ik} = 
\begin{cases}
1, & \text{ if } j = k, \\
0,& \text{ if } j \neq k
\end{cases}
\end{align} 
and consequently we have
\begin{align} \label{CrossTermProperty2}
\sum_{i=1}^d \sum_{j \neq k} R_{ij} R_{ik} h_j h_k \cdot \frac{\partial f(x)}{\partial x_j} \cdot \frac{\partial f(x)}{\partial x_k}
= \sum_{j \neq k} h_j h_k \cdot \frac{\partial f(x)}{\partial x_j} \cdot \frac{\partial f(x)}{\partial x_k} \sum_{i=1}^d R_{ij} R_{ik}
= 0.
\end{align}
For any $n > N'$, we have
\begin{align*}
(R-2\sqrt{d}\cdot \overline{h}_0)^d \geq (R/2)^d.
\end{align*}
Consequently, \eqref{CrossTermProperty1} and \eqref{CrossTermProperty2} imply that 
\begin{align}
&\int_{B_{R,\sqrt{d}\cdot \overline{h}_0}^+} \mathbb{E}_{\mathrm{P}_H}\sum_{i=1}^d \biggl(\frac{1}{2}-u_i \biggr)^2\biggl(\sum_{j=1}^d R_{ij}h_j\frac{\partial f}{\partial x_j} \biggr)^2 \ d\mathrm{P}_X
\nonumber
\\
&=\int_{B_{R,\sqrt{d}\cdot \overline{h}_0}^+}\sum_{i=1}^d\frac{1}{12}\mathbb{E}_{\mathrm{P}_R} \sum_{j=1}^d R_{ij}^2h_j^2\bigg(\frac{\partial f}{\partial x_j}\bigg)^2  \ d\mathrm{P}_X
\nonumber
\\
&\geq \int_{B_{R,\sqrt{d}\cdot \overline{h}_0}^+ \cap \mathcal{A}_f}\frac{1}{12}\underline{h}_0^2\underline{c}_f^2  \ d\mathrm{P}_X
\geq \frac{1}{12} \biggl( \frac{R}{2} \biggr)^d c_0^2
\mathrm{P}_X(\mathcal{A}_f) \underline{c}_f^2 \cdot
\overline{h}_0^2.
\label{equ::approx}
\end{align}
Thus, the assertion is proved.
\end{proof}

\subsubsection{Proofs Related to Section \ref{sec::SError4}}

\begin{proof}[of Proposition \ref{counterapprox2}]
For any fixed $j \in \mathcal{I}_H$, 
we define the random variable $Z_j$ by
\begin{align*}
Z_j := \sum_{i=1}^n \eins_{A_j}(X_i).
\end{align*} 
Since the random variables $\{ \eins_{A_j}(X_i) \}_{i=1}^n$ are i.i.d.~Bernoulli distributed with parameter 
$\mathrm{P}(X\in A_j)$, elementary probability theory implies that the random variable $Z_j$ is Binomial distributed with parameters $n$ and $\mathrm{P}(X\in A_j)$. Therefore, for any $j \in \mathcal{I}_H$, we have
\begin{align*}
\mathbb{E}(Z_j) = n \cdot \mathrm{P}(X \in A_j). 
\end{align*}
Moreover, the single NHT regressor $f_{\mathrm{D},H}$ can be defined by
\begin{align*}
f_{\mathrm{D},H}(x) =
\begin{cases}
\displaystyle
\frac{\sum_{i=1}^nY_i\eins_{A_j}(X_i)}{\sum_{i=1}^n\eins_{A_j}(X_i)}\eins_{A_j}(x) & \text{ if } Z_j > 0,
\\
0 & \text{ if } Z_j = 0.
\end{cases}
\end{align*}
By the law of total probability, we get 
\begin{align}
& \mathbb{E}_{\mathrm{P}_X} \bigl( f_{\mathrm{D},H}(X) - f_{\mathrm{P},H}^*(X) \bigr)^2
\nonumber\\
& = \sum_{j \in \mathcal{I}_H} \mathbb{E}_{\mathrm{P}_X}
\bigl( \bigl( f_{\mathrm{D},H}(X) - f_{\mathrm{P},H}^*(X) \bigr)^2 \big| X \in A_j \bigr)
\cdot \mathrm{P}(X\in A_j)
\nonumber\\
& = \sum_{j \in \mathcal{I}_H} \mathbb{E}_{\mathrm{P}_X}
\bigl( \bigl( f_{\mathrm{D},H}(X) - f_{\mathrm{P},H}^*(X) \bigr)^2 \big| X \in A_j, Z_j > 0 \bigr)
\cdot \mathrm{P}(Z_j > 0) \cdot \mathrm{P}(X \in A_j)
\label{equ::term1}
\\
& \phantom{=}
+ \sum_{j \in \mathcal{I}_H} \mathbb{E}_{\mathrm{P}_X}
\bigl( \bigl( f_{\mathrm{D},H}(X) - f_{\mathrm{P},H}^*(X) \bigr)^2 \big| X \in A_j, Z_j = 0 \bigr)
\cdot \mathrm{P}(Z_j = 0) \cdot \mathrm{P}(X \in A_j)
\label{equ::term2}.
\end{align}
For the term \eqref{equ::term1}, we have
\begin{align*}
& \sum_{j \in \mathcal{I}_H} \mathbb{E}_{\mathrm{P}_X}((f_{\mathrm{D},H}(X)-f_{\mathrm{P},H}^*(X))^2|X\in A_j, Z_j>0)\mathrm{P}(Z_j > 0)\mathrm{P}(X\in A_j)
\\
&=\sum_{j\in\mathcal{I}_H} \biggl(
\frac{\sum_{i=1}^nY_i\eins_{A_j}(X_i)}{\sum_{i=1}^n\eins_{A_j}(X_i)}
- \mathbb{E}(f_{L,\mathrm{P}}^*(X)|X\in A_j)
\biggr)^2\mathrm{P}(Z_j > 0)\mathrm{P}(X\in A_j)
\\
&=\sum_{j\in\mathcal{I}_H}\frac{\mathrm{P}(X\in A_j)}{(\sum_{i=1}^n\eins_{A_j}(X_i))^2}\biggl(\sum_{i=1}^n\eins_{A_j}(X_i)(Y_i-\mathbb{E}(f_{L,\mathrm{P}}^*(X)|X\in A_j))\biggr)^2\mathrm{P}(Z_j > 0),
\end{align*}
which yields that for a fixed $j \in \mathcal{I}_H$, there holds
\begin{align}
& \mathbb{E} \biggl( \sum_{j \in \mathcal{I}_H} \frac{\mathrm{P}(X \in A_j)}{(\sum_{i=1}^n \eins_{A_j}(X_i))^2}
\biggl( \sum_{i=1}^n \eins_{A_j}(X_i) \bigl( Y_i - \mathbb{E}(f_{L,\mathrm{P}}^*(X) | X \in A_j) \bigr) \biggr)^2
\bigg| X_i \in A_j \biggr)
\nonumber\\
& = \sum_{j \in \mathcal{I}_H}
\frac{\mathrm{P}(X \in A_j)}{(\sum_{i=1}^n \eins_{A_j}(X_i))^2} 
\sum_{i=1}^n \eins_{A_j}^2(X_i) \mathbb{E} \bigl( \bigl( Y - f_{\mathrm{P},H}^*(X) \bigr)^2 \big| X \in A_j \bigr)
\nonumber\\
& = \sum_{j \in \mathcal{I}_H}
\frac{\mathrm{P}(X \in A_j)}{\sum_{i=1}^n \eins_{A_j}(X_i)} \mathbb{E} \bigl( \bigl( Y - f_{\mathrm{P},H}^*(X))^2 \big| X \in A_j \bigr).
\label{ConditionalExpectationOnAj}
\end{align} 
Obviously, for any fixed $j \in \mathcal{I}_H$, there holds
\begin{align*}
\mathbb{E}(f_{\mathrm{P},H}^*(X)|X\in A_j) = \mathbb{E}(f_{L,\mathrm{P}}^*(X)|X\in A_j) 
\end{align*}
and consequently we obtain
\begin{align*}
& \mathbb{E}((Y-f_{\mathrm{P},H}^*(X))^2|X\in A_j)
\\
&=\mathbb{E}((Y-f_{L,\mathrm{P}}^*(X))^2|X\in A_j)+\mathbb{E}((f_{L,\mathrm{P}}^*(X)-f_{\mathrm{P},H}^*(X))^2|X\in A_j)
\\
&=\sigma^2+\mathbb{E}((f_{L,\mathrm{P}}^*(X)-f_{\mathrm{P},H}^*(X))^2|X\in A_j).
\end{align*}
Taking expectation over both sides of \eqref{ConditionalExpectationOnAj} with respect to $\mathrm{P}^n$, we get
\begin{align*}
& \mathbb{E}_{D\sim \mathrm{P}^n} \mathbb{E}_{\mathrm{P}_X} 
\bigl( f_{\mathrm{D},H}(X) - f_{\mathrm{P},H}^*(X))^2
\\
&=\mathbb{E}_{D \sim \mathrm{P}^n} \bigl( 
\mathbb{E} \bigl( \mathbb{E}_{\mathrm{P}_X} 
\bigl( f_{\mathrm{D},H}(X) - f_{\mathrm{P},H}^*(X))^2 \big| X_i \in A_j \bigr) \bigr)
\\
& = \bigl( \sigma^2 + \mathbb{E}(f_{L,\mathrm{P}}^*(X) - f_{\mathrm{P},H}^*(X))^2 \bigr)
\\
&\phantom{=}\cdot
\sum_{j \in \mathcal{I}_H}
\biggl(
\mathrm{P}(X \in A_j) \mathbb{E}_{D \sim \mathrm{P}^n} \biggl(\biggl( \sum_{i=1}^n\eins_{A_j}(X_i) \biggr)^{-1}\bigg|Z_j>0\biggr) \biggr)\mathrm{P}(Z_j > 0)
\\
& = \bigl( \sigma^2 + \mathbb{E}(f_{L,\mathrm{P}}^*(X) - f_{\mathrm{P},H}^*(X))^2 \bigr)
\\
&\phantom{=}\cdot\sum_{j \in \mathcal{I}_H} \bigl( n^{-1} \cdot n \cdot
\mathrm{P}(X \in A_j) \mathbb{E}_{D \sim \mathrm{P}^n} (Z_j^{-1} | Z_j > 0) \bigr)\mathrm{P}(Z_j > 0)
\\
& = n^{-1} \bigl( \sigma^2 + \mathbb{E}(f_{L,\mathrm{P}}^*(X) - f_{\mathrm{P},H}^*(X))^2 \bigr)
\\
&\phantom{=}\cdot
\sum_{j \in \mathcal{I}_H}  \bigl( \mathbb{E}(Z_j) \cdot \mathbb{E}(Z_j^{-1} | Z_j > 0) \bigr)\mathrm{P}(Z_j > 0).
\end{align*}
Clearly, $x^{-1}$ is convex for $x > 0$.
Therefore, by Jensen's inequality, we get
\begin{align*}
\mathbb{E}(Z_j) \cdot \mathbb{E}(Z_j^{-1}|Z>0)\mathrm{P}(Z_j > 0)
& \geq \mathbb{E}(Z_j) \cdot \mathbb{E}(Z_j|Z_j>0)^{-1}\mathrm{P}(Z_j > 0)
\\
&=\mathbb{E}(Z) \cdot \mathbb{E}(Z \eins_{\{Z > 0\}})^{-1} \mathrm{P}(Z > 0)\mathrm{P}(Z > 0)
\\
&=\mathrm{P}(Z>0)^2 = (1 - \mathrm{P}(Z=0))^2
\\
&= (1-(1-\mathrm{P}(X\in A_j))^n)^2
\\
&\geq 1-2e^{-n\mathrm{P}(X\in A_j)},
\end{align*}
where the last inequality follows from $(1-x)^n \leq e^{-nx}$, $x \in (0,1)$.

We now turn to estimate the term \eqref{equ::term2}. 
By the definition of $f_{\mathrm{D},H}$, there holds
\begin{align*}
& \sum_{j \in \mathcal{I}_H} \mathbb{E}_{\mathrm{P}_X}
\bigl( \bigl( f_{\mathrm{D},H}(X) - f_{\mathrm{P},H}^*(X))^2 \big| X \in A_j, Z_j = 0 \bigr)
\cdot \mathrm{P}(Z_j = 0) \cdot \mathrm{P}(X \in A_j)
\\
& = \sum_{j \in \mathcal{I}_H} \mathbb{E}_{\mathrm{P}_X} \bigl( \bigl( f_{\mathrm{P},H}^*(X) \bigr)^2 \big| X \in A_j \bigr) \cdot \mathrm{P}(Z_j = 0) \cdot \mathrm{P}(X \in A_j)
\geq 0.
\end{align*}
Let us denote
\begin{align*}
\mathcal{I}_{H}^{(1)} := \{j \in \mathcal{I}_H : A_j \cap B_R = A_j \}
\end{align*}
and 
\begin{align*}
\mathcal{I}_{H}^{(2)} := \mathcal{I}_H \setminus \mathcal{I}_{H}^{(1)}.
\end{align*}
Then we obviously have $\mathrm{P} (X \in A_j) = \mu(A_j) \geq \underline{h}_0^d$ for all $j \in \mathcal{I}_{H}^{(1)}$.
Combing the above results, we obtain
\begin{align*}
& \mathbb{E}_{D\sim \mathrm{P}^n} \mathbb{E}_{\mathrm{P}_X} 
\bigl( f_{\mathrm{D},H}(X) - f_{\mathrm{P},H}^*(X))^2
\\
&=\sum_{j\in\mathcal{I}_H}\mathbb{E}_{\mathrm{P}_X}((f_{\mathrm{D},H}(X)-f_{\mathrm{P},H}^*(X))^2|X\in A_j, Z_j>0) \cdot \mathrm{P}(Z_j > 0) \cdot \mathrm{P}(X\in A_j)
\\
& \phantom{=}
+ \sum_{j \in \mathcal{I}_H} \mathbb{E}_{\mathrm{P}_X}((f_{\mathrm{D},H}(X)-f_{\mathrm{P},H}^*(X))^2|X\in A_j, Z_j=0) \cdot \mathrm{P}(Z_j = 0) \cdot \mathrm{P}(X\in A_j)
\\
&\geq \sum_{j\in\mathcal{I}_H}\mathbb{E}_{\mathrm{P}_X}((f_{\mathrm{D},H}(X)-f_{\mathrm{P},H}^*(X))^2|X\in A_j, Z_j>0) \cdot \mathrm{P}(Z_j > 0) \cdot \mathrm{P}(X\in A_j)
\\
&=\sum_{j\in\mathcal{I}_{H}^{(1)}}\mathbb{E}_{\mathrm{P}_X}((f_{\mathrm{D},H}(X)-f_{\mathrm{P},H}^*(X))^2|X\in A_j, Z_j>0) \cdot \mathrm{P}(Z_j > 0) \cdot \mathrm{P}(X\in A_j)
\\
&\phantom{=}+\sum_{j\in\mathcal{I}_{H}^{(2)}}\mathbb{E}_{\mathrm{P}_X}((f_{\mathrm{D},H}(X)-f_{\mathrm{P},H}^*(X))^2|X\in A_j, Z_j>0) \cdot \mathrm{P}(Z_j > 0) \cdot \mathrm{P}(X\in A_j)
\\
&\geq \sum_{j\in\mathcal{I}_{H}^{(1)}}\mathbb{E}_{\mathrm{P}_X}((f_{\mathrm{D},H}(X)-f_{\mathrm{P},H}^*(X))^2|X\in A_j, Z_j>0) \cdot \mathrm{P}(Z_j > 0) \cdot \mathrm{P}(X\in A_j)
\\
&\geq \frac{1}{n} \sum_{j \in \mathcal{I}_{H}^{(1)}} \bigl( 1 - 2 e^{-n\mathrm{P}(X \in A_j)} \bigr) 
\bigl( \sigma^2+\mathbb{E}(f_{L,\mathrm{P}}^*(X)-f_{\mathrm{P},H}^*(X))^2 \bigr)
\\
&\geq \frac{\sigma^2}{n}\biggl(|\mathcal{I}_{H}^{(1)}|-\sum_{j \in \mathcal{I}_{H}^{(1)}}2e^{-n\mathrm{P}(X\in A_j)}\biggr).
\end{align*} 
Therefore, we have
\begin{align}
\mathbb{E}_{D\sim \mathrm{P}^n} \mathbb{E}_{\mathrm{P}_X} 
\bigl( f_{\mathrm{D},H}(X) - f_{\mathrm{P},H}^*(X))^2
& \geq \frac{\sigma^2}{n}\biggl(|\mathcal{I}_{H}^{(1)}|-\sum_{j \in \mathcal{I}_{H}^{(1)}}2e^{-n\mathrm{P}(X\in A_j)}\biggr)
\nonumber\\
& = \frac{\sigma^2}{n} \biggl( |\mathcal{I}_{H}^{(1)}|
- 2 |\mathcal{I}_{H}^{(1)}| \exp \bigl( - n \underline{h}_0^d \bigr)  \biggr)
\nonumber\\
& \geq \frac{\sigma^2}{n} \biggl(\frac{2R-\sqrt{d} \cdot \overline{h}_0}{\overline{h}_0}\biggr)^d \biggl( 1 - \frac{2}{e} \biggr)
\nonumber\\
&\geq 4R^d\sigma^2(1-2e^{-1})\overline{h}_0^{-d}n^{-1},
\label{VarianceTermEstimateOne}
\end{align}
where the last inequality follows from Assumption \ref{assumption::h}.
\end{proof}

\subsubsection{Proofs Related to Section \ref{subsec::C1}}

\begin{proof}[of Theorem \ref{thm::optimalForest}]
Proposition \ref{thm::OracleForest} together with Proposition \ref{prop::biasterm} implies 
\begin{align*}
\mathcal{R}_{L_{\overline{h}_0}, \mathrm{P}}(f_{\mathrm{D},\mathrm{E}}) - \mathcal{R}_{L_{\overline{h}_0}, \mathrm{P}}^*
\lesssim \lambda_n (\underline{h}_{0,n})^{-2d}
+ \overline{h}_{0,n}^{2(1+\alpha)}
+ T_n^{-1} \overline{h}_{0,n}^2
+ \lambda_n^{-\frac{1}{1+2\delta'}} n^{-\frac{2}{1+2\delta'}},
\end{align*}
where $\delta' := 1 - \delta$ and 
$\delta := 1/(8(c_d R/\underline{h}_0)^d+1)$.
Choosing 
\begin{align*}
\lambda_n := n^{-\frac{1}{2(1+\alpha)+2d}}, 
\qquad
\overline{h}_{0,n} := n^{-\frac{1}{2(1+\alpha)(2-\delta)+d}}, 
\qquad
T_n := n^{\frac{2\alpha}{2(1+\alpha)(2-\delta)+d}},
\end{align*}
we obtain
\begin{align*}
\mathcal{R}_{L_{\overline{h}_0},\mathrm{P}}(f_{\mathrm{D},\mathrm{E}}) - \mathcal{R}_{L_{\overline{h}_0},\mathrm{P}}^*
\lesssim n^{-\frac{2(1+\alpha)}{2(1+\alpha)(2-\delta)+d}}.
\end{align*}
This completes the proof.
\end{proof}

\begin{proof}[of Theorem \ref{prop::counter}]
Recall the error decomposition \eqref{equ::L2Decomposition}. 
Using
the estimates \eqref{equ::approx} and \eqref{VarianceTermEstimateOne}
and choosing $\overline{h}_{0,n} := n^{-\frac{1}{d+2}}$, we get 
\begin{align*} 
& \mathbb{E}_{\nu_n} (\mathcal{R}_{L,\mathrm{P}}(f_{\mathrm{D},H_n}) - \mathcal{R}_{L,\mathrm{P}}^*)
= \mathbb{E}_{\nu_n}
\mathbb{E}_{\mathrm{P}_X} 
(f_{\mathrm{D},H_n}(X) - f_{L,\mathrm{P}}^*(X))^2 
\\
& \geq  \frac{d}{12} \biggl( \frac{R}{2} \biggr)^d c_0^2
\mathrm{P}_X(\mathcal{A}_f) \underline{c}_f^2 \cdot
\overline{h}_{0,n}^2 
+ 4 R^2 \sigma^2(1 - 2 e^{-1}) \overline{h}_{0,n}^{-d} n^{-1}
\gtrsim n^{-\frac{2}{2+d}},
\end{align*}
which proves the assertion.
\end{proof}

\subsection{Proofs of Results  for KHT  in the space $C^{k,\alpha}$}

\subsubsection{Proofs Related to Section \ref{sec::AError2}}
To prove Proposition \ref{thm::convolution}, we need the following lemmas.

\begin{lemma}\label{lem::difference}
Let $f \in C^{k, \alpha}(\mathbb{R})$ and the $q$-th difference of $f$ be defined by \eqref{q-thDifference}. Moreover, 
for $r \in \mathbb{N}$ with $r \leq k$, let $D^r f = f^{(r)}$ denote the $r$-th differentiation of $f$ and $N_{r,h}$ be the $r-1$-times convolution of $\eins_{[0,1]}$ with itself and $N_{r,h}(u)=\frac{1}{h}N_r(\frac{u}{h})$. Then we have
\begin{align} \label{Trianglehr}
\triangle_h^r(f, x) = \int_{\mathbb{R}} h^r D^r f(u) N_{r,h}(u - x) \, du.
\end{align}
\end{lemma}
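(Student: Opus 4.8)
The plan is to establish the representation \eqref{Trianglehr} by induction on the difference order $r$, exploiting the convolution structure of the B-spline together with integration by parts. Throughout I would use that $N_r$ is the convolution of $r$ copies of $\eins_{[0,1]}$ (equivalently the $(r-1)$-fold self-convolution), so that $N_1 = \eins_{[0,1]}$ and $N_r$ is supported on $[0,r]$; after rescaling, $N_{r,h}(u) = h^{-1} N_r(u/h)$ is supported on $[0,rh]$ and is in particular compactly supported, which will make all boundary terms vanish. Since $f \in C^{k,\alpha}(\mathbb{R})$ and $r \leq k$, the derivatives $f^{(1)}, \ldots, f^{(r)}$ exist and are continuous, so every integral below is well defined and integration by parts is legitimate. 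For the base case $r=1$ I would simply compute
\begin{align*}
\triangle_h^1(f,x) = f(x+h) - f(x) = \int_0^h f'(x+t)\, dt = \int_{\mathbb{R}} h\, f'(u)\, N_{1,h}(u - x)\, du,
\end{align*}
where the last equality uses $h\, N_{1,h}(u-x) = \eins_{[0,h]}(u-x)$ together with the substitution $t = u - x$; this is exactly \eqref{Trianglehr} at order one.

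For the inductive step I would assume \eqref{Trianglehr} at order $r$ with $r+1 \leq k$, and write the forward difference recurrence $\triangle_h^{r+1}(f,x) = \triangle_h^r(f, x+h) - \triangle_h^r(f, x)$. Applying the induction hypothesis to both terms gives
\begin{align*}
\triangle_h^{r+1}(f,x) = \int_{\mathbb{R}} h^r f^{(r)}(u) \bigl( N_{r,h}(u - x - h) - N_{r,h}(u - x) \bigr)\, du.
\end{align*}
The crucial ingredient is the B-spline derivative identity: from $N_{r+1} = N_r * \eins_{[0,1]}$, i.e. $N_{r+1}(v) = \int_{v-1}^{v} N_r(w)\, dw$, one gets $N_{r+1}'(v) = N_r(v) - N_r(v-1)$, which after rescaling reads $N_{r+1,h}'(u) = h^{-1}\bigl(N_{r,h}(u) - N_{r,h}(u-h)\bigr)$. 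Substituting $N_{r,h}(u-x-h) - N_{r,h}(u-x) = -h\, N_{r+1,h}'(u-x)$ and then integrating by parts in $u$, with the boundary term vanishing because $N_{r+1,h}$ has compact support, yields
\begin{align*}
\triangle_h^{r+1}(f,x) = h^{r+1}\int_{\mathbb{R}} f^{(r+1)}(u)\, N_{r+1,h}(u - x)\, du,
\end{align*}
which is precisely \eqref{Trianglehr} at order $r+1$ and closes the induction.

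The main obstacle is establishing the B-spline derivative recurrence $N_{r+1}'(v) = N_r(v) - N_r(v-1)$ and then carefully tracking the shift-by-$h$ and the signs, so that the first difference $N_{r,h}(\cdot - h) - N_{r,h}(\cdot)$ matches $-h\, N_{r+1,h}'$ exactly; once this algebraic identity is secured, the remaining steps (integration by parts and the vanishing of boundary terms via compact support) are routine. A secondary point to check is the regularity needed to integrate $f^{(r)}$ against the derivative of a B-spline and to integrate by parts, which is guaranteed by $f \in C^{k,\alpha}$ and $r+1 \leq k$.
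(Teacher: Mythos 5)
Your proof is correct, and although it shares the paper's overall skeleton---induction on $r$ with an identical base case---the inductive step is carried out along a genuinely different and cleaner route. The paper writes $\triangle_h^{r+1}(f,x) = \int_x^{x+h} D(\triangle_h^r(f))(v)\, dv$ and then differentiates the induction-hypothesis representation under the integral sign in $v$; this forces it to manipulate $\eins'_{[0,1]}$, the derivative of an indicator function, which exists only distributionally, and then to unwind the result through the convolution identity $\eins_{[0,1]} * N_r' = N_{r+1}'$ together with two further integrations by parts. You instead apply the induction hypothesis directly to both terms of $\triangle_h^r(f,x+h) - \triangle_h^r(f,x)$, which converts the first difference of the function into a first difference of the kernel, and then invoke the elementary B-spline recurrence $N_{r+1}'(v) = N_r(v) - N_r(v-1)$, obtained rigorously from $N_{r+1}(v) = \int_{v-1}^{v} N_r(w)\, dw$; after that a single integration by parts, with boundary terms vanishing by the compact support of $N_{r+1,h}$, moves the derivative onto $f$ and yields $D^{r+1}f$. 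Both arguments ultimately rest on the same spline fact (a unit-shift difference of $N_r$ equals the derivative of $N_{r+1}$), but you isolate it as a clean, classical identity, whereas the paper reaches it through formally dubious differentiation of indicators. What your route buys is rigor and brevity: the derivative always lands on the spline $N_{r+1}$ rather than on $\eins_{[0,1]}$, and the only technical care needed is the one you flag---for the step $r=1$ to $r=2$, $N_2$ is merely Lipschitz, so the recurrence holds almost everywhere and $N_{2,h}$ is absolutely continuous, which is exactly what is required for the substitution under the integral and for the integration by parts.
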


\begin{proof}[of Lemma \ref{lem::difference}]
The proof is by induction on $r$. 
For any $x \in \mathbb{R}^d$, there holds
\begin{align*}
\triangle_h^1(f,x) 
& = f(x + h) - f(x) 
\\
& = \int_{x}^{x+h} D f(u) \, du
\\
& = \int_{\mathbb{R}} D f(u) \eins_{[x, x+h]}(u) \, du
\\
& = \int_{\mathbb{R}} D f(u) \eins_{[0,1]} \biggl( \frac{u-x}{h} \biggr) \, du
\\ 
& = \int_{\mathbb{R}} h D f(u) N_{1,h}(u - x) \, du.
\end{align*}
Therefore, \eqref{Trianglehr} holds when $r = 1$.
Now let $r \geq 1$ be given and suppose \eqref{Trianglehr} is true for $r$. 
Then we have
\begin{align*}
\triangle_h^{r+1}(f,x)
& = \triangle_h^1(\triangle_h^r(f(x,\cdot),x) 
\\
& = \triangle_h^r(f,x+h) - \triangle_h^r(f,x)
\\
& = \int_x^{x+h} D(\triangle_h^r(f))(v) \, dv
\\
& = \int_x^{x+h} D \biggl( \int_{\mathbb{R}} h^r D^r f(u) N_{r,h}(u-v) \, du \biggr) \, dv
\\
& = \int_{\mathbb{R}} D \biggl( \int_{\mathbb{R}} h^r D^r f(u) N_{r,h}(u-v) \, du \biggr) 
\eins_{[0,1]} \biggl( \frac{v-x}{h} \biggr) \, dv
\\
& = - \int_{\mathbb{R}} h^r \biggl( \int_{\mathbb{R}} D^r f(u) N_{r,h}(u-v) \, du \biggr) 
\eins'_{[0,1]} \biggl( \frac{v-x}{h} \biggr) \frac{1}{h} \, dv
\\
& = - h^{r-1} \int_{\mathbb{R}} D^rf(u) 
\biggl( \int_{\mathbb{R}} N_{r,h}(t) \eins'_{[0,1]} \biggl( \frac{u-x-t}{h} \biggr) \, dt \biggr) \, du,
\\
& = - h^{r-1} \int_{\mathbb{R}} D^r f(u) \biggl( \int_{\mathbb{R}} N_{r,h}(t) 
\eins'_{[0,1]} \biggl( \frac{u-x-t}{h} \biggr) \, dt \biggr) \, du
\\
& = - h^{r-1} \int_{\mathbb{R}} D^r f(u) \biggl( - \eins_{[0,1]} \biggl( \frac{u-x-t}{h} \biggr) h N_{r,h}(t) \bigg|_{-\infty}^{\infty}
\\ 
& \qquad \qquad \qquad \qquad \qquad 
+ h \int_{-\infty}^{\infty} \eins_{[0,1]} \biggl( \frac{u-x-t}{h} \biggr) N'_{r,h}(t) \, dt \biggr) \, du
\\
& = - h^r \int_{\mathbb{R}} D^r f(u) \biggl( 
\int_{-\infty}^{\infty} \eins_{[0,1]} \biggl( \frac{u-x-t}{h} \biggr) N'_{r,h}(t) \, dt \biggr) \, du
\\
& = - h^r \int_{\mathbb{R}} D^r f(u) \biggl( 
\int_{-\infty}^{\infty} \eins_{[0,1]} \biggl( \frac{u-x-t}{h} \biggr) 
\frac{1}{h^2} N'_r \biggl( \frac{t}{h} \biggr) \, dt \biggr) \, du
\\
& = - h^{r-1} \int_{\mathbb{R}} D^r f(u) 
\biggl( \int_{-\infty}^{\infty} \eins_{[0,1]}(s) N'_r \biggl( \frac{u-x}{h} - s \biggr) \, ds \biggr) \, du,
\end{align*}
where $\eins'_{[0,1]}(u)$ denotes the derivative of $\eins_{[0,1]}$ with respect to $u$.
Since $f * (\partial g) = \partial (f * g)$, we have 
\begin{align*}
(\eins_{[0,1]} * N'_r)(u) 
= (\eins_{[0,1]} * N_r)'(u) 
= N'_{r+1}(u)
\end{align*}
and consequently 
\begin{align*}
\triangle_h^{r+1}(f,x) 
& = - h^{r-1} \int_{\mathbb{R}} D^r f(u) N'_{r+1} \biggl( \frac{u-x}{h} \biggr) \, du
\\
& = h^r \int_{\mathbb{R}} D^{r+1} f(u) N_{r+1} \biggl( \frac{u-x}{h} \biggr) \, du
\\
& = \int_{\mathbb{R}} h^{r+1} D^{r+1} f(u) N_{r+1,h}(u-x) \, du.
\end{align*}
Thus, \eqref{Trianglehr} holds for $r + 1$, and the proof of the induction step is complete. By the principle of induction, \eqref{Trianglehr} is thus true for all $r \geq 1$.
\end{proof}

\begin{lemma}\label{lem::binomial}
Let $f : \mathbb{R}^d \to \mathbb{R}$ be a function
and the $q$-th difference of $f$ be defined by \eqref{q-thDifference}.
Moreover, for any $i = 1, \ldots, d$, let 
$g_i : \mathbb{R} \to \mathbb{R}$ be defined by
\begin{align*}
g_i(y) := f(x_1+h_1, \ldots, x_{i-1}+h_{i-1}, y, x_{i+1}, \ldots, x_d). 
\end{align*}
Then we have 
\begin{align} \label{SingleDecomposition}
\triangle_h^r(f, x) 
= \sum_{k = 0}^r \binom{r}{k} (-1)^{r-k} \sum_{i=1}^d g_i(x_i + kh_i)
= \sum_{i=1}^d \triangle_{h_i}^r(g_i, x_i).
\end{align}
\end{lemma}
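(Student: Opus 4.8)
The plan is to treat the two equalities separately. The second equality is purely formal: since the sums over $k$ and $i$ are finite, I would interchange their order and recognize the inner sum $\sum_{k=0}^r \binom{r}{k}(-1)^{r-k} g_i(x_i + kh_i)$ as the one-dimensional $r$-th difference $\triangle_{h_i}^r(g_i, x_i)$ by the definition \eqref{q-thDifference}. Hence all the content lies in the first equality $\triangle_h^r(f,x) = \sum_{i=1}^d \triangle_{h_i}^r(g_i, x_i)$. The organizing device is a telescoping chain of intermediate points: for $i = 0, 1, \ldots, d$ set $z_i := (x_1+h_1, \ldots, x_i+h_i, x_{i+1}, \ldots, x_d)$, so that $z_0 = x$, $z_d = x+h$, and by construction $g_i(x_i) = f(z_{i-1})$ while $g_i(x_i + h_i) = f(z_i)$; that is, advancing the single argument of $g_i$ by $h_i$ moves $f$ from $z_{i-1}$ to $z_i$.

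First I would settle the base case $r = 1$, which is the clean heart of the argument. Here $\triangle_{h_i}^1(g_i, x_i) = g_i(x_i + h_i) - g_i(x_i) = f(z_i) - f(z_{i-1})$, and summing over $i$ telescopes to $\sum_{i=1}^d \bigl( f(z_i) - f(z_{i-1}) \bigr) = f(z_d) - f(z_0) = f(x+h) - f(x) = \triangle_h^1(f,x)$. This already yields the coordinate-by-coordinate decomposition of the first difference, and it is the case on which the subsequent application of Lemma \ref{lem::difference} ultimately rests.

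For a general $r$ the plan is to induct using the relation $\triangle_h^r = \triangle_h^1 \circ \triangle_h^{r-1}$ (all differences taken in the common direction $h$), expanding $\triangle_h^r(f,x) = \triangle_h^{r-1}(f, x+h) - \triangle_h^{r-1}(f,x)$ and feeding in the inductive hypothesis at the two base points $x+h$ and $x$. The step I expect to be the main obstacle, and which I would verify before committing to the induction, is that the univariate functions attached to the base point $x+h$ are \emph{not} the same as those attached to $x$: their frozen coordinates are shifted by $h$, so the telescoping that closes so cleanly for $r=1$ does not obviously close for $r \geq 2$, and one must check that the binomial-weighted mixed terms genuinely cancel.

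As a diagnostic I would compare the two sides through the integral representation \eqref{Trianglehr} of Lemma \ref{lem::difference}, which turns each $\triangle_{h_i}^r(g_i, x_i)$ into an integral against $D^r g_i = \partial_i^r f$, involving only the pure $r$-th partial derivative in direction $i$. Matching this against the directional behavior of $\triangle_h^r(f,x)$, whose leading $r$-th order term is governed by $(h_1 \partial_1 + \cdots + h_d \partial_d)^r f$ and therefore contains mixed partials when $r \geq 2$, is precisely where I would confirm whether the pure-coordinate terms suffice. I would run the small case $r = 2$, $d = 2$ explicitly first: if the mixed contributions do not drop out there, the identity would have to be restricted to $r = 1$ (the only case actually needed in the telescoping feeding Lemma \ref{lem::difference}), and the general statement re-examined accordingly.
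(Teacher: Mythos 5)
Your diagnostic is exactly right, and you should run it to its conclusion: the first equality in \eqref{SingleDecomposition} is \emph{false} for $r \geq 2$, so the lemma as stated admits no proof. Concretely, take $d=2$, $r=2$, $f(x_1,x_2)=x_1x_2$, $x=(0,0)$, $h=(1,1)$: then $\triangle_h^2(f,x)=f(0,0)-2f(1,1)+f(2,2)=2$, whereas $g_1(y)=f(y,0)\equiv 0$ and $g_2(y)=f(1,y)=y$ give $\triangle_{h_1}^2(g_1,x_1)+\triangle_{h_2}^2(g_2,x_2)=0+0=0$ (second differences of affine functions vanish). This is precisely the mixed-partial obstruction you describe: the leading term of $\triangle_h^2(f,x)$ contains $2h_1h_2\,\partial_1\partial_2 f$, which no sum of pure one-coordinate differences can reproduce. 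Your $r=1$ telescoping argument is correct (it coincides with the paper's base case), and your refusal to commit to the induction is the right call: the paper's own inductive step fails in exactly the way you anticipated. It writes $\triangle_h^1$ of $\sum_k\binom{r}{k}(-1)^{r-k}\sum_i g_i(x_i+kh_i)$ as $\sum_k\binom{r}{k}(-1)^{r-k}\sum_i\bigl(g_i(x_i+(k+1)h_i)-g_i(x_i+kh_i)\bigr)$, i.e., it advances only the active coordinate of each $g_i$; but $\triangle_h^1$ shifts the whole base point, so the inductive hypothesis at $x+h$ involves univariate functions whose frozen coordinates are $x_1+2h_1,\ldots,x_{i-1}+2h_{i-1},x_{i+1}+h_{i+1},\ldots,x_d+h_d$ --- not the $g_i$. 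The cancellation the paper claims never happens, as your counterexample confirms.

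One correction to your closing parenthetical: $r=1$ is not the only case needed downstream. Lemma \ref{lem::modulusone} invokes \eqref{SingleDecomposition} with $r=k+1$, which is at least $2$ whenever $k\geq 1$, so its proof inherits the error. The standard repair bypasses the decomposition entirely: fix $x$ and $h$, set $\phi(s):=f(x+sh)$, and observe that $\triangle_h^{k+1}(f,x)=\triangle_1^{k+1}(\phi,0)$. Since $\phi^{(k)}(s)=\bigl((h\cdot\nabla)^k f\bigr)(x+sh)$ is $\alpha$-H\"{o}lder in $s$ with constant at most a dimension-dependent multiple of $c_L\|h\|_2^{k+\alpha}$, the univariate machinery of Lemma \ref{lem::difference} applied to $\phi$ yields $\|\triangle_h^{k+1}(f,\cdot)\|_{\infty}\lesssim \|h\|_2^{k+\alpha}$, hence $\omega_{k+1,L_{\infty}}(f,t)\lesssim t^{k+\alpha}$, which is the estimate Lemma \ref{lem::modulusone} actually requires (with a dimension-dependent constant in place of $c_L d$).
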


\begin{proof}[of Lemma \ref{lem::binomial}]
The proof is by induction on $r$.
For any $x \in \mathbb{R}^d$, there holds
\begin{align*}
\triangle_h^1(f, x)
& = f(x+h) - f(x)
\\
& = f(x_1+h_1, \ldots, x_d+h_d) - f(x_1+h_1, \ldots, x_{d-1}+h_{d-1}, x_d)
\\
& \phantom{=} 
+ \cdots + f(x_1+h_1, x_2, \ldots, x_d) - f(x_1, x_2, \ldots, x_d)
\\
& = \sum_{i=1}^d (g_i(x_i+h_i) - g_i(x_i)).
\end{align*}
Therefore, \eqref{SingleDecomposition} holds when $r = 1$.
Now let $r \geq 1$ be given and suppose \eqref{SingleDecomposition} is true for $r$. 
Then we have
\begin{align*}
& \triangle_h^{r+1}(f,x) 
\\
& = \triangle_h^1(\triangle_h^{r}(f,x))
= \triangle_h^1 \biggl( \sum_{k=0}^r {r \choose k} (-1)^{r-k} \sum_{i=1}^d g_i(x_i+kh_i) \biggr)
\\
& = \sum_{k=0}^r {r \choose k} (-1)^{r-k} \sum_{i=1}^d \bigl( g_i(x_i+(k+1)h_i) - g_i(x_i+kh_i) \bigr)
\\
& = \sum_{i=1}^d \sum_{\ell=1}^{r+1} {r \choose {\ell-1}} (-1)^{r-\ell+1} g_i(x_i+\ell h_i) 
+ \sum_{i=1}^d \sum_{k=0}^r {r \choose k} (-1)^{r-k+1} g_i(x_i+kh_i)
\\
& = \sum_{i=1}^d \biggl( (-1)^{r+1} g_i(x_i) + g_i(x_i+(r+1)h_i) 
+ \sum_{\ell=1}^r \biggl( {r \choose {\ell-1}} + {r \choose \ell} \biggr) (-1)^{r-\ell+1} g_i(x_i+\ell h_i) \biggr)
\\
& = \sum_{i=1}^d \biggl( (-1)^{r+1} g_i(x_i) + g_i(x_i+(r+1)h_i) 
+ \sum_{\ell=1}^r {{r+1} \choose \ell} (-1)^{r+1-\ell} g_i(x_i+\ell h_i) \biggr)
\\
& = \sum_{i=1}^d \sum_{\ell=0}^{r+1} (-1)^{r+1-\ell} {{r+1} \choose \ell} g_i(x_i+\ell h_i)
\\
& = \sum_{i=1}^d \triangle_{h_i}^{r+1}(g_i, x_i).
\end{align*}
Thus, \eqref{SingleDecomposition} holds for $r + 1$, and the proof of the induction step is complete. By the principle of induction, \eqref{SingleDecomposition} is thus true for all $r \geq 1$.
\end{proof}

\begin{lemma}\label{lem::modulusone}
Let $f \in C^{k, \alpha}(\mathbb{R}^d)$
and the modulus of smoothness of $f$ be defined by \eqref{q-thModulusSmoothness}.
Then for any $t > 0$, there holds
\begin{align*}
\omega_{k+1, L_{\infty}(\mathbb{R}^d)}(f, t) \leq c_L d\, t^{k+\alpha}, 
\end{align*}
where $c_L$ is the constant as in Definition \ref{def::Cp}.
\end{lemma}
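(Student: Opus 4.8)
The plan is to reduce the $d$-dimensional $(k+1)$-th difference to a sum of one-dimensional differences via Lemma \ref{lem::binomial}, and then to control each one-dimensional difference by combining the integral representation of Lemma \ref{lem::difference} with the H\"older continuity of the top-order derivative. Throughout I fix $h \in \mathbb{R}^d$ with $\|h\|_2 \leq t$ and an arbitrary $x \in \mathbb{R}^d$; since $\mathcal{X} = \mathbb{R}^d$ here, the truncation to $\mathcal{X}_{k+1,h}$ in the definition of $\triangle_h^{k+1}$ is vacuous.

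First I would apply Lemma \ref{lem::binomial} with $r = k+1$ to obtain $\triangle_h^{k+1}(f, x) = \sum_{i=1}^d \triangle_{h_i}^{k+1}(g_i, x_i)$, where $g_i(y) := f(x_1+h_1, \ldots, x_{i-1}+h_{i-1}, y, x_{i+1}, \ldots, x_d)$ is a function of one real variable. Since $f \in C^{k,\alpha}(\mathbb{R}^d)$, each $g_i$ lies in $C^{k,\alpha}(\mathbb{R})$ with $g_i^{(k)} = \partial^k f/\partial x_i^k$ restricted to the $i$-th slice; because $\partial^k f/\partial x_i^k$ is a single component of the tensor $\nabla^k f$, the norm dominates that component and Definition \ref{def::Cp} yields the scalar bound $|g_i^{(k)}(y) - g_i^{(k)}(y')| \leq c_L |y - y'|^\alpha$.

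Next, for $k \geq 1$, I would write the one-dimensional $(k+1)$-th difference as one outer first difference of the inner $k$-th difference, $\triangle_{h_i}^{k+1}(g_i, x_i) = \triangle_{h_i}^k(g_i, x_i+h_i) - \triangle_{h_i}^k(g_i, x_i)$, and represent the inner difference by Lemma \ref{lem::difference} applied with $r = k$. After the substitution $u = y + h_i v$ this reads $\triangle_{h_i}^k(g_i, y) = h_i^k \int_{\mathbb{R}} g_i^{(k)}(y + h_i v)\, N_k(v)\, dv$, where the kernel $N_{k,h_i}$ turns into the normalized B-spline $N_k$. Subtracting the two shifted copies and using that $N_k \geq 0$ with $\int_{\mathbb{R}} N_k = 1$ (being an iterated convolution of $\eins_{[0,1]}$), I estimate $|\triangle_{h_i}^{k+1}(g_i, x_i)| \leq |h_i|^k \int_{\mathbb{R}} |g_i^{(k)}(x_i+h_i+h_i v) - g_i^{(k)}(x_i+h_i v)|\, N_k(v)\, dv \leq c_L |h_i|^{k+\alpha}$, the last step invoking the H\"older bound above with increment exactly $h_i$. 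Summing over $i$ and using $|h_i| \leq \|h\|_2 \leq t$ gives $|\triangle_h^{k+1}(f, x)| \leq \sum_{i=1}^d c_L |h_i|^{k+\alpha} \leq c_L d\, t^{k+\alpha}$, and taking the supremum over $x$ and over $\|h\|_2 \leq t$ delivers the bound on $\omega_{k+1,L_\infty(\mathbb{R}^d)}(f,t)$.

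The main obstacle is the degenerate case $k = 0$, where Lemma \ref{lem::difference} and the object $N_0$ are not meaningful; there I would instead bound $\triangle_{h_i}^1(g_i, x_i) = g_i(x_i+h_i) - g_i(x_i)$ directly by the H\"older continuity of $g_i$ itself (Definition \ref{def::Cp} with $k=0$), reproducing the estimate $c_L |h_i|^\alpha$. Apart from this, the only delicate points are the clean passage from the tensor estimate $\|\nabla^k f(a) - \nabla^k f(b)\| \leq c_L \|a - b\|^\alpha$ to the scalar bound on the single partial $\partial^k f/\partial x_i^k$, and the bookkeeping in the substitution that normalizes $N_{k,h_i}$ to $N_k$; both are routine. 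I would also note that if some $h_i = 0$ the corresponding term vanishes identically, so the estimate holds trivially for that index.
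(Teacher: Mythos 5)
Your proposal is correct and follows essentially the same route as the paper's proof: decompose the multivariate difference via Lemma \ref{lem::binomial}, write the one-dimensional $(k+1)$-th difference as a first difference of the $k$-th difference represented through Lemma \ref{lem::difference}, and conclude from the H\"older bound on $g_i^{(k)}$ together with $\|N_{k,h_i}\|_1 = 1$, summing over coordinates with $|h_i| \leq \|h\|_2 \leq t$. Your explicit handling of the degenerate case $k=0$ (where the B-spline representation is unavailable and a direct H\"older estimate on $g_i$ suffices) is a small gap the paper's own proof leaves implicit, and your fix is the right one.
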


\begin{proof}[of Lemma \ref{lem::modulusone}]
By \eqref{SingleDecomposition}, we have 
\begin{align*}
\triangle_h^{k+1}(f, x) = \sum_{i=1}^d \triangle_{h_i}^{k+1}(g_i, x_i).
\end{align*}
Using the triangle inequality, we get
\begin{align}\label{deco}
\|\triangle_h^{k+1}(f, x)\|_{\infty} 
\leq \sum_{i=1}^d \|\triangle_{h_i}^{k+1}(g_i, x_i)\|_{\infty}.
\end{align}
Since $f \in C^{k, \alpha}(\mathbb{R}^d)$, we have $g_i \in C^{k, \alpha}(\mathbb{R})$ for all $i = 1, \ldots, d$. 
Thus, for any $i = 1, \ldots, d$ and $r \leq k-1$, there holds 
\begin{align*}
g_i^{(r)}(x_i+h_i) - g_i^{(r)}(x_i) 
= \int_{x_i}^{x_i+h_i} g_i^{(r+1)}(u) \, du.
\end{align*}
Then \eqref{Trianglehr} implies that for any $i = 1, \ldots, d$, we have
\begin{align*}
\triangle_{h_i}^k(g_i, x_i) 
= \int_{\mathbb{R}} h_i^k g_i^{(k)}(u) N_{k,h_i}(u - x_i) \, du
\end{align*}
and consequently
\begin{align*}
\triangle_{h_i}^{k+1}(g_i, x_i) 
& = \triangle_{h_i}^1(\triangle_{h_i}^k(g_i, \cdot), x_i)
\\
& = \triangle_{h_i}^1 \biggl( \int_{\mathbb{R}} h_i^k g_i^{(k)}(u) N_{k,h_i}(u - x_i) \, du \biggr)
\\
& = \int_{\mathbb{R}} h_i^k g_i^{(k)}(u) N_{k,h_i}(u - x_i - h_i) \, du 
- \int_{\mathbb{R}} h_i^k g_i^{(k)}(u) N_{k,h_i}(u - x_i) \, du
\\
& = \int_{\mathbb{R}} h_i^k g_i^{(k)}(t+x_i+h_i) N_{k,h_i}(t) \, dt 
- \int_{\mathbb{R}} h_i^k g_i^{(k)}(t+x_i) N_{k,h_i}(t) \, dt
\\
& = \int_{\mathbb{R}} h_i^k (g_i^{(k)}(t+x_i+h_i) - g_i^{(k)}(t+x_i)) N_{k,h_i}(t) \, dt.
\end{align*}
Since $f \in C^{k,\alpha}$ and $\|N_{r,h_i}\|_1 = 1$, we have
\begin{align*}
|\triangle_{h_i}^{k+1}(g_i, x_i)|
& \leq \int_{\mathbb{R}} h_i^k |g_i^{(k)}(t+x_i+h_i) - g_i^{(k)}(t+x_i)| N_{k,h_i}(t) \, dt
\\
& \leq \int_{\mathbb{R}} h_i^k c_L h_i^{\alpha} N_{k,h_i}(t) \, dt
\\
& = c_L h_i^{k+\alpha} \int_{\mathbb{R}} N_{k,h_i}(t) \, dt
\\
& = c_L h_i^{k+\alpha}.
\end{align*} 
This together with \eqref{deco} yields
\begin{align*}
\|\triangle_h^{k+1}(f, x)\|_{\infty} 
\leq \sum_{i=1}^d \|\triangle_{h_i}^{k+1}(g_i, x_i)\|_{\infty} 
\leq \sum_{i=1}^d c_L h_i^{k+\alpha}.
\end{align*}
Taking the supremum over 
both sides of the above inequality with respect to $\|h\|_2\leq t$, we get
\begin{align*}
\omega_{k+1,L_{\infty}(\mathbb{R}^d)}(f,t) 
\leq c_L d \, t^{k+\alpha},
\end{align*}
which completes the proof.
\end{proof}

\begin{proof}[of Proposition \ref{thm::convolution}]
For any $x \in \mathbb{R}^d$, there holds
\begin{align*}
K_j * f(x) 
& = \int_{\mathbb{R}^d} \sum_{\ell=1}^{k+1} {{k+1} \choose \ell} (-1)^{1-\ell} \frac{1}{\ell^d} 
\biggl( \frac{2}{\gamma_j^2 \pi} \biggr)^{d/2} 
\exp \biggl( - \frac{2 \|x - t\|_2^2}{\ell^2 \gamma_j^2} \biggr) f(t) \, dt
\\
& = \int_{\mathbb{R}^d} \biggl( \frac{2}{\gamma_j^2 \pi} \biggr)^{d/2} 
\exp \biggl( - \frac{2\|h\|_2^2}{\gamma_j^2} \biggr) 
\biggl( \sum_{\ell=1}^{k+1} {k+1 \choose \ell} (-1)^{1-\ell} f(x+\ell h) \biggr) \, dh.
\end{align*}
Let $\mathcal{S}_{\nu} := \{ A \in \mathbb{R}^d : \nu(\mathbb{R}^d \setminus A) = 0 \}$, then we have
\begin{align*}
\biggl\| \sum_{j \in J} \eins_{A_j} \cdot (K_j * f) - f \biggr\|_{L_{\infty}(\nu)}
= \sup_{A \in S_{\nu}} \sup_{x \in A} \biggl| \sum_{j \in J} \eins_{A_j}(x) (K_j * f)(x) - f(x) \biggr|.
\end{align*}
Using the equality
\begin{align*}
\int_{\mathbb{R}^d} \exp \biggl( - \frac{2 \|h\|_2^2}{\gamma_j^2} \biggr) \, dh 
= \biggl( \frac{\gamma_j^2 \pi}{2} \biggr)^{d/2},
\end{align*}
we obtain
\begin{align*}
f(x) = \int_{\mathbb{R}^d} \biggl( \frac{2}{\gamma_j^2 \pi} \biggr)^{d/2}  \exp \biggl( - \frac{2 \|h\|_2^2}{\gamma_j^2} \biggr) f(x) \, dh
\end{align*}
and consequently
\begin{align*}
& \biggl| \sum_{j \in J} \eins_{A_j}(x) K _j * f(x) - f(x) \biggr| 
\\
& = \biggl| \sum_{j \in J} \eins_{A_j}(x) 
\int_{\mathbb{R}^d}  \biggl( \frac{2}{\gamma_j^2 \pi} \biggr)^{\frac{d}{2}} 
\exp \biggl( - \frac{2\|h\|_2^2}{\gamma_j^2} \biggr) 
\biggl( \sum_{\ell=0}^{k+1} {{k+1} \choose \ell} (-1)^{2(k+1)+1-\ell} f(x+\ell h) \biggr) \, dh \biggr|
\\
& = \biggl| \sum_{j \in J} \eins_{A_j}(x) (-1)^{k+1+1} \int_{\mathbb{R}^d} 
\biggl( \frac{2}{\gamma_j^2 \pi} \biggr)^{\frac{d}{2}} 
\exp \biggl( - \frac{2\|h\|_2^2}{\gamma_j^2} \biggr) 
\triangle_h^{k+1}(f, x) \, dh \biggr|
\\
& = \sum_{j \in J}  \eins_{A_j}(x) \biggl| \int_{\mathbb{R}^d}
\biggl( \frac{2}{\gamma_j^2 \pi} \biggr)^{\frac{d}{2}} 
\exp \biggl( - \frac{2\|h\|_2^2}{\gamma_j^2} \biggr) \triangle_h^{k+1}(f,x) \, dh \biggr|
\\
& \leq \sum_{j \in J} \eins_{A_j}(x) 
\int_{\mathbb{R}^d} \biggl( \frac{2}{\gamma_j^2 \pi} \biggr)^{\frac{d}{2}} 
\exp \biggl( - \frac{2\|h\|_2^2}{\gamma_j^2} \biggr) 
|\triangle_h^{k+1}(f,x)| \, dh
\\
& \leq \int_{\mathbb{R}^d} \biggl( \frac{2}{\underline{\gamma}^2 \pi} \biggr)^{\frac{d}{2}} 
\exp \biggl( - \frac{2\|h\|_2^2}{\overline{\gamma}^2} \biggr) 
\sum_{j \in J} \eins_{A_j}(x) |\triangle_h^{k+1}(f,x)| \, dh.
\end{align*}
Since $A \in \mathcal{S}_{\nu}$, we have
\begin{align*}
\biggl\| \sum_{j \in J} \eins_{A_j} \cdot (K_j * f) - f \biggr\|_{L_{\infty}(\nu)}
& = \int_{\mathbb{R}^d} \biggl( \frac{2}{\underline{\gamma}^2 \pi} \biggr)^{\frac{d}{2}} 
\exp \biggl( - \frac{2\|h\|_2^2}{\overline{\gamma}^2} \biggr) 
\|\triangle_h^{k+1}(f, \cdot) \|_{L_{\infty}(\nu)} \, dh
\\
& \leq \int_{\mathbb{R}^d} \biggl( \frac{2}{\underline{\gamma}^2 \pi} \biggr)^{\frac{d}{2}} 
\exp \biggl( - \frac{2\|h\|_2^2}{\overline{\gamma}^2} \biggr) 
\omega_{k+1,L_{\infty}(\nu)}(f, \|h\|_2) \, dh.
\end{align*}
Lemma \ref{lem::modulusone} implies that for $f \in C^{k, \alpha}$, there holds 
\begin{align*}
\omega_{k+1, L_{\infty}(\nu)}(f, \|h\|_2) \leq c_L d \|h\|_2^{k+\alpha}
\end{align*}
and thus we obtain
\begin{align*}
& \biggl\| \sum_{j \in J} \eins_{A_j} \cdot (K_j * f) - f \biggr\|_{L_{\infty}(\nu)} 
\\
& \leq \int_{\mathbb{R}^d}  \biggl(\frac{2}{\underline{\gamma}^2 \pi} \biggr)^{\frac{d}{2}} 
\exp \biggl( - \frac{2\|h\|_2^2}{\overline{\gamma}^2} \biggr) 
c_L d \|h\|_2^{k+\alpha} \, dh 
\\
& = c_L d \biggl( \frac{2}{\underline{\gamma}^2 \pi} \biggr)^{\frac{d}{2}} 
\int_{\mathbb{R}^d} \exp \biggl( - \frac{2\|h\|_2^2}{\overline{\gamma}^2}\biggr) \|h\|_2^{k+\alpha} \, dh
\\
& \leq c_L d \biggl( \frac{2}{\underline{\gamma}^2 \pi} \biggr)^{\frac{d}{2}}
\biggl( \int_{\mathbb{R}^d} \exp \biggl( - \frac{2\|h\|_2^2}{\overline{\gamma}^2}\biggr) dh \biggr)^{1/2}
\biggl( \int_{\mathbb{R}^d} \exp \biggl( - \frac{2\|h\|_2^2}{\overline{\gamma}^2}\biggr)\|h\|_2^{2(k+\alpha)} dh \biggr)^{1/2} 
\\
& = c_L d \biggl(\frac{2\overline{\gamma}^2}{ \pi\underline{\gamma}^4} \biggr)^{\frac{d}{4}} 
\biggl( \int_{\mathbb{R}^d} \exp \biggl( - \frac{2\|h\|_2^2}{\overline{\gamma}^2}\biggr)\|h\|_2^{2(k+\alpha)} dh \biggr)^{1/2}.
\end{align*}
For any $x \in \mathbb{R}^d$, there holds
\begin{align*}
\|x\|_2 \leq d^{\frac{k + \alpha - 1}{2(k + \alpha)}} \|x\|_{2(k + \alpha)}, 
\end{align*}
where $d^{\frac{k + \alpha - 1}{2(k + \alpha)}}$ is the embedding constant
of $\ell_{2(k + \alpha)}^d$ to $\ell_2^d$. This 
together with the equality 
$\int_{\mathbb{R}} \exp (- \frac{2 x^2}{\gamma^2}) \, dx = (\frac{\gamma^2 \pi}{2})^{1/2}$
implies
\begin{align*}
& \biggl\| \sum_{j \in J} \eins_{A_j} \cdot (K_j * f) - f \biggr\|_{L_{\infty}(\nu)} 
\\
& \leq c_L d \biggl( \frac{2 \overline{\gamma}^2}{\pi \underline{\gamma}^4} \biggr)^{\frac{d}{4}}
\biggl( \int_{\mathbb{R}^d} d^{k + \alpha-1} \sum_{i=1}^d h_i^{2(k + \alpha)} 
\exp \biggl( - \frac{2\|h\|_2^2}{\overline{\gamma}^2}\biggr) \, dh \biggr)^{1/2}
\\
& \leq c_L d \biggl( \frac{2 \overline{\gamma}^2}{\pi \underline{\gamma}^4} \biggr)^{\frac{d}{4}}
\biggl(d^{k + \alpha-1} \int_{\mathbb{R}^d} \sum_{i=1}^d h_i^{2(k+\alpha)} 
\exp \biggl( - \frac{2 \sum_{i=1}^d h_i^2}{\overline{\gamma}^2} \biggr) \, dh \biggr)^{1/2}
\\
& \leq c_L d^{\frac{k + \alpha+1}{2}} 
\biggl( \frac{2 \overline{\gamma}^2}{\pi \underline{\gamma}^4} \biggr)^{\frac{d}{4}}
\biggl( \int_{\mathbb{R}^d} \sum_{i=1}^d h_i^{2(k + \alpha)} 
\prod_{\ell=1}^d \exp \biggl( - \frac{2 h_{\ell}^2}{\overline{\gamma}^2}\biggr) \, d(h_1, \ldots, h_d) \biggr)^{1/2}
\\
& = c_L d^{\frac{k + \alpha+1}{2}} 
\biggl( \frac{2 \overline{\gamma}^2}{\pi \underline{\gamma}^4} \biggr)^{\frac{d}{4}}
\biggl( \sum_{i=1}^d \int_{\mathbb{R}^d} h_i^{2(k + \alpha)} 
\prod_{\ell=1}^d \exp \biggl( - \frac{2 h_{\ell}^2}{\overline{\gamma}^2}\biggr) \, d h_1 \cdots d h_d  \biggr)^{1/2}
\\
&\leq c_L d^{\frac{k + \alpha+1}{2}} 
\biggl( \frac{2 \overline{\gamma}^2}{\pi \underline{\gamma}^4} \biggr)^{\frac{d}{4}} 
\biggl(\sum_{i=1}^d \big(\frac{\overline{\gamma}^2\pi}{2}\big)^{\frac{d-1}{2}} \int_{\mathbb{R}} h_i^{2(k + \alpha)} \exp \biggl( - \frac{2 h_i^2}{\overline{\gamma}^2}\biggr) \, dh_i\biggr)^{1/2}
\\
& = c_L d^{\frac{k + \alpha+1}{2}} 
\biggl( \frac{2 \overline{\gamma}^2}{\pi \underline{\gamma}^4} \biggr)^{\frac{d}{4}} 
\biggl( \frac{\overline{\gamma}^2\pi}{2} \biggr)^{\frac{d-1}{4}} 
\biggl( \sum_{i=1}^d \int_{\mathbb{R}} h_i^{2(k + \alpha)} 
\exp \biggl( - \frac{2 h_i^2}{\overline{\gamma}^2} \biggr) \, dh_i \biggr)^{1/2}
\\
& = c_L d^{\frac{k + \alpha}{2}+1} 
\biggl( \frac{2}{\pi \overline{\gamma}^2} \biggr)^{\frac{1}{4}}  
\biggl( \frac{\overline{\gamma}}{\underline{\gamma}} \biggr)^{\frac{d}{2}}
\biggl( \int_{\mathbb{R}} x^{2(k + \alpha)} 
\exp \biggl( - \frac{2 x^2}{\overline{\gamma}^2}\biggr) \, dx \biggr)^{1/2}.
\end{align*}
With the substitution $x := (\frac{1}{2} \overline{\gamma}^2 u)^{\frac{1}{2}}$ we get $dx = \frac{\overline{\gamma}}{2\sqrt{2u}} \, du$ and therefore
\begin{align*}
\int_{\mathbb{R}} x^{2(k + \alpha)} \exp \biggl( - \frac{2 x^2}{\overline{\gamma}^2} \biggr) \, dx 
& = \int_{\mathbb{R}} \biggl( \frac{1}{2} \overline{\gamma}^2 u \biggr)^{k + \alpha} 
e^{-u} \frac{\overline{\gamma}}{2\sqrt{2u}} \, du
\\
& = 2^{-(k + \alpha) - \frac{3}{2}} \overline{\gamma}^{2(k + \alpha)+1} 
\int_{\mathbb{R}} u^{k + \alpha-\frac{1}{2}} e^{-u} \, du
\\
& = 2^{-(k + \alpha)-\frac{3}{2}} \overline{\gamma}^{2(k + \alpha)+1} 
\Gamma \biggl( k + \alpha+\frac{1}{2} \biggr).
\end{align*}
Consequently, we obtain 
\begin{align*}
& \biggl\| \sum_{j \in J} \eins_{A_j} \cdot (K_j * f) - f \biggr\|_{L_{\infty}(\nu)} 
\\
& \leq c_L d^{\frac{k + \alpha}{2}+1} 
\biggl( \frac{2}{\pi \overline{\gamma}^2}\biggr)^{\frac{1}{4}} 
\biggl( \frac{\overline{\gamma}}{\underline{\gamma}} \biggr)^{\frac{d}{2}} 
2^{-\frac{k + \alpha}{2} - \frac{3}{4}}
\overline{\gamma}^{k + \alpha+\frac{1}{2}} 
\Gamma^{\frac{1}{2}} \biggl( k + \alpha+\frac{1}{2} \biggr)
\\
&= c_L \pi^{-\frac{1}{4}} 2^{-\frac{k + \alpha}{2}-\frac{1}{2}} 
d^{\frac{k + \alpha}{2}+1} \Gamma^{\frac{1}{2}} \biggl( k + \alpha+\frac{1}{2} \biggr)
\biggl( \frac{\overline{\gamma}}{\underline{\gamma}} \biggr)^{\frac{d}{2}} 
\overline{\gamma}^{k + \alpha}
\\
& =: c_{k, \alpha} \biggl( \frac{\overline{\gamma}}{\underline{\gamma}} \biggr)^{\frac{d}{2}} 
\overline{\gamma}^{k + \alpha},
\end{align*}
where the constant
$c_{k, \alpha} := c_L \pi^{-\frac{1}{4}} 2^{-\frac{k + \alpha}{2}-\frac{1}{2}} 
d^{\frac{k + \alpha}{2}+1} \Gamma^{\frac{1}{2}} (k + \alpha+\frac{1}{2})$.
This completes the proof.
\end{proof}

\subsubsection{Proofs Related to Section \ref{sec::SError2}}

\begin{proof}[of Lemma \ref{lem::CoveringNumber}]
Let us first denote 
\begin{align*}
a & := \mathcal{N} \bigl( \lambda_A^{-1/2} B_{\widehat{\mathcal{H}}_A}, \|\cdot\|_{L_2(\mathrm{P}_{X|A})}, \varepsilon_A \bigr) \in \mathbb{N},
\\
b & := \mathcal{N} \bigl( \lambda_B^{-1/2} B_{\widehat{\mathcal{H}}_B}, \|\cdot\|_{L_2(\mathrm{P}_{X|B})}, \varepsilon_B \bigr) \in \mathbb{N}.
\end{align*}
By the definition of covering numbers, 
there exist $a$ functions $\widehat{f}_1, \ldots, \widehat{f}_a \in \lambda_A^{-1/2} B_{\widehat{\mathcal{H}}_A}$ and
$b$ functions
$\widehat{h}_1, \ldots, \widehat{h}_b \in \lambda_B^{-1/2} B_{\widehat{\mathcal{H}}_B}$ such that $\{ \widehat{f}_1, \ldots, \widehat{f}_a \}$ is an $\varepsilon_A$-cover of $\lambda_A^{-1/2} B_{\widehat{\mathcal{H}}_A}$ with respect to $\|\cdot\|_{L_2(\mathrm{P}_{X|A})}$ and $\{ \widehat{h}_1, \ldots, \widehat{h}_b \}$ is an $\varepsilon_B$-cover of $\lambda_B^{-1/2} B_{\widehat{\mathcal{H}}_B}$ with respect to $\|\cdot\|_{L_2(\mathrm{P}_{X|B})}$.
Moreover, for every function $\widehat{g}_A \in \lambda_A^{-1/2} B_{\widehat{\mathcal{H}}_A}$, there exists an $i_A \in \{ 1, \ldots, a \}$ such that 
\begin{align}\label{epsilonA}
\bigl\| \widehat{g}_A - \widehat{f}_{i_A} \bigr\|_{L_2(\mathrm{P}_{X|A})} 
\leq \varepsilon_A,
\end{align}
and for every function $\widehat{g}_B \in \lambda_B^{-1/2} B_{\widehat{\mathcal{H}}_B}$, there exists an $i_B \in \{ 1, \ldots, b \}$ such that 
\begin{align}\label{epsilonB}
\bigl\| \widehat{g}_B - \widehat{h}_{i_B} \bigr\|_{L_2(\mathrm{P}_{X|B})} 
\leq \varepsilon_B.
\end{align}
Then the definition of direct sums implies that for any $g \in B_{\mathcal{H}}$, there exists a function $\widehat{g}_A \in \lambda_A^{-1/2} B_{\widehat{\mathcal{H}}_A}$ and a function $\widehat{g}_B \in \lambda_B^{-1/2} B_{\widehat{\mathcal{H}}_B}$ such that $g = \widehat{g}_A + \widehat{g}_B$. 
This together with \eqref{epsilonA} and \eqref{epsilonB} yields
\begin{align*}
\bigl\| g - (\widehat{f}_{i_A} + \widehat{h}_{i_B}) \bigr\|_{L_2(\mathrm{P}_X)}^2 
& = \bigl\| (\widehat{g}_A - \widehat{f}_{i_A}) 
+ (\widehat{g}_B - \widehat{h}_{i_B}) \bigr\|_{L_2(\mathrm{P}_X)}^2 
\\
& = \bigl\| \widehat{g}_A - \widehat{f}_{i_A} \bigr\|_{L_2(\mathrm{P}_{X|A})}^2 
+ \bigl\| \widehat{g}_B - \widehat{h}_{i_B} \bigr\|_{L_2(\mathrm{P}_{X|B})}^2 
\\
& \leq \varepsilon_A^2 + \varepsilon_B^2 
=: \varepsilon^2.
\end{align*}
Consequently,
$\bigl\{ \widehat{f}_{i_A} + \widehat{h}_{i_B} : 
\widehat{f}_{i_A} \in \{ \widehat{f}_1, \ldots, \widehat{f}_a \} 
\text{ and } 
\widehat{h}_{i_B} \in \{ \widehat{h}_1, \ldots, \widehat{h}_b \}  \bigr\}$
is an $\varepsilon$-net of $\mathcal{H}$ with respect to $\| \cdot \|_{L_2(\mathrm{P}_X)}$. 
By the definition of covering numbers, we then get
\begin{align*}
\mathcal{N}(B_{\mathcal{H}}, \|\cdot\|_{L_2(\mathrm{P}_X)}, \varepsilon) 
\leq \mathcal{N} \bigl( \lambda_A^{-1/2} B_{\widehat{\mathcal{H}}_A}, \|\cdot\|_{L_2(\mathrm{P}_{X|A})}, \varepsilon_A \bigr) 
\cdot \mathcal{N} \bigl( \lambda_B^{-1/2} B_{\widehat{\mathcal{H}}_B}, \|\cdot\|_{L_2(\mathrm{P}_{X|B})}, \varepsilon_B \bigr),
\end{align*}
which proves the assertion.
\end{proof}

\begin{proof}[of Lemma \ref{lem::jointspace}]
Let us first denote
\begin{align*}
a & := \mathcal{N}(\eins_{\pi_h}, \|\cdot\|_{L_2(\mathrm{P}_X)}, \varepsilon)  \in \mathbb{N}, 
\\
b & := \mathcal{N}(B_{\mathcal{H}}, \|\cdot\|_{L_2(\mathrm{P}_X)}, \varepsilon)  \in \mathbb{N}.
\end{align*}
By the definition of covering numbers, 
there exist $a$ functions $f_1, \ldots, f_a \in \eins_{\pi_h}$ and $b$ functions $g_1, \ldots, g_b \in B_{\mathcal{H}}$ such that $\{ f_1, \ldots, f_a \}$ is an $\varepsilon$-cover of $\eins_{\pi_h}$ with respect to $L_2(\mathrm{P}_X)$ and $\{ g_1, \ldots, g_b \}$ is an $\varepsilon$-cover of $B_{\mathcal{H}}$ with respect to $L_2(\mathrm{P}_X)$. 
Moreover, for every function $h \in B_{\mathcal{H}} \circ \eins_{\pi_h}$, there exist an $f \in \eins_{\pi_h}$ and a $g \in B_{\mathcal{H}}$ such that $h = g \circ f$.
The definition of covering numbers implies that
for this function $f$, there exists an $i \in \{ 1, \ldots, a \}$ such that
\begin{align*}
\|f - f_i\|_{L_2(\mathrm{P}_X)} \leq \varepsilon,
\end{align*}
and for this function $g$, there exists an $j \in \{ 1, \ldots, b \}$ such that
\begin{align*}
\|g - g_j\|_{L_2(\mathrm{P}_X)} \leq \varepsilon.
\end{align*}
Consequently, we obtain
\begin{align*}
\|g \circ f - g_j \circ f_i\|_{L_2(\mathrm{P}_X)} 
& = \|g \circ f - g_j \circ f\|_{L_2(\mathrm{P}_X)} 
+ \|g_j \circ f - g_j \circ f_i\|_{L_2(\mathrm{P}_X)}
\\
& = \|(g - g_j) \circ f \|_{L_2(\mathrm{P}_X)} 
+ \|g_j \circ (f - f_i)\|_{L_2(\mathrm{P}_X)}
\\
& \leq \|f\|_{\infty} \|g - g_j\|_{L_2(\mathrm{P}_X)} 
+ \|g_j\|_{\infty} \|f - f_i\|_{L_2( \mathrm{P}_X)}
\\
& \leq (1+\|k_{\gamma}\|_{\infty})\varepsilon
\\
& \leq 2 \varepsilon,
\end{align*}
and thus the assertion is proved.
\end{proof}

The following lemma, which gives the upper bound for the entropy numbers of Gaussian kernels, follows directly from Theorem 6.27 in \cite{StCh08}. For the sake of completeness, we present the proof.

\begin{lemma}\label{thm::entropygaussian}
Let $\mathcal{X} \subset \mathbb{R}^d$, $\mathrm{P}_X$ be a distribution on $\mathcal{X}$ and $A \subset \mathcal{X}$ be such that $\mathring{A} \neq \emptyset$ and such that there exists an Euclidean ball $B \subset \mathbb{R}^d$ with radius $r_B > 0$ containing $A$, i.e., $A \subset B$. Moreover, for $0 < \gamma \leq r_B$, let $\mathcal{H}_{\gamma}(A)$ be the RKHS of the Gaussian RBF kernel $k_{\gamma}$ over $A$. Then, for all $m \in \mathbb{N}^+$, there exists a constant $c_{m,d} > 0$ such that
\begin{align*}
e_i(B_{\mathcal{H}_{\gamma}(A)}, L_2(\mathrm{P}_{X|A})) \leq c_{m,d} \sqrt{\mathrm{P}_X(A)}r_B^m\gamma^{-m}i^{-\frac{m}{d}}, \qquad i>1.
\end{align*}
\end{lemma}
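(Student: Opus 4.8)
The plan is to deduce the estimate from Theorem 6.27 in \cite{StCh08} after two elementary reductions: a normalization of the measure that produces the factor $\sqrt{\mathrm{P}_X(A)}$, and a dilation that carries the radius-$r_B$ ball onto the unit ball and thereby converts the Gaussian width $\gamma$ into the admissible width $\gamma/r_B \le 1$. Throughout I read $\mathrm{P}_{X|A}$ as the restriction of $\mathrm{P}_X$ to $A$ (total mass $\mathrm{P}_X(A)$), which is exactly the convention already forced by the proof of Lemma \ref{lem::CoveringNumber}.

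First I would disentangle the two norms. Writing $\nu := \mathrm{P}_{X|A}$ with $\nu(A)=\mathrm{P}_X(A)$ and letting $\bar\nu := \nu/\mathrm{P}_X(A)$ be the associated probability measure, one has $\|f\|_{L_2(\nu)} = \sqrt{\mathrm{P}_X(A)}\,\|f\|_{L_2(\bar\nu)}$ for every $f$. Since entropy numbers scale linearly with the target norm, this gives at once
\[
e_i\bigl(B_{\mathcal{H}_\gamma(A)}, L_2(\mathrm{P}_{X|A})\bigr) = \sqrt{\mathrm{P}_X(A)}\; e_i\bigl(B_{\mathcal{H}_\gamma(A)}, L_2(\bar\nu)\bigr),
\]
so it remains to bound the entropy numbers with respect to the \emph{probability} measure $\bar\nu$, where the cited theorem applies.

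Next I would rescale to the unit ball. Let $\Phi(x) := x/r_B$, which maps $B$ onto the closed unit ball of $\mathbb{R}^d$ and $A$ onto $\Phi(A)$. Because $\|x-x'\|_2^2 = r_B^2\|\Phi(x)-\Phi(x')\|_2^2$, the Gaussian kernel transforms as $k_\gamma(x,x') = k_{\gamma/r_B}(\Phi(x),\Phi(x'))$, so the pullback $f\mapsto f\circ\Phi$ is an isometric isomorphism between $\mathcal{H}_{\gamma/r_B}(\Phi(A))$ and $\mathcal{H}_\gamma(A)$, while the pushforward $\Phi_{\#}\bar\nu$ preserves the $L_2$-norms. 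Entropy numbers being invariant under such isometries, the task reduces to bounding $e_i(B_{\mathcal{H}_{\gamma/r_B}(\Phi(A))}, L_2(\Phi_{\#}\bar\nu))$ on the unit ball with width $\gamma/r_B \in (0,1]$; here the hypothesis $\gamma\le r_B$ is precisely what makes this width admissible, and $\mathring{A}\neq\emptyset$, $A\subset B$ are exactly the remaining hypotheses of Theorem 6.27. Applying that theorem yields, for each $m\in\mathbb{N}^+$, a constant $c_{m,d}$ with $e_i \le c_{m,d}(\gamma/r_B)^{-m} i^{-m/d} = c_{m,d}\,r_B^m\gamma^{-m} i^{-m/d}$, and reinserting the factor $\sqrt{\mathrm{P}_X(A)}$ from the first step gives the assertion.

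I do not expect a genuine obstacle, since the statement is essentially bookkeeping around the cited theorem. The one point demanding care is the joint scaling of the spatial variable and the kernel width: one must check that dilating the domain by $r_B^{-1}$ rescales the width to $\gamma/r_B$ (and not, say, $\gamma r_B$), so that $\gamma\le r_B$ indeed places the rescaled width in the range $(0,1]$ required by Theorem 6.27 and the factor $r_B^m$ appears with the correct exponent. The measure-normalization step is where the $\sqrt{\mathrm{P}_X(A)}$ originates, and it must be kept separate from the kernel rescaling to avoid double counting the mass of $A$.
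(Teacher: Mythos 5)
Your overall strategy --- peeling off $\sqrt{\mathrm{P}_X(A)}$ by normalizing the measure, converting $r_B$ into the kernel width by dilation, and then invoking Theorem 6.27 of \cite{StCh08} --- is sensible, and both bookkeeping computations are correct: the scaling $\|f\|_{L_2(\mathrm{P}_{X|A})} = \sqrt{\mathrm{P}_X(A)}\,\|f\|_{L_2(\bar\nu)}$ and the kernel identity $k_\gamma(x,x') = k_{\gamma/r_B}(\Phi(x),\Phi(x'))$, with $\gamma \le r_B$ giving a rescaled width in $(0,1]$. The gap is in the final step: Theorem 6.27 does not have the hypotheses you attribute to it. As stated in \cite{StCh08}, and as the paper uses it, it bounds $e_i(\mathrm{id} : H_\gamma(X) \to \ell_\infty(X))$ where $X$ is a closed Euclidean \emph{ball} and the target carries the \emph{supremum} norm. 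You apply it to $B_{\mathcal{H}_{\gamma/r_B}(\Phi(A))}$, whose domain $\Phi(A)$ is merely a subset of the unit ball, measured in $L_2$ of a probability measure; neither mismatch is covered by the theorem, so your claim that ``$\mathring{A}\neq\emptyset$, $A\subset B$ are exactly the remaining hypotheses of Theorem 6.27'' is precisely where the argument breaks.

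Two bridging steps are needed, and they are exactly what the paper's proof supplies. First, to pass from the RKHS over $A$ (equivalently, over $\Phi(A)$) to the RKHS over the full ball, the paper composes the isometric extension $\mathcal{I}_A : H_\gamma(A) \to H_\gamma(\mathbb{R}^d)$ with the restriction $\mathcal{I}_B^{-1} : H_\gamma(\mathbb{R}^d) \to H_\gamma(B)$ (Corollary 4.43 in \cite{StCh08}), obtaining a norm-one operator $H_\gamma(A) \to H_\gamma(B)$; this is the one genuinely RKHS-theoretic ingredient of the proof, and it is this step --- not Theorem 6.27 --- that the hypotheses $\mathring{A}\neq\emptyset$ and $A \subset B$ serve. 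Second, to move from the $\ell_\infty(B)$ target to the $L_2$ target one needs the embedding $\mathrm{id} : \ell_\infty(B) \to L_2(\mathrm{P}_{X|A})$, whose norm is at most $\sqrt{\mathrm{P}_X(A)}$, together with the multiplicativity of entropy numbers under composition with bounded operators ((A.38) and (A.39) in \cite{StCh08}). Once these two steps are inserted, your normalization/dilation argument becomes a correct, if mildly redundant, repackaging of the paper's factorization $H_\gamma(A) \to H_\gamma(B) \to \ell_\infty(B) \to L_2(\mathrm{P}_{X|A})$: the paper extracts the factors $\sqrt{\mathrm{P}_X(A)}$ and $r_B^m \gamma^{-m}$ as operator norms and an entropy bound inside one chain, whereas you extract them by rescaling the measure and the domain beforehand.
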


\begin{proof}[of Lemma \ref{thm::entropygaussian}]
Let us consider the commutative diagram
\begin{align*}
\xymatrix{
H_{\gamma}(A) \ar[rr]^{\mathrm{id}} \ar[d]_{\mathcal{I}_B^{-1} \circ \mathcal{I}_A} & & L_2(\mathrm{P}_{X|A}) 
\\
H_{\gamma}(B) \ar[rr]_{\mathrm{id}} & & \ell_{\infty}(B) \ar[u]_{\mathrm{id}}
}
\end{align*}
where the extension operator $\mathcal{I}_A : H_{\gamma}(A) \to H_{\gamma}(\mathbb{R}^d)$ and 
the restriction operator $\mathcal{I}_B^{-1} : H_{\gamma}(\mathbb{R}^d) \to H_{\gamma}(B)$ 
given by Corollary 4.43 in \cite{StCh08} are isometric isomorphisms such that 
$\|\mathcal{I}_B^{-1} \circ \mathcal{I}_A : H_{\gamma}(A) \to H_{\gamma}(B)\| = 1$.

Let $\ell_{\infty}(B)$ be the space of all bounded functions on $B$. Then
for any $f \in \ell_{\infty}(B)$, there holds
\begin{align*}
\|f\|_{L_2(\mathrm{P}_{X|A})} 
& = \biggl(\int_{\mathcal{X}}\eins_A(x)|f(x)|^2\, d\mathrm{P}_X(x) \biggr)^{\frac{1}{2}}
\\
& \leq \|f\|_{\infty} \biggl(\int_{\mathcal{X}}\eins_A(x)\, d\mathrm{P}_X(x) \biggr)^{\frac{1}{2}} 
= \sqrt{\mathrm{P}_X(A)}
\end{align*}
and consequently 
\begin{align*}
\|\mathrm{id} : \ell_{\infty}(B) \to L_2(\mathrm{P}_{X|A})\| \leq \sqrt{\mathrm{P}_X(A)}. 
\end{align*}
This together with (A.38), (A.39) and Theorem 6.27 in \cite{StCh08} implies that for all $i \geq 1$ and $m \geq 1$, there holds
\begin{align*}
& e_i(\mathrm{id} : H_{\gamma}(A) \to L_2(\mathrm{P}_{X|A}))
\\
& \leq \|\mathcal{I}_B^{-1} \circ \mathcal{I}_A : H_{\gamma}(A) \to H_{\gamma}(B)\|
\cdot e_i(\mathrm{id} : H_{\gamma}(B) \to \ell_{\infty}(B)) 
\cdot \|\mathrm{id} : \ell_{\infty}(B) \to L_2(\mathrm{P}_{X|A})\|
\\
& \leq \sqrt{\mathrm{P}_X(A)}c_{m,d}r_B^m\gamma^{-m}i^{-\frac{m}{d}},
\end{align*}
where $c_{m,d}$ is the constant as in Theorem 6.27 in \cite{StCh08}.
\end{proof}

\begin{proof}[of Proposition \ref{lem::jointspacecoveringnumber}]
First of all, note that the restriction operator $\mathcal{I} : B_{\widehat{\mathcal{H}}_j} \to B_{\mathcal{H}_j}$ with $\mathcal{I} \widehat{f} := f$ is an isometric isomorphism. 
Inequality (A.36) in \cite{StCh08} and Lemma \ref{thm::entropygaussian} yield 
\begin{align*}
e_i \bigl( \mathrm{id} : \lambda_{2, j}^{-1/2} B_{\widehat{\mathcal{H}}_j} \to  L_2(\mathrm{P}_{X|A_j}) \bigr) 
& = 2 \lambda_{2, j}^{-1/2} e_i \bigl( \mathrm{id} :  B_{\widehat{\mathcal{H}}_j} \to  L_2(\mathrm{P}_{X|A_j}) \bigr) 
\\
& \leq 2 \lambda_{2, j}^{-1/2} \bigl\| \mathcal{I} : B_{\widehat{H}_j} \to B_{\mathcal{H}_j} \bigr\| 
\cdot e_i \bigl( \mathrm{id} :  B_{\mathcal{H}_j} \to L_2(\mathrm{P}_{X|A_j}) \bigr)
\\
& \leq 2 \lambda_{2, j}^{-1/2} a_j i^{-\frac{1}{2p}},
\end{align*}
where $a_j = \sqrt{\mathrm{P}_X(A_j)}c_{m,d}(\sqrt{d}\cdot\overline{h}_0)^m\gamma_j^{-m}$ and $p = d/(2m)$. Note that $p$ can be arbitrarily small because $m \in \mathbb{N}^+$ is sufficiently large for Gaussian RBF kernel. Then \eqref{EntropyCover} implies that for all $\varepsilon > 0$, there holds 
\begin{align*}
\ln \mathcal{N} \bigl( \lambda_{2,j}^{-1/2} B_{\widehat{\mathcal{H}}_j}, \|\cdot\|_{L_2(\mathrm{P}_{X|A_j})}, \varepsilon \bigr) 
\leq \ln(4) \bigl( 2 \lambda_{2,j}^{-1/2} a_j \bigr)^{2p} \varepsilon^{-2p}.
\end{align*}
For any $A_j \in \pi_H$ with $H\sim \mathrm{P}_H$, obviously we have $\eins_{A_j} \in \eins_{\pi_H} \in \eins_{\pi_h}$, consequently we obtain
\begin{align*}
&\ln \mathcal{N} \bigl( \lambda_{2,j}^{-1/2} B_{\widehat{\mathcal{H}}_j} \circ \eins_{A_j}, \|\cdot\|_{L_2(\mathrm{P}_{X|A_j})}, 2\varepsilon \bigr)
\\
&\leq \ln \mathcal{N} \bigl( \lambda_{2,j}^{-1/2} B_{\widehat{\mathcal{H}}_j} \circ \eins_{\pi_h}, \|\cdot\|_{L_2(\mathrm{P}_{X|A_j})}, 2\varepsilon \bigr)
\\
&= \ln \mathcal{N} \bigl( \lambda_{2,j}^{-1/2} B_{\widehat{\mathcal{H}}_j}, \|\cdot\|_{L_2(\mathrm{P}_{X|A_j})}, \varepsilon \bigr)  + \ln \mathcal{N} \bigl( \eins_{\pi_h}, \|\cdot\|_{L_2(\mathrm{P}_X)}, \varepsilon \bigr) 
\\
& \leq \ln(4) \bigl( 2 \lambda_{2,j}^{-1/2} a_j \bigr)^{2p} \varepsilon^{-2p}+\ln(K (2^d+2) (4e)^{2^d+2} (1/\varepsilon)^{2(2^d+1)}).
\end{align*}
Therefore, we have
\begin{align*}
&\ln \mathcal{N} \bigl( \lambda_{2,j}^{-1/2} B_{\widehat{\mathcal{H}}_j} \circ \eins_{A_j}, \|\cdot\|_{L_2(\mathrm{P}_{X|A_j})}, \varepsilon \bigr)
\\
&\leq \ln(4) \bigl( 4 \lambda_{2,j}^{-1/2} a_j \bigr)^{2p} \varepsilon^{-2p}+\ln(K (2^d+2) (4e)^{2^d+2} (2/\varepsilon)^{2(2^d+1)})
\\
&\leq \ln(4) \bigl( 4 \lambda_{2,j}^{-1/2} a_j \bigr)^{2p} \varepsilon^{-2p} + 2^{d+4} \ln (1/\varepsilon).
\end{align*}
where in the last step we also used the estimate
\begin{align*}
\ln \bigl( K (2^d+2) (4e)^{2^d+2} (2/\varepsilon)^{2^d+1} \bigr)
\leq 8(2^d+2) \ln(1/\varepsilon)
\leq 2^3\cdot 2^{d+1} \ln(1/\varepsilon)
\leq 2^{d+4}\ln(1/\varepsilon),
\end{align*}
which is based on the following inequalities:
\begin{align*}
\ln K 
& \leq \ln (1/\varepsilon),
\\
\ln (2^d+2) 
& \leq 2^d+2 \leq (2^d+2) \ln (1/\varepsilon)
\\
(2^d+2) \ln(4e) 
& \leq (2^d+2) \ln(e^3) 
= 3 (2^d+2)
\leq 3(2^d+2) \ln (1/\varepsilon)
\\
2(2^d+1)\ln(2/\varepsilon)
&= 2(2^d+1)(\ln(2)+\ln(1/\varepsilon))\leq 4(2^d+1)\ln(1/\varepsilon).
\end{align*}
Therefore, there holds
\begin{align}\label{equ::cover}
&\sup_{\varepsilon \in (0,1/\max\{e, K\})} \varepsilon^{2p}  \ln\mathcal{N}( \lambda_{2,j}^{-1/2} B_{\widehat{\mathcal{H}}_j} \circ \eins_{A_j}, \|\cdot\|_{L_2(\mathrm{P}_X )}, \varepsilon) 
\nonumber\\
& \leq \ln(4) \bigl( 4 \lambda_{2,j}^{-1/2} a_j \bigr)^{2p} + 2^{d+4} \varepsilon^{2p}\ln (1/\varepsilon).
\end{align}
Simple analysis shows that the right hand side of \eqref{equ::cover} is maximized at $\varepsilon^* = e^{-1/(2p)}$ and consequently we obtain
\begin{align*}
\ln\mathcal{N}( \lambda_{2,j}^{-1/2} B_{\widehat{\mathcal{H}}_j} \circ \eins_{A_j}, \|\cdot\|_{L_2(\mathrm{P}_X)}, \varepsilon) & \leq (a/\varepsilon)^{2p}
\end{align*}
with the constant $a$ is defined by 
\begin{align*}
a :=  \biggl( \ln(4) \bigl( 4 \lambda_{2,j}^{-1/2} a_j \bigr)^{2p} + \frac{2^{d+4}}{2pe} \biggr)^{\frac{1}{2p}}.
\end{align*}
By \cite[Exercise 6.8]{StCh08}, we have
\begin{align*}
e_i(\lambda_{2,j}^{-1/2} B_{\widehat{\mathcal{H}}_j} \circ \eins_{A_j}, \|\cdot\|_{L_2(\mathrm{P}_X)})
\leq 3^{\frac{1}{2p}} a i^{-\frac{1}{2p}}
\leq \biggl(3\ln(4) \bigl( 4 \lambda_{2,j}^{-1/2} a_j \bigr)^{2p} + \frac{2^{d+6}}{2pe}\biggr)^{\frac{1}{2p}} i^{\frac{1}{2p}},
\end{align*}
which holds for $\mathbb{E}_{D \sim \mathrm{P}^n}e_i(\lambda_{2,j}^{-1/2} B_{\widehat{\mathcal{H}}_j} \circ \eins_{A_j}, \|\cdot\|_{L_2(\mathrm{P}_X)})$ as well. Thus, we have
\begin{align*}
\mathbb{E}_{D \sim \mathrm{P}^n}e_i(\lambda_{2,j}^{-1/2} B_{\widehat{\mathcal{H}}_j} \circ \eins_{A_j}, \|\cdot\|_{L_2(\mathrm{P}_X)})
\leq \biggl(3\ln(4) \bigl( 4 \lambda_{2,j}^{-1/2} a_j \bigr)^{2p} + \frac{2^{d+6}}{2pe}\biggr)^{\frac{1}{2p}} i^{\frac{1}{2p}} 
:= a'_j i^{-\frac{1}{2p}}.
\end{align*}
Using $\|\cdot\|_{\ell_p^m} \leq m^{\frac{1-p}{p}}\|\cdot\|_{\ell_1^m}$, we further get
\begin{align*}
& \biggl( \sum_{j \in \mathcal{I}_H} \max\{a'_j,B \} \biggr)^{2p}
\\
& = \biggl( \sum_{j \in \mathcal{I}_H} 
\max \biggl\{ \biggl( 3 \ln(4) \bigl( 4 \lambda_{2,j}^{-1/2} a_j \bigr)^{2p} + \frac{2^{d+6}}{2pe} \biggr)^{\frac{1}{2p}},
B \biggr\} \biggr)^{2p}
\\
& \leq \biggl( \sum_{j \in \mathcal{I}_H}
\biggl( 3 \ln(4) \bigl( 4 \lambda_{2,j}^{-1/2} a_j \bigr)^{2p} 
+ \frac{2^{d+6}}{2pe} \biggr)^{\frac{1}{2p}} 
+ |\mathcal{I}_H| B \biggr)^{2p}
\\
&\leq \bigg(\sum_{j \in \mathcal{I}_H}2\biggl(3\ln(4) \bigl( 4 \lambda_{2,j}^{-1/2} a_j \bigr)^{2p}\biggr)^{\frac{1}{2p}} + 2 |\mathcal{I}_H| \biggl(\frac{2^{d+6}}{2pe}\biggr)^{\frac{1}{2p}} +|\mathcal{I}_H|B  \bigg)^{2p}
\\
&=2^{2p}\bigg(\sum_{j \in \mathcal{I}_H}\biggl(3\ln(4) \bigl( 4 \lambda_{2,j}^{-1/2} a_j \bigr)^{2p}\biggr)^{\frac{1}{2p}} + |\mathcal{I}_H| \biggl(\frac{2^{d+6}}{2pe}\biggr)^{\frac{1}{2p}} +|\mathcal{I}_H|(B/2)  \bigg)^{2p}
\\
&\leq 2^{2p} \bigg(\sum_{j \in \mathcal{I}_H}3\ln(4) \bigl( 4 \lambda_{2,j}^{-1/2} a_j \bigr)^{2p}+ |\mathcal{I}_H|^{2p} \frac{2^{d+6}}{2pe} +|\mathcal{I}_H|^{2p} \bigg(\frac{B}{2}\bigg)^{2p}\bigg)
\\
&\leq 2^{2p} 3\ln(4) 4^{2p}c_p^{2p}(\sqrt{d}\cdot \overline{h}_0)^d\sum_{j \in \mathcal{I}_H}\lambda_{2,j}^{-p} \mathrm{P}_X(A_j)^p\gamma_j^{-(d+2p)}
\\
&\phantom{=}
+ 2^{2p} |\mathcal{I}_H|^{2p} \frac{2^{d+6}}{2pe} +2^{2p} |\mathcal{I}_H|^{2p} \bigg(\frac{B}{2}\bigg)^{2p}
\\
&\leq 2^{2p}3\ln(4) 4^{2p}c_p^{2p}(\sqrt{d}\cdot \overline{h}_0)^d|\mathcal{I}_H|^{1-p}\bigg(\sum_{j \in \mathcal{I}_H}\lambda_{2,j}^{-1} \mathrm{P}_X(A_j)\gamma_j^{-\frac{d+2p}{p}}\bigg)^p
\\
&\phantom{=}
+2^{2p} |\mathcal{I}_H|^{2p} \frac{2^{d+6}}{2pe} +2^{2p} |\mathcal{I}_H|^{2p} \bigg(\frac{B}{2}\bigg)^{2p},
\end{align*}
which proves the assertion.
\end{proof}

\subsubsection{Proofs Related to Section \ref{sec::OracleKernelCk}}

\begin{proof}[of Proposition \ref{prop::oracleCk}]
Let us denote 
\begin{align} \label{def::rStar}
r^* := \inf_{f \in \mathcal{H}} \lambda_1 \underline{h}_0^q
+ \lambda_2 \|f_{\mathrm{D},\gamma}\|_{\mathcal{H}}^2
+ \mathcal{R}_{L,\mathrm{P}}(f_{\mathrm{D},\gamma}) - \mathcal{R}_{L,\mathrm{P}}^*, 
\end{align}
and for $r > r^*$, define
\begin{align*}
\mathcal{F}_r 
& := \{ f \in \mathcal{H} : \lambda_1 \underline{h}_0^q + \lambda_2 \|f\|_{\mathcal{H}}^2
+ \mathcal{R}_{L,\mathrm{P}}(f) - \mathcal{R}_{L,\mathrm{P}}^* \leq r \},
\\
\widehat{\mathcal{F}}_{j,r} 
& := \{ f \in \widehat{\mathcal{H}}_{\gamma_j} : 
\lambda_1 \underline{h}_0^q/m + \lambda_2 \|f\|_{\widehat{\mathcal{H}}_{\gamma_j}}^2
+ \mathcal{R}_{L_j,\mathrm{P}}(f) - \mathcal{R}_{L,\mathrm{P}}^* \leq r_j \},
\\
\mathcal{H}_r 
& := \{ L \circ \wideparen{f} - L \circ f_{L,\mathrm{P}}^* : f \in \mathcal{F}_r  \}.
\end{align*}
Obviously, for all $r > 0$, there exists $r_1, \ldots, r_m$ such that $\sum_{j=1}^m r_j = r$ and $\mathcal{F}_r = \bigoplus_{j=1}^m \widehat{\mathcal{F}}_{j,r}$. Moreover, the definition \eqref{def::rStar} yields
\begin{align*}
\lambda_2 \|f_{\mathrm{D},\gamma}\|_{\mathcal{H}}^2 
\leq \lambda_1 \underline{h}_0^q + \lambda_2 \|f_{\mathrm{D},\gamma}\|_{\mathcal{H}}^2
+ \mathcal{R}_{L,\mathrm{P}}(f_{\mathrm{D},\gamma}) - \mathcal{R}_{L,\mathrm{P}}^* 
\leq r 
\end{align*}
and consequently we have $\mathcal{F}_r \subset (r/\lambda_2)^{1/2}B_{\mathcal{H}}$. 
Analogously, there holds 
$\lambda_2 \|f_{\mathrm{D}_j,\gamma_j}\|_{\widehat{\mathcal{H}}_{\gamma_j}}^2 \leq r_j$ and thus $\widehat{\mathcal{F}}_{j,r} \subset (r_j/\lambda_2)^{1/2}B_{\widehat{\mathcal{H}}_{\gamma_j}}$,
which implies
\begin{align*}
\mathbb{E}_{D \sim \mathrm{P}^n} e_i(\mathcal{H}_r, L_2(\mathrm{D}))
& \leq |L|_{M,1} \mathbb{E}_{D \sim \mathrm{P}^n} e_i(\mathcal{F}_r, L_2(\mathrm{D}))
\\
& = |L|_{M,1} \sum_{j=1}^m \mathbb{E}_{\mathrm{D}_j \sim \mathbb{P}^{|D_j|}} 
e_{i/m} \bigl( \widehat{\mathcal{F}}_{j,r}, L_2(\mathrm{D}_j) \bigr)
\\
& \leq 2|L|_{M,1} \sum_{j=1}^m (r_j/\lambda_2)^{1/2} a'_j m^{\frac{1}{2p}} i^{-\frac{1}{2p}}
\\
&\leq 2 |L|_{M,1} \biggl( \frac{r}{\lambda_2} \biggr)^{1/2} m^{\frac{1}{2p}} \biggl( \sum_{j=1}^m a'_j \biggr) \cdot i^{-\frac{1}{2p}}.
\end{align*}
Moreover, for $f \in \mathcal{F}_r$, we have
\begin{align*}
\mathbb{E}_{\mathrm{P}} (L \circ \wideparen{f} - L \circ f_{L,\mathrm{P}}^*)^2 
\leq V r^{\vartheta}.
\end{align*}
Consequently, Theorem 7.16 in \cite{StCh08} applied to $\mathcal{H}_r$ shows that 
$\mathbb{E}_{D \sim \mathrm{P}^n} \mathrm{Rad}_D (\mathcal{H}_r, n) \leq \varphi_n(r)$
holds with
\begin{align*}
\varphi_n(r) 
& := \max \biggl\{ C_1(p) 2^p |L|_{M,1}^p \biggl( \frac{r}{\lambda_2} \biggr)^{p/2} m^{\frac{1}{2}}
\bigl( V r^{\vartheta})^{\frac{1-p}{2}} \biggl( \sum_{j=1}^m a'_j \biggr)^p n^{-\frac{1}{2}},
\\
& \qquad \qquad \quad
C_2(p) \bigl( 2^p |L|_{M,1}^p \bigr)^{\frac{2}{1+p}}
\biggl( \frac{r}{\lambda_2} \biggr)^{\frac{p}{1+p}} m^{\frac{1}{1+p}}
\biggl( \sum_{j=1}^m a'_j \biggr)^{\frac{2p}{1+p}} 
B^{\frac{1-p}{1+p}} n^{-\frac{1}{1+p}} \biggr\},
\end{align*}
where $C_1(p)$ and $C_2(p)$ are the constants as in \cite[Theorem 7.16]{StCh08}. 
Simple calculations show that $\varphi_n(r)$ satisfies the condition $\varphi_n(4r) \leq 2 \varphi_n(r)$. 
Moreover, using $2-p-\vartheta+\vartheta p \geq 1$, the condition $r \geq 30 \varphi_n(r)$ is satisfied if
\begin{align*}
r \geq C_p \max \biggl\{ \biggl( \frac{(\sum_{j=1}^m a'_j)^{2p} m}{\lambda_2^p n} \biggr)^{\frac{1}{2-p-\vartheta-\vartheta p}}, \frac{(\sum_{j=1}^m a'_j)^{2p}m}{\lambda_2^p n} \biggr\},
\end{align*}
where the constant $C_p$ is given by
\begin{align*}
C_p := \max \Bigl\{ \Bigl( 30 C_1(p) 2^p |L|_{M,1}^p V^{\frac{1-p}{2}} \Bigr)^{\frac{2}{2-p-\vartheta-\vartheta p}}, 
\Bigl( 30 C_2(p) (2^p |L|_{M,1}^p)^{\frac{2}{1+p}} B^{\frac{1-p}{1+p}} \Bigr)^{p+1} \Bigr\}. 
\end{align*}
If $m \leq (\sum_{j=1}^m a'_j)^{-2p} \lambda_2^p n$, then we have
\begin{align*}
\biggl( \frac{(\sum_{j=1}^m a'_j)^{2p}m}{\lambda_2^p n} \biggr)^{\frac{1}{2-p-\vartheta-\vartheta p}} 
\geq \frac{(\sum_{j=1}^m a'_j)^{2p}m}{\lambda_2^p n},
\end{align*}
which implies that
\begin{align*}
r \geq C_p \biggl( \frac{(\sum_{j=1}^m a'_j)^{2p}m}{\lambda_2^p n} \biggr)^{\frac{1}{2-p-\vartheta-\vartheta p}}.
\end{align*}
For the remaining case when $m \geq \big(\sum_{j=1}^ma_j'\big)^{-2p} \lambda_2^p n$, there holds
\begin{align*}
\lambda_1 (\underline{h}_0^*)^q + \lambda_2 \|f_{\mathrm{D},\gamma}\|_{\mathcal{H}}^2
+ \mathcal{R}_{L,\mathrm{P}}(f_{\mathrm{D},\gamma}) - \mathcal{R}_{L,\mathrm{P}}^*
& \leq \lambda_1 \overline{h}_0^q + \lambda_2 \|f_{\mathrm{D},\gamma}\|_{\mathcal{H}}^2
+ \mathcal{R}_{L,\mathrm{D}}(f_{\mathrm{D},\gamma}) + B
\\
& \leq \lambda_1 \overline{h}_0^q + \mathcal{R}_{L,\mathrm{D}}(0) + B
\\
& \leq \lambda_1 \overline{h}_0^q + 2 B \biggl( \frac{(\sum_{j=1}^m a'_j)^{2p}m}{\lambda_2^p n} 
\biggr)^{\frac{1}{2-p-\vartheta-\vartheta p}}.
\end{align*}
Using $r^* \leq \lambda_1 \underline{h}_0^q + \lambda_2 \|f_0\|_{\mathcal{H}}^2 + \mathcal{R}_{L,\mathrm{P}}(f_0) - \mathcal{R}_{L,\mathrm{P}}^*$, the assertion thus follows from Theorem 7.20 in \cite{StCh08} with $K := \max \{ 2B, 3C_p \}$.
\end{proof}

\subsubsection{Proofs Related to Section \ref{sec::Re2}}

\begin{proof}[of Theorem \ref{thm::convergenceratetree}]
First of all, we bound the approximation error by choosing an appropriate function $f_0 \in \mathcal{H}$. 
Recall that for $j \in \mathcal{I}_H$, the functions $K_j : \mathbb{R}^d \to \mathbb{R}$ is defined as in \eqref{Conv} with $\gamma_j >0$. We then define $f_0$ 
by convolving each $K_j$ with the Bayes decision function $f_{L, \mathrm{P}}^*$, that is,
\begin{align*}
f_0 (x) := \sum_{j\in \mathcal{I}_H} \eins_{A_j}(x) \cdot (K_ j * f_{L, \mathrm{P}}^*)(x), 
\qquad 
x \in \mathbb{R}^d.
\end{align*}
To show that $f_0$ is indeed a suitable function to bound the approximation error, we firstly ensure that $f_0$ is contained in $\widehat{\mathcal{H}}_k$, and then derive bounds for both, the regularization term and the excess risk of $f_0$. 
By Proposition 4.46 in \cite{StCh08}, since $f_{L,\mathrm{P}}^* \in L_2(\mathbb{R}^d)$, we obtain that for every $j \in \mathcal{I}_H$, there holds
\begin{align*}
(K_j * f_{L, \mathrm{P}}^*)_{|A_j} \in \mathcal{H}_{\gamma_j}(A_j)
\end{align*}
with
\begin{align}
\|\eins_{A_j} f_0\|_{\widehat{\mathcal{H}}_{\gamma_j}(A_j)} 
& = \|\eins_{A_j} (K_j * f_{L, \mathrm{P}}^*)\|_{\widehat{\mathcal{H}}_{\gamma_j}(A_j)}
\nonumber\\
& = \bigl\| (K_j * f_{L, \mathrm{P}}^*)_{|A_j} \bigr\|_{\mathcal{H}_{\gamma_j}(A_j)}
\nonumber\\
& \leq (\gamma_j \sqrt{\pi})^{-\frac{d}{2}} 
(2^{k+1} - 1) 
\|f_{L, \mathrm{P}}^*\|_{L_2(\mathbb{R}^d)}.
\label{equ::f0norm}
\end{align}
This implies
\begin{align*}
f_0 = \sum_{j \in \mathcal{I}_H}\eins_{A_j}(K_j*f_{L, \mathrm{P}}^*) \in \mathcal{H}.
\end{align*}
Moreover, Theorem \ref{thm::convolution} yields
\begin{align}
\mathcal{R}_{L, \mathrm{P}}(f_0) - \mathcal{R}_{L, \mathrm{P}}^* 
& = \|f_0 - f_{L, \mathrm{P}}^*\|_{L_2(\mathrm{P}_{X})}^2
\nonumber\\
& = \biggl\| \sum_{j\in \mathcal{I}_H} \eins_{A_j}(K_j * f_{L, \mathrm{P}}^*) - f_{L, \mathrm{P}}^* \biggr\|_{L_2(\mathrm{P}_X)}^2 
\nonumber\\
& \leq c_{k, \alpha}^2 \biggl( \frac{\overline{\gamma}}{\underline{\gamma}} \biggr)^d \overline{\gamma}^{2(k+\alpha)},
\label{equ::f0excess}
\end{align}
where $c_{k, \alpha}$ is a constant only depending on $k$ and $\alpha$.

Next, we derive a bound for $\|L \circ f_0\|_{\infty}$. 
Using Theorem 2.3 in \cite{Optimal13}, we obtain that for any $x \in \mathcal{X}$, there holds
\begin{align*}
|f_0(x)| 
& = \biggl| \sum_{j \in \mathcal{I}_H} \eins_{A_j}(x) \cdot (K_j * f_{L, \mathrm{P}}^*)(x) \biggr| 
\\
& \leq \sum_{j\in \mathcal{I}_H} \eins_{A_j}(x) |K_j * f_{L, \mathrm{P}}^*(x)| 
\\
& \leq (2^{k+1} - 1) \|f_{L, \mathrm{P}}^*\|_{L_{\infty}(\mathbb{R}^d)}
\end{align*}
and consequently we have
\begin{align}
\|L \circ f_0\|_{\infty} 
& = \sup_{(x, y) \in \mathcal{X} \times \mathcal{Y}} |L(y, f_0(x))|
\nonumber\\
& \leq \sup_{(x, y) \in \mathcal{X} \times \mathcal{Y}} \bigl( M^2 + 2 M |f_0(x)| + |f_0(x)|^2 \bigr)
\nonumber\\
& \leq 4^{k+1} \max \bigl\{ M^2, \|f_{L, \mathrm{P}}^*\|_{L_{\infty}(\mathbb{R}^d)}^2 \bigr\} =: B_0.
\label{equ::B0}
\end{align}
Proposition \ref{lem::jointspacecoveringnumber} together with Proposition \ref{prop::oracleCk} yields 
\begin{align*}
& \lambda_1 (\underline{h}_0^*)^{q}+ \lambda_2\|f_{D,\gamma}\|_{\mathcal{H}}^2+\mathcal{R}_{L,\mathrm{P}}(f_{D,\gamma})-\mathcal{R}_{L,\mathrm{P}}^*
\\
& \lesssim \lambda_1\underline{h}_0^{q}+\underline{h}_0^{-d}\lambda_{2,j}\gamma_j^{-d}+\overline{\gamma}^{2(k+\alpha)}+\overline{h}_0^{-d} \lambda_{2,j}^{-p} \gamma_j^{-(d+2p)}n^{-1}+n^{-1}.
\end{align*}
Choosing 
\begin{align*}
\overline{h}_{0,n} := n^0, 
\qquad 
\gamma_{n,j} := n^{-\frac{1}{2(k+\alpha)+d}}, 
\qquad 
\lambda_{1,n} := n^{-\frac{1}{2(k+\alpha)+d}}, 
\qquad 
\lambda_{2,n,j} := n^{-1},
\end{align*}
we obtain
\begin{align*}
\lambda_1 (\underline{h}_0^*)^{q}+ \lambda_2\|f_{D,\gamma}\|_{\mathcal{H}}^2+\mathcal{R}_{L,\mathrm{P}}(f_{D,\gamma})-\mathcal{R}_{L,\mathrm{P}}^*
\lesssim n^{-\frac{2(k+\alpha)}{2(k+\alpha)+d}+\xi},
\end{align*}
where $\xi = p+\frac{2p}{2(k+\alpha)+d}$ can be arbitrarily small with $p$ being infinitesimal. This proves the assertion.
\end{proof}

\begin{proof}[of Theorem \ref{thm::convergencerateforest}]
Let $f_{\mathrm{D}, \gamma,\mathrm{E}}$ 
be the kernel histogram transform ensembles given by (\ref{LSVMEHR}).
Using Jensen's inequality, we have  
\begin{align*}
\mathcal{R}_{L,\mathrm{P}}(f_{\mathrm{D}, \gamma,\mathrm{E}}) -  \mathcal{R}_{L,\mathrm{P}}^* 
&= \int_{\mathcal{X}} \biggl( \frac{1}{T}\sum_{t=1}^T f_{\mathrm{D}, \gamma, H_t}  - f_{L, \mathrm{P}}^* \biggr)^2 d \mathrm{P}_X\\
&\leq \frac{1}{T}\sum_{t=1}^T \int_{\mathcal{X}} \biggl( f_{\mathrm{D}, \gamma, H_t} - f_{L, \mathrm{P}}^* \biggr)^2 d \mathrm{P}_X\\
&= \frac{1}{T}\sum_{t=1}^T \biggl(\mathcal{R}_{L,\mathrm{P}}(f_{\mathrm{D}, \gamma, H_t}) - \mathcal{R}_{L,\mathrm{P}}^* \biggr).
\end{align*}
Then the union bound together with Theorem \ref{thm::convergenceratetree} yields
\begin{align*}
& \mathrm{P} \Bigl( \mathcal{R}_{L,\mathrm{P}}(f_{\mathrm{D},  \gamma,\mathrm{E}} ) 
- \mathcal{R}_{L,\mathrm{P}}^*  > c \cdot n^{- \frac{2\alpha}{2\alpha+d}+\xi} \Bigr) 
\\
& \leq \sum_{t=1}^T \mathrm{P} \Bigl( \mathcal{R}_{L,\mathrm{P}}(f_{\mathrm{D},\gamma,H_t} ) - \mathcal{R}_{L,\mathrm{P}}^*  > c \cdot n^{- \frac{2(k+\alpha)}{2(k+\alpha)+d}+\xi} 
\Bigr)
\leq T e^{-\tau}
\end{align*}
where the constant $c$ is as in Theorem \ref{thm::convergenceratetree}. 
As a result, there holds
\begin{align*}
\mathcal{R}_{L,\mathrm{P}}(f_{\mathrm{D}, \gamma, \mathrm{E}} ) 
- \mathcal{R}_{L,\mathrm{P}}^*  
\leq c \cdot n^{- \frac{2(k+\alpha)}{2(k+\alpha)+d}+\xi}
\end{align*}
with probability $\nu_n$ at least $1-3e^{-\tau}$, where $c$ is a constant depending on $M$, $k$, $\alpha$, $p$, and $T$.
\end{proof}

\section{Conclusion}\label{sec::conclusion}

By conducting a statistical learning treatment, this paper studies the large-scale regression problem with histogram transform estimators. Based on partition induced by random histogram transform and various different kinds of embedded regressors, this nonparametric strategy provides an effective solution taking full advantage of large diversity of the random histogram transform, the nature of ensemble learning, and the efficiency of vertical methods. By decomposing the error term into approximation error and estimation error, the insights from the theoretical perspective are threefold: 
First, different regression estimators NHT and KHT are applied when the Bayes decision function $f_{L,\mathrm{P}}^*$ is assumed to satisfy different H\"{o}lder continuity assumptions. Secondly, almost optimal convergence rates are established within the regularized empirical risk minimization framework for NHTE in $C^{0,\alpha}$ and for KHTE in $C^{k,\alpha}$ with $k \geq 2$. Thirdly, for the space $C^{1,\alpha}$, the lower bound established in Theorem \ref{thm::optimalForest} illustrates the exact benefits of ensembles over single estimator. Last but not least, several numerical simulations are conducted to offer evidence to support our theoretical results and comparative real-data experiments with other state-of-the-art regression estimators demonstrate the accuracy of our algorithm. In this paper, we explain the phenomenon that ensemble estimators outperform single ones in the space $C^{1,\alpha}$ with respect to constant embedded regressors, from the perspective of learning rate. And we're now exploring other possible interpretations, which applies to more general function space such as $C^{k,\alpha}$ and smoother regressors such as SVMs, for this phenomenon from other aspects, information theory, for instance.

\small{}

\end{document}